\newcites{A}{References of Appendix}
\declaretheorem{theorem}
\declaretheorem[sibling=theorem]{lemma}
\declaretheorem[sibling=theorem]{proposition}
\declaretheorem[sibling=theorem,style=definition]{definition}
\declaretheorem[sibling=theorem]{example}
\newcommand{\ELplus}{\ensuremath{\mathcal{E\kern-0.1emLHF}\kern-0.2em_\bot}}
\newcommand{\Hornplus}{\ensuremath{\mathcal{E\kern-0.1emLHF}\kern-0.2em_\bot}}
\renewcommand{\vec}{\mathbf}
\def\cabox{\Bmc}
\def\ckabox{\ensuremath{\Bmc_{k_0}}}
\title{First Order-Rewritability and Containment of Conjunctive
  Queries in \\
Horn Description Logics}
\author{
     \hspace{-7mm}Meghyn Bienvenu \\[-0.5mm]
     \hspace{-7mm}{\normalsize CNRS, Univ. Montpellier, Inria, France} 
     \\[-1mm]
     \hspace{-7mm}{\normalsize meghyn@lirmm.fr} 
     \And
     \hspace{7mm}Peter Hansen \and Carsten Lutz \\[-0.5mm]
     \hspace{7mm}{\normalsize University of Bremen, Germany} \\[-1mm]
     \hspace{7mm}{\normalsize \{hansen, clu\}@informatik.uni-bremen.de}
     \And
     \hspace{15mm}Frank Wolter \\[-0.5mm]
     \hspace{15mm}{\normalsize University of Liverpool, UK} \\[-1mm]
     \hspace{15mm}{\normalsize frank@csc.liv.ac.uk}
}
\begin{document}
\maketitle

\begin{abstract}
  We study FO-rewritability of conjunctive queries in the
  presence of ontologies formulated in a description logic  
  between \EL and Horn-\SHIF, along with related query containment
  problems. Apart from providing characterizations, we
  establish complexity results ranging from \ExpTime via \NExpTime to
  2\ExpTime, pointing out several interesting effects. In particular,
  FO-rewriting is more complex for conjunctive queries than for atomic queries
  when inverse roles are present, but not otherwise.
\end{abstract}

\section{Introduction}

When ontologies are used to enrich incomplete and heterogeneous data
with a semantics and with background
knowledge~\cite{DBLP:conf/rweb/CalvaneseGLLPRR09,DBLP:conf/rweb/KontchakovRZ13,DBLP:conf/rweb/BienvenuO15},
efficient query anwering is a primary concern. Since classical
database systems are unaware of ontologies and implementing new
ontology-aware systems that can compete with these would be a huge
effort, a main approach used today is \emph{query rewriting}: the
user query $q$ and the ontology~\Omc are combined into a new query
$q_\Omc$ that produces the same answers as $q$ under \Omc (over all
inputs) and can be handed over to a database system for execution.
Popular target languages for the query $q_\Omc$ include SQL and
Datalog. In this paper, we concentrate on ontologies formulated in
description logics (DLs) and on rewritability into SQL, which we
equate with first-order logic~(FO).

FO-rewritability in the context of query answering under DL ontologies
was first studied in \cite{calvanese-2007}. Since FO-rewritings are
not guaranteed to exist when ontologies are formulated in traditional
DLs, the authors introduce the DL-Lite family of DLs specifically for
the purpose of ontology-aware query answering using SQL database
systems; in fact, the expressive power of DL-Lite is seriously
restricted, in this way enabling existence guarantees for
FO-rewritings. While DL-Lite is a successful family of DLs, there
are many applications that require DLs with greater expressive
power. The potential non-existence of FO-rewritings in this case is
not necessarily a problem in practical applications.  In fact,
ontologies emerging from such applications typically use the available
expressive means in a harmless way in the sense that efficient
reasoning is often possible despite high worst-case complexity. One
might thus hope that, in practice, FO-rewritings can often be
constructed also beyond DL-Lite.

This hope was confirmed in \cite{ijcai-2013,IJCAI15}, which consider
the case where ontologies are formulated in a DL of the \EL family
\cite{baader-2005} and queries are atomic queries (AQs) of the
form~$A(x)$. 
To describe the obtained results in more detail, let an
ontology-mediated query (OMQ) be a triple $(\Tmc,\Sigma,q)$ with \Tmc
a description logic TBox (representing an ontology), $\Sigma$ an ABox
signature (the set of concept and role names that can occur in the
data), and $q$ an actual query. Note that \Tmc and $q$ might use
symbols that do not occur in $\Sigma$; in this way, the TBox enriches
the vocabulary available for formulating $q$. We use $(\Lmc,\Qmc)$ to
denote the OMQ language that consists of all OMQs where \Tmc is
formulated in the description logic \Lmc and $q$ in the query language
\Qmc. In \cite{ijcai-2013}, FO-rewritability is characterized in terms
of the existence of certain tree-shaped ABoxes, covering a range of
OMQ languages between $(\EL,\text{AQ})$ and
$(\text{Horn-}\mathcal{SHI},\text{AQ})$. On the one hand, this
characterization is used to clarify the complexity of deciding whether
a given OMQ is FO-rewritable, which turns out to be \ExpTime-complete.
On the other hand, it provides the foundations for developing
practically efficient and complete algorithms for computing
FO-rewritings.  The latter was explored further in \cite{IJCAI15},
where a novel type of algorithm for computing FO-rewritings of OMQs
from $(\EL,\text{AQ})$ is introduced, crucially relying on the
previous results from \cite{ijcai-2013}. 
Its
evaluation shows excellent performance and confirms the hope that, in
practice, FO-rewritings almost always exist. In fact, rewriting fails
in only 285 out of 10989 test cases.

A limitation of the discussed results is that they concern only AQs
while in many applications, the more expressive conjunctive queries
(CQs) are required. The aim of the current paper is thus to study
FO-rewritability of OMQ languages based on CQs, considering ontology
languages between \EL and Horn-\SHIF.  In particular, we provide
characterizations of FO-rewritability in the required OMQ languages
that are inspired by those in \cite{ijcai-2013} (replacing tree-shaped
ABoxes with a more general form of ABox), and we analyze the
complexity of FO-rewritability using an automata-based approach. While
practically efficient algorithms are out of the scope of this
article, 
we believe that our work also lays important ground for the subsequent
development of such algorithms. 
Our approach actually \emph{does} allow the construction of
rewritings, but it is not tailored towards doing that in a practically
efficient way.
It turns out that the studied FO-rewritability problems are closely
related to OMQ containment problems as considered in
\cite{LuWo-KR12,BouLu-KR16}. In fact, being able to decide OMQ containment allows
us to concentrate on connected CQs when deciding FO-rewritability,
which simplifies technicalities considerably. For this reason, we also
study characterizations and the complexity of query containment in the
OMQ languages considered.

Our main complexity results are that FO-rewritability and containment
are \ExpTime-complete for OMQ languages between $(\EL,\text{AQ})$ and
$(\Hornplus,\text{CQ})$ and 2\ExpTime-complete for OMQ languages
between $(\ELI,\text{CQ})$ and $(\text{Horn-}\SHIF,\text{CQ})$. The
lower bound for containment applies already when both OMQs share the
same TBox. Replacing AQs with CQs thus results in an increase of
complexity by one exponential in the presence of inverse roles
(indicated by \Imc), but not otherwise. Note that the effect that
inverse roles can increase the complexity of querying-related problems
was known from expressive DLs of the \ALC family
\cite{DBLP:conf/cade/Lutz08}, but it has not previously been observed
for Horn-DLs such as \ELI and \text{Horn-}\SHIF. While 2\ExpTime might
appear to be very high complexity, we are fortunately also able to
show that the runtime is double exponential only in the size of the
actual queries (which tends to be very small) while it is only single
exponential in the size of the ontologies. We also show that the
complexity drops to {\sc NExpTime} when we restrict our attention to
\emph{rooted} CQs, that is, CQs which contain at least one answer
variable and are connected.  Practically relevant queries are
typically of this kind. 

A slight modification of our lower bounds yields new lower bounds for
monadic Datalog containment. In fact, we close an open problem from
\cite{DBLP:conf/pods/ChaudhuriV94} by showing that containment of a
monadic Datalog program in a rooted CQ is \coNExpTime-complete. We
also improve the 2\ExpTime lower bound for containment of a monadic
Datalog program in a CQ from \cite{DBLP:conf/icalp/BenediktBS12} by
showing that it already applies when the arity of EDB relations is
bounded by two, rule bodies are tree-shaped, and there are no
constants (which in this case correspond to nominals); the existing
construction cannot achieve the latter two conditions simultaneously.

\smallskip

Full proofs are provided at
http://www.informatik.uni-bremen.de/tdki/research/papers.html.

\smallskip

{\bf Related work.}  
Pragmatic approaches to OMQ rewriting beyond DL-Lite often consider
Datalog as a target language
\cite{rosati07on,perezurbina10tractable,DBLP:conf/aaai/EiterOSTX12,kaminski14,DBLP:journals/ws/TrivelaSCS15}.
These approaches might produce a non-recursive (thus FO) rewriting if
it exists, but there are no guarantees. FO-rewritability of OMQs based
on expressive DLs is considered in \cite{todswe}, and based on
existential rules in \cite{montpellier-2011}. A problem related to
ours is whether \emph{all} queries are FO-rewritable when combined
with a given TBox \cite{lutz-2012,DBLP:conf/cilc/CiviliR15}. There are
several related works in the area of Datalog; recall that a
Datalog program is bounded if and only if it is FO-rewritable
\cite{AG94}. For monadic Datalog programs, boundedness is known to be decidable
\cite{DBLP:conf/stoc/CosmadakisGKV88} and
2\ExpTime-complete~\cite{DBLP:conf/lics/BenediktCCB15}; containment is
also
2\ExpTime-complete~\cite{DBLP:conf/stoc/CosmadakisGKV88,DBLP:conf/icalp/BenediktBS12}.
OMQs from $(\text{Horn-$\mathcal{SHI}$},\text{CQ})$ can be translated
to monadic Datalog with an exponential blowup, functional roles
(indicated by \Fmc) are not expressible.

%

\section{Preliminaries and Basic Observations}
\label{sect:prelim}

Let $\NC$ and $\NR$ be disjoint and countably infinite sets of
\emph{concept} and \emph{role names}. A \emph{role} is a role name $r$
or an \emph{inverse role} $r^{-}$, with $r$ a role name.  A
\emph{Horn-$\mathcal{SHIF}$ concept inclusion (CI)} is of the form $L
\sqsubseteq R$, where $L$ and $R$ are concepts defined by the syntax
rules
%
\begin{align*} 
    R,R' &::= \top \mid \bot \mid A \mid \neg A \mid R \sqcap R' \mid 
\neg L \sqcup R \mid \exists r . R \mid
   \forall r . R \\
    L,L' &::= \top \mid \bot \mid A \mid L \sqcap L' \mid L \sqcup L' \mid 
    \exists r . L 
\end{align*}
with $A$ ranging over concept names and $r$ over roles.  In DLs,
ontologies are formalized as TBoxes. A \emph{Horn-$\mathcal{SHIF}$
  TBox}~$\Tmc$ is a finite set of Horn-$\mathcal{SHIF}$ CIs,
\emph{functionality assertions} ${\sf func}(r)$, \emph{transitivity
  assertions} ${\sf trans}(r)$, and \emph{role inclusions (RIs)} $r
\sqsubseteq s$, with $r$ and $s$ roles.  It is standard to assume that
functional roles are not transitive and neither are transitive roles
included in them (directly or indirectly). We make the slighty
stronger assumption that functional roles do not occur on the
right-hand side of role inclusions at all. This assumption seems
natural from a modeling perspective and mainly serves the purpose of
simplifying constructions; all our results can be extended to
the milder standard assumption.  
%
An \emph{$\mathcal{ELIHF}_{\bot}$ TBox} is a
\emph{Horn-$\mathcal{SHIF}$ TBox} that contains neither transitivity
assertions nor disjunctions in~CIs, an \emph{\ELI TBox} is an
$\mathcal{ELIHF}_{\bot}$ TBox that contains neither functionality
assertions nor RIs, and an \ELplus TBox is an
$\mathcal{ELIHF}_{\bot}$ TBox that does not contain inverse roles. 

An \emph{ABox} is a finite set of \emph{concept assertions} $A(a)$ and
\emph{role assertions} $r(a,b)$ where $A$ is a concept name, $r$ a
role name, and $a,b$ individual names from a countably infinite set
\NI. We sometimes write $r^-(a,b)$ instead of $r(b,a)$ and use
$\mn{Ind}(\Amc)$ to denote the set of all individual names used in
\Amc. A \emph{signature} is a set of concept and role names. We will
often assume that the ABox is formulated in a prescribed signature,
which we then call an \emph{ABox signature}.  An ABox that only uses
concept and role names from a signature $\Sigma$ is called a
\emph{$\Sigma$-ABox}. 

The semantics of DLs is given in terms of \emph{interpretations}
$\Imc=(\Delta^\Imc,\cdot^\Imc)$, where $\Delta^\Imc$ is a non-empty
set (the \emph{domain}) and $\cdot^\Imc$ is the \emph{interpretation
  function}, assigning to each $A\in \NC$ a set $A^\Imc \subseteq
\Delta^\Imc$ and to each $r\in \NR$ a relation $r^\Imc \subseteq
\Delta^\Imc \times \Delta^\Imc$.
The interpretation $C^{\Imc}\subseteq \Delta^{\Imc}$ of a concept $C$
in $\Imc$ is defined as usual, see \cite{baader-2003-dl-handbook}. An
interpretation $\Imc$ \emph{satisfies} a CI $C \sqsubseteq D$ if
$C^{\Imc} \subseteq D^{\Imc}$, a functionality assertion
$\mn{func}(r)$
if $r^\Imc$ is a partial function, a transitivity assertion ${\sf
  trans}(r)$ if $r^{\Imc}$ is transitive, an RI $r\sqsubseteq s$ if
$r^{\Imc} \subseteq s^{\Imc}$, a concept assertion $A(a)$ if $a \in
A^{\Imc}$,
and a role
assertion $r(a,b)$ if $(a,b)\in r^{\Imc}$. 
We say that $\Imc$ is
a \emph{model of} a TBox or an ABox if it satisfies all inclusions and
assertions in it. An ABox $\Amc$ is \emph{consistent} with~a TBox
$\Tmc$ if $\Amc$ and $\Tmc$ have a common model.  If $\alpha$ is a
CI, RI, or functionality assertion, we
write $\Tmc \models \alpha$ if all models of \Tmc satisfy $\alpha$.

A \emph{conjunctive query (CQ)} takes the form $q=\exists \xbf \, \vp
(\xbf,\ybf)$ with $\xbf,\ybf$ tuples of variables and $\vp$ a
conjunction of atoms of the form $A(x)$ and $r(x,y)$ that uses only
variables from $\xbf \cup \ybf$. The variables in $\ybf$ are called
\emph{answer variables}, the \emph{arity} of $q$ is the length of
$\ybf$, and $q$ is \emph{Boolean} if it has arity zero.  
An
\emph{atomic query (AQ)} is a conjunctive query of the form $A(x)$.
A \emph{union of conjunctive queries (UCQ)} is a disjunction of
CQs that share the same answer variables.
Ontology-mediated queries (OMQs) and the notation $(\Lmc,\Qmc)$
for OMQ languages were already defined in the introduction.
We generally assume that if a role name $r$ occurs in~$q$ and $\Tmc\models s
\sqsubseteq r$, then $\mn{trans}(s) \notin \Tmc$. This is 
common since allowing transitive roles in the query poses serious
additional complications, which are outside the scope of this paper;
see e.g.\
\cite{DBLP:conf/dlog/BienvenuELOS10,DBLP:conf/icalp/GottlobPT13}.

Let $Q=(\Tmc,\Sigma,q)$ be an OMQ, $q$ of arity $n$, \Amc a
$\Sigma$-ABox and $\abf \in \mn{Ind}(\Amc)^n$. We write 
$\Amc \models Q(\abf)$ if $\Imc
\models q(\abf)$ for all models \Imc of \Tmc and \Amc. In this case,
$\abf$ is a \emph{certain answer} to $Q$ on \Amc.  We use
$\mn{cert}(Q,\Amc)$ to denote the set of all certain answers to~$Q$ on
\Amc.
%

A first-order query (FOQ) is a first-order formula $\vp$ constructed
from atoms $A(x)$, $r(x,y)$, and $x=y$; here, concept names are viewed
as unary predicates, role names as binary predicates, and predicates
of other arity, function symbols, and constant symbols are not
permitted. We write $\vp(\xbf)$ to indicate that the free
variables of $\vp$ are among $\xbf$ and call $\xbf$ the \emph{answer
  variables} of $\vp$. The number of answer variables is the
\emph{arity} of $\vp$ and $\vp$ is \emph{Boolean} if it has
arity zero.  We use $\mn{ans}(\Imc,\vp)$ to denote the set of 
answers to the FOQ $\vp$ on the interpretation~\Imc; that is, if $\vp$
is $n$-ary, then $\mn{ans}(\Imc,\vp)$ contains all tuples $\dbf \in
(\Delta^{\Imc})^n$ with $\Imc\models \vp(\dbf)$.  To bridge the gap
between certain answers and answers to FOQs, we
sometime view an ABox $\Amc$ as an interpretation $\Imc_\Amc$, defined
in the obvious way. 


%

For any syntactic object $O$ (such as a TBox, a query, an OMQ), we use $|O|$ to
denote the \emph{size} of $O$, that is, the number of symbols needed
to write it (concept and role names counted as a single symbol).
\begin{definition}[FO-rewriting]
  An FOQ $\vp$ is an \emph{FO-rewriting} of an OMQ $Q=(\Tmc,\Sigma,q)$
  if $\mn{cert}(Q,\Amc)=\mn{ans}(\Imc_\Amc,\varphi)$ for all
  $\Sigma$-ABoxes \Amc that are consistent with \Tmc\!\!\!. If there is
  such a $\vp$, then $Q$ is \emph{FO-rewritable}.
\end{definition}
\begin{example}\label{ex:1}
(1) Let $Q_{0}=(\Tmc_{0},\Sigma_{0},q_{0}(x,y))$, where $\Tmc_{0} = \{\exists r.A \sqsubseteq A,B \sqsubseteq \forall r.A\}$,
$\Sigma_{0}=\{r,A,B\}$ and $q_{0}(x,y)=B(x) \wedge r(x,y) \wedge A(y)$. Then $\vp_{0}(x,y)=B(x) \wedge r(x,y)$ is an
FO-rewriting of $Q_{0}$. 

We will see in Example~\ref{ex:2} that the query $Q_{A}$ obtained from
$Q_{0}$ by replacing $q_{0}(x,y)$ with the AQ $A(x)$ is not
FO-rewritable (due to the unbounded propagation of $A$ via $r$-edges
by $\Tmc_0$). Thus, an FO-rewritable OMQ can give raise to AQ
`subqueries' that are not FO-rewritable.

(2) Let $Q_{1}=(\Tmc_{1},\Sigma_{1},q_{1}(x))$, where
$\Tmc_{1}=\{\exists r.\exists r.A \sqsubseteq \exists r.A\}$,
$\Sigma_{1}=\{r,A\}$, and $q_{1}(x)= \exists y (r(x,y)\wedge
A(y))$. Then $Q_{1}$ is not FO-rewritable (see again Example~\ref{ex:2}), but
all AQ subqueries that $Q_{1}$ gives raise to are FO-rewritable.
\end{example}
%
The main reasoning problem studied in this paper is to decide whether
a given OMQ $Q=(\Tmc,\Sigma,q)$ is FO-rewritable.  We assume without
loss of generality that every symbol in $\Sigma$ occurs in~\Tmc or in
$q$.  We obtain different versions of this problem by varying the OMQ
language used.  Note that we have defined FO-rewritability relative to
ABoxes that are consistent with the TBox. It is thus important for the
user to know whether that is the case. Therefore, we also consider
FO-rewritability of ABox inconsistency. More precisely, we say that
\emph{ABox inconsistency is FO-rewritable} relative to a TBox \Tmc and
ABox signature $\Sigma$ if there is a Boolean FOQ $\vp$ such that for
every $\Sigma$-ABox~$\Amc$, \Amc is inconsistent with $\Tmc$ iff
$\Imc_{\Amc}\models \varphi()$.
%

Apart from FO-rewritability questions, we will also study OMQ containment.
Let $Q_i=(\Tmc_i,\Sigma,q_i)$ be two OMQs over the same
ABox signature.  We say that $Q_1$ \emph{is contained in} $Q_2$, in
symbols $Q_1 \subseteq Q_2$, if $\mn{cert}(Q_1,\Amc) \subseteq
\mn{cert}(Q_2,\Amc)$ holds for all $\Sigma$-ABoxes $\Amc$ that are
consistent with $\Tmc_1$ and $\Tmc_2$.


%
%


We now make two basic observations that we use in an essential way in
the remaining paper. We first observe that it suffices to concentrate
on \ELIHFbot TBoxes \Tmc in \emph{normal form}, that is, all CIs are
of one of the forms
$ A \sqsubseteq \bot, A \sqsubseteq \exists r . B, \top \sqsubseteq A,
B_1 \sqcap B_2 \sqsubseteq A, \exists r . B \sqsubseteq A$
%
%
with $A,B,B_1,B_2$ concept names and $r$ a role. We
use $\mn{sig}(\Tmc)$ to denote the concept and role names that occur
in \Tmc.
%
%
%
\begin{restatable}{proposition}{THMreduce}\label{thm:reduce}
  Given a Horn-$\mathcal{SHIF}$ (resp.\ \ELplus) TBox~$\Tmc_1$ and
  ABox signature $\Sigma$, one can construct in polynomial time an
  \ELIHFbot (resp.\ \ELplus) TBox $\Tmc_2$ in normal
  form such that for every $\Sigma$-ABox~$\Amc$,
\begin{enumerate}
\item $\Amc$ is consistent with $\Tmc_1$ iff $\Amc$ is consistent
with $\Tmc_2$;
\item if $\Amc$ is consistent with $\Tmc_1$, then for any CQ $q$ 
that does not use symbols from $\mn{sig}(\Tmc_2) \setminus
  \mn{sig}(\Tmc_1)$, we have $\mn{cert}(Q_1,\Amc)=
\mn{cert}(Q_2,\Amc)$ where $Q_i=(\Tmc_i,\Sigma,q)$.
\end{enumerate}
\end{restatable}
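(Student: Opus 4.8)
The plan is to turn an arbitrary Horn-$\mathcal{SHIF}$ TBox $\Tmc_1$ into an $\mathcal{ELIHF}_\bot$ TBox $\Tmc_2$ in normal form by a sequence of local, structure-preserving rewriting steps, introducing fresh concept (and possibly role) names to abbreviate compound subconcepts; the fresh symbols form exactly the set $\mn{sig}(\Tmc_2)\setminus\mn{sig}(\Tmc_1)$, and these are barred from appearing in the query $q$, which is what makes claim~2 meaningful. I would organize the construction into four phases.

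\medskip

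\noindent\textbf{Phase 1: eliminating role hierarchies, transitivity, and inverses from the ``machinery''.} For the \ELplus case there is nothing to do here (no RIs, no transitivity, no inverses in the relevant sense beyond what normal form already allows). For the full Horn-$\mathcal{SHIF}$ case I would first use the standard encoding that removes role inclusions $r\sqsubseteq s$ and transitivity assertions $\mn{trans}(r)$: transitivity is internalized by adding, for each relevant pair, axioms of the form $\exists s.(\exists s.B)\sqsubseteq \exists s.B$ for the concept names $B$ that matter, propagated along the sub-role hierarchy (this is the classical trick; one has to be careful that, because of the excerpt's convention that query roles are never implied by a transitive role and functional roles never appear on the right of an RI, the blow-up stays polynomial and no functionality/transitivity clash is created). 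Alternatively — and more cleanly for a normal-form target — I would keep $\mn{func}(r)$ assertions and simple RIs $r\sqsubseteq s$ in $\Tmc_2$ (these are allowed in an $\mathcal{ELIHF}_\bot$ TBox) and only remove transitivity, since $\mathcal{ELIHF}_\bot$ already permits RIs and functionality. The only genuine work is transitivity elimination, and the point to verify is that it preserves consistency (claim~1) and certain answers for queries not mentioning the auxiliary names, which holds because the added CIs are logical consequences of $\mn{trans}(r)$ together with the role hierarchy and, conversely, over $\Sigma$-ABoxes the canonical model of $\Tmc_2$ and $\Amc$ can be expanded to a model of $\mn{trans}(r)$ without changing the interpretation of $\Sigma$-symbols or of $q$'s symbols.

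\medskip

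\noindent\textbf{Phase 2: normalizing CIs.} Here I apply the textbook structural transformation. Exhaustively: replace a CI $C\sqsubseteq D$ by $C\sqsubseteq X$ and $X\sqsubseteq D$ for a fresh $X$ whenever needed; push negations and the ``$\neg L\sqcup R$'' form (which is just implication, and is Horn precisely because the premise $L$ is an $L$-concept) down so that every occurrence of $\neg A$ or $\exists r.R$ or $\forall r.R$ or a conjunction sits as the single top connective of some axiom; flatten nested conjunctions; and finally handle the primitive shapes. Concretely, $A\sqsubseteq\forall r.B$ is rewritten using inverse roles as $\exists r^-.A\sqsubseteq B$ — this is the one place where inverse roles are \emph{created}, which is why the target is $\mathcal{ELIHF}_\bot$ and not $\mathcal{ELHF}_\bot$; $A\sqsubseteq \neg B$ becomes $A\sqcap B\sqsubseteq\bot$; $\exists r.C\sqsubseteq A$ with $C$ compound becomes $\exists r.X\sqsubseteq A$, $C\sqsubseteq X$; disjunctions on the left, $L_1\sqcup L_2\sqsubseteq A$, split into $L_1\sqsubseteq A$ and $L_2\sqsubseteq A$. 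Each step is polynomial and introduces at most linearly many fresh names, so the whole normalization is polynomial and the fresh names are collected into $\mn{sig}(\Tmc_2)\setminus\mn{sig}(\Tmc_1)$.

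\medskip

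\noindent\textbf{Phase 3: correctness.} I prove both claims simultaneously via the standard model-correspondence. For the forward direction of claim~1 and the ``$\supseteq$'' of claim~2: any model $\Imc_2$ of $\Tmc_2$ and $\Amc$ is, after forgetting the fresh symbols, a model of $\Tmc_1$ and $\Amc$ — each original axiom is entailed by its normal-form replacements. For the converse: given a model $\Imc_1$ of $\Tmc_1$ and $\Amc$, interpret each fresh name by the (compound) concept it abbreviates to obtain a model $\Imc_2$ of $\Tmc_2$ and $\Amc$ that agrees with $\Imc_1$ on all original symbols, in particular on every symbol occurring in $q$; hence $\mn{cert}(Q_1,\Amc)=\mn{cert}(Q_2,\Amc)$ and consistency transfers. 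The hypothesis in claim~2 that $q$ avoids $\mn{sig}(\Tmc_2)\setminus\mn{sig}(\Tmc_1)$ is exactly what lets this ``expansion/projection'' argument go through for certain answers.

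\medskip

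\noindent\textbf{Main obstacle.} The routine parts (Phase~2 and Phase~3) are entirely standard; the delicate point is Phase~1, transitivity elimination, in the presence of functional roles and role hierarchies. One must argue that the auxiliary propagation axioms do not interact badly with $\mn{func}(r)$ (they do not, given the excerpt's strengthened assumption that functional roles never occur on the right of an RI, so a functional role is never a super-role of a transitive one), and — more importantly for claim~2 — that certain answers of $q$ are preserved even though transitive-role assertions are ``simulated'' rather than present. The clean way around this is to invoke the excerpt's standing assumption that no role name in $q$ is implied by a transitive role: then the interpretation of $q$'s roles is unaffected by how we treat transitivity, and a canonical model of $(\Tmc_2,\Amc)$ can be closed under transitivity of the relevant roles to yield a model of $(\Tmc_1,\Amc)$ without changing $q$'s answers. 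I would make this the one argument to write out in full; everything else reduces to citing the normal-form transformation of Horn DLs.
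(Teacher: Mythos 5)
Your proposal follows essentially the same route as the paper: a structural transformation introducing fresh concept names to normalize CIs (with $\forall r.R$ handled via $\exists r^-.L \sqsubseteq R$), internalization of transitivity via fresh propagation concepts, and a conservative-extension/model-correspondence argument for correctness, resolved at the transitive roles exactly as you say — by the standing assumption that roles occurring in queries are never implied by transitive roles, so that closing the relevant roles under transitivity does not affect query answers. The only difference is the order of the two phases (the paper normalizes first, then eliminates transitivity), which is immaterial.
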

Theorem~\ref{thm:reduce} yields polytime reductions of
FO-rewritability in $(\text{Horn-}\mathcal{SHIF},\Qmc)$ to
FO-rewritability in $(\ELIHFbot,\Qmc)$ for any query language $\Qmc$,
and likewise for OMQ containment and FO-rewritability of ABox
inconsistency. It also tells us that, when working with $\ELplus$
TBoxes, we can assume normal form.  Note that transitioning
from $(\text{Horn-}\mathcal{SHF},\Qmc)$ to $(\ELHFbot,\Qmc)$ is not as
easy as in the case with inverse roles since universal restrictions on
the right-hand side of concept inclusions cannot easily be eliminated;
for this reason, we do not consider
$(\text{Horn-}\mathcal{SHF},\Qmc)$.  From now on, we work with
TBoxes formulated in $\ELIHFbot$ or $\ELplus$ and assume without
further notice that they are in normal form.
%
%

Our second observation is that, when deciding FO-rewritability, we can
restrict our attention to connected queries provided that we have
a way of deciding containment (for potentially disconnected queries).  We use conCQ
to denote the class of all connected CQs.
%
\begin{restatable}{theorem}{THMtoconnected}\label{thm:toconnected}
  Let $\Lmc \in \{ \mathcal{ELIHF_\bot}, \ELplus \}$.
  Then FO-rewritability in $(\Lmc,\text{CQ})$ can be solved in polynomial time
  when there is access to oracles for containment in $(\Lmc,\Qmc)$ and
  for FO-rewritability in $(\Lmc,\text{conCQ})$.
\end{restatable}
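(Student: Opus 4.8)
The plan is to show that an OMQ $Q = (\Tmc, \Sigma, q)$ with $q$ a (possibly disconnected) CQ is FO-rewritable if and only if a certain collection of related OMQs built from the \emph{connected components} of $q$ are FO-rewritable, where the reduction to those connected queries uses the containment oracle to detect which components are "redundant" on which ABoxes. First I would write $q(\ybf) = \exists\xbf\, \bigwedge_i q_i$ where the $q_i$ are the connected components of the matrix, and partition the answer variables $\ybf$ accordingly: each $q_i$ carries the answer variables it mentions, and some components $q_i$ are Boolean (mention no answer variable). A tuple $\abf$ is a certain answer to $Q$ on $\Amc$ iff (i) each component with answer variables, instantiated with the corresponding part of $\abf$, holds, and (ii) \emph{every} Boolean component holds, i.e.\ $\Amc \models Q_i()$ for the Boolean OMQ $Q_i = (\Tmc, \Sigma, \exists\xbf_i\, q_i)$. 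Crucially, because certain answers are preserved under disjoint unions of ABoxes that are consistent with \Tmc (a standard property of Horn DLs, following from the existence of canonical/universal models), whether a Boolean component holds depends only on \Amc and not on $\abf$, so conditions (i) and (ii) are genuinely independent.

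The key step is then: condition (ii) can be replaced by the condition that $\Amc \models Q_i()$ for the \emph{non-redundant} Boolean components only, and, more importantly, each Boolean component OMQ $Q_i$ is itself either trivially true on all consistent ABoxes, or it is FO-rewritable iff \dots — but this still leaves disconnected-to-connected for the Boolean parts. The cleaner route, and the one I would actually carry out, is: a Boolean CQ OMQ $Q_i$ with connected components $q_{i,1},\dots,q_{i,m}$ has $\Amc \models Q_i()$ iff $\Amc \models (\Tmc,\Sigma,\exists\, q_{i,j})()$ for \emph{all} $j$, by the disjoint-union argument again; hence $Q_i$ is FO-rewritable iff each $(\Tmc,\Sigma,\exists\, q_{i,j})$ is, and these are connected. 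For components carrying answer variables, if the component is itself connected we are done; a component is connected by definition of "connected component," so the only subtlety is that the whole query $q$ may be disconnected across components that \emph{each} carry answer variables. In that case $\mn{cert}(Q,\Amc)$ is the "product" of the per-component certain-answer relations, and an FO-rewriting of $Q$ can be assembled as a conjunction of FO-rewritings of the per-component OMQs $(\Tmc,\Sigma,q_i)$ (each connected), and conversely FO-rewritability of $Q$ implies that of each $(\Tmc,\Sigma,q_i)$ by substituting, for the other components, fixed ABox fragments witnessing satisfiability — here is where one needs to know that such witnessing fragments exist and can be chosen uniformly, which is where containment enters: one uses the containment oracle to decide, among the finitely many candidate "completions," which actually change the answer set, discarding redundant components.

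The main obstacle I expect is precisely the bookkeeping around components that share no variables but both constrain the answer tuple, and making the "substitute a fixed satisfiability witness for the other components" argument rigorous — one must ensure the substituted ABox is a $\Sigma$-ABox consistent with \Tmc, is disjoint from the ABox under consideration (rename individuals), and does not accidentally create new certain answers via interaction. The disjoint-union / canonical-model property of Horn DLs is what rules out such interaction, and I would isolate it as a small lemma: for \ELIHFbot or \ELplus TBoxes \Tmc and $\Sigma$-ABoxes $\Amc_1, \Amc_2$ each consistent with \Tmc, their disjoint union is consistent with \Tmc and $\mn{cert}(Q, \Amc_1 \uplus \Amc_2) = \mn{cert}(Q,\Amc_1) \cup \mn{cert}(Q,\Amc_2)$ for any connected CQ $q$ (and for Boolean $q$, $\Amc_1 \uplus \Amc_2 \models Q()$ iff $\Amc_1 \models Q()$ or $\Amc_2 \models Q()$). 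The polynomial running time is then immediate: there are polynomially many connected components, polynomially many subsets of Boolean components to test for redundancy, each redundancy test is one containment-oracle call, and each residual FO-rewritability question is one call to the $(\Lmc,\text{conCQ})$ oracle; assembling the final FO-rewriting (a Boolean combination of the returned rewritings) is polynomial as well.
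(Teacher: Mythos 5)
Your proposal is correct and follows essentially the same route as the paper: decompose $q$ into its maximal connected components, use the containment oracle to discard Boolean components entailed by the rest, test each surviving (connected) component with the conCQ oracle, and justify the hard direction by taking disjoint unions with a fixed ABox satisfying the remaining components — the paper formalizes this last step via the Ehrenfeucht--Fra\"iss\'e characterization of FO-rewritability, which is the precise form of your ``no accidental interaction under disjoint union'' lemma.
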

To prove Theorem~\ref{thm:toconnected}, we observe that
FO-rewritability of an OMQ $Q=(\Tmc,\Sigma,q)$ is equivalent to
FO-rewritability of all OMQs $Q=(\Tmc,\Sigma,q_c)$ with $q_c$ a maximal
connected component of $q$, excluding certain redundant such components
(which can be identified using containment).  Backed by
Theorem~\ref{thm:toconnected}, we generally assume connected
queries when studying FO-rewritability, which allows to avoid
unpleasant technical complications and is a main reason for studying
FO-rewritability and containment in the same paper.

\section{Main Results}
\label{sect:mainres}

In this section, we summarize the main results established in this paper. 
We start with the following theorem.
\begin{theorem}
\label{thm:main}
  FO-rewritability and containment are 
  \begin{enumerate}

  \item 2\ExpTime-complete for any OMQ language between
    $(\ELI,\text{CQ})$ and $(\text{Horn-}\SHIF,\text{CQ})$, and 

  \item \ExpTime-complete for any OMQ language between $(\EL,\text{AQ})$ and $(\Hornplus,\text{CQ})$.

  \end{enumerate}
  Moreover, given an OMQ from $(\text{Horn-}\SHIF,\text{CQ})$ that is
  FO-rewritable, one can effectively construct a UCQ-rewriting.
\end{theorem}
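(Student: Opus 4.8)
The plan is to separate the matching upper and lower bounds and, within each, to handle FO-rewritability and containment together since the two problems are interreducible via the machinery already set up (Theorem~\ref{thm:toconnected} reduces FO-rewritability to the connected case using a containment oracle, and, conversely, containment of $Q_1$ in $Q_2$ can be phrased as an FO-rewritability-style question about a suitable product construction). First I would establish the \emph{characterizations} of FO-rewritability hinted at in the introduction: following the pattern of \cite{ijcai-2013} but replacing tree-shaped ABoxes by a more liberal ``quasi tree-shaped'' class (ABoxes whose Gaifman graph is a tree of bounded-size blocks, the blocks reflecting the CQ), show that an OMQ $Q=(\Tmc,\Sigma,q)$ from $(\text{Horn-}\SHIF,\text{CQ})$ is FO-rewritable iff there is a bound $k$ such that every certain answer on \emph{any} consistent $\Sigma$-ABox is already witnessed on a sub-ABox of this restricted form with at most $k$ elements. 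The ``only if'' direction is the easy locality argument; the ``if'' direction is where one reads off an actual UCQ-rewriting, by enumerating all such small quasi-tree ABoxes $\Amc$ together with tuples $\abf$ with $\Amc\models Q(\abf)$ and turning each $(\Amc,\abf)$ into a CQ — this simultaneously proves the final ``moreover'' clause about effectively constructing a UCQ-rewriting.

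For the \emph{upper bounds} I would turn the characterization into an emptiness test for a tree automaton running over (encodings of) the relevant ABoxes. The automaton guesses a $\Sigma$-ABox of the restricted shape, simulates the canonical model of $(\Tmc,\Amc)$ — which for Horn DLs is deterministic and of at most exponential out-degree, so materializable by a nondeterministic tree automaton of exponential size — and checks for a homomorphism of $q$ witnessing a fresh answer that is \emph{not} captured by any proper sub-ABox in the allowed class; non-FO-rewritability corresponds to this automaton accepting arbitrarily large inputs, i.e.\ to a pumping/looping condition that is decidable in time polynomial in the automaton's size. Since the state space is exponential in $|\Tmc|$ but the ``block'' structure carrying $q$ contributes only a further exponential in $|q|$, the overall bound is $2\ExpTime$ with the refined dependence advertised in the introduction (double exponential only in $|q|$). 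In the \ELI-free case ($\Hornplus$), canonical models can be taken tree-shaped and the relevant ABoxes are genuinely tree-shaped of \emph{bounded} width, so a word/tree automaton of merely exponential size suffices and we get \ExpTime; the same automaton, run as a product of the automata for $Q_1$ and (the complement of) $Q_2$, decides containment within the same bounds.

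For the \emph{lower bounds} the $\ExpTime$ part is inherited: already $(\EL,\text{AQ})$ FO-rewritability is $\ExpTime$-hard by \cite{ijcai-2013}, and a standard reduction from OMQ satisfiability/emptiness gives $\ExpTime$-hardness of containment. The interesting and, I expect, hardest part is the $2\ExpTime$ lower bound for $(\ELI,\text{CQ})$, since it must exploit inverse roles in a Horn DL where no such blow-up was previously known. I would encode an alternating exponential-space Turing machine: use an ABox to lay out a (linear, polynomial-length address) tape, use \ELI concept inclusions to propagate configuration information and, crucially, use inverse roles to force the canonical model to build a full binary ``computation tree'' of exponential depth whose branching is driven by the ABox; a carefully engineered CQ then checks consistency of successive configurations, so that a ``bad'' (accepting/consistent) ABox of size $2^{2^{n}}$ exists iff the machine accepts, defeating any FO-rewriting of that size. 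The delicate points are (i) making the CQ — which is fixed and small — powerful enough to inspect exponentially far apart tape cells, which is exactly what the tree-of-blocks structure and inverse roles buy us, and (ii) arranging that the \emph{same} TBox works for both OMQs in the containment instance, as claimed. I would then observe that stripping the nominals/constants and keeping rule bodies tree-shaped in this construction yields, essentially for free, the promised improved $2\ExpTime$ lower bound for monadic Datalog containment with binary EDBs, and that restricting the CQ to be rooted collapses the computation tree to polynomial branching, giving the $\coNExpTime$ bound mentioned for rooted CQs.
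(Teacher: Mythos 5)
Your proposal reproduces the paper's overall architecture — a semantic characterization of FO-rewritability via a restricted class of ABoxes (the paper's ``pseudo tree'' ABoxes: a core of size at most $|q|$ with trees of outdegree at most $|\Tmc|$ attached), an automata-based emptiness test for the upper bounds, an exponentially space-bounded ATM reduction for the $2\ExpTime$ lower bound, and a UCQ-rewriting read off from the small witnesses. However, there are concrete gaps. First, the technical core of the upper bound is not the automaton but the step you compress into ``a pumping/looping condition decidable in polynomial time'': one must bound the \emph{depth} of witness ABoxes (the paper's Theorem~\ref{lem:witabox}), and the pumping argument is delicate because with inverse roles derivations propagate both up and down a tree branch, so the paper introduces \emph{transfer sequences} to record the back-and-forth derivation history at a node; moreover, pumping must simultaneously preserve $\Amc\models Q$, $\Amc|_{\leq k}\not\models Q$, and (for Boolean queries) $\Amc|_{>0}\not\models Q$, which forces one to make the transfer sequences invariant under query matches that avoid the core by enriching the TBox with up to $2^{|q|}$ concepts encoding tree-shaped collapses of $q$. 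Simply asking whether the automaton ``accepts arbitrarily large inputs'' does not capture the condition to be tested, which compares entailment on $\Amc$ against entailment on its truncation. Second, your claimed interreducibility is only half right: the paper uses a containment oracle to reduce FO-rewritability to connected queries (Theorem~\ref{thm:toconnected}), but it does \emph{not} reduce containment to FO-rewritability via a product; containment is decided directly as $L(\Amf_{Q_1})\cap L(\Amf_{\Tmc_2})\subseteq L(\Amf_{Q_2})$, and your product-construction claim is unsubstantiated.

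Two further points are misattributed. The $\ExpTime$ versus $2\ExpTime$ gap between $(\Hornplus,\text{CQ})$ and $(\ELI,\text{CQ})$ is not because the inverse-free case allows ``genuinely tree-shaped ABoxes of bounded width'' — pseudo \emph{ditree} ABoxes with a core of size up to $|q|$ are still needed there. The real source of the exponential is the number of tree-shaped subqueries arising in forest decompositions of $q$: with inverse roles one must consider all weakly tree-shaped queries obtained by identifying variables (exponentially many), whereas without them directed forest decompositions obtained by fork elimination yield only polynomially many ditree components. On the lower bound, the paper's key twist — explicitly flagged as the difference from the $\ALCI$ constructions it adapts — is that the exponential-depth computation tree lives \emph{in the ABox} (the object whose existence is being decided), with the TBox verifying it bottom-up and the fixed CQ detecting defects in the copying of configurations between neighbouring configuration trees; your description of inverse roles ``forcing the canonical model to build'' the computation tree points in the wrong direction, since in a Horn DL the canonical model is deterministic and cannot supply the needed nondeterminism. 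Relatedly, the $\coNExpTime$ bound for rooted CQs does not come from ``collapsing the computation tree to polynomial branching'' but from the fact that a rooted query can only match within distance $|q|$ of the core, which enables a guess-and-check procedure over an exponential-size initial segment plus automata for the attached trees.
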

Like the subsequent results, Theorem~\ref{thm:main} illustrates the
strong relationship between FO-rewritability and containment. Note
that inverse roles increase the complexity of both reasoning tasks.
We stress that this increase takes place only when the actual queries
are conjunctive queries, since FO-rewritability for OMQ languages with
inverse roles and atomic queries is in \ExpTime
\cite{ijcai-2013}.

The 2\ExpTime-completeness result stated in Point~1 of
Theorem~\ref{thm:main} might look discouraging. However, the
situation is not quite as bad as it seems. To show this, we state
the upper bound underlying Point~1 of
Theorem~\ref{thm:main} a bit more carefully.
\begin{theorem}
\label{thm:mainzoom}
  Given OMQs $Q_i=(\Tmc_i,\Sigma_i,q_i)$, $i \in \{1,2\}$, from
  $(\text{Horn-}\SHIF,\text{CQ})$, it can be decided 
  \begin{enumerate}

  \item in time $2^{2^{p(|q_1| + \mn{log}(|\Tmc_1|)) }}$ whether $Q_1$ is
    FO-rewritable and

  \item in time $2^{2^{p(|q_1| + |q_2| + \mn{log}(|\Tmc_1| +
        |\Tmc_2|)) }}$ whether $Q_1 \subseteq
    Q_2$, 

  \end{enumerate}
  for some polynomial $p$. 
\end{theorem}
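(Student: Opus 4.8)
The plan is to prove Theorem~\ref{thm:mainzoom} by reducing FO-rewritability and containment in $(\text{Horn-}\SHIF,\text{CQ})$ to emptiness problems for suitable tree automata, being careful throughout to separate the contribution of the TBoxes (which we want to appear only single-exponentially, i.e.\ inside $2^{p(\log|\Tmc|)}$) from that of the queries (which are allowed to contribute double-exponentially). First I would invoke Proposition~\ref{thm:reduce} to pass in polynomial time to \ELIHFbot TBoxes in normal form, and Theorem~\ref{thm:toconnected} to reduce FO-rewritability to FO-rewritability of connected CQs together with a containment oracle; note that this is benign for the complexity bookkeeping, since the reductions are polynomial and the containment calls are on OMQs of the same size. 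So it suffices to give a tree-automata procedure for (i)~containment $Q_1\subseteq Q_2$ and (ii)~FO-rewritability of a connected OMQ, in each case running in time doubly exponential in the query sizes but only singly exponential in $|\Tmc_1|+|\Tmc_2|$ (equivalently, double-exponential in $\log|\Tmc|$).

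The key technical device will be a characterization of both properties in terms of the existence of certain ``bad'' ABoxes of a restricted, quasi-tree-like shape (generalizing the tree-shaped ABoxes of \cite{ijcai-2013} as announced in the introduction), encoded as labelled trees. For containment, $Q_1\not\subseteq Q_2$ is witnessed by a $\Sigma$-ABox \Amc consistent with both TBoxes, and a tuple $\abf$, such that $\abf\in\mn{cert}(Q_1,\Amc)$ but $\abf\notin\mn{cert}(Q_2,\Amc)$; using the fact that the canonical (universal) model of a Horn OMQ is itself quasi-tree-shaped, one shows such a witness can always be taken to have a quasi-tree-shaped ABox of bounded branching, hence representable by an infinite tree over an alphabet whose size is exponential in $|\Tmc_1|+|\Tmc_2|$. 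I would then build three nondeterministic tree automata on such trees: $\mathfrak{A}_{\mathrm{wf}}$, checking that the tree encodes a legitimate $\Sigma$-ABox consistent with both TBoxes and marking the homomorphic images of answer tuples; $\mathfrak{A}_1$, accepting if $\abf\in\mn{cert}(Q_1,\Amc)$, which amounts to guessing a homomorphism from $q_1$ into the canonical model of $(\Tmc_1,\Amc)$ and hence needs a state space of size roughly $2^{|q_1|}\cdot|\Tmc_1|^{O(1)}$; and the complement of the analogous automaton for $Q_2$, i.e.\ an automaton $\overline{\mathfrak{A}}_2$ accepting iff $\abf\notin\mn{cert}(Q_2,\Amc)$ --- here I would either complement a Büchi automaton (a single exponential blowup, which is fine for the query part since $q_2$ already costs an exponential in the state count, and harmless for the TBox part) or, better, construct $\overline{\mathfrak{A}}_2$ directly by a subset-style construction tracking for each ABox element the set of ``query types'' that could be matched, again of size $2^{2^{O(|q_2|)}}\cdot 2^{\mathrm{poly}(\log|\Tmc_2|)}$ after the double-exponential blow-up coming only from subsets of query fragments. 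The product $\mathfrak{A}_{\mathrm{wf}}\times\mathfrak{A}_1\times\overline{\mathfrak{A}}_2$ is nonempty iff $Q_1\not\subseteq Q_2$, and since emptiness of nondeterministic tree automata is solvable in polynomial time in the number of states, the overall running time is $2^{2^{p(|q_1|+|q_2|+\log(|\Tmc_1|+|\Tmc_2|))}}$ as required. For FO-rewritability of a connected OMQ $Q_1$, I would use the characterization (to be proved earlier in the paper along the lines of \cite{ijcai-2013}) that $Q_1$ is \emph{not} FO-rewritable iff there are arbitrarily large quasi-tree-shaped ABoxes witnessing an answer only ``because of depth'', which translates into the existence of an infinite accepting run of a single automaton essentially obtained from $\mathfrak{A}_{\mathrm{wf}}\times\mathfrak{A}_1$ augmented with a parity/Büchi condition forcing the witnessing part of the tree to be unbounded; its state space is again $2^{2^{O(|q_1|)}}\cdot 2^{\mathrm{poly}(\log|\Tmc_1|)}$, and emptiness testing gives the bound in Point~1.

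The main obstacle, and the step that needs the most care, is controlling the automaton constructed from $Q_2$ (respectively the ``avoid a $q_2$-match'' automaton in the FO-rewritability reduction) so that its blow-up is double-exponential in $|q_2|$ but only single-exponential in $|\Tmc_2|$. The naive approach of complementing a tree automaton that guesses a $q_2$-homomorphism into the canonical model of $(\Tmc_2,\Amc)$ would blow up in $|\Tmc_2|$ as well, because the canonical model's local structure is exponential in $|\Tmc_2|$; the fix is to keep the $\Tmc_2$-induced information in the states in a succinct, deterministic form (the set of derived concept names at an element is polynomially many bits, and the anonymous-part structure is governed by a polynomial-size set of ``$\exists$-rules'', so that the $\Tmc_2$ contribution to the state count is only $2^{\mathrm{poly}(|\Tmc_2|)}=2^{2^{\mathrm{poly}(\log|\Tmc_2|)}}$ before complementation and the complementation only squares the exponent's polynomial, not its height), while the genuine double-exponential cost is confined to tracking which \emph{fragments} of $q_2$ (subsets of its atoms, of which there are $2^{|q_2|}$) are realizable below a given node, whose subsets in turn give the second exponential. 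Making this separation rigorous --- in particular arguing that the canonical model can be simulated locally with only polynomially-many-bits-per-node of TBox-dependent state, and that Boolean combinations needed for complementation interact with the query-fragment component but not multiplicatively with the TBox component --- is where the real work lies; once it is in place, the complexity bookkeeping and the emptiness test are routine.
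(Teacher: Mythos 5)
Your overall architecture---normalize the TBox via Proposition~\ref{thm:reduce}, reduce to connected queries via Theorem~\ref{thm:toconnected}, characterize both properties by the existence of witness ABoxes of restricted (pseudo tree) shape, encode these as labelled trees, and reduce to automata emptiness with state counts single-exponential in $|q|$ but only polynomial in $|\Tmc|$---is exactly the paper's. The divergence, and the first genuine gap, is your choice of \emph{one-way nondeterministic} tree automata. The paper uses two-way alternating parity automata (TWAPAs), which are closed under complementation with only a \emph{polynomial} blowup; the automaton $\Amf_{Q_2}$ of Proposition~\ref{prop:autocentral}, with $2^{p(|q_2|+\mn{log}(|\Tmc_2|))}$ states, is simply complemented, intersected with $\Amf_{Q_1}$ and $\Amf_{\Tmc_2}$, and tested for emptiness in time exponential in the number of states, so the difficulty you wrestle with in your last paragraph never arises. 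Your substitute---a subset construction in which the $\Tmc_2$-dependent part of the state is kept ``deterministic'' so that only the query-fragment part gets exponentiated---is precisely the unproven step. With inverse roles and functionality, the concept names derived at a node of a pseudo tree ABox depend on the assertions both above and below that node, so no one-way deterministic component of size $2^{\mathrm{poly}(|\Tmc_2|)}$ obviously computes them; this bidirectional dependency is exactly what the paper's \emph{transfer sequences} are introduced to tame, and it is why two-wayness is used. If the $\Tmc_2$ part of the state must instead be guessed and then determinized along with the rest, the complement acquires $2^{2^{\mathrm{poly}(|\Tmc_2|)}}$ states and you land at time doubly exponential in $|\Tmc_2|$, breaking the bound. (Your appeal to B\"uchi complementation is also misplaced: the witness trees are finite.)

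The second gap is in Point~1. You replace the paper's effective depth bound by ``an infinite accepting run \ldots\ with a parity/B\"uchi condition forcing the witnessing part of the tree to be unbounded.'' The characterization of Theorem~\ref{lem:char} quantifies over all $k$, so to obtain a decision procedure one needs Theorem~\ref{lem:witabox}: a concrete bound $k_0=|q|+2^{4(|\Tmc|+2^{|q|})^2}$ such that a \emph{single finite} witness with $\Amc\models Q$, $\Amc|_{\leq k_0}\not\models Q$ and, for Boolean $q$, $\Amc|_{>0}\not\models Q$ already implies non-FO-rewritability. This is proved by a pumping argument on transfer sequences and is the technical heart of the upper bound. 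A B\"uchi condition on automata over finite input trees does not express ``arbitrarily deep witnesses exist,'' and passing to infinite trees would still require showing that a limit of growing finite witnesses is again a witness---which is the very pumping content you have not supplied. You also drop the side condition $\Amc|_{>0}\not\models Q$, without which the characterization fails for Boolean queries (Example~\ref{ex:2}). The paper's automaton instead annotates nodes with their depth up to $k_0$ via a counter in the alphabet and relativizes the (polynomially) complemented $\Amf_Q$ to the truncated ABox.
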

Note that the runtime is double exponential only in the size of the
actual queries $q_1$ and $q_2$, while it is only single exponential in
the size of the TBoxes $\Tmc_1$ and~$\Tmc_2$. This is good news since
the size of $q_1$ and $q_2$ is typically very small compared to the
sizes of $\Tmc_1$ and $\Tmc_2$. For this reason, it can even be
reasonable to assume that the sizes of $q_1$ and $q_2$ are constant,
in the same way in which the size of the query is assumed to be
constant in data complexity. Note that, under this
assumption, Theorem~\ref{thm:mainzoom} yields \ExpTime upper
bounds.

One other way to relativize the seemingly very high complexity stated
in Point~1 of Theorem~\ref{thm:main} is to observe that the lower
bound proofs require the actual query to be Boolean or
disconnected. In practical applications, though, typical queries are
connected and have at least one answer variable. We call such CQs
\emph{rooted} and use rCQ to denote the class of all rooted CQs. Our
last main result states that, when we restrict our attention to rooted
CQs, then the complexity drops to \coNExpTime. 
\begin{theorem}
\label{thm:mainrooted}
  FO-rewritability and containment are \coNExpTime-complete in any OMQ
  language between $(\ELI,\text{rCQ})$ and
  $(\text{Horn-}\SHIF,\text{rCQ})$.
\end{theorem}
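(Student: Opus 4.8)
The plan is to reduce the rooted case to the general case of Theorem~\ref{thm:main}/\ref{thm:mainzoom} whenever possible, and to prove matching upper and lower bounds directly for the remaining gap. For the upper bound, the key observation is that a rooted CQ $q$ is connected and has an answer variable, so the relevant ABoxes witnessing non-FO-rewritability (or non-containment) can be taken to be "almost tree-shaped" around the tuple of answer individuals. I would first invoke Theorem~\ref{thm:toconnected} (which only requires access to a containment oracle, itself in 2\ExpTime by Theorem~\ref{thm:mainzoom}, hence harmless here) to restrict to connected CQs, and then exploit rootedness to show that FO-rewritability fails iff there is a witness ABox of at most \emph{single-exponential} size. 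The point is that, in the automata-based characterization underlying Theorem~\ref{thm:mainzoom}, the double exponential blow-up comes from having to track, in a tree automaton, the exponentially many "types" that can be realized at a node \emph{together with} the exponentially many ways a disconnected or Boolean CQ can be matched across distant parts of the ABox; when $q$ is rooted, all CQ-matches are anchored at the answer individuals, so a bounded (polynomial in $|q|$) amount of information about the query match needs to be propagated, and the relevant "mosaics" or partial-match structures have size only single-exponential in $|q| + |\Tmc|$. A guess-and-check argument then gives a \coNExpTime upper bound: guess a witness ABox of single-exponential size together with the certificate that the rewriting fails on it, and verify in time polynomial in the ABox (using that certain answer computation for Horn-\SHIF{} CQs is tractable in data complexity, or the decomposition into regions from Theorems~\ref{thm:reduce}–\ref{thm:toconnected}).

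For the matching lower bound, I would adapt the 2\ExpTime-hardness construction that proves Point~1 of Theorem~\ref{thm:main}. That reduction (from, say, an exponentially space-bounded alternating Turing machine, or the standard tiling/tree-automata-emptiness encoding used for 2\ExpTime Datalog containment à la \cite{DBLP:conf/stoc/CosmadakisGKV88,DBLP:conf/icalp/BenediktBS12}) crucially uses a Boolean or disconnected query to "close a loop" of exponential length; the role of the second exponential is precisely to describe, in the actual query, a configuration of exponential size. When we forbid this and insist on a rooted CQ, we can no longer encode an exponential-size object in the query itself, but we \emph{can} still force the witness ABox to have single-exponential size by the usual "exponential counter" gadget realized inside the ABox/TBox pair, and we can force a single-exponentially long computation; the query then only needs to express a \emph{local, polynomial-size} inconsistency pattern. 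This is exactly the shape of a \coNExpTime lower bound (nondeterministically build an exponential-size ABox, then a polynomial local check), and I would phrase it as a reduction from the complement of a succinct tiling problem or from non-acceptance of a $2^n$-space-bounded nondeterministic (not alternating) Turing machine. The same construction, stripped of the DL machinery, should simultaneously yield the promised \coNExpTime lower bound for containment of a monadic Datalog program in a rooted CQ and the strengthened 2\ExpTime lower bound for the disconnected/Boolean case, which is how I would organize the section.

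The hard part will be the upper bound, specifically making precise the claim that rootedness caps the witness size at single-exponential. One has to be careful: even for rooted $q$, the TBox can propagate information arbitrarily far into the anonymous part of the canonical model, and the ABox witnessing non-rewritability can itself be deeply tree-shaped. The right statement is presumably that it suffices to consider ABoxes whose "core" (the part not explained by a single application of an existential in the TBox) is connected and polynomially related to $q$, and then the exponential bound on the number of $\Tmc$-types does the rest — but one must verify that the pumping argument that normally removes long repeated type-sequences from tree-shaped witnesses still produces a \emph{rooted} counterexample when applied to a rooted one, i.e.\ that it does not disconnect $q$ from the answer individuals. I would prove this by a careful cut-and-paste on the witness ABox, using the connectedness of $q$ to argue that any repeated type along a branch far from the answer tuple can be contracted without affecting the match of $q$ (which lives near the root) or the certain answers. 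Establishing this "no double-exponential detour for rooted CQs" lemma, and dualizing it to containment, is where the real work lies; once it is in place, the \coNExpTime guess-and-check and the \coNExpTime-hardness reduction are comparatively routine adaptations of the constructions already developed for Theorems~\ref{thm:main} and~\ref{thm:mainzoom}.
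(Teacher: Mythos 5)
Your lower-bound plan is essentially the paper's: a reduction from the \NExpTime-complete exponential torus tiling problem in which the exponential-size torus is encoded in the ABox (rather than in the anonymous part of the canonical model, as in the earlier constructions for expressive DLs), so that the rooted query only has to detect a labelling defect; as you anticipate, the same construction also yields the \coNExpTime lower bound for containment of a monadic Datalog program in a rooted CQ. The problem is in the upper bound.

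The gap is your reduction of \coNExpTime membership to the claim that a witness for non-FO-rewritability (resp.\ non-containment) has single-exponential size and can therefore be guessed outright and verified. By Theorem~\ref{lem:witabox}, a witness for non-FO-rewritability is a pseudo tree ABox $\Amc$ with $\Amc \models Q(\vec{a})$ and $\Amc|_{\leq k} \not\models Q(\vec{a})$ for $k = |q|+2^{4(|\Tmc|+|q|)^{2}}$; in particular $\Amc$ must have depth greater than $k$, i.e.\ exponential depth, and with outdegree up to $|\Tmc|$ and inclusions of the form $B_1 \sqcap B_2 \sqsubseteq A$ forcing genuine branching in derivations, a minimal witness need not have single-exponentially many individuals. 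Your proposed repair --- a cut-and-paste contracting repeated type sequences along branches far from the answer tuple --- cannot supply the missing bound: it reduces \emph{depth}, and any witness of depth at most $k$ ceases to be a witness, so the exponential depth (and hence the potential size blow-up along it) is irreducible. The paper therefore never materializes the witness. It guesses only $\Amc|_{\leq |q|}$, which \emph{is} single-exponential and which, by rootedness, contains the entire query match (up to the anonymous parts below its individuals), together with the sets of concept assertions entailed at those individuals from the full witness and from its truncation at depth $k$, and with candidate \emph{transfer sequences} summarizing the back-and-forth derivation between the guessed part and the missing deep trees; it then verifies by tree-automata emptiness, in deterministic exponential time, that tree-shaped ABoxes realizing exactly these transfer sequences exist and are consistent with the TBox. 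This symbolic treatment of the deep part of the witness is the idea your proposal lacks; without it, or without an actual proof of your single-exponential-size lemma (which I do not believe holds), the upper bound does not go through.
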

%

\section{Semantic Characterization}
\label{sect:charact}


The upper bounds stated in Theorems~\ref{thm:main}
and~\ref{thm:mainzoom} are established in two steps. We first
give characterizations of FO-rewritability in terms of the
existence of certain (almost) tree-shaped ABoxes, and then utilize
this characterization to design decision procedures based on
alternating tree automata. The semantic characterizations are of
independent interest.

An ABox \Amc is \emph{tree-shaped} if the undirected graph with nodes
${\sf Ind}(\Amc)$ and edges $\{\{a,b\}\mid r(a,b)\in \Amc \}$ is
acyclic and connected and $r(a,b)\in \Amc$ implies that (i)~$s(a,b)
\notin \Amc$ for all $s \neq r$ and (ii)~$s(b,a) \notin \Amc$ for all
role names $s$.  For tree-shaped ABoxes \Amc, we often distinguish an
individual used as the root, denoted with $\rho_\Amc$.  \Amc
is \emph{ditree-shaped} if the directed graph with nodes ${\sf
  Ind}(\Amc)$ and edges $\{(a,b)\mid r(a,b)\in \Amc \}$ is a tree and
$r(a,b)\in \Amc$ implies (i) and (ii).  The (unique) root of a ditree-shaped
ABox \Amc is also denoted with $\rho_\Amc$.  

An ABox \Amc is a \emph{pseudo tree} if it is the union of ABoxes
$\Amc_0,\dots,\Amc_k$ that satisfy the following conditions:
\begin{enumerate}


\item $\Amc_{1},\dots,\Amc_k$ are tree-shaped;

\item $k \leq |\mn{Ind}(\Amc_{0})|$;

\item $\Amc_i \cap \Amc_0 = \{ \rho_{\Amc_i} \}$ and $\mn{Ind}(\Amc_{i})
  \cap \mn{Ind}(\Amc_{j}) = \emptyset$, for $1 \leq i < j \leq k$.

\end{enumerate}
We call $\Amc_0$ the \emph{core} of $\Amc$ and $\Amc_1,\dots,\Amc_k$
the \emph{trees} of $\Amc$. The \emph{width} of $\Amc$ is
$|\Ind(\Amc_0)|$, its \emph{depth} is the depth of the deepest tree of
\Amc, and its \emph{outdegree} is the maximum outdegree of the ABoxes
$\Amc_{1},\dots,\Amc_k$.  For a pseudo tree ABox \Amc and \mbox{$\ell
  \geq 0$}, we write $\Amc|_{\leq \ell}$ to denote the restriction of
\Amc to the individuals whose minimal distance from a core individual
is at most~$\ell$, and analogously for $\Amc|_{>\ell}$.  A
\emph{pseudo ditree ABox} is defined analogously to a pseudo tree
ABox, except that $\Amc_1,\dots,\Amc_k$ must be ditree-shaped.
%
%

When studying FO-rewritability and containment, we can restrict our
attention to pseudo tree ABoxes, and even to pseudo ditree ABoxes when
the TBox does not contain inverse roles. The following statement makes
this precise for the case of containment. Its proof uses unraveling
and
compactness.
\begin{restatable}{proposition}{PROPconttree}\label{prop:conttree}
\label{prop:conttree}
Let $Q_i=(\Tmc_i,\Sigma,q_i)$, $i \in \{1,2\}$, be OMQs from
(\ELIHFbot, CQ). Then $Q_1 \not\subseteq Q_2$ iff there is a pseudo
tree $\Sigma$-ABox \Amc of outdegree at most $|\Tmc_1|$ and width at
most~$|q_{1}|$ that is consistent with both $\Tmc_1$ and $\Tmc_2$ and a
tuple \abf from the core of \Amc such that $\Amc \models Q_1(\abf)$
and $\Amc \not\models Q_2(\abf)$.

If $Q_1,Q_2$ are from $(\ELplus,\text{CQ})$, then we can find a 
pseudo ditree ABox with these properties.
\end{restatable}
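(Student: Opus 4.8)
The plan is to establish the more informative direction, namely that if $Q_1 \not\subseteq Q_2$ then a witnessing ABox of the required restricted shape exists; the converse is trivial since pseudo tree $\Sigma$-ABoxes are in particular $\Sigma$-ABoxes. So suppose there is \emph{some} $\Sigma$-ABox $\Amc_0$ consistent with $\Tmc_1$ and $\Tmc_2$ and a tuple $\abf \in \mn{Ind}(\Amc_0)^n$ with $\Amc_0 \models Q_1(\abf)$ and $\Amc_0 \not\models Q_2(\abf)$. First I would fix a homomorphism $h$ from $q_1$ into the canonical (universal) model $\Umc$ of $\Tmc_1$ and $\Amc_0$ witnessing $\abf \in \mn{cert}(Q_1,\Amc_0)$; since $\Tmc_1$ is Horn, such a universal model exists and $\Amc_0 \models Q_1(\abf)$ is equivalent to the existence of $h$. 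The image $h(q_1)$ touches at most $|q_1|$ elements of $\Umc$. The key idea is to \emph{trace each such element back to the ABox}: every element of $\Umc$ is either an ABox individual of $\Amc_0$ or lies in a tree that was generated by the chase starting from some ABox individual, so there is a small set $A \subseteq \mn{Ind}(\Amc_0)$, of size at most $|q_1|$, such that every element in $h(q_1)$ is ``anchored'' at a member of $A$ (either equal to it or reachable from it along chase-generated edges).

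Next I would build the core $\Amc_0'$ of the target pseudo tree by taking the sub-ABox of $\Amc_0$ induced on $A$ together with the role assertions among them --- this has at most $|q_1|$ individuals, giving the width bound --- and then attach, at each $a \in A$, a tree-shaped ABox obtained by \emph{unraveling} $\Amc_0$ from $a$ into a (possibly infinite) tree; with inverse roles present the unraveling is a genuine tree (not a ditree), and with $\ELplus$ TBoxes, since no inverse roles occur, the unraveling can be taken ditree-shaped, which handles the last sentence of the statement. The number of trees is bounded by $|A| \le |q_1| = |\mn{Ind}(\Amc_0')|$, giving condition~2 of the pseudo tree definition, and the outdegree of each tree is bounded by the branching introduced by the chase of $\Tmc_1$, which is at most $|\Tmc_1|$. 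Call the resulting (infinite) ABox $\Amc^\infty$. By construction there is a homomorphism from $\Amc^\infty$ onto the relevant part of $\Amc_0$ (collapsing the unraveling) that is the identity on $A$; using this homomorphism together with the fact that both $\Tmc_1,\Tmc_2$ are Horn, I would transfer the two properties: $\Amc^\infty$ is still consistent with $\Tmc_1$ and $\Tmc_2$, still satisfies $Q_1(\abf)$ (the homomorphism $h$ lifts, because the unraveling reproduces $\Umc$ locally around $A$), and still \emph{fails} $Q_2(\abf)$ (any homomorphism from $q_2$ into the universal model of $\Tmc_1,\Amc^\infty$ would, composed with the collapsing homomorphism, yield one into the universal model of $\Tmc_1,\Amc_0$, contradicting $\Amc_0 \not\models Q_2(\abf)$).

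Finally I would invoke compactness to pass from the infinite $\Amc^\infty$ to a finite pseudo tree ABox: $\Amc^\infty \not\models Q_2(\abf)$ means there is a model of $\Tmc_1$ (and of $\Tmc_2$) and $\Amc^\infty$ in which $q_2(\abf)$ fails, while $\Amc^\infty \models Q_1(\abf)$ is witnessed by the single homomorphism $h$ using only finitely many assertions. Take a finite subset $\Amc$ of $\Amc^\infty$ large enough to contain $\Amc_0'$, all assertions used by $h$, and enough of each tree to preserve consistency --- here one argues that truncating the trees of a pseudo tree ABox below some depth $\ell$ cannot destroy consistency with a Horn TBox, since the chase on the truncation embeds into the chase on the full ABox --- and closed downward so that it remains a pseudo tree $\Sigma$-ABox with the same core, width $\le |q_1|$ and outdegree $\le |\Tmc_1|$. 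Then $\Amc$ is the desired witness. I expect the main obstacle to be the last step: ensuring that a \emph{finite} truncation simultaneously keeps $\Amc$ consistent with \emph{both} $\Tmc_1$ and $\Tmc_2$, keeps $\Amc \models Q_1(\abf)$, and keeps $\Amc \not\models Q_2(\abf)$ --- the first and third of these are monotone in the wrong directions, so the argument has to combine a compactness/König-style bound for $\lnot Q_2$ with the finiteness of the $Q_1$-witness and a separate consistency-preservation lemma for Horn TBoxes under tree truncation.
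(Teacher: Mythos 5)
Your strategy coincides with the paper's: unravel the witnessing ABox into a pseudo tree whose core carries $\abf$, keep a homomorphism back onto the original ABox that is the identity on $\abf$, and use homomorphism preservation in both directions to transfer consistency with $\Tmc_2$ and $\Amc\not\models Q_2(\abf)$ while the $Q_1$-match survives. The paper isolates the unraveling as a separate proposition and the transfer as a homomorphism lemma, but the skeleton is identical. Two steps in your plan, however, do not work as written. The outdegree bound is the main one: you attach to each anchor the unraveling of $\Amc_0$ and justify outdegree at most $|\Tmc_1|$ by ``the branching introduced by the chase''. But these trees unravel the \emph{ABox}, not the chase, so a node has one child per neighbour of the corresponding individual in $\Amc_0$, and $\Amc_0$ has unbounded degree. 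To obtain the bound you must prune the unraveling, keeping at each node only enough successors to reproduce every concept name that $\Tmc_1$ derives there (essentially one witness per CI $\exists r.B\sqsubseteq A$ that fires) plus the finitely many successors the $Q_1$-match actually traverses, and then verify that the pruned ABox still entails $Q_1(\abf)$. This selection argument is the substance of the unraveling proposition the paper invokes; it is not free.

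The second issue is functionality. The backward transfer of consistency along the collapsing homomorphism holds only under the side condition that every role functional in $\Amc_0$ remains functional in the unraveling; a naive unraveling can give a core individual $a$ both the core assertion $r(a,b)$ and a fresh tree copy $r(a,b')$ of the same edge, which violates $\mn{func}(r)$ and makes the unraveled ABox inconsistent with $\Tmc_1$ or $\Tmc_2$ even though $\Amc_0$ is not. The construction must exclude such duplication explicitly. On the other hand, your worry at the end is misplaced in a reassuring way: both consistency and $\Amc\not\models Q_2(\abf)$ are \emph{preserved} when passing to a sub-ABox, since every model of the larger ABox is a model of the smaller one; only $\Amc\models Q_1(\abf)$ constrains the truncation, and it is secured by keeping the finitely many assertions underlying the derivations that support the match. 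So the final compactness step requires no delicate balancing.
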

%
%
We now establish a first version of the announced characterizations of
FO-rewritability. Like Proposition~\ref{prop:conttree}, they are based on 
pseudo tree ABoxes. 
\begin{restatable}{theorem}{LEMchar}\label{lem:char}
  Let $Q=(\Tmc,\Sigma,q)$ be an OMQ from
  (\ELIHFbot, conCQ). If
  the arity of $q$ is at least one, then
%
%
the following conditions are equivalent:
\begin{enumerate}

\item $Q$ is FO-rewritable; 

\item there is a $k \geq 0$ such that for all pseudo tree
  $\Sigma$-ABoxes \Amc that are consistent with \Tmc and of outdegree 
  at most~$|\Tmc|$ and width
  at most $|q|$: if 
$\Amc \models Q(\vec{a})$ with $\vec{a}$ from
  the core of $\Amc$, then $\Amc|_{\leq k} \models 
  Q(\vec{a})$;

\end{enumerate}
If $q$ is Boolean, this equivalence holds with (2.) replaced
by
\begin{itemize}

\item[2$'$.] there is a $k \geq 0$ such that for all pseudo tree
  $\Sigma$-ABoxes \Amc that are consistent with \Tmc and of outdegree
  at most~$|\Tmc|$ and
  of width at most $|q|$: if $\Amc \models Q$, then $\Amc|_{>0}\models
  Q$ or $\Amc|_{\leq k} \models Q$.

\end{itemize}
If $Q$ is from $(\ELplus,\text{conCQ})$, then the above equivalences
hold also when pseudo tree $\Sigma$-ABoxes are replaced with 
pseudo ditree $\Sigma$-ABoxes.
\end{restatable}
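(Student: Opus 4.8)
The plan is to establish the two directions separately, with $(2)\Rightarrow(1)$ (and, for Boolean $q$, $(2')\Rightarrow(1)$) proved constructively and $(1)\Rightarrow(2)$ proved contrapositively by a pumping argument. For $(2)\Rightarrow(1)$, given a $k$ as in~(2) I would take as candidate rewriting the UCQ $\vp_k$ whose disjuncts are all pseudo tree $\Sigma$-ABoxes~$\Amc$ that are consistent with $\Tmc$, of outdegree at most $|\Tmc|$, width at most $|q|$ and depth at most $k$, each paired with a tuple $\vec a$ from the core of $\Amc$ with $\Amc\models Q(\vec a)$; such a pair is read as a CQ with answer variables $\vec a$ and all other individuals existentially quantified. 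Since $\Sigma$ is finite there are, up to isomorphism, finitely many disjuncts, so $\vp_k$ is a genuine UCQ. Soundness (i.e.\ $\mn{ans}(\Imc_\Bmc,\vp_k)\subseteq\mn{cert}(Q,\Bmc)$ for every $\Sigma$-ABox $\Bmc$ consistent with $\Tmc$) is immediate: a match of a disjunct $(\Amc,\vec a)$ in $\Imc_\Bmc$ is an ABox homomorphism $h\colon\Amc\to\Bmc$, and since certain answers are monotone under ABox homomorphisms, $\Amc\models Q(\vec a)$ yields $\Bmc\models Q(h(\vec a))$.

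For completeness I would unravel. Given a consistent $\Sigma$-ABox $\Bmc$ with $\vec b\in\mn{cert}(Q,\Bmc)$, fix a homomorphism $h\colon q\to\mathcal U_{\Tmc,\Bmc}$ from $q$ into the universal model of $\Tmc$ and $\Bmc$ witnessing this, and build a pseudo tree ABox $\hat\Amc$ whose core consists of the (at most $|q|$) ABox individuals in $h(q)$ --- for $q$ of arity $\geq 1$ the answer tuple forces $h(q)$ to meet the ABox part, so every anonymous element in $h(q)$ lies in a tree whose root is already in $h(q)$ --- and whose trees are unravelings of (a)~the anonymous subtrees of $\mathcal U_{\Tmc,\Bmc}$ that $h$ enters and (b)~normal-form derivations of the concept-name facts needed at the core individuals to generate those anonymous subtrees; both kinds of unraveling inherit outdegree at most $|\Tmc|$, re-establish $\hat\Amc\models Q(\hat{\vec a})$ for the induced core tuple $\hat{\vec a}$, and carry an ABox homomorphism $\hat\Amc\to\Bmc$ with $\hat{\vec a}\mapsto\vec b$. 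This homomorphism also exhibits $\mathcal U_{\Tmc,\Bmc}$ as a model of $(\Tmc,\hat\Amc)$, so $\hat\Amc$ is consistent. Then~(2) applied to $\hat\Amc$ gives $\hat\Amc|_{\le k}\models Q(\hat{\vec a})$, and $\hat\Amc|_{\le k}$ is a disjunct of $\vp_k$ that still maps into $\Bmc$, whence $\Imc_\Bmc\models\vp_k(\vec b)$. For Boolean $q$ one additionally admits as disjuncts the bounded tree-shaped ABoxes that entail $Q$ on their own, covering the remaining case in which $h(q)$ lies inside the anonymous part of a single tree --- this is exactly what the clause $\Amc|_{>0}\models Q$ in $(2')$ accounts for, reducing that case to the anchored one with the tree root in the role of an answer variable. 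The $\ELplus$ versions follow by replacing ``tree'' with ``ditree'' throughout, using that the universal model of an inverse-role-free TBox grows strictly downward.

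For $(1)\Rightarrow(2)$ I would argue the contrapositive. If~(2) fails, then for every $k$ there is a bad bounded pseudo tree $\Amc_k$ with $\Amc_k\models Q(\vec a_k)$, $\vec a_k$ from the core, but $\Amc_k|_{\le k}\not\models Q(\vec a_k)$; fixing the one for $k_0=2^{|\Tmc|}$ gives an $\Amc_{k_0}$ of reach depth $>2^{|\Tmc|}$, where by reach depth I mean the least $\ell$ with $\Amc_{k_0}|_{\le\ell}\models Q(\vec a_{k_0})$. Along a \emph{reach path} --- a path running from a core individual down a tree that carries the certain answer upward, so that severing its deep part destroys $\vec a_{k_0}\in\mn{cert}(Q,\cdot)$ --- each individual can be summarised by a bounded amount of information (for $\ELplus$, essentially the set of concept names entailed at it given its subtree; for $\ELIHFbot$, a suitable two-way mosaic type), ranging over at most $2^{O(|\Tmc|)}$ values. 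Hence on a path of length $>2^{|\Tmc|}$ two positions carry the same value, which licenses the pumping step: duplicating the segment between them gives, for any target length $L$, a consistent $\Sigma$-ABox $\Bmc_L$ whose reach path contains a repetitive tube of length $\ge L$ while still $\vec b\in\mn{cert}(Q,\Bmc_L)$, where $\vec b=\vec a_{k_0}$. Modifying $\Bmc_L$ deep inside the tube so as to cut the deep part of the reach path off from $\vec b$ yields a consistent $\Cmc_L$ with $\vec b\notin\mn{cert}(Q,\Cmc_L)$, and if $L$ is exponential in a target quantifier rank $n$ the repetitiveness makes the modification invisible to the $n$-round Ehrenfeucht--Fra\"iss\'e game, so $(\Imc_{\Bmc_L},\vec b)\equiv_n(\Imc_{\Cmc_L},\vec b)$ --- exactly as in the classical proof that reachability is not first-order definable. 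Then no FO formula can be a rewriting of $Q$, the required contradiction. The Boolean case additionally uses the clause $\Amc|_{>0}\models Q$ of~$(2')$ to handle reach paths avoiding the core, and the $\ELplus$ case again only changes ``tree'' to ``ditree''.

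\textbf{Main obstacle.} The delicate part is $(1)\Rightarrow(2)$: making ``reach path'' precise for arbitrary connected (and, in the Boolean case, possibly core-avoiding) CQs, proving that the summary carried along it takes only exponentially many values --- in the presence of inverse roles and functionality this requires the right two-way type --- and, above all, engineering the deep modification of $\Bmc_L$ so that it simultaneously kills the certain answer and preserves $\equiv_n$. On the constructive side, the one point easy to overlook is consistency of the unraveled ABox $\hat\Amc$ in $(2)\Rightarrow(1)$, which I would handle as above by reading the homomorphism $\hat\Amc\to\mathcal U_{\Tmc,\Bmc}$ as a model of $(\Tmc,\hat\Amc)$; the unraveling-and-compactness machinery needed here is the same as that behind Proposition~\ref{prop:conttree}.
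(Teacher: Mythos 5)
Your ``$2 \Rightarrow 1$'' direction is essentially the paper's proof: the rewriting is the UCQ of all depth-$\leq k$ pseudo tree witnesses read as CQs, soundness is monotonicity of certain answers under ABox homomorphisms, completeness is the unraveling statement (Proposition~\ref{forest-witness}), and the Boolean case is handled by shrinking to a minimal entailing sub-ABox so that the clause $\Amc|_{>0}\models Q$ of $2'$ cannot fire. That half is fine.

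The gap is in ``$1 \Rightarrow 2$''. You propose to pump a single deep witness into $\Bmc_L$, then sever the deep part of the reach path to obtain $\Cmc_L$ with the answer killed, and claim $\Imc_{\Bmc_L}\equiv_n\Imc_{\Cmc_L}$ ``as in the classical proof that reachability is not FO-definable''. This is exactly the step you flag as the main obstacle, and it does not go through as described: truncating one structure is not a modification that Hanf/Gaifman locality tolerates, since $\Cmc_L$ acquires a ``cut end'' whose local neighbourhood type occurs once in $\Cmc_L$ and zero times in $\Bmc_L$ (and the two structures need not have comparable multisets of deep-tube neighbourhoods at all), so the duplicator loses no matter how repetitive the tube is. The classical reachability argument compares cycles of different lengths or disjoint copies; it never amputates one structure. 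Moreover, none of the pumping machinery (reach paths, exponential summaries, transfer sequences) is needed for this theorem --- it belongs to the later Theorem~\ref{lem:witabox}, and even there the paper never performs the EF step by surgery but reduces back to Theorem~\ref{lem:char}. The paper's actual argument is much lighter: if (2) fails, pick $k>2^{\text{qr}(\varphi)}$ and a witness $\Amc$ with $\Amc\models Q(\vec{a})$ and $\Amc|_{\leq k}\not\models Q(\vec{a})$, let $\Amc_0$ be the disjoint union of $\text{qr}(\varphi)$ fresh copies of each of $\Amc$ and $\Amc|_{\leq k}$, and compare $\Amc''=\Amc_0\uplus\Amc$ with $\Amc'=\Amc_0\uplus\Amc|_{\leq k}$, the tuple $\vec{a}$ living in the distinguished last copy. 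Connectedness of $q$ gives $\Amc''\models Q(\vec{a})$ and $\Amc'\not\models Q(\vec{a})$, while a routine EF argument on disjoint unions with surplus copies gives $\Imc_{\Amc'}\equiv_{\text{qr}(\varphi),\Sigma,\vec{a}}\Imc_{\Amc''}$; no surgery on a single structure is ever performed. (In the Boolean case the same is done with copies of $\Amc|_{>0}$ and $\Amc|_{\leq k}$, which is precisely where the extra disjunct of $2'$ is needed.) As written, your equivalence $\Imc_{\Bmc_L}\equiv_n\Imc_{\Cmc_L}$ is unjustified and in general false, so the contrapositive direction is not established.
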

The proof of Proposition~\ref{prop:conttree} gives a good intuition of
why FO-rewritability can be characterized in terms of ABoxes that are
pseudo trees. In fact, the proof of ``$2 \Rightarrow 1$'' of
Theorem~\ref{lem:char} is similar to the proof of
Proposition~\ref{prop:conttree}. The proof of ``$1 \Rightarrow 2$''
uses locality arguments in the form of Ehrenfeucht-Fra\"iss\'e
games. The following examples further illustrate
Theorem~\ref{lem:char}.
%
\begin{example}\label{ex:2}
(1) Non FO-rewritability of the OMQs $Q_{A}$ and $Q_{1}$ from Example~\ref{ex:1}
is shown by refuting Condition~2 in Theorem~\ref{lem:char}: let 
$\Amc_{k}=\{r(a_{0},a_{1}),\ldots,r(a_{k},a_{k+1}),A(a_{k+1})\}$, for all $k\geq 0$.
Then $\Amc_{k}\models Q(a_{0})$ but $\Amc_{k}|_{\leq k}\not\models Q(a_{0})$ for $Q\in \{Q_{A},Q_{1}\}$.

(2) Theorem~\ref{lem:char} only holds for connected CQs:
consider $Q_{2}=(\Tmc_{2},\Sigma_{2},q_{2})$, where $\Tmc_{2}$ is the empty TBox, $\Sigma_{2}=\{A,B\}$,
and $q_{2}= \exists x\exists y(A(x) \wedge B(y))$. $Q_{2}$ is FO-rewritable ($q_{2}$ itself is a rewriting),
but Condition~$2'$ does not hold: for $\Bmc_{k}= \{A(a_{0}),R(a_{0},a_{1},\ldots,R(a_{k},a_{k+1}),B(a_{k+1})\}$
we have $\Bmc_{k}\models Q_{2}$ but $\Bmc_{k}|_{>0}\not\models Q_{2}$ and $\Bmc_{k}|_{\leq k}\not\models Q_{2}$.

(3) The modification $2'$ of Condition~2 is needed to characterize FO-rewritability of Boolean OMQs: 
obtain $Q_{B}$ from $Q_{2}$ by replacing $q_{2}$ with $\exists x B(x)$. Then $Q_{B}$ is FO-rewritable,
but the ABoxes $\Bmc_{k}$ show that Condition $2$ does not hold.
\end{example}
Theorem~\ref{lem:char} does not immediately suggest a decision
procedure for FO-rewritability since there is no bound on the depth of
the pseudo tree ABoxes \Amc used. The next result establishes such a
bound.
%
%
\begin{restatable}{theorem}{LEMwitabox}\label{lem:witabox}
  Let \Tmc be an \ELIHFbot TBox. 
  Then Theorem~\ref{lem:char} still holds with the following modifications:
  %
  \begin{enumerate}
%

  \item if $q$ is not Boolean or $\Tmc$ is an \ELHFbot TBox, 
       ``there is a $k\geq 0$'' is replaced with ``for $k=|q|+2^{4(|\Tmc|+|q|)^{2}}$'';

  \item if $q$ is Boolean, ``there is a $k\geq 0$'' is replaced with ``for $k=|q|+2^{4(|\Tmc| + 2^{|q|})^{2}}$''.
  \end{enumerate}
\end{restatable}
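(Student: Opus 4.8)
The statement strengthens Theorem~\ref{lem:char} by replacing the non-constructive ``there is a $k\ge 0$'' with an explicit, doubly exponential value $k^{*}$ (where $k^{*}=|q|+2^{4(|\Tmc|+|q|)^{2}}$ in case~1 and $k^{*}=|q|+2^{4(|\Tmc|+2^{|q|})^{2}}$ in case~2). The plan is a pumping argument showing that the \emph{least} admissible value of $k$ in Theorem~\ref{lem:char}, whenever it exists, is already bounded by $k^{*}$.

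\textbf{Reducing to one direction.} One implication is trivial: if the instance of Condition~2 (resp.\ $2'$) with parameter $k^{*}$ holds, then ``there is a $k$'' holds, so $Q$ is FO-rewritable by Theorem~\ref{lem:char}. For the converse, assume $Q$ is FO-rewritable, so by Theorem~\ref{lem:char} Condition~2 holds for some $k_{0}$. If $k_{0}\le k^{*}$ we are done, since for any pseudo tree $\Sigma$-ABox $\Amc$ with $\Amc\models Q(\abf)$ we get $\Amc|_{\le k_{0}}\models Q(\abf)$ and hence, by monotonicity of certain answers under ABox extension ($\Amc|_{\le k_{0}}\subseteq\Amc|_{\le k^{*}}$), also $\Amc|_{\le k^{*}}\models Q(\abf)$. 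Thus the only real content is the contrapositive: \emph{if Condition~2 fails for the parameter $k^{*}$, then $Q$ is not FO-rewritable}; by Theorem~\ref{lem:char} this means it fails for every $k$, and the modified equivalence holds vacuously. So fix a witness for the failure at $k^{*}$: a pseudo tree $\Sigma$-ABox $\Amc$ consistent with $\Tmc$, of outdegree $\le|\Tmc|$ and width $\le|q|$, and a tuple $\abf$ from the core with $\Amc\models Q(\abf)$ but $\Amc|_{\le k^{*}}\not\models Q(\abf)$. Since $\Amc|_{\le k^{*}}=\Amc$ would be contradictory, some tree $\Amc_{i}$ of $\Amc$ has depth $>k^{*}$. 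For $k<k^{*}$ the same $\Amc$ witnesses failure at $k$ by monotonicity, so it remains to build, for every $k\ge k^{*}$, such a witness of depth $>k$.

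\textbf{Profiles and pumping.} To each node $v$ of a tree of $\Amc$ at depth $>|q|$ we attach a \emph{profile} $\pi(v)$ recording (a)~the role on the edge from $v$ to its parent and (b)~the ``interface behaviour'' of the part of the canonical model $\mathcal{U}_{\Tmc,\Amc}$ attached at and below $v$: in the presence of inverse roles this is the function sending each concept type that could be imposed on $v$ from above to the concept type $v$ actually realizes, together with the set of fragments of $q$ (each with a chosen identification of some of its variables with $v$) that can be matched below $v$; without inverse roles this collapses to the concept type propagated upward from $v$ plus the matchable query fragments. The number of profiles is at most $2^{4(|\Tmc|+|q|)^{2}}$ when $q$ is not Boolean or $\Tmc$ is an \ELHFbot TBox, which is why $k^{*}-|q|$ is chosen that large; for Boolean $q$ with inverse roles a match of $q$ is not anchored near the core and may even enter a subtree ``from below'', so the profile must track full partial-match configurations of $q$, inflating the count to at most $2^{4(|\Tmc|+2^{|q|})^{2}}$ and thereby replacing $|q|$ by $2^{|q|}$ in the bound. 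Along the branch of $\Amc_{i}$ that reaches depth $>k^{*}$, the nodes at depths in $\{|q|+1,\dots,k^{*}\}$ outnumber the available profiles, so by pigeonhole there are $v_{1}$ at depth $e_{1}$ and $v_{2}$ at depth $e_{2}$ with $|q|<e_{1}<e_{2}\le k^{*}$, $v_{2}$ a descendant of $v_{1}$, and $\pi(v_{1})=\pi(v_{2})$. Decompose $\Amc$ as $U\cdot S\cdot T$, where $U$ is $\Amc$ with the subtree rooted at $v_{1}$ removed (leaving a socket in $v_{1}$'s place), $S$ is the fragment from $v_{1}$ down to but excluding $v_{2}$ (with a socket in $v_{2}$'s place), and $T$ is the subtree of $\Amc$ rooted at $v_{2}$; gluing reattaches along sockets, and the matching parent-edge roles recorded in $\pi$ make the new role assertions well-defined. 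For $n\ge1$ put $\Amc_{n}:=U\cdot S^{n}\cdot T$. Each $\Amc_{n}$ is a pseudo tree $\Sigma$-ABox with the same core, width $\le|q|$, and outdegree $\le|\Tmc|$ as $\Amc$, and of depth $\mn{depth}(\Amc)+(n-1)(e_{2}-e_{1})$, which tends to infinity.

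\textbf{Correctness, conclusion, and the obstacle.} Using $\pi(v_{1})=\pi(v_{2})$, one proves by induction on the number of stacked copies of $S$ that $\mathcal{U}_{\Tmc,\Amc_{n}}$ agrees with $\mathcal{U}_{\Tmc,\Amc}$ on $U$ — in particular, on the concept types of all individuals within distance $|q|$ of the core — and that the copy of $v_1$ at the top of each stacked block realizes the same concept type and admits the same entering partial matches as $v_{1}$ did; along the way one checks that $\Amc_{n}$ stays consistent with $\Tmc$ since every local configuration, including every functionality and role-inclusion constraint, already occurred in the consistent ABox $\Amc$. Since $q$ is connected and $\abf$ comes from the core, the match witnessing $\Amc\models Q(\abf)$ lies within distance $|q|$ of the core (for Boolean $q$ one instead uses Condition~$2'$ and carries the ``floating'' match through the profiles), hence $\Amc_{n}\models Q(\abf)$. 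Applying the same invariance to the truncations, for every $k$ with $k^{*}\le k<\mn{depth}(\Amc_{n})$ the canonical models $\mathcal{U}_{\Tmc,\Amc_{n}|_{\le k}}$ and $\mathcal{U}_{\Tmc,\Amc|_{\le k^{*}}}$ agree on the concept types of all individuals within distance $|q|$ of the core (both equal ``$U$ truncated harmlessly above level $|q|$ plus a stabilized up-flowing type''), so $\Amc_{n}|_{\le k}\not\models Q(\abf)$ because $\Amc|_{\le k^{*}}\not\models Q(\abf)$. Choosing $n$ with $\mn{depth}(\Amc_{n})>k$ yields, for every $k$, a witness for the failure of Condition~2, so $Q$ is not FO-rewritable. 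The \ELplus case is identical with pseudo tree ABoxes replaced by pseudo ditree ABoxes. The main obstacle is the design of the profile: it must be small enough to give the stated bounds yet fine enough that profile equality genuinely licenses the stacking of copies without disturbing certain answers. With inverse roles the concept type of a node depends on the whole ABox, above and below, so there is no ``upward-propagated type'' in isolation — the profile has to capture the full input/output behaviour of a subtree at its interface, and one must verify that this behaviour composes correctly under gluing; the Boolean case adds the difficulty that a match of $q$ need not be pinned near the core, forcing the profile to carry partial-match configurations of $q$ and incurring the extra $2^{|q|}$, and making the truncation argument precise in that setting is the bulk of the work.
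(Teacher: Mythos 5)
Your overall strategy---pump along a long branch by finding two nodes with the same ``interface behaviour'' and repeating the segment between them---is the same idea the paper uses. But there is a genuine gap in the central counting step. You define the profile of a node $v$ as (essentially) \emph{the function sending each concept type that could be imposed on $v$ from above to the type $v$ actually realizes}. Taken literally, that is a map from $2^{{\sf sig}(\Tmc)}$ to $2^{{\sf sig}(\Tmc)}$, of which there are up to $2^{|\Tmc|\cdot 2^{|\Tmc|}}$ --- doubly exponential in $|\Tmc|$ and far beyond the claimed bound of $2^{4(|\Tmc|+|q|)^2}$ profiles. Since the whole point of the theorem is the \emph{magnitude} of $k$, asserting the count without justification is not a fixable presentation issue; the profile as defined simply does not fit the bound. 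The paper avoids this by replacing the full input/output function with a \emph{transfer sequence}: the concrete alternating trace $\Xmc_0,\Xmc_1,\dots$ of concept sets derived at $v$ from the part of $\Amc$ above $v$ and the subtree below $v$, in the actual context in which $v$ sits. These sequences are monotone and stabilize after $|{\sf sig}(\Tmc)|+1$ steps (Lemma~\ref{lem:transfer1}), so there are only $2^{O(|\Tmc|^2)}$ of them, and equality of transfer sequences (plus local agreement on functional roles) is shown to suffice for swapping subtrees without disturbing any derived assertion (Lemma~\ref{lem:transfer}). Your Boolean-case inflation by $2^{|q|}$ is in the right place, but it too needs to be routed through transfer sequences with respect to the extended TBox $\Tmc\cup\Tmc_{{\sf tree}(q)}$ rather than through ``partial-match configurations'' attached to a doubly exponential profile.

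A second, related problem is your treatment of the truncations. Your pigeonhole is taken over profiles computed in the full ABox $\Amc$, yet at the end you need $\Amc_n|_{\le k}\not\models Q(\abf)$ for truncation levels $k$ that cut through the middle of the pumped stack; nothing in your setup guarantees that the interface behaviour of a \emph{truncated} copy of the repeated segment matches anything you controlled. The paper sidesteps this by (i) taking $\Amc$ minimal, (ii) including in the pigeonhole the transfer sequences of $(\Amc^{-w},u_i)$ where $w$ is a deepest leaf, so that the swapped ABox $\Bmc$ satisfies $\Bmc^{-w}\not\models q(\abf)$, and (iii) observing $\Bmc^{-w}\supseteq\Bmc|_{\le k+1}$, so the truncation fails by monotonicity. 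You would need to add the analogous conditions (and, in the Boolean case, the condition for $\Amc|_{>0}$) to your pigeonhole for the argument to close.
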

The proof of Theorem~\ref{lem:witabox} uses a pumping argument based
on derivations of concept names in the pumped ABox by~\Tmc. Due to the
presence of inverse roles, this is not entirely trivial and uses what
we call \emph{transfer sequences}, describing the derivation history
at a point of an ABox. Together with the proof of
Theorem~\ref{lem:char}, Theorem~\ref{lem:witabox} gives rise to an
algorithm that constructs actual rewritings when they exist.

\section{Constructing Automata}
\label{sect:automata}

We show that Proposition~\ref{prop:conttree} and
Theorem~\ref{lem:witabox} give rise to automata-based decision
procedures for containment and FO-rewritability that establish the
upper bounds stated in Theorems~\ref{thm:main} and~\ref{thm:mainzoom}.
By Theorem~\ref{thm:toconnected}, it suffices to consider connected
queries in the case of FO-rewritability. We now observe that we can
further restrict our attention to Boolean queries. We use BCQ (resp.\
conBCQ) to denote the class of all Boolean CQs (resp.\ connected
Boolean CQs).
\begin{restatable}{lemma}{LEMbooleancontainment}\label{lem:boolean-containment} 
  Let $\Lmc \in \{
  \mathcal{ELIHF_\bot}, \ELplus\}$. Then
  \begin{enumerate}

  \item FO-rewritability in $(\Lmc,\text{conCQ})$ can be reduced in polytime 
    to FO-rewritability in $(\Lmc,\text{conBCQ})$;

  \item Containment in $(\Lmc,\text{CQ})$ can be reduced in polytime 
    to containment in $(\Lmc,\text{BCQ})$.

  \end{enumerate}
\end{restatable}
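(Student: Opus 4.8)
The plan is to reduce the general (connected) CQ case to the Boolean case by the standard trick of ``freezing'' the answer variables: replace each answer variable with a fresh individual constant, simulated in these DLs by a fresh concept name that is forced to behave like a nominal. Concretely, let $Q=(\Tmc,\Sigma,q)$ with $q=q(\ybf)$, $\ybf=(y_1,\dots,y_n)$ the answer variables. Introduce fresh concept names $N_1,\dots,N_n \notin \mn{sig}(\Tmc)\cup\mn{sig}(q)$, add them to the ABox signature to obtain $\Sigma' = \Sigma \cup \{N_1,\dots,N_n\}$, and let $q'$ be the Boolean CQ obtained from $q$ by conjoining $N_i(y_i)$ for each $i$ and then existentially quantifying all $y_i$. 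For FO-rewritability we set $Q' = (\Tmc,\Sigma',q')$; note $q'$ is still connected whenever $q$ is, since we only added unary atoms. The intended correspondence is: on a $\Sigma$-ABox $\Amc$ and a tuple $\abf=(a_1,\dots,a_n)\in\mn{Ind}(\Amc)^n$, we have $\Amc\models Q(\abf)$ iff $\Amc\cup\{N_i(a_i)\mid i\le n\}\models Q'$, provided the $N_i$'s each mark exactly one individual. The first step is to verify this equivalence; it is immediate from the semantics because $N_i$ does not occur in $\Tmc$, so adding $N_i(a_i)$ to a model of $(\Tmc,\Amc)$ still yields a model, and any homomorphism witnessing $q'$ must send $y_i$ into $N_i^{\Imc}$, i.e.\ (in the canonical/minimal model) onto $a_i$.

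The second step is to push this through the FO-rewritability definitions. Given an FO-rewriting $\psi()$ of $Q'$ over $\Sigma'$-ABoxes consistent with $\Tmc$, one obtains an FO-rewriting of $Q$ by taking $\varphi(\ybf) := \psi[N_i(z)/z=y_i]$, i.e.\ replacing each occurrence of an atom $N_i(z)$ in $\psi$ by the equality $z=y_i$ (and similarly handling the bookkeeping that each $N_i$ is a singleton). The point is that a $\Sigma$-ABox $\Amc$ together with the assertions $N_i(a_i)$ ranges exactly over the ``relevant'' $\Sigma'$-ABoxes, and on those $\psi$ computes $\mn{cert}(Q',\cdot)$, which by the first step equals $\mn{cert}(Q,\Amc)$ at $\abf$. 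Conversely, an FO-rewriting $\varphi(\ybf)$ of $Q$ yields one of $Q'$ by $\psi() := \exists y_1\cdots\exists y_n\,\big(\varphi(\ybf)\wedge\bigwedge_i N_i(y_i)\big)$, together with a conjunct asserting that no two distinct individuals satisfy the same $N_i$ and that each $N_i$ is nonempty — but the latter is exactly the inconsistency region for $\Sigma'$-ABoxes relative to the (trivially enforced) singleton constraint, so one restricts attention to $\Sigma'$-ABoxes where each $N_i$ is a singleton, which is FO-definable. Containment (Part~2) is handled identically but with two TBoxes sharing the same fresh names: $Q_1\subseteq Q_2$ over $\Sigma$-ABoxes consistent with both iff $Q_1'\subseteq Q_2'$ over $\Sigma'$-ABoxes consistent with both, using that consistency is unaffected by the $N_i$'s since they do not appear in either TBox.

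The main obstacle — and the only genuinely delicate point — is the treatment of $\Sigma'$-ABoxes in which the marker concept names $N_i$ are \emph{not} singletons (are empty, or mark several individuals), since the reduction only ``means'' anything on the singleton-marked ABoxes, whereas FO-rewritability and containment quantify over \emph{all} $\Sigma'$-ABoxes. The resolution is that the badly-marked ABoxes form an FO-definable set: ``$N_i$ nonempty and no two distinct elements both satisfy $N_i$'' is expressible by a Boolean FOQ, so one can always conjoin/disjoin this guard into the rewriting (or argue via relativization), and for containment one simply notes that on an ABox where some $N_i$ is empty, $q'$ has no match so $\mn{cert}(Q_j',\Amc)=\emptyset$ and containment holds vacuously there, while on an ABox where some $N_i$ has $\ge 2$ elements one can pad with fresh copies but must check the pseudo-tree width bounds of Proposition~\ref{prop:conttree} are not violated — this is why we only claim a polytime reduction and why it is convenient that the added atoms are unary and connected. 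A careful but routine case analysis on these degenerate markings, together with the singleton-guard FOQ, completes both parts; the constructions are clearly polynomial since we add only $n\le|q|$ concept names and a bounded number of atoms.
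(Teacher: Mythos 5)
Your reduction is the one the paper uses: conjoin a fresh concept name $N_i(y_i)$ to the query for each answer variable $y_i$, add the $N_i$ to the ABox signature, and existentially quantify (this preserves connectedness, as you note). However, your treatment of the marker concepts contains a genuine misstep. Nothing in $\mathcal{ELIHF}_\bot$ can force a fresh concept name to ``behave like a nominal,'' and nothing needs to. Since the $N_i$ occur neither in $\Tmc$ nor in $q$, the correct semantic bridge is: a $\Sigma'$-ABox $\Amc'$ satisfies $\Amc' \models Q'$ iff there \emph{exists} a tuple $\abf$ with $N_i(a_i) \in \Amc'$ for all $i$ such that $\abf \in \mn{cert}(Q,\Amc'|_\Sigma)$ --- with no singleton condition anywhere (in the canonical model, $N_i$ holds exactly at the individuals asserted to satisfy it, so a match of $q'$ must send $y_i$ to such an individual, and conversely). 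Consequently $\exists\ybf\,\bigl(\varphi(\ybf)\wedge\bigwedge_i N_i(y_i)\bigr)$ is already a correct rewriting of $Q'$ on \emph{all} $\Sigma'$-ABoxes: an empty marker makes both sides false, and multiple markers are absorbed by the existential quantifier. The singleton guard you propose to conjoin would make the rewriting \emph{wrong}: on an ABox where $N_1$ marks two individuals one of which yields a certain answer, $Q'$ is entailed but your guarded formula is false. Likewise, ``restricting attention to $\Sigma'$-ABoxes where each $N_i$ is a singleton'' changes the problem, since FO-rewritability of $Q'$ is defined over all $\Sigma'$-ABoxes consistent with $\Tmc$, and the badly marked ones are perfectly consistent because the $N_i$ do not appear in $\Tmc$.

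The same observation disposes of your worries for containment: $\Amc' \models Q_j'$ iff some fully marked tuple is a certain answer to $Q_j$ on the $\Sigma$-reduct of $\Amc'$, so $Q_1 \subseteq Q_2$ implies $Q_1' \subseteq Q_2'$ directly, and a witness $(\Amc,\abf)$ of $Q_1 \not\subseteq Q_2$ yields the witness $\Amc\cup\{N_i(a_i)\mid i\le n\}$ of $Q_1'\not\subseteq Q_2'$, because there $\abf$ is the only marked tuple. No appeal to Proposition~\ref{prop:conttree}, to pseudo-tree width bounds, or to ``padding with fresh copies'' is needed; the reduction is purely semantic. Your substitution $N_i(z)\mapsto z=y_i$ for recovering a rewriting of $Q$ from one of $Q'$ is fine. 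Once the spurious nominal/singleton machinery is deleted, your argument coincides with the paper's.
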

The decision procedures rely on building automata that accept pseudo
tree ABoxes which witness non-containment and non-FO-rewritability as
stipulated by Proposition~\ref{prop:conttree} and
Theorem~\ref{lem:witabox}, respectively.  We first have
to encode pseudo tree ABoxes in a suitable way.

%
%
A \emph{tree}
is a non-empty (and potentially infinite) set $T \subseteq \Nbbm^*$
closed under prefixes. We say that $T$ is \emph{$m$-ary} if for every
$x \in T$, the set $\{ i \mid x \cdot i \in T \}$ is of cardinality at
most $m$.  For an alphabet $\Gamma$, a \emph{$\Gamma$-labeled tree} is
a pair $(T,L)$ with $T$ a tree and $L:T \rightarrow \Gamma$ a node
labeling function. Let $Q=(\Tmc,\Sigma,q)$ be an OMQ from
$(\mathcal{ELIHF}_\bot,\text{conBCQ})$.
We encode pseudo tree ABoxes of width at most $|q|$
and outdegree at most $|\Tmc|$
by $(|\Tmc|\cdot |q|)$-ary $\Sigma_\varepsilon \cup \Sigma_N$-labeled
trees, where $\Sigma_\varepsilon$ is an alphabet used for labeling
root nodes and $\Sigma_N$ is for non-root nodes.

The alphabet $\Sigma_\varepsilon$ consists of all $\Sigma$-ABoxes
\Amc such that $\Ind(\Amc)$ only contains individual names from a
fixed set $\Ind_{\mn{core}}$ of size $|q|$ 
and \Amc satisfies all functionality statements in
$\Tmc$.
The alphabet $\Sigma_N$ consists of all subsets
$
  \Theta \subseteq (\NC \cap \Sigma) \uplus 
  \{ r, r^{-} \mid r \in \NR \cap \Sigma \} \uplus \Ind_{\mn{core}} 
  $ that contain exactly one (potentially inverse) role and at most
  one element of $\Ind_{\mn{core}}$. A $(|\Tmc| \cdot |q|)$-ary
  $\Sigma_\varepsilon \cup\Sigma_N$-labeled tree is \emph{proper} if
  (i)~the root node is labeled with a symbol from~$\Sigma_\varepsilon$,
  (ii)~each child of the root is labeled with a symbol from $\Sigma_N$ that contains
  an element of $\Ind_{\mn{core}}$, (iii)~every other non-root
  node is labeled with a symbol from $\Sigma_N$ that contains no individual name,
  and (iv)~every non-root node has at most $|q|$ successors and (v)~for
  every $a \in \mn{Ind}_{\mn{core}}$, the  root node has at most $|q|$
  successors whose label includes~$a$.
  %

  A proper $\Sigma_\varepsilon \cup \Sigma_N$-labeled tree $(T, L)$ represents 
  a pseudo tree ABox $\Amc_{(T, L)}$ whose
  individuals are those in the ABox \Amc that labels the root of $T$ plus
  all non-root nodes of $T$, and whose assertions are 
$$
\begin{array}{l}
   \Amc \cup \{ A(x) \mid A \in L(x) \}\\[0.5mm] 
   \cup\; \{r(b,x) \mid \{b,r\} \subseteq L(x)\} \cup \{r(x,b) \mid 
  \{b,r^{-}\} \subseteq L(x)\}\\[0.5mm]
   \cup\; \{ r(x,y) \mid r \in L(y), y \text{ is a child of } x, 
    L(x) \in \Sigma_N\}\\[0.5mm]  
   \cup\; \{ r(y,x) \mid  r^{-} \in L(y), y \text{ is a child of } x, 
  L(x) \in \Sigma_N \}. 
\end{array}
$$
%
%

As the automaton model, we use two-way alternating parity automata on
finite trees (TWAPAs). As usual, $L(\Amf)$ denotes the tree language
accepted by the TWAPA \Amf.  Our central observation is the following.
%

%
\begin{restatable}{proposition}{PROPautocentral}\label{prop:autocentral} 
\label{prop:automataexists}
For every OMQ $Q=(\Tmc,\Sigma,q)$ from
$(\mathcal{ELIHF}_\bot,\text{BCQ})$, there is a TWAPA
\begin{enumerate}

\item $\Amf_Q$ that accepts a $(|\Tmc| \cdot |q|)$-ary
  $\Sigma_\varepsilon \cup \Sigma_N$-labeled tree $(T, L)$ iff
  it is proper, 
  $\Amc_{(T, L)}$ is consistent
  with \Tmc\!\!\!, and $\Amc_{(T, L)} \models Q$;
%
%

  $\Amf_Q$ has at most $2^{p(|q|+\mn{log}(|\Tmc|))}$ states, and at
  most $p(|q|+|\Tmc|)$ states if \Tmc is an
$\mathcal{ELHF}_\bot$ TBox, $p$ a polynomial.

\item $\Amf_\Tmc$ that accepts a $(|\Tmc| \cdot |q|)$-ary
  $\Sigma_\varepsilon \cup \Sigma_N$-labeled tree $(T, L)$ iff
  it is proper 
  and $\Amc_{(T, L)}$ is consistent with \Tmc.

  $\Amf_\Tmc$ has at most $p(|\Tmc|)$ states, $p$ a polynomial.

\end{enumerate}
We can construct $\Amf_Q$ and $\Amf_\Tmc$  in time polynomial in their size.
\end{restatable}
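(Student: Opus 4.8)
The plan is to construct the two TWAPAs by a fairly standard combination of three ingredients: (i)~an automaton that checks the syntactic ``proper'' conditions on the input tree; (ii)~an automaton that certifies consistency of $\Amc_{(T,L)}$ with $\Tmc$; and (iii)~for part~1, an automaton that certifies $\Amc_{(T,L)}\models Q$. The final automata are then obtained by intersecting these components, using that TWAPAs are closed under intersection with only a polynomial (in fact additive on the state sets) blow-up, and that the product of parity acceptance conditions is itself a parity condition with a bounded priority increase.

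First I would handle the syntactic checks. The ``proper'' conditions (root labeled from $\Sigma_\varepsilon$, children of the root carrying a core individual name, deeper nodes carrying no individual name, the out-degree bounds $|q|$ overall and $|q|$ per core individual) are all local, finite-state properties of a node together with its parent and children, so a deterministic top-down automaton with a constant number of states (and no acceptance subtleties) suffices; this contributes only $O(1)$ states. Next, for consistency with~$\Tmc$ I would exploit that $\ELIHFbot$ is Horn and has the canonical/universal model property: an ABox is inconsistent with~$\Tmc$ iff the chase derives $A\sqsubseteq\bot$ at some individual or a functionality assertion is violated. Derivation of concept names along the pseudo tree can be tracked by an alternating automaton that, at each node, guesses the set of concept names forced to hold there and verifies this guess against the CIs in normal form and against the guessed sets at the parent and the children (the two-wayness of TWAPAs is exactly what lets us propagate $\exists r.B\sqsubseteq A$ information both up and down the tree, and lets us handle inverse roles and the RIs; transitive roles are eliminated beforehand via Proposition~\ref{thm:reduce}'s normal form together with the standard encoding of $\mn{trans}$). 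The state set here is a subset of (sets of concept names) $\times$ (bounded bookkeeping for role/functionality information), giving $p(|\Tmc|)$ states for $\Amf_\Tmc$ as claimed. I would make the acceptance condition a co-B\"uchi/safety-like condition asserting that no $\bot$ and no functionality clash is ever derived, which is trivially a parity condition of constant index.

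For part~1 I additionally need to check $\Amc_{(T,L)}\models Q$, i.e.\ that the canonical model of $(\Tmc,\Amc_{(T,L)})$ satisfies the Boolean CQ~$q$. Since~$q$ is connected and Boolean, a homomorphism from~$q$ into the canonical model can be found by a ``query automaton'' that guesses, in a two-way fashion, how the (tree-shaped portions of the) query map onto the ABox individuals and the anonymous part of the chase: concretely, one precomputes the finitely many ways a connected subquery can be realized in the tree-shaped anonymous structure hanging off an individual (this is where the single-exponential-in-$|q|$ number of states arises: there are $2^{O(|q|)}$ partial matches / ``types'' of~$q$), and then the automaton nondeterministically splits the query across neighbouring nodes, passing the needed ``interface'' information through the transition relation. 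Using that the anonymous part of the chase of an $\ELIHFbot$ ontology-ABox is itself of bounded branching and that matches of~$q$ into it can be summarized by a polynomial-depth (hence exponential-size) set of query types, one gets the bound $2^{p(|q|+\log|\Tmc|)}$ states --- the $\log|\Tmc|$ appears because a query type must record, for each of its $\le|q|$ variables, one concept name out of $\mn{sig}(\Tmc)$, costing $|q|\cdot\log|\Tmc|$ bits. When~$\Tmc$ is an $\ELHFbot$ TBox there are no inverse roles, the chase is ditree-shaped and one-way, and the relevant types collapse to polynomial size, yielding the sharper $p(|q|+|\Tmc|)$ bound. Finally, $\Amf_Q$ is the intersection of the proper-checker, $\Amf_\Tmc$, and this query automaton; the state bound of the dominating component is preserved up to a polynomial factor, and the overall construction is clearly polynomial in the size of the output automaton.

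The main obstacle I expect is the query-automaton component in the presence of inverse roles: one must argue carefully that a homomorphism of the connected CQ~$q$ into the (possibly infinite) canonical model can always be ``cut'' along the edges of the pseudo tree ABox so that only a bounded amount of interface information needs to be communicated across each cut, and that guessing this information two-way (both into the anonymous subtrees and back up toward the core, across individuals that may be linked by inverse roles and role inclusions) is sound and complete. This is precisely the point where connectedness of~$q$ and the bound $|q|$ on the number of query variables are used to keep the number of query types --- and hence the state count --- single-exponential in $|q|$; getting the $\log|\Tmc|$ rather than $|\Tmc|$ dependence in the state count requires the additional observation that, for matches into the anonymous part, only the concept-name labels relevant to~$q$ (at most $|q|$ of them, each a $\log|\Tmc|$-bit name) must be remembered, not full concept-name sets. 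The remaining parts --- properness and consistency --- are routine once the normal form from Proposition~\ref{thm:reduce} and the standard elimination of transitive roles are in place.
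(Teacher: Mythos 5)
Your high-level plan coincides with the paper's: both build $\Amf_Q$ as the intersection of a properness checker, a consistency checker, and a query-entailment checker, both track concept-name derivations by two-way alternation (the paper via derivation trees, you via a chase simulation), and both reduce the match of $q$ to a bounded "splitting" of the query across the core, the ABox trees, and the anonymous part (the paper via forest decompositions and an extended TBox $\Tmc^+$ containing $C_{q'} \sqsubseteq A_{q'}$ for the relevant tree-shaped subqueries $q'$, you via "query types"). However, there are two places where your sketch does not actually deliver what the statement claims.

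First, and most importantly, the bound of $p(|q|+|\Tmc|)$ states for $\ELHFbot$ is asserted but not argued. Saying that without inverse roles "the relevant types collapse to polynomial size" is exactly the claim that needs proof: even for a ditree-shaped canonical model, the set of weakly tree-shaped queries obtainable from $q$ by identifying variables is still exponential, so one must replace it by something smaller. The paper does this by passing to the (unique) maximal fork rewriting of each connected component and to \emph{directed} forest decompositions, and then invokes a combinatorial lemma (a reformulation of Lemma~4 of Lutz, DL~2007, relying crucially on the condition that each tree component's root has a single successor) showing that only polynomially many tree components arise. Without an argument of this kind your construction still has $2^{p(|q|)}$ query types in the $\ELHFbot$ case, and the second half of Point~1 is unproved. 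Second, your consistency automaton is described as having states that are \emph{sets} of concept names, which would give $2^{|\Tmc|}$ states and contradict the claimed $p(|\Tmc|)$ bound for $\Amf_\Tmc$; the fix is standard but should be made explicit: use one state per concept-name proof obligation, send conjunctions of copies via the alternating transition function to verify a (finite) derivation tree for $\bot$ or a functionality clash, and then complement the resulting inconsistency automaton (polynomial for TWAPAs). Relatedly, your query automaton should handle matches of a connected component that touch \emph{no} ABox individual at all; this requires precomputing, for sets $S$ of concept names, whether $\Amc_S,\Tmc \models q_j$, which is a separate subroutine the paper makes explicit and your sketch leaves implicit.
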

The construction of the automata in
Proposition~\ref{prop:automataexists} uses forest decompositions of
the CQ $q$ as known for example from \cite{DBLP:conf/cade/Lutz08}. The
difference in automata size between $\mathcal{ELIHF}_\bot$ and
$\mathcal{ELHF}_\bot$ is due to the different number of tree-shaped
subqueries that can arise in these decompositions.

To decide $Q_1 \subseteq Q_2$ for OMQs $Q_i=(\Tmc_i,\Sigma,q_i)$, $i
\in \{1,2\}$, from $(\mathcal{ELIHF}_\bot,\text{BCQ})$, by
Proposition~\ref{prop:conttree} it suffices to decide whether
$L(\Amf_{Q_1}) \cap L(\Amf_{\Tmc_2})\subseteq L(\Amf_{Q_2})$. Since
this question can be polynomially reduced to a TWAPA emptiness check
and the latter can be executed in time single exponential in the
number of states, this yields the upper bounds for containment stated
in Theorems~\ref{thm:main} and~\ref{thm:mainzoom}.

To decide non-FO-rewritability of an OMQ $Q=(\Tmc,\Sigma,q)$ from
$(\mathcal{ELIHF}_\bot,\text{conBCQ})$, by Theorem~\ref{lem:witabox}
we need to decide whether there is a pseudo tree $\Sigma$-ABox \Amc of
outdegree at most $|\Tmc|$ and width at most $|q|$ that is consistent
with \Tmc and satisfies (i)~$\Amc \models Q$, (ii)~$\Amc|_{>0}
\not\models Q$, and (iii)~$\Amc|_{\leq k} \not\models Q$ where
$k=|q|+2^{4(|\Tmc| + 2^{|q|})^{2}}$. For consistency with \Tmc and
for~(i), we use the automaton $\Amf_Q$ from
Proposition~\ref{prop:automataexists}. To achieve (ii) and (iii), we
amend the tree alphabet $\Sigma_\varepsilon \cup \Sigma_n$ with
additional labels that implement a counter which counts up to $k$ and
annotate each node in the tree with its depth (up to~$k$). We then
complement $\Amf_Q$ (which for TWAPAs can be done in polynomial time),
relativize the resulting automaton to all but the first level of the
input ABox for~(ii) and to the first $k$ levels for~(iii), and finally
intersect all automata and check emptiness. This yields the upper
bounds for FO-rewritability stated in Theorems~\ref{thm:main}
and~\ref{thm:mainzoom}.

As remarked in the introduction, apart from FO-rewritability of an OMQ
$(\Tmc,\Sigma,q)$ we should also be interested in FO-rewritability of
ABox inconsistency relative to \Tmc and $\Sigma$. We close this
section with noting that an upper bound for this problem can be
obtained from Point~2 of Proposition~\ref{prop:automataexists}
since TWAPAs can be complemented in polynomial time.
A matching
lower bound can be found in \cite{ijcai-2013}.
\begin{theorem}
\label{thm:aboxconsforewr}
  In $\mathcal{ELIHF}_\bot$,  FO-rewritability of ABox inconsistency
  is \ExpTime-complete. 
\end{theorem}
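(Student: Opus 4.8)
For the upper bound, the plan is to reduce the problem to a TWAPA emptiness check using Point~2 of Proposition~\ref{prop:automataexists}. The starting point is that, by the same unraveling-and-compactness reasoning used in Proposition~\ref{prop:conttree} and Theorem~\ref{lem:char} (applied to the Boolean ``query'' expressing inconsistency), ABox inconsistency fails to be FO-rewritable relative to $\Tmc$ and $\Sigma$ exactly when there is no depth bound $k$ such that every pseudo tree $\Sigma$-ABox $\Amc$ of outdegree at most $|\Tmc|$ and width at most some fixed constant which is inconsistent with $\Tmc$ already has an inconsistent restriction $\Amc|_{\leq k}$. Here the width bound is trivial since inconsistency is a $0$-ary ``query'', so in fact ditree-shaped ABoxes with a single root individual suffice; the pumping bound on $k$ comes from (a simplified version of) the transfer-sequence argument underlying Theorem~\ref{lem:witabox}, applied to the derivation of $\bot$ rather than to an arbitrary CQ, yielding $k$ single-exponential in $|\Tmc|$. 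Given this, I would build: (a)~the automaton $\Amf_{\neg\Tmc}$ accepting exactly the proper trees whose represented ABox is \emph{inconsistent} with $\Tmc$, obtained by complementing $\Amf_\Tmc$ from Proposition~\ref{prop:automataexists}(2) and intersecting with an automaton checking properness --- since TWAPA complementation is polynomial and $\Amf_\Tmc$ has polynomially many states, $\Amf_{\neg\Tmc}$ still has polynomially many states; (b)~a second copy of this automaton relativized to the first $k$ levels of the input tree, using the same depth-counter labels described after Proposition~\ref{prop:automataexists}; and (c)~the automaton for the complement of (b) intersected with $\Amf_{\neg\Tmc}$, whose non-emptiness witnesses a ``bad'' ABox for every $k$ and hence non-FO-rewritability. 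All automata have polynomially many states, so the final emptiness check runs in single-exponential time, giving the \ExpTime upper bound; negating the answer gives an \ExpTime decision procedure for FO-rewritability of ABox inconsistency.

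For the lower bound, I would simply invoke the matching \ExpTime-hardness result of \cite{ijcai-2013}, which establishes \ExpTime-hardness of FO-rewritability of ABox inconsistency already for \ELplus (hence a fortiori for \ELIHFbot) --- no new construction is needed on our side, and by Proposition~\ref{thm:reduce} the result transfers from Horn-$\mathcal{SHIF}$ down to \ELIHFbot for the consistency notion as well.

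The main obstacle is establishing the single-exponential depth bound $k$ for the inconsistency case, i.e.\ the analogue of Theorem~\ref{lem:witabox} for the Boolean ``query'' $\bot$. While this is conceptually a special case of the transfer-sequence pumping argument, one must check that pumping a long branch of a pseudo (di)tree ABox that is inconsistent with $\Tmc$ preserves both inconsistency and the structural constraints (outdegree, the fact that the new ABox is still a pseudo tree over $\Sigma$), and that removing a pumped segment from an inconsistent ABox of depth greater than $k$ yields a strictly shorter inconsistent ABox --- the usual subtlety being that inverse roles let a derivation at a node depend on material both above and below it, which is exactly what transfer sequences are designed to bookkeep. Since inconsistency is detected by the derivation of a single distinguished concept ($\bot$, or equivalently any $A$ with $A\sqsubseteq\bot\in\Tmc$) at some individual, rather than by matching an entire CQ, this case is genuinely easier than the general Theorem~\ref{lem:witabox}, and the $2^{|q|}$ factors that appear there for Boolean queries do not arise; the bound is clean single-exponential in $|\Tmc|$, which is what is needed to keep the automaton construction within \ExpTime.
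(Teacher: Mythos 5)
Your proposal fills in, correctly and in essentially the paper's intended way, what the paper compresses into two sentences: the upper bound is obtained by complementing the polynomially-sized TWAPA $\Amf_\Tmc$ from Point~2 of Proposition~\ref{prop:automataexists} and reusing the depth-counter/relativization/intersection/emptiness machinery from the FO-rewritability procedure (with the pumping bound specialized to the derivation of $\bot$, where indeed no $2^{|q|}$ factor arises), and the lower bound is taken from \cite{ijcai-2013}. One small correction: for $\mathcal{ELIHF}_\bot$ the minimal inconsistency witnesses are tree-shaped rather than ditree-shaped, since with inverse roles a derivation of $\bot$ may traverse an edge from child to parent (e.g.\ via a left-hand side $\exists r^- . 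A$), but this does not affect your construction because $\Amf_\Tmc$ already runs on encodings of general pseudo tree ABoxes.
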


\section{Rooted Queries and Lower Bounds}

We first consider the case of rooted queries and establish the upper bound
in Theorem~\ref{thm:mainrooted}. 
\begin{theorem}
\label{thm:rootedupper}
  FO-rewritability and containment in
  $(\mathcal{ELIHF}_\bot,\text{rCQ})$ are in \coNExpTime.
\end{theorem}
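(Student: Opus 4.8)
The plan is to reuse the automata-theoretic machinery from Section~\ref{sect:automata}, but to exploit the fact that for \emph{rooted} CQs the relevant witness ABoxes are genuinely tree-shaped of \emph{bounded depth}, so that we can guess them directly in \NExpTime\ instead of building automata over unbounded trees. First I would invoke Lemma~\ref{lem:boolean-containment} and Theorem~\ref{thm:toconnected} in their rooted variants: since a rooted CQ has an answer variable, plugging in an ABox individual for each answer variable turns the query into a connected Boolean CQ whose single ``anchor'' must be mapped to a named individual; the reductions of Section~\ref{sect:automata} go through verbatim, so it suffices to decide containment and non-FO-rewritability for connected Boolean CQs obtained this way. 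The key structural observation is that for such queries, Proposition~\ref{prop:conttree} and Theorem~\ref{lem:char}/\ref{lem:witabox} can be sharpened: the core has width at most $|q_1|$, the outdegree is at most $|\Tmc_1|$, and — crucially — by Theorem~\ref{lem:witabox} the depth of the witness pseudo tree ABox may be taken to be $k=|q|+2^{4(|\Tmc|+|q|)^2}$ (the non-Boolean / $\ELHFbot$ bound), which is singly exponential. Hence a witness ABox has singly exponentially many individuals and can be written down in exponential space.

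The core of the argument is then a guess-and-check in \NExpTime\ (which gives \coNExpTime\ for the complement problems as stated). For containment $Q_1\not\subseteq Q_2$: guess a pseudo tree $\Sigma$-ABox \Amc of width $\le|q_1|$, outdegree $\le|\Tmc_1|$ and depth $\le k$ — of size at most singly exponential — together with a tuple \abf from its core; then verify in time polynomial in $|\Amc|$ (hence exponential in the input) that \Amc is consistent with $\Tmc_1$ and $\Tmc_2$, that $\Amc\models Q_1(\abf)$, and that $\Amc\not\models Q_2(\abf)$. Consistency and certain-answer checks for $\ELIHFbot$ OMQs are standard and polynomial in combined complexity over a fixed ABox, so the whole check is \ExpTime\ in the input, i.e.\ polynomial in the (exponential) guess. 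For non-FO-rewritability the plan is analogous, guided by the sharpened Theorem~\ref{lem:char}: guess a pseudo tree $\Sigma$-ABox \Amc of the same width/outdegree bounds but now of depth at most $k+1$ (one level deeper than the cut), together with \abf from the core, and check that \Amc is consistent with \Tmc, $\Amc\models Q(\abf)$, but $\Amc|_{\le k}\not\models Q(\abf)$ — equivalently, by Theorem~\ref{lem:witabox}, this one witness of singly exponential size certifies failure of FO-rewritability. Again all checks are polynomial in $|\Amc|$, hence exponential in the input.

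The main obstacle I expect is justifying that it suffices to guess an ABox of only \emph{singly} exponential depth (and therefore singly exponential size), since Theorem~\ref{lem:witabox} in its full generality has a doubly-exponential bound $k=|q|+2^{4(|\Tmc|+2^{|q|})^2}$ for Boolean queries. The point is that for \emph{rooted} queries we are, after the reduction of Lemma~\ref{lem:boolean-containment}, dealing with connected Boolean CQs in which the anchoring to core individuals removes the need for the $2^{|q|}$ blow-up in the depth bound — intuitively, the extra exponential in the Boolean case comes from having to track exponentially many possible ``floating'' homomorphic images of a disconnected query, whereas a rooted query is pinned to the core. Making this precise requires reexamining the pumping argument behind Theorem~\ref{lem:witabox}: one must show that, when the query is connected and its homomorphisms into the canonical model are rooted at named individuals, the transfer sequences that drive the pumping have only singly exponentially many types, so that the cut depth $k=|q|+2^{4(|\Tmc|+|q|)^2}$ suffices even in the Boolean case. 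Once that refinement is in place, the guess-and-verify procedure outlined above yields the \coNExpTime\ upper bound for both FO-rewritability and containment across the whole range from $(\ELI,\mn{rCQ})$ to $(\text{Horn-}\SHIF,\mn{rCQ})$, using Proposition~\ref{thm:reduce} to pass from Horn-$\mathcal{SHIF}$ to $\ELIHFbot$.
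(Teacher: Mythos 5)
There is a genuine gap at the heart of your guess-and-check strategy: the inference from ``the witness pseudo tree ABox may be taken of depth $k=|q|+2^{4(|\Tmc|+|q|)^2}$'' to ``hence a witness ABox has singly exponentially many individuals'' is a non sequitur. A pseudo tree ABox of depth $k$ and outdegree up to $|\Tmc|$ can have on the order of $|\Tmc|^{k}$ individuals, and with $k$ singly exponential this is \emph{doubly} exponential in the input. So the witness cannot be written down, let alone guessed, within an \NExpTime\ budget, and the whole procedure collapses. (For containment the situation is worse still: Proposition~\ref{prop:conttree} imposes no depth bound at all on the witness, so you would first need a separate pumping argument even to obtain the exponential depth bound you assume there.) Rootedness of the query only bounds where the \emph{query match} can live --- within distance $|q|$ of the core --- but it does not bound the size of the deep subtrees that may be needed to \emph{derive} concept memberships at the depth-$|q|$ frontier individuals, and those derivations are exactly what can force the witness to be large.

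The paper's proof is organized around precisely this obstacle: it guesses only the initial portion of the witness up to depth $\max(|q_1|,|q_2|)$ (which has at most roughly $|\Tmc|^{|q|}$ individuals, hence singly exponential size), together with succinct summaries --- generalized transfer sequences --- of the concept assertions that the missing subtrees would contribute at the frontier; it verifies the query (non-)entailments against this enriched initial portion (legitimate because the rooted query can only match there); and it then uses TWAPA emptiness tests, decidable in \ExpTime, to certify that tree-shaped ABoxes realizing the guessed transfer sequences actually exist. To salvage a direct guess-and-verify argument you would have to prove that a witness of singly exponential \emph{total size} always exists, which is a substantially stronger claim than the depth bound of Theorem~\ref{lem:witabox} and is not established in the paper. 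A secondary point: the detour through Boolean queries and the ensuing $2^{|q|}$ blow-up is unnecessary for rooted queries, since a rooted query has arity at least one and therefore falls directly under Point~1 of Theorem~\ref{lem:witabox} (the non-Boolean case), which is exactly what the paper uses.
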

Because of space limitations, we confine ourselves to a brief sketch,
concentrating on FO-rewritability. By Point 1 of Theorem
\ref{lem:witabox}, deciding non-FO-rewritability of an OMQ
$Q=(\Tmc,\Sigma,q)$ from $(\mathcal{ELIHF}_\bot,\text{rCQ})$ comes
down to checking the existence of a pseudo tree $\Sigma$-ABox $\Amc$
that is consistent with $\Tmc$ and
such that $\Amc \models Q(\vec{a})$ and $\Amc|_{\leq k} \not\models
Q(\vec{a})$ for some tuple of individuals $\vec{a}$ from the core of
$\Amc$, for some suitable $k$. Recall that $\Amc \models Q(\vec{a})$
if and only if there is a homomorphism $h$ from $q$ to the pseudo
tree-shaped canonical model of \Tmc and \Amc that takes the answer
variables to $\vec{a}$.  Because $\vec{a}$ is from the core of $\Amc$
and $q$ is rooted, $h$ can map existential variables in $q$ only to
individuals from $\Amc|_{|q|}$ and to the anonymous elements in the
subtrees below them. To decide the existence of \Amc, we can thus
guess $\Amc|_{|q|}$ together with sets of concept assertions about
individuals in $\Amc|_{|q|}$ that can be inferred from \Amc and \Tmc,
and from $\Amc|_{\leq k}$ and \Tmc. We can then check whether there is
a homomorphism $h$ as described, without access to the full ABoxes
$\Amc$ and $\Amc|_{\leq k}$. It remains to ensure that the guessed
initial part $\Amc_{|q|}$ can be extended to \Amc such that the
entailed concept assertions are precisely those that were guessed, by
attaching tree-shaped ABoxes to individuals on level $|q|$. This can
be done by a mix of guessing and automata techniques.

We next establish the lower bounds stated in Theorems~\ref{thm:main}
and~\ref{thm:mainrooted}. For Theorem~\ref{thm:main}, we
only prove a lower bound for Point~1 as the one
in Point~2 follows from \cite{ijcai-2013}.
\begin{theorem}
\label{thm:lower}
   Containment and FO-rewritability are
  \begin{enumerate}
  \item \coNExpTime-hard in $(\ELI,\text{rCQ})$ and 
  \item 2\ExpTime-hard in $(\ELI,\text{CQ})$.
  \end{enumerate}
  The results for containment apply already when both OMQs
  share the same TBox.
\end{theorem}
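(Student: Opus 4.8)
The plan is to establish both hardness results by reductions from (alternating) Turing machine problems and then to transfer the containment lower bounds to FO-rewritability. For Part~2 we reduce from the word problem of alternating Turing machines with an exponential space bound, which is 2\ExpTime-complete (since \mbox{AEXPSPACE} $=$ 2\ExpTime); for Part~1 we reduce from the tiling problem for the $2^n \times 2^n$ torus, which is \NExpTime-complete, so that \emph{non}-containment and \emph{non}-FO-rewritability become \NExpTime-hard and hence containment and FO-rewritability are \coNExpTime-hard. In both cases the reduction target is a pair of OMQs $Q_1 = (\Tmc,\Sigma,q_1)$ and $Q_2 = (\Tmc,\Sigma,q_2)$ that \emph{share} the TBox \Tmc, together with a single OMQ $Q$, such that the machine accepts its input (resp.\ the instance is tilable) iff $Q_1 \not\subseteq Q_2$ iff $Q$ is not FO-rewritable. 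The whole construction is polynomial in the machine and input: the doubly (resp.\ singly) exponential size of the encoded ``computation space'' lives implicitly in \Tmc and is never written out.

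The heart of the matter is a TBox \Tmc, formulated in \ELI and in normal form, that generates from a single core individual a tree-shaped canonical model representing the computation space: one group of levels carries an $n$-bit binary counter addressing the $2^n$ tape cells of a configuration, and further levels enumerate successive configurations, branching at universal states into all successor configurations (the choices at existential states being recorded in the ABox). The \emph{essential} use of inverse roles is that they allow small tree-shaped concepts to link a cell to the cell at the same address in the predecessor configuration; this ``vertical'' comparison is precisely what \ELI buys over \EL and is the source of the exponential jump, in analogy with $\mathcal{ALCI}$ in \cite{DBLP:conf/cade/Lutz08}. The CQ $q_1$ matches when the ABox, read together with the \Tmc-generated structure, carries a complete computation tree of the correct shape that starts in the initial configuration and ends in accepting configurations on every branch. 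Since \ELI has no $\bot$, validity of the encoding cannot be enforced by inconsistency; instead it is enforced by the \emph{absence} of $q_2$: every way a transition can be violated, every way an address counter can be corrupted, and every mismatch of adjacent cells corresponds to a \emph{tree-shaped} defect pattern, which we encode as an \ELI concept $D_i$, add $D_i \sqsubseteq \mathsf{E}$ to \Tmc for a fresh concept name $\mathsf{E} \notin \Sigma$, and take $q_2 = \exists x\, \mathsf{E}(x)$. Then $\Amc \not\models Q_2$ expresses exactly that the encoded computation is defect-free, so that $\Amc \models Q_1(\abf)$ together with $\Amc \not\models Q_2(\abf)$ witnesses an accepting computation, and conversely each accepting computation yields such an \Amc. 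For Part~1 the same idea is carried out for the torus, where $q_1$ and $q_2$ are genuinely \emph{rooted} CQs; for Part~2, as the lower bound inherently requires, $q_1$ or $q_2$ will be Boolean or disconnected.

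To transfer the bound to FO-rewritability we attach an \emph{unbounded-propagation gadget} in the spirit of Examples~\ref{ex:1} and~\ref{ex:2}: the query $q$ of a single OMQ $Q$ asks for the computation skeleton of $q_1$ together with a fresh concept name that \Tmc pushes back along an $r$-chain of arbitrary length to a designated core point; crucially, \Tmc is also arranged so that \emph{any} defect (a $q_2$-pattern) makes this concept available at the designated point already within bounded depth. Hence $Q$ admits witnesses to non-FO-rewritability of unbounded depth --- refuting Condition~2 of Theorem~\ref{lem:char} via the depth bound of Theorem~\ref{lem:witabox} --- exactly when some ABox encodes a genuine defect-free accepting computation, i.e.\ exactly when $Q_1 \not\subseteq Q_2$; if instead $Q_1 \subseteq Q_2$ then all witnesses are bounded and $Q$ is FO-rewritable. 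By Lemma~\ref{lem:boolean-containment} it suffices to work with Boolean (and, for Part~1, connected) CQs, and the connectedness requirements are compatible with the rooted case via Theorem~\ref{thm:toconnected}.

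The hard part will be twofold. First, staying inside \ELI: with neither disjunction nor negation nor $\bot$, both the ATM transition relation and the ``address counters increment correctly'' conditions must be expressed purely through tree-shaped defect patterns funnelled into $\mathsf{E}$, and one must verify carefully that no defect pattern has a \emph{spurious} match in the anonymous part of the canonical model (or in the core), which forces a disciplined choice of role names and auxiliary ``typing'' concepts. Second, handling alternation with conjunctive queries: a CQ cannot existentially certify that \emph{all} universal successors are present, so the tree generated by \Tmc together with $q_1$ must be rigid enough that any homomorphism witnessing $Q_1$ is compelled to traverse the entire computation tree, which again relies on the inverse-role back-pointers and on a spy-point-like core individual. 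Making these two requirements coexist --- and, for Part~1, still yielding a \emph{rooted} CQ --- is the principal technical obstacle.
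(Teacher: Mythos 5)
There is a genuine gap at the core of your reduction. You propose to absorb \emph{every} defect of the encoded computation into the TBox via tree-shaped \ELI{} concepts $D_i \sqsubseteq \mathsf{E}$ and to take $q_2 = \exists x\,\mathsf{E}(x)$. This cannot work: the one defect that actually drives the complexity jump --- two cells (leaves of the encoding) that carry the \emph{same} $n$-bit address but \emph{different} tile types / tape symbols --- is not tree-shaped and is not expressible by a polynomial-size \ELI{} concept, because a tree-shaped concept rooted at one point cannot force its $n$ independent existential branches ``agree on bit $i$'' to all land on the same pair of far-apart leaves. If all defects really were expressible as \ELI{} concepts funnelled into a single concept name, $q_2$ would effectively be an atomic query and the containment problem would sit inside the \ExpTime{} fragment of Point~2 of Theorem~\ref{thm:main}, contradicting the very bounds you are proving. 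The paper's construction keeps exactly this defect \emph{in the query}: $q_2$ is a genuine cyclic CQ assembled from $n$ components (Figures~\ref{fig:q1} and~\ref{fig:q1b}) whose admissible variable identifications force any match to pick two leaves agreeing on all address bits, with only the \emph{locally checkable} defects pushed into \Tmc{} via $D \sqsubseteq C_{q_2}$. Relatedly, your $q_2 = \exists x\,\mathsf{E}(x)$ is Boolean, so Part~1 would not yield a rooted-CQ lower bound as you claim.

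A second, related problem is the division of labour between TBox and ABox. You have \Tmc{} ``generate from a single core individual'' the computation space in the anonymous part of the canonical model. In a Horn DL the canonical model is deterministic, so the anonymous part cannot carry the nondeterministic guess of a tiling or of tape contents; the guess must live in the (existentially quantified) witness ABox. The paper therefore puts the entire torus tree / computation tree \emph{into the ABox} and uses \Tmc{} only to \emph{verify} it bottom-up, deriving $A^*$ at the root, with $q_1 = A^*(x)$ an atomic query --- universal branching is handled by conjunctions on left-hand sides of CIs, not by $q_1$ traversing the tree as you suggest. Your FO-rewritability gadget ($r$-chain propagation combined with the verified-computation marker, in the spirit of Example~\ref{ex:1}) does match the paper's, and your choice of source problems and the shared-TBox format are correct; but without a non-tree-shaped defect-detecting $q_2$ and an ABox-resident computation, the reduction as described does not establish the stated hardness.
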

Point~1 is proved by reduction of the problem of tiling a torus of
exponential size, and Point~2 is proved by reduction of the word
problem of exponentially space-bounded alternating Turing machines
(ATMs).  The proofs use queries similar to those introduced in
\cite{DBLP:conf/cade/Lutz08} to establish lower bounds on the
complexity of query answering in the expressive OMQ languages
$(\ALCI,\text{rCQ})$ and $(\ALCI,\text{CQ})$. A major difference to
the proofs in \cite{DBLP:conf/cade/Lutz08} is that we represent torus
tilings / ATM computations in the ABox that witnesses
non-containment or non-FO-rewritability, instead of in the `anonymous
part' of the model created by existential quantifiers. 
%
%



The proof of Point~2 of Theorem~\ref{thm:lower} can be modified to
yield new lower bounds for monadic Datalog 
containment. Recall that the rule body of a Datalog program is a
CQ. \emph{Tree-shapedness} of a CQ $q$ is defined in the same way as
for an ABox in Section~\ref{sect:charact}, that is, $q$ viewed as an
undirected graph must be a tree without multi-edges. 
%
\begin{restatable}{theorem}{THMdatalog}\label{thm:dlog}
  For monadic Datalog programs which contain no EDB relations of arity
  larger than two and no constants, containment
  \begin{enumerate}

  \item in a rooted CQ is \coNExpTime-hard;

  \item in a CQ is 2\ExpTime-hard, even when all rule bodies 
    are tree-shaped.

  \end{enumerate}
%
\end{restatable}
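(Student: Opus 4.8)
The plan is to obtain both points by a careful translation of the lower-bound constructions behind Theorem~\ref{thm:lower} into the setting of monadic Datalog, so that the OMQ $Q=(\Tmc,\Sigma,q)$ used there becomes a pair (monadic Datalog program $\Pi$, CQ $q'$) with $\Pi \subseteq q'$ iff the original containment (equivalently non-FO-rewritability) instance holds. The key observation is the standard one mentioned in the introduction: an OMQ from $(\text{Horn-}\mathcal{SHI},\text{CQ})$ — and in particular from $(\ELI,\text{CQ})$ — can be rewritten into a monadic Datalog program, with the roles of the TBox played by the IDB-defining rules and the role of the ABox signature $\Sigma$ played by the EDB relations. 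Concretely, I would take the TBox $\Tmc$ in normal form and turn each CI into a Datalog rule: $A\sqsubseteq B$ becomes $B(x)\gets A(x)$, $B_1\sqcap B_2\sqsubseteq A$ becomes $A(x)\gets B_1(x),B_2(x)$, $\exists r.B\sqsubseteq A$ becomes $A(x)\gets r(x,y),B(y)$, and the existential axioms $A\sqsubseteq\exists r.B$ are the only delicate ones. These last axioms, however, only ever fire to build the "anonymous part" of the canonical model; since the queries $q$ coming from the proof of Theorem~\ref{thm:lower} are essentially answered on the ABox itself (that is a stated feature of those proofs — the torus / ATM computation lives in the ABox, not in the anonymous part) one can arrange, as in \cite{ijcai-2013}, that the relevant existential axioms are "pruned" away or simulated by a bounded amount of extra IDB predicates, at the cost of an exponential blow-up which is harmless for a lower bound. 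Finally the query $q$ of the OMQ becomes the goal CQ $q'$ (after $\epsilon$-normalising the distinguished/answer variable away, using that containment of a monadic Datalog program in a CQ is the relevant decision problem). Because the TBoxes built in Theorem~\ref{thm:lower} use only \ELI, inverse roles translate to the two directions of a binary EDB relation and no EDB of arity $>2$ is ever needed, and no nominals/constants occur, so the resulting $\Pi$ satisfies the stated restrictions. For Point~1 one feeds through the reduction from exponential-torus tiling, which yields a \emph{rooted} goal CQ and \coNExpTime-hardness; for Point~2 one feeds through the reduction from exponentially-space-bounded ATMs, giving 2\ExpTime-hardness.

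The extra conditions claimed in Point~2 — rule bodies tree-shaped and no constants \emph{simultaneously} — are exactly the features that the translation buys us for free, and this is the improvement over \cite{DBLP:conf/icalp/BenediktBS12}. Tree-shapedness of rule bodies follows because each Datalog rule comes from a single CI in normal form, whose body has at most two atoms and at most one binary atom, hence is trivially a tree; the goal CQ $q'$ is the query $q$ of the OMQ, which one may take tree-shaped as well (or, more precisely, the 2\ExpTime-hardness proof of Theorem~\ref{thm:lower} can be arranged so that $q$ is tree-shaped). Absence of constants follows because $\ELI$ TBoxes have no nominals and the ABox-signature encoding introduces none. The step I expect to be the main obstacle is verifying the faithfulness of the "prune the existentials" manoeuvre: one must check that the monadic Datalog program $\Pi$ obtained after eliminating the $\exists$-axioms still has the same certain answers on all the ABoxes that matter in the reduction (it is enough that they agree on consistent $\Sigma$-ABoxes whose canonical model homomorphically maps the goal query into the ABox part), and that inconsistency, which in the OMQ world is handled by $\bot$ but in Datalog must be folded into the goal query or into an auxiliary IDB, is dealt with correctly. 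This is precisely the kind of bookkeeping carried out in \cite{ijcai-2013} for the passage from $(\text{Horn-}\mathcal{SHI},\text{AQ})$ to monadic Datalog, and I would adapt that argument, taking care that CQs rather than AQs do not break it — here connectedness of $q$ (or the explicit handling of the disconnected/Boolean case as in the proof of Theorem~\ref{thm:lower}) is what keeps the homomorphism confined to a bounded neighbourhood of the ABox, so that the pruned program suffices.
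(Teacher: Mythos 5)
Your high-level idea --- recycling the OMQ lower-bound constructions of Theorem~\ref{thm:lower} as monadic Datalog programs --- is the same as the paper's, and several of your observations (no EDB relations of arity above two, no constants, rule bodies obtained from $\ELI$-concepts being tree-shaped) are on target. But there are two genuine gaps. First, your treatment of the ``delicate'' axioms is off in both directions. The axioms $A\sqsubseteq\exists r.B$ are a non-issue: the TBoxes $\Tmc_c$ and $\Tmc_w$ built for Theorem~\ref{thm:lower} contain no existential restrictions on right-hand sides at all, so nothing needs to be pruned; and your fallback of accepting ``an exponential blow-up which is harmless for a lower bound'' is wrong --- a hardness reduction must run in polynomial time, and an exponentially large program would only transfer \ExpTime-hardness from a 2\ExpTime-hard source. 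The axioms that actually cause trouble are the ones you do not mention: the CIs $D\sqsubseteq C_{q}$ whose right-hand side is the complex $\ELI$-concept $C_{q}$ used to trigger the query. These cannot become monadic Datalog rules (the head is not a unary atom), and the paper handles them by moving them to the query side, i.e.\ the goal query becomes the \emph{union} of $q$ with one CQ $q_D$ per such inclusion.

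Second, and consequently, you never get from that UCQ down to a single CQ, which is what the theorem actually asserts. This step is a substantial part of the paper's proof: the disjunction is replaced by a conjunction over fresh roles $g_0,\dots,g_k$, and gadgets are installed (in the torus tree for Point~1, in the computation tree for Point~2) so that every disjunct except the ``real'' one always has a match; for Point~2 this must be done without introducing non-tree-shaped rule bodies, which is possible precisely because the queries there are Boolean. Relatedly, your claim that the goal CQ ``can be taken tree-shaped'' is neither required by the statement (only \emph{rule bodies} are claimed tree-shaped, and only in Point~2) nor true of the actual queries $q_c$ and $q_w$, which are assembled by identifying variables across components and contain cycles. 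Without the UCQ-to-CQ step your argument establishes hardness of containment in a (rooted) UCQ, not in a (rooted) CQ, and so does not close the open problem from Chaudhuri and Vardi.
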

Point~1 closes an open problem from
\cite{DBLP:conf/pods/ChaudhuriV94}, where a \coNExpTime upper bound
for containment of a monadic Datalog program in a rooted UCQ was
proved and the lower bound was left open.  Point~2 further improves a
lower bound from \cite{DBLP:conf/icalp/BenediktBS12} which also does
not rely on EDB relations of arity larger than two, but requires
that rule bodies are not tree-shaped or constants are present (which,
in this case, correspond to nominals in the DL world).


\section{Conclusion}

A natural next step for future work is to use the techniques developed
here for devising practically efficient algorithms that construct
actual rewritings, which was very successful in the AQ case
\cite{IJCAI15}.

An interesting open theoretical question is the complexity of
FO-rewritability and containment for the OMQ languages considered in
this paper in the special case when the ABox signature contains all
concept and role names.

\smallskip
\noindent {\bf Acknowledgements.} Bienvenu was supported by ANR
project PAGODA (12-JS02-007-01), Hansen and Lutz by
ERC grant 647289, Wolter by EPSRC UK grant EP/M012646/1.

\clearpage
\bibliographystyle{named}


\clearpage
\appendix

{\noindent\LARGE\textbf{Appendix}}

\section{Proofs for Section~\ref{sect:prelim}}

\medskip
\noindent
\THMreduce*

\medskip

\noindent
\begin{proof}
The proof is similar to reductions provided in 
\citeA{journals/jar/HustadtMS07,conf/ijcai/Kazakov09}.
We sketch the proof for Horn-$\mathcal{SHIF}$. The proof for
\ELplus is similar and omitted.

Assume a $\mathcal{SHIF}$ TBox $\Tmc$ is given. The following rules are used to rewrite \Tmc into an
$\mathcal{ELIHF}_{\bot}$ TBox in normal form. It then only remains to eliminate the
transitivity assertions. We assume that the concept names introduced in the rules below are fresh 
(not in ${\sf sig}(\Tmc)\cup \Sigma$):
\begin{itemize}
\item If $L$ is of the form $L_{1}\sqcap L_{2}$ and $R$ is not a concept name,
then take a fresh concept name $A$ and replace $L \sqsubseteq R$ by $L \sqsubseteq A$ and $A \sqsubseteq R$.
If $R$ is a concept name,
and either $L_{1}$ or $L_{2}$ are not concept names, then take fresh concept names $A_{1},A_{2}$
and replace $L\sqsubseteq R$ by $L_{1}\sqsubseteq A_{1}$,
$L_{2}\sqsubseteq A_{2}$ and $A_{1}\sqcap A_{2}\sqsubseteq R$;
\item If $L$ is of the form $L_{1}\sqcup L_{2}$ and $R$ is a concept name, then
replace $L\sqsubseteq R$ by $L_{1}\sqsubseteq R$ and $L_{2}\sqsubseteq R$. Otherwise
take a fresh concept name $A$ and replace $L \sqsubseteq R$ by $L \sqsubseteq A$ and
$A \sqsubseteq R$;
\item If $L$ is of the form $\exists r.L'$ and $L'$ is not a concept name, 
then take a fresh concept name $A'$ and replace 
$L\sqsubseteq R$ by $L' \sqsubseteq A'$ and $\exists r.A' \sqsubseteq R$;
\item If $R$ is of the form $\neg A$, then replace $L\sqsubseteq R$ by $L \sqcap A\sqsubseteq \bot$;
\item If $R$ is of the form $R_{1} \sqcap R_{2}$ and $L$ is not a concept name, then take a fresh concept
name $A$ and replace $L \sqsubseteq R$ by $L \sqsubseteq A$ and $A \sqsubseteq R$. Otherwise take 
fresh concept names $A_{1},A_{2}$ and replace $L\sqsubseteq R$ by $L\sqsubseteq A_{1}$,
$L\sqsubseteq A_{2}$, $A_{1}\sqsubseteq R_{1}$, and $A_{2} \sqsubseteq R_{2}$;
\item If $R$ is of the form $\neg L' \sqcup R'$, then replace $L\sqsubseteq R$ by 
$L \sqcap L' \sqsubseteq R'$;
\item If $R$ is of the form $\exists r.R'$ and $R'$ is not a concept name, then take a fresh concept name 
$A'$ and replace $L\sqsubseteq R$ by $L \sqsubseteq \exists r.A'$ and $A' \sqsubseteq R'$;
\item  If $R$ is of the form $\forall r.R'$, then replace $L \sqsubseteq R$ by
$\exists r^{-}.L \sqsubseteq R$.
\end{itemize}
The resulting TBox $\Tmc'$ is a conservative extension of $\Tmc$; i.e., it has the following two properties:
\begin{itemize}
\item $\Tmc'\models \Tmc$;
\item every model \Imc of $\Tmc$ can be extended to a model of $\Tmc'$ by appropriately
interpreting the fresh concept names.
\end{itemize} 
Now we show how transitivity assertions can be eliminated from $\Tmc'$:
for any role $r$ with $\Tmc\models {\sf trans}(r)$ and concept name $B$ take a fresh concept name 
$X$ and add the CIs $\exists r.B \sqsubseteq X$, $\exists r. X \sqsubseteq X$, and $X \sqsubseteq \exists r.B$
to $\Tmc'$. Also remove the transitivity assertions from $\Tmc'$. The resulting TBox, $\Tmc''$, is an $\mathcal{ELIHF}_{\bot}$
TBox and has the following two properties (we call a role name $r$ \emph{simple relative to $\Tmc$} if there does
not exist a role $s$ with $\Tmc\models {\sf trans}(s)$ and $\Tmc\models s \sqsubseteq r$):
\begin{itemize}
\item every model of $\Tmc'$ can be extended to a model of $\Tmc''$ by appropriately
interpreting the fresh concept names of $\Tmc''$;
\item for every model $\Imc$ of $\Tmc''$ there exists a model $\Jmc$ 
of $\Tmc'$ which coincides with $\Imc$ regarding the interpretation of concept names 
and regarding the interpretation of role names $r$ that are simple relative to $\Tmc$. Moreover,
for role names $r$ that are not simple relative to $\Tmc$ we have $r^\Jmc \supseteq r^{\Imc}$.
\end{itemize}
It follows that $\Tmc''$ is as required since role names that are not simple relative to $\Tmc$ do not occur in 
any CQs in OMQs.
\end{proof}
We require the following standard characterization of FO-definability.
Let $\Imc$ and $\Jmc$ be interpretations and $\abf=a_{1},\ldots,a_{n}$ a sequence of individual names. 
Then $\Imc$ and $\Jmc$ are called \emph{$m$-equivalent for $\Sigma$ and $\abf$}, in symbols
$\Imc \equiv_{m,\Sigma,\vec{a}} \Jmc$, if $\Imc$ and $\Jmc$ satisfy the same first-order sentences
of quantifier rank $\leq m$ using predicates from $\Sigma$ and individual constants from $\vec{a}$ only.
The following characterization of FO-definability is well known and can be proved in a straightforward way.
\begin{lemma}
\label{lem:m-rewrite}
Let $Q=(\Tmc,\Sigma,q)$ be an OMQ. Then $Q$ is not FO-rewritable iff for all $m>0$ there are $\Sigma$-ABoxes $\Amc_{m}$ and 
$\Bmc_{m}$ that are consistent with $\Tmc$ and there is $\vec{a}\in 
\mn{Ind}(\Amc_{m})\cap \mn{Ind}(\Bmc_{m})$  
such that 
\begin{itemize}
\item $\Amc_{m},\Tmc \models q(\vec{a})$ and $\Bmc_{m},\Tmc\not\models q(\vec{a})$ and
\item $\Imc_{\Amc_{m}} \equiv_{m,\Sigma,\vec{a}} \Imc_{\Bmc_{m}}$.
\end{itemize}
\end{lemma}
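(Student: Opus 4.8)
The plan is to prove the two directions of the biconditional separately, using only standard Ehrenfeucht--Fra\"iss\'e machinery. First I would record three preliminaries. (i)~Since every symbol of $\Sigma$ occurs in $\Tmc$ or $q$, the signature $\Sigma$ is finite, so for each arity $n$ and each $m$ there are only finitely many classes of pairs $(\Imc,\vec a)$ with $|\vec a|=n$ modulo $\equiv_{m,\Sigma,\vec a}$, and each class $\tau$ is defined by a single formula $\chi_\tau(\vec x)$ of quantifier rank at most $m$ over $\Sigma$ (a Hintikka formula). (ii)~Renaming individual names is an isomorphism of interpretations, hence preserves consistency with $\Tmc$, certain answers (transported along the renaming), and $\equiv_{m,\Sigma,\cdot}$-classes. (iii)~Any FOQ $\varphi$ may be assumed to use only symbols from $\Sigma$, since on a $\Sigma$-ABox every predicate outside $\Sigma$ is interpreted as $\emptyset$, so each atom over such a predicate can be replaced by $\neg(x=x)$ without affecting $\mn{ans}(\Imc_\Amc,\varphi)$.

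For ``$\Leftarrow$'' I would argue by contradiction: assume the right-hand side holds and that $\varphi$ is an FO-rewriting of $Q$, which by~(iii) I may take to use only $\Sigma$-symbols. Put $m=\max(1,\mathit{qr}(\varphi))$ and take $\Amc_m,\Bmc_m,\vec a$ as granted by the right-hand side. Then $\Amc_m,\Tmc\models q(\vec a)$ yields $\vec a\in\mn{cert}(Q,\Amc_m)=\mn{ans}(\Imc_{\Amc_m},\varphi)$, i.e.\ $\Imc_{\Amc_m}\models\varphi(\vec a)$; since $\Imc_{\Amc_m}\equiv_{m,\Sigma,\vec a}\Imc_{\Bmc_m}$ and $\mathit{qr}(\varphi)\le m$, also $\Imc_{\Bmc_m}\models\varphi(\vec a)$, so $\vec a\in\mn{cert}(Q,\Bmc_m)$ and $\Bmc_m,\Tmc\models q(\vec a)$, contradicting the choice of $\Bmc_m$.

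For ``$\Rightarrow$'' I would argue contrapositively: suppose the right-hand side fails for some $m\ge 1$. Using that $\equiv_{m,\Sigma,\cdot}$ is symmetric, this says that for all $\Sigma$-ABoxes $\Amc,\Bmc$ consistent with $\Tmc$ and all $\vec a\in\mn{Ind}(\Amc)\cap\mn{Ind}(\Bmc)$ with $\Imc_\Amc\equiv_{m,\Sigma,\vec a}\Imc_\Bmc$ we have $\Amc,\Tmc\models q(\vec a)$ iff $\Bmc,\Tmc\models q(\vec a)$; equivalently, whether $\vec a\in\mn{cert}(Q,\Amc)$ depends only on the $\equiv_{m,\Sigma,\cdot}$-class of $(\Imc_\Amc,\vec a)$. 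Let $n$ be the arity of $q$, let $T$ be the finite set of those classes realised by some $(\Imc_\Amc,\vec a)$ with $\Amc$ a consistent $\Sigma$-ABox, $\vec a\in\mn{Ind}(\Amc)^n$ and $\Amc,\Tmc\models q(\vec a)$, and set $\varphi:=\bigvee_{\tau\in T}\chi_\tau(\vec x)$. I would then check that $\varphi$ is an FO-rewriting of $Q$: the inclusion $\mn{cert}(Q,\Amc)\subseteq\mn{ans}(\Imc_\Amc,\varphi)$ holds because any such $(\Imc_\Amc,\vec a)$ lies in a class in $T$ and satisfies its Hintikka formula; for $\mn{ans}(\Imc_\Amc,\varphi)\subseteq\mn{cert}(Q,\Amc)$, $\Imc_\Amc\models\varphi(\vec a)$ forces $(\Imc_\Amc,\vec a)$ into the same class as some witness $(\Imc_\Bmc,\vec b)$ of a disjunct, and after renaming the individuals of $\Bmc$ so that $\vec b$ becomes $\vec a$ and the remaining ones become fresh, applying this determinacy property to $\Amc$ and the renamed $\Bmc$ gives $\vec a\in\mn{cert}(Q,\Amc)$. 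Hence $Q$ is FO-rewritable, which is the contrapositive of ``$\Rightarrow$''.

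I do not expect a real obstacle, as the lemma is folklore; the two points needing a little care are preliminary~(iii) --- so that $\equiv_{m,\Sigma,\vec a}$ is exactly the relation transferring satisfaction of $\varphi$ --- and the renaming step that aligns the distinguished tuples of two $m$-equivalent pairs so that the failed right-hand side becomes applicable.
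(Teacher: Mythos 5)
Your proof is correct. The paper does not actually prove this lemma --- it only declares it ``well known and provable in a straightforward way'' --- and your Hintikka-formula/Ehrenfeucht--Fra\"iss\'e argument is precisely the standard proof being alluded to, handling correctly the two points that need care (restricting the rewriting to $\Sigma$-symbols so that $\equiv_{m,\Sigma,\vec a}$ transfers satisfaction, and renaming individuals so that the distinguished tuples coincide before the determinacy property is applied).
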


We use Lemma~\ref{lem:m-rewrite} to prove Theorem~\ref{thm:toconnected}.

\medskip 
\noindent 
\THMtoconnected*

\smallskip 
\noindent 
\begin{proof}
Let $Q=(\Tmc,\Sigma,q)$ be an OMQ in $(\Lmc,\text{CQ})$. Assume $q(\vec{x})= \exists \vec{y}.\varphi(\vec{x},\vec{y})$.
The polynomial time algorithm is as follows:
\begin{enumerate}
\item Let $\vec{x}_{1},\ldots,\vec{x}_{k}$ and $\vec{y}_{1},\ldots,\vec{y}_{k}$ be mutually disjoint subsets of
$\vec{x}$ and $\vec{y}$, respectively, such that 
\begin{eqnarray*}
\Gamma  & = & \{q_{1}(\vec{x}_{1})= \exists \vec{y}_{1}\varphi_{1}(\vec{x}_{1},\vec{y}_{1}),
\ldots, \\
&   & \hspace*{0.3cm}q_{k}(\vec{x}_{k})= \exists \vec{y}_{k}\varphi_{k}(\vec{x}_{k},\vec{y}_{k})\}
\end{eqnarray*}
is the set of maximal connected subqueries of $q$. 
\item Obtain $\Gamma'$ from $\Gamma$ by removing Boolean CQs $q_{j}$ that are entailed by the remaining
CQs as follows: set $\Gamma_{0}=\Gamma$ and assume $\Gamma_{0},\ldots,\Gamma_{j}$ have been defined for some $j<k$.
Then set $\Gamma_{j+1}:= \Gamma_{j}\setminus \{q_{j+1}\}$
if $q_{j+1}$ is Boolean and 
$$
\Amc,\Tmc\models \bigwedge_{q_{i}\in \Gamma_{j}\setminus \{q_{j+1}\}}q_{i}(\vec{a}_{i})\quad \Rightarrow \quad \Amc,\Tmc\models q_{j}
$$
holds for all $\Sigma$-ABoxes $\Amc$ and all $\vec{a}_{i}$ in $\mn{Ind}(\Amc)$. Otherwise set $\Gamma_{j+1}:=\Gamma_{j}$.
Let $\Gamma':=\Gamma_{k}$.
Clearly, $\Gamma'$ can be computed using an oracle for containment in $(\Lmc,\text{CQ})$.
\item Check FO-rewritability of $(\Tmc,\Sigma,q_{i})$ for all $q_{i}\in \Gamma'$ using an oracle
for FO-rewritability in $(\Lmc,\text{conCQ})$.
\item Output `$Q$ is FO-rewritable' iff all $q_{i}\in \Gamma'$ are FO-rewritable.
\end{enumerate}  
The following claim establishes the correctness of this algorithm.

\medskip
\noindent
{\bf Claim}. $Q$ is FO-rewritable iff all $(\Tmc,\Sigma,q_{j})$ with $q_{j}\in \Gamma'$ are FO-rewritable.

\medskip
\noindent
The direction from right to left is trivial. 
Conversely, assume that some $(\Tmc, \Sigma,q_{j})$ with $q_{j}\in \Gamma'$
is not FO-rewritable. By Lemma~\ref{lem:m-rewrite} we find, for all $m>0$, 
$\Sigma$-ABoxes $\Amc_{m}$ and $\Bmc_{m}$ that are consistent relative to $\Tmc$ and 
$\vec{a}_{j}\in \mn{Ind}(\Amc_{m})\cap \mn{Ind}(\Bmc_{m})$ of the same length as $\vec{x}_{j}$ 
such that 
\begin{itemize}
\item $\Amc_{m},\Tmc \models q_{j}(\vec{a}_{j})$ and $\Bmc_{m},\Tmc\not\models q_{j}(\vec{a}_{j})$;
\item $\Imc_{\Amc_{m}} \equiv_{m,\Sigma,\vec{a}_{j}} \Imc_{\Bmc_{m}}$.
\end{itemize}
Consider the query 
$$
q'(\vec{x}')= \bigwedge_{q_{i}(\vec{x}_{i})\in \Gamma'\setminus \{q_{j}(\vec{x}_{j})\}}q_{i}(\vec{x}_{i}).
$$
Observe that $q(\vec{x}) = q(\vec{x}',\vec{x}_{j})$ and that $q(\vec{x})$ is equivalent to 
$q_{j}(\vec{x}_{j}) \wedge q'(\vec{x}')$.  
We distinguish two cases.

\medskip

(1) If $q_{j}$ is not Boolean, then take some $\Sigma$-ABox $\Amc$ that is 
consistent relative
to $\Tmc$ and with $\mn{Ind}(\Amc)\cap \mn{Ind}(\Amc_{m})=\emptyset$ and 
$\mn{Ind}(\Amc)\cap \mn{Ind}(\Bmc_{m})=\emptyset$ for all $m>0$ such that 
$\Amc,\Tmc\models q'(\vec{a}')$ for some $\vec{a}'$ in $\mn{Ind}(\Amc)$ of the same length as $\vec{x}'$.
We obtain for all $m>0$:
\begin{itemize}
\item $\Amc_n \cup \Amc,\Tmc\models q(\vec{a}',\vec{a}_{j})$ and $\Bmc_{n}\cup \Amc\not\models q(\vec{a}',\vec{a}_{j})$;
\item $\Imc_{\Amc_{n}\cup \Amc}\equiv_{n,\Sigma,\vec{a}',\vec{a}_{j}} \Imc_{\Bmc_{n}\cup \Amc}$.
\end{itemize}
It follows from Lemma~\ref{lem:m-rewrite} that $(\Tmc,\Sigma,q)$ is not FO-rewritable.

\medskip

(2) If $q_{j}$ is Boolean, then take some $\Sigma$-ABox $\Amc$ with
$\mn{Ind}(\Amc)\cap \mn{Ind}(\Amc_{m})=\emptyset$ for all $m>0$
such that $\Amc,\Tmc\models q'(\vec{a}')$ and $\Amc,\Tmc\not\not\models q_{j}$
for some $\vec{a}'$ in $\mn{Ind}(\Amc)$ of the same length as $\vec{x}'$ (which, since $q_{j}$
is Boolean, coincides with the length of $\vec{x}$). We obtain for all $m>0$:
\begin{itemize}
\item $\Amc_m \cup \Amc,\Tmc\models q(\vec{a}')$ and 
$\Bmc_{m}\cup \Amc\not\models q(\vec{a}')$;
\item $\Imc_{\Amc_{m}\cup \Amc}\equiv_{m,\Sigma,\vec{a}'}\Imc_{\Bmc_{m}\cup \Amc}$.
\end{itemize}
It follows again from Lemma~\ref{lem:m-rewrite} that 
$(\Tmc,\Sigma,q)$ is not FO-rewritable.
\end{proof}


\section{Proofs for Section~\ref{sect:charact}}

\subsection{Preliminary: Role intersections}
We extend the DLs \ELIHFbot and \ELHFbot with intersections of roles that
can occur in existential restrictions on the left hand side of concept
inclusions. This extension enables us to reduce entailment of tree-shaped
CQs to TBox reasoning.

An \emph{$\mathcal{ELI}^{\cap}$ concept} is an $\mathcal{ELI}$ concept that additionally admits 
\emph{role intersections}
$R= r_{1}\cap \cdots \cap r_{n}$ of roles $r_{1},\ldots,r_{n}$ in existential restrictions. 
We denote role intersections by $R,S,R'$ etc.
An \emph{$\mathcal{EL}^{\cap}$ concept} is an 
$\mathcal{EL}$ concept that additionally admits intersections of role names in existential restrictions.
An \emph{$\mathcal{ELIHF}^{\cap\text{-}\mn{lhs}}_{\bot}$ TBox} is an \ELIHFbot TBox in which $\mathcal{ELI}^{\cap}$
concepts can occur on the left hand side of concept inclusions. Similarly, an 
\emph{$\mathcal{ELHF}^{\cap\text{-}\mn{lhs}}_{\bot}$ TBox} is an \ELHFbot TBox in which $\mathcal{EL}^{\cap}$
concepts can occur on the left hand side of concept inclusions. 
The semantics of $\mathcal{ELIHF}^{\cap\text{-}\mn{lhs}}_{\bot}$ TBoxes is defined by extending the semantics
of \ELIHFbot in a straightforward manner, where we assume that $R^{\Imc}= r_{1}^{\Imc} \cap \cdots \cap r_{n}^{\Imc}$
for any interpretation $\Imc$ and role inclusion $R= r_{1}\cap \cdots \cap r_{n}$.

The definition of a normal form for TBoxes and Theorem~\ref{thm:reduce} can be easily extended 
from \ELIHFbot to $\mathcal{ELIHF}^{\cap\text{-}\mn{lhs}}_{\bot}$: say that an $\mathcal{ELIHF}^{\cap\text{-}\mn{lhs}}_{\bot}$ 
TBox $\Tmc$ is in \emph{normal form} if its concept inclusions take the form
\begin{align*} 
 A \sqsubseteq \bot \quad 
 A \sqsubseteq \exists r . B \quad
 \top  \sqsubseteq A \quad B_1 \sqcap B_2 \sqsubseteq A \quad \exists R . B 
 \sqsubseteq A
\end{align*}
with $A,B,B_1,B_2$ concept names, $r$ a role, and $R$ a role intersection. An analogue of Theorem~\ref{thm:reduce}
is formulated and proved in the obvious way. We leave this to the reader.

\subsection{Preliminary: Canonical models}\label{canmod-def}

We introduce the canonical model $\Imc_{\Amc,\Tmc}$ of an ABox $\Amc$ and TBox $\Tmc$ in
$\mathcal{ELIHF}^{\cap\text{-}\mn{lhs}}_{\bot}$. The main properties of $\Imc_{\Tmc,\Amc}$ are:
\begin{itemize}
\item $\Imc_{\Tmc,\Amc}$ is a model of $\Amc$ and $\Tmc$;
\item for every model $\Imc$ of $\Tmc$ there exists a homomorphism
from $\Imc_{\Imc,\Amc}$ to $\Imc$ that maps each $a\in {\sf Ind}(\Amc)$ to
itself.
\end{itemize}
$\Imc_{\Amc,\Tmc}$ is constructed using a standard chase procedure. We will also
introduce a variant of this procedure that constructs, given $\Amc$ and $\Tmc$,
the \emph{completion} ABox $\Amc^{c}_{\Tmc}$ of $\Amc$ which contains $\Amc$ and all assertions
$A(a)$ and $r(a,b)$ with $a,b\in {\sf Ind}(\Amc)$ that are entailed by $\Amc$ and $\Tmc$. 
In both cases we assume that $\Amc$ is consistent with $\Tmc$ and that $\Tmc$ is in normal form.

We start by defining the canonical model $\Imc_{\Amc,\Tmc}$ of $\Amc$ and $\Tmc$.
It is convenient to use ABox notation when constructing $\Imc_{\Amc,\Tmc}$ and so we will
construct a (possibly infinite) ABox $\Amc^{\text{can}}_{\Tmc}$ and define $\Imc_{\Amc,\Tmc}$ as the
interpretation corresponding to $\Amc^{\text{can}}_{\Tmc}$.

Thus assume that $\Amc$ and $\Tmc$ are given. The \emph{full completion sequence of $\Amc$ w.r.t.~$\Tmc$}
is the sequence of ABoxes $\Amc_0,\Amc_1,\dots$ defined by setting 
\begin{eqnarray*}
  \Amc_0 & = & \Amc \cup \\
         &   & \{ r(a,b) \mid s(a,b)\in \Amc, \Tmc\models s\sqsubseteq r\}\cup \\
         &   & \{ r(a,b) \mid s(b,a)\in \Amc, \Tmc\models s^{-} \sqsubseteq r\}
\end{eqnarray*}
and defining $\Amc_{i+1}$ to be $\Amc_i$ extended as follows (recall that we abbreviate $r(a,b)$ by $r^{-}(b,a)$
and that $r$ ranges over roles):
\begin{itemize}

\item[(i)] if $\exists R. B \sqsubseteq A \in \Tmc$ for $R=r_{1}\cap \cdots \cap r_{n}$ and 
$r_{1}(a,b),\ldots,r_{n}(a,b), B(b) \in \Amc_i$, then add $A(a)$ to $\Amc_{i}$;


\item[(ii)] if $\top \sqsubseteq A\in \Tmc$ and $a\in {\sf Ind}(\Amc_{i})$,
then add $A(a)$ to $\Amc_{i}$;

\item[(iii)] if $B_{1}\sqcap B_{2} \sqsubseteq A\in \Tmc$ and $B_{1}(a),B_{2}(a)\in \Amc_{i}$, 
then add $A(a)$ to $\Amc_{i}$;

\item[(iv)] if $A \sqsubseteq \exists r. B \in \Tmc$ and ${\sf func}(r)\in \Tmc$
and $A(a)\in \Amc_{i}$ and there exists $b$ with $r(a,b)\in \Amc_{i}$, 
then add $B(b)$ to $\Amc_{i}$;

\item[(v)] if $A \sqsubseteq \exists r. B \in \Tmc$ and ${\sf func}(r)\not\in \Tmc$
and $A(a)\in \Amc_{i}$, then take a fresh individual $b$ and add $r(a,b)$ and $B(b)$ to $\Amc_{i}$;

\item[(vi)] if $r\sqsubseteq s\in \Tmc$ and $r(a,b)\in \Amc_{i}$, then add $s(a,b)$ to $\Amc_{i}$.
\end{itemize}
Now let $\Amc_{\Tmc}^{\text{can}}=\bigcup_{i\geq 0}\Amc_{i}$ and let $\Imc_{\Amc,\Tmc}$ be the interpretation
corresponding to $\Amc_{\Tmc}^{\text{can}}$. It is straightforward to prove the following properties
of $\Imc_{\Amc,\Tmc}$.
\begin{lemma} \label{lem:canmodelproperties}
Assume $\Amc$ is consistent with $\Tmc$ and $\Tmc$ is in normal form.
Then
\begin{itemize}
\item $\Imc_{\Tmc,\Amc}$ is a model of $\Amc$ and $\Tmc$;
\item for every model $\Imc$ of $\Tmc$ there exists a homomorphism
from $\Imc_{\Imc,\Amc}$ to $\Imc$ that maps each $a\in {\sf Ind}(\Amc)$ to
itself.
\end{itemize}
\end{lemma}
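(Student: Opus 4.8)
The plan is to prove Lemma~\ref{lem:canmodelproperties} by a routine verification, following the standard template for chase-based canonical model constructions. There are two claims to establish, and I would handle them separately.

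\medskip
\noindent
\textbf{First claim: $\Imc_{\Amc,\Tmc}$ is a model of $\Amc$ and $\Tmc$.} That $\Imc_{\Amc,\Tmc} \models \Amc$ is immediate since $\Amc \subseteq \Amc_0 \subseteq \Amc_\Tmc^{\text{can}}$. To see that it is a model of $\Tmc$, I would go through the possible shapes of normal-form concept inclusions one by one and argue that each rule (i)--(vi) is a ``completion'' rule that, once no longer applicable, guarantees satisfaction of the corresponding inclusion in the limit $\Amc_\Tmc^{\text{can}} = \bigcup_i \Amc_i$. The key observation is \emph{monotonicity}: each $\Amc_i \subseteq \Amc_{i+1}$, so any assertion ever added is present in the limit, and any precondition satisfied at some stage stays satisfied. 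Hence if, say, $\exists R.B \sqsubseteq A \in \Tmc$ and the relevant role atoms and $B(b)$ lie in $\Amc_\Tmc^{\text{can}}$, they already lie in some $\Amc_i$, so rule (i) fires and $A(a) \in \Amc_{i+1} \subseteq \Amc_\Tmc^{\text{can}}$; similarly for the $\top$-, conjunction-, existential-, and role-inclusion rules. For functionality assertions $\mn{func}(r)$, I need that $r^{\Imc_{\Amc,\Tmc}}$ is a partial function: fresh successors are created by rule (v) only for non-functional $r$, while rule (iv) reuses an existing $r$-successor when $r$ is functional, so no node ever acquires two $r$-successors — here I would lean on the standing assumptions (functional roles not transitive, not on the right of role inclusions) so that rule (vi) cannot manufacture a second $r$-successor either. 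For $A \sqsubseteq \bot$: since $\Amc$ is assumed consistent with $\Tmc$, no rule ever derives $A(a)$ for such an $A$ — this needs a small argument that the chase does not derive anything that a model of $\Amc,\Tmc$ would not entail, which I address next.

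\medskip
\noindent
\textbf{Second claim: universality.} Given any model $\Imc$ of $\Tmc$ with $\Imc \models \Amc$ (the homomorphism only needs to exist when such a model does; when $\Amc$ is consistent with $\Tmc$ this is fine, and the statement should be read accordingly), I build a homomorphism $h : \Imc_{\Amc,\Tmc} \to \Imc$ fixing each $a \in \mn{Ind}(\Amc)$ by induction on the stage $i$ at which individuals and assertions enter $\Amc_\Tmc^{\text{can}}$. Base case: set $h(a) = a^\Imc$ for $a \in \mn{Ind}(\Amc)$; since $\Imc \models \Amc$ and $\Imc \models \Tmc \models s \sqsubseteq r$, the assertions added to $\Amc_0$ are respected. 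Inductive step: for the ``non-generating'' rules (i)--(iv) and (vi), no new element is created, and the corresponding inclusion in $\Tmc$, satisfied in $\Imc$, forces $h$ of the (already-mapped) witness into the right set or relation; e.g.\ for rule (v), if $A(a) \in \Amc_i$ with $h(a) \in A^\Imc$ and $A \sqsubseteq \exists r.B \in \Tmc$, then $\Imc \models \exists r.B$ at $h(a)$ gives an element $d \in \Delta^\Imc$ with $(h(a),d) \in r^\Imc$ and $d \in B^\Imc$, so I extend $h$ by mapping the fresh $b$ to $d$. It is then immediate that $h$ is a homomorphism: every concept assertion $A(c) \in \Amc_\Tmc^{\text{can}}$ maps to $h(c) \in A^\Imc$ and every role assertion to an edge of $\Imc$, by construction at the stage it was added. (For role intersections occurring in rule (i), the hypothesis supplies all of $r_1(a,b),\dots,r_n(a,b)$, each mapped to an $r_j$-edge, which is exactly what $R^\Imc = r_1^\Imc \cap \cdots \cap r_n^\Imc$ requires.) This universality also retroactively justifies the ``no $A \sqsubseteq \bot$ is triggered'' point in the first claim: if the chase derived $A(a)$ with $A \sqsubseteq \bot \in \Tmc$, then by the homomorphism any model of $\Amc, \Tmc$ would place $h(a) \in A^\Imc \subseteq \bot^\Imc = \emptyset$, contradicting consistency.

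\medskip
\noindent
I do not expect any genuine obstacle here; this is a ``straightforward to prove'' lemma as the text says. The only places demanding care are (a) the interaction of functionality with role inclusions in establishing that $r^{\Imc_{\Amc,\Tmc}}$ is a partial function, where I rely on the paper's simplifying assumption that functional roles never appear on the right of RIs and are never transitive, and (b) being precise about the order of the induction in the universality argument — since rule (v) creates fresh individuals, one must induct on the stage index $i$ rather than on any structural measure, and check that $h$ restricted to $\mn{Ind}(\Amc_i)$ is already a partial homomorphism at each stage. Everything else is bookkeeping over the finitely many inclusion shapes.
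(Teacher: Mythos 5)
The paper gives no proof of this lemma, declaring it straightforward, and your routine chase verification (monotonicity to pass to the limit $\Amc^{\text{can}}_\Tmc$, a case analysis over the normal-form CI shapes, and a stage-wise induction for universality, with the $\bot$-case discharged via the homomorphism and consistency) is exactly the intended argument. The one wrinkle is created by the paper rather than by you: as literally written, rule (v) never fires when $\mn{func}(r)\in\Tmc$, so an inclusion $A \sqsubseteq \exists r.B$ with $r$ functional and no pre-existing $r$-successor of $a$ would go unwitnessed in $\Imc_{\Amc,\Tmc}$; your blanket claim that exhausted rules guarantee satisfaction in the limit silently assumes the obvious repair (generate a fresh successor when none exists, reuse it otherwise), which is clearly what the authors intend.
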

The ABox $\Amc_{\Tmc}^{\text{can}}$ can contain additional individuals
and can even be infinite. For some purposes it is more convenient to work with the subset
$\Amc_{\Tmc}^{c}$ of $\Amc_{\Tmc}^{\text{can}}$ that only contains those assertions in 
$\Amc_{\Tmc}^{\text{can}}$ that use individual names from $\Amc$. 
$\Amc_{\Tmc}^{c}$ can be constructed using rules as well. For any individual name $a$
we set
$$
\Amc|_{a} = \{A(a) \mid A(a)\in \Amc\}
$$
Now consider the rules (i) to (iv) from above and replace the rules (v) and (vi) by the single rule
\begin{itemize}
\item[(vii)] if $\Amc_i|_a,\Tmc\models A(a)$, add $A(a)$ to $\Amc_{i}$.
\end{itemize}
Thus, the \emph{completion sequence of $\Amc$ w.r.t.~$\Tmc$}
is the sequence of ABoxes $\Amc_0,\Amc_1,\dots$, where $\Amc_{0}$ is as defined
above and $\Amc_{i+1}$ is obtained from $\Amc_{i}$ by applying the rules (i) to (iv) and (vii)
to $\Amc_{i}$. The proof of the following is straightforward.
\begin{lemma}
\label{lem:complabox}
For all assertions $A(a)$ and $r(a,b)$ with $a,b\in {\sf Ind}(\Amc)$:
\begin{itemize}
\item $\Amc,\Tmc \models A(a)$ iff $A(a) \in \Amc^c_{\Tmc}$;
\item $\Amc,\Tmc \models r(a,b)$ iff $r(a,b)\in \Amc_{0}$ iff $r(a,b) \in \Amc^c_{\Tmc}$. 
\end{itemize}
\end{lemma}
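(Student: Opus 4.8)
The plan is to treat each biconditional as an easy soundness direction (``$\Leftarrow$'') together with a completeness direction (``$\Rightarrow$'') that runs through the canonical model provided by Lemma~\ref{lem:canmodelproperties}.

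For soundness I would show, by induction on the length of the completion sequence $\Amc_0,\Amc_1,\dots$ of $\Amc$ w.r.t.\ $\Tmc$, that every assertion it produces is entailed by $\Amc$ and $\Tmc$. In the base case, $\Amc_0$ extends $\Amc$ only by role assertions $r(a,b)$ obtained from a role assertion of $\Amc$ via an entailed role inclusion, which are clearly entailed. In the inductive step each of the rules (i)--(iv) and (vii) preserves the invariant: rules (ii) and (iii) are immediate from the matching CI; rule (i) uses that $r_1(a,b),\dots,r_n(a,b),B(b)$ are entailed, so in every model $a$ has an $R$-successor in $B$ and hence lies in $A$; rule (iv) is analogous using functionality; and rule (vii) is immediate because $\Amc_i|_a \subseteq \Amc_i$ and entailment is monotone, so $\Amc_i|_a,\Tmc \models A(a)$ gives $\Amc_i,\Tmc \models A(a)$ and, by the induction hypothesis, $\Amc,\Tmc \models A(a)$. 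As none of (i)--(iv), (vii) introduces a role assertion, this also shows that every role assertion of $\Amc^c_\Tmc$ already lies in $\Amc_0$.

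For completeness I would invoke Lemma~\ref{lem:canmodelproperties}: since $\Imc_{\Amc,\Tmc}$ is a model of $\Amc$ and $\Tmc$ and maps homomorphically into every model of $\Amc$ and $\Tmc$ while fixing $\mn{Ind}(\Amc)$, we have $\Amc,\Tmc \models A(a)$ iff $a \in A^{\Imc_{\Amc,\Tmc}}$ and $\Amc,\Tmc \models r(a,b)$ iff $(a,b) \in r^{\Imc_{\Amc,\Tmc}}$ for all $a,b \in \mn{Ind}(\Amc)$. It thus suffices to prove that the restriction of $\Amc^{\text{can}}_\Tmc$ to named individuals is contained in $\Amc^c_\Tmc$ for concept assertions and in $\Amc_0$ for role assertions. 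The role part is immediate: the only role assertions of $\Amc^{\text{can}}_\Tmc$ are those of $\Amc_0$ together with ones added by rules (v) and (vi), each of which touches a fresh individual, while closure of $\Amc_0$ under entailed role inclusions means rule (vi) adds nothing between named individuals. For concept assertions I would induct on the stage of $\Amc^{\text{can}}_\Tmc$ at which some $A(a)$ with $a$ named appears. Rules (ii), (iii), and rule (i) or (iv) applied at $a$ with a named witness transfer verbatim to the completion sequence using the induction hypothesis and the role part — for rule (iv) note that, by the standing assumption that functional roles do not occur on right-hand sides of role inclusions, a functional-role edge out of a named individual already lies in $\Amc_0$, so its witness is named. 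The one substantial case is a read-back application of rule (i) at $a$ through an anonymous successor $b$: then $b$ was created as a child of $a$ by rule (v) for some CI $A' \sqsubseteq \exists r.B'$ with $A'(a)$ present, and by construction of the chase the anonymous subtree below $b$ together with the edges from $b$ back to $a$ is determined by $\Tmc$, the linking role $r$, $B'$, and the concept-name type of $a$ at that stage; consequently $\Amc_i|_a,\Tmc \models A(a)$, and by the induction hypothesis that type of $a$ is already present on $a$ in $\Amc^c_\Tmc$, so rule (vii) fires and adds $A(a)$.

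I expect the last case to be the main obstacle. Because $\Tmc$ may contain inverse roles, the type of the anonymous successor $b$ genuinely depends on the type of its parent $a$, so one must be careful that rule (vii) is evaluated against a type of $a$ that is rich enough. This is handled by running the induction on the full completion sequence and the completion sequence in parallel and using monotonicity of both the completion rules and of TBox entailment: every concept name holding on $a$ at a given stage of the full sequence also holds on $a$ in $\Amc^c_\Tmc$, which makes rule (vii) applicable exactly when the corresponding read-back derivation succeeds.
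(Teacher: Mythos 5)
The paper gives no proof of this lemma (it is dismissed as straightforward), and your argument is correct and is exactly the expected one: soundness by induction along the completion sequence, completeness by reducing entailment to membership in the named part of $\Amc^{\mathrm{can}}_\Tmc$ via Lemma~\ref{lem:canmodelproperties} and then simulating the full chase by the rules (i)--(iv) and (vii). The only point worth adding is that the ``read-back'' case arises not only for rule~(i) but also for rule~(iv) applied at an anonymous $a$ with named witness $b$ (via an edge $s(b,a)$ created by rule~(v) whose inverse is functional); this is covered by the very same observation you make, namely that the anonymous tree below a named individual is generated from its concept-name type alone (your check that rule~(iv) cannot inject concepts into that tree from outside is the key fact), so rule~(vii) absorbs it.
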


\subsection{ABox Unraveling and Proof of Proposition~\ref{prop:conttree}}
We show that if a CQ is entailed by an ABox $\Amc$ and TBox $\Tmc$, then it is entailed by an 
unraveling of $\Amc$ into a pseudo tree ABox $\Amc^{\ast}$ and the TBox 
$\Tmc$. 
The corresponding result has been proved for $\mathcal{ELIF}_{\bot}$ TBoxes in 
\citeA{DBLP:conf/kr/BaaderBLW10} and
can be extended to $\mathcal{ELIHF}_{\bot}$ TBoxes in a straightforward manner.
To formulate the result, we need a notion of homomorphisms between ABoxes.
\begin{definition}
  Let \Amc, \Bmc be ABoxes. A mapping $h: \Ind(\Amc) \rightarrow \Ind(\Bmc)$ 
  is a \emph{homomorphism} if
  \begin{itemize}
    \item $A(a) \in \Amc$ implies $A(h(a)) \in \Bmc$ for all $a \in 
    \Ind(\Amc)$;
    \item $r(a,b) \in \Amc$ implies $r(h(a), h(b)) \in \Bmc$ for all $a,b \in 
    \Ind(\Amc)$.
  \end{itemize}
\end{definition}
The following preservation property of homomorphisms w.r.t.~certain answers to CQs is well known.
\begin{lemma}\label{lem:hom-ABox}
  Let $Q=(\Tmc,\Sigma,q)$ be an OMQ from (\ELIHFbot, CQ), \Amc, \Bmc 
  ABoxes, and $h$ a homomorphism from \Amc to \Bmc such that every role that is functional
  in $\Bmc$ is functional in $\Amc$ as well. 
\begin{itemize}
\item If $\Bmc$ is consistent with $\Tmc$, then $\Amc$ is consistent with $\Tmc$;
\item if $\Amc \models Q(\vec{a})$, then $\Bmc \models Q(h(\vec{a}))$ for all $\vec{a} \subseteq 
  \Ind(\Amc)$.
\end{itemize}
\end{lemma}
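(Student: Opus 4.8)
The plan is to prove both items by a single model-theoretic pull-back of a model of $\Bmc$ and $\Tmc$ along $h$, rather than by manipulating canonical models. Concretely, let $\Imc$ be any model of $\Bmc$ and $\Tmc$. I would define an interpretation $\Jmc$ with the same domain $\Delta^\Jmc = \Delta^\Imc$, the same interpretation of every concept and role name ($A^\Jmc = A^\Imc$ and $r^\Jmc = r^\Imc$), and with $a^\Jmc := h(a)^\Imc$ for every $a \in \Ind(\Amc)$ (the interpretation of individual names outside $\Ind(\Amc)$ does not matter). Since $\Jmc$ agrees with $\Imc$ on the domain and on all concept and role names, and since every axiom of an \ELIHFbot TBox in normal form (a CI, RI, functionality, or transitivity assertion) mentions only concept and role names, $\Jmc$ satisfies exactly the same TBox axioms as $\Imc$; hence $\Jmc \models \Tmc$. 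Moreover $\Jmc \models \Amc$: if $A(a) \in \Amc$ then $A(h(a)) \in \Bmc$ since $h$ is a homomorphism, so $h(a)^\Imc \in A^\Imc = A^\Jmc$, i.e.\ $a^\Jmc \in A^\Jmc$; and if $r(a,b) \in \Amc$ then $r(h(a),h(b)) \in \Bmc$, so $(h(a)^\Imc, h(b)^\Imc) \in r^\Imc = r^\Jmc$, i.e.\ $(a^\Jmc, b^\Jmc) \in r^\Jmc$. This already yields the first item: if $\Bmc$ is consistent with $\Tmc$, witnessed by $\Imc$, then $\Jmc$ witnesses consistency of $\Amc$ with $\Tmc$.

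For the second item, I would distinguish two cases. If $\Bmc$ is inconsistent with $\Tmc$, then $\Bmc \models Q(\bbf)$ holds vacuously for every tuple $\bbf$ over $\Ind(\Bmc)$, in particular for $\bbf = h(\vec a)$ (which is over $\Ind(\Bmc)$ since $h$ maps into $\Ind(\Bmc)$ and $\vec a \subseteq \Ind(\Amc)$), and we are done. Otherwise $\Bmc$ is consistent with $\Tmc$, hence by the first item so is $\Amc$, so the hypothesis $\Amc \models Q(\vec a)$ is genuine. Let $\Imc$ be an arbitrary model of $\Bmc$ and $\Tmc$ and build $\Jmc$ as above; then $\Jmc \models \Amc$ and $\Jmc \models \Tmc$, so $\Amc \models Q(\vec a)$ gives $\Jmc \models q(\vec a)$. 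Writing $q(\xbf) = \exists \ybf\, \vp(\xbf, \ybf)$, the standard characterization of CQ entailment yields a homomorphism $g$ from $\vp$ to $\Jmc$ with $g(\xbf) = \vec a^{\Jmc} = h(\vec a)^{\Imc}$. Since $q$ contains no individual constants, and $\Jmc$ and $\Imc$ share the same domain and the same interpretation of all concept and role names, the same $g$ is a homomorphism from $\vp$ to $\Imc$ mapping $\xbf$ to $h(\vec a)^{\Imc}$; hence $\Imc \models q(h(\vec a))$. As $\Imc$ was arbitrary, $\Bmc \models Q(h(\vec a))$.

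The argument is essentially routine, and the one point worth stating carefully is the observation that reinterpreting individual names of an interpretation preserves modelhood of a TBox, which is immediate once one notes that \ELIHFbot axioms in normal form talk only about concept and role names. I expect no real obstacle here; in particular, the functionality condition linking $\Amc$ and $\Bmc$ in the statement is not used by this pull-back proof (it would only become relevant in an alternative argument that transports a homomorphism between the canonical models $\Imc_{\Amc,\Tmc}$ and $\Imc_{\Bmc,\Tmc}$, where one must match how the chase treats functional roles on both sides). I would therefore deliberately avoid the canonical-model route and keep the model-theoretic pull-back, which is shorter and insensitive to chase bookkeeping.
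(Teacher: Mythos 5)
Your pull-back argument is clean, but it proves a statement that is \emph{false} in the semantics this paper actually works with, and the tell is exactly the observation you make at the end: your proof never uses the hypothesis that every role functional in $\Bmc$ is functional in $\Amc$. The paper (implicitly but unmistakably) adopts the unique name assumption for ABox individuals: Lemma~\ref{lem:derivationtrees} declares an ABox inconsistent with $\Tmc$ as soon as it \emph{violates a functionality assertion}, i.e.\ contains $r(a,b_1),r(a,b_2)$ with $b_1\neq b_2$ and $\mn{func}(r)\in\Tmc$, and the automaton alphabet $\Sigma_\varepsilon$ likewise only admits ABoxes that ``satisfy all functionality statements in $\Tmc$''. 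Under that convention your interpretation $\Jmc$, which sets $a^\Jmc := h(a)^\Imc$, is not a legitimate model of $\Amc$ whenever $h$ is non-injective, since it identifies distinct individual names. Concretely, take $\Tmc=\{\mn{func}(r)\}$, $\Amc=\{r(a,b_1),r(a,b_2)\}$ with $b_1\neq b_2$, $\Bmc=\{r(c,d)\}$, and $h(a)=c$, $h(b_1)=h(b_2)=d$: your argument would conclude that $\Amc$ is consistent with $\Tmc$, contradicting Lemma~\ref{lem:derivationtrees}. (This $h$ violates the functionality hypothesis of the lemma, so the lemma itself is safe --- but your proof, which dispenses with that hypothesis, is not.) The same defect infects the second item: the step ``$\Amc\models Q(\vec a)$ gives $\Jmc\models q(\vec a)$'' quantifies over models of $\Amc$, and $\Jmc$ is not one of them.

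The repair has to make the functionality hypothesis do work. The intended route is the canonical-model one you deliberately avoided: lift $h$ to a homomorphism from $\Imc_{\Amc,\Tmc}$ to $\Imc_{\Bmc,\Tmc}$ by induction on the chase, where the hypothesis is needed precisely for rule~(iv) of the completion sequence (the rule that reuses an \emph{existing} ABox $r$-successor when $\mn{func}(r)\in\Tmc$): one must know that $h(a)$ has the matching $r$-successor $h(b)$ in $\Bmc$ and that no conflicting second successor arises. For the consistency item, the derivation-tree characterization of Lemma~\ref{lem:derivationtrees} gives the cleanest argument: a derivation tree for $\bot(a)$ in $\Amc$ maps under $h$ to one for $\bot(h(a))$ in $\Bmc$, and the functionality hypothesis handles the remaining disjunct about violated functionality assertions. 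Your pull-back would be fine in a no-UNA setting --- but then the lemma would not need its side condition at all, which should have been the warning sign.
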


\medskip

\begin{proposition}\label{forest-witness}
  Let $Q=(\Tmc,\Sigma,q)$ be an OMQ from
  $(\ELIHFbot,\text{CQ})$ and let
  \Amc be a $\Sigma$-ABox that is consistent with $\Tmc$ such that $\Amc\models Q(\vec{a})$. Then
  there is a pseudo tree $\Sigma$-ABox $\Amc^{\ast}$ that is consistent with 
  $\Tmc$, 
  of width at most $|q|$, of outdegree bounded by $|\Tmc|$ and such that $\vec{a}$ is in
  the core of $\Amc^*$ and the following conditions are satisfied:
  \begin{enumerate}
\item $\Amc^{\ast} \models Q(\vec{a})$;
\item there is a homomorphism from $\Amc^{\ast}$ to
  $\Amc$ that is the identity on $\vec{a}$;
\item if a role $r$ is functional in $\Amc$, then $r$ is functional in $\Amc^{\ast}$.
 \end{enumerate}
If $\Tmc$ is an \ELHFbot TBox, then there exists a pseudo ditree ABox $\Amc^{\ast}$ with
these properties.
\end{proposition}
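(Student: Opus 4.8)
The plan is to obtain $\Amc^{\ast}$ as a bounded \emph{unraveling} of $\Amc$: a pseudo-tree-shaped ABox that maps homomorphically back into $\Amc$ and yet retains just enough structure to still entail $q(\vec{a})$. First, by Lemma~\ref{lem:canmodelproperties}, from $\Amc\models Q(\vec{a})$ we obtain a homomorphism $h$ from $q$ to the canonical model $\Imc_{\Amc,\Tmc}$ with $h(\vec{x})=\vec{a}$. Recall that the domain of $\Imc_{\Amc,\Tmc}$ consists of $\mn{Ind}(\Amc)$ together with anonymous elements organised into trees rooted at ABox individuals; for an anonymous element $d$ let $\mn{root}(d)\in\mn{Ind}(\Amc)$ be the root of the tree containing $d$. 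For each variable $v$ of $q$ put $\pi(v):=h(v)$ if $h(v)\in\mn{Ind}(\Amc)$ and $\pi(v):=\mn{root}(h(v))$ otherwise, and set $D:=\{\pi(v)\mid v\text{ a variable of }q\}$. Then $D\subseteq\mn{Ind}(\Amc)$, $|D|\le|q|$, and $\vec{a}\subseteq D$ since answer variables are sent into $\vec{a}$. The point of $D$ is that every ABox individual that $h$ touches, and every ABox individual at which an anonymous subtree used by $h$ is rooted, already lies in $D$.

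Next I would set the core $\Amc_0:=\Amc|_D$ (the restriction of $\Amc$ itself, not its completion, to $D$) and attach to each $a\in D$ one tree $\Amc_a$ rooted at $a$ obtained by unraveling $\Amc$ around $a$: starting from $a$, whenever an individual $b$ has already been placed and a concept assertion $A(b)$ is needed, pick for it one $\Amc$-edge at $b$ witnessing the relevant existential CI (initially the needed assertions are those $A(b)$ with $h(v)=b$ for a concept atom $A(v)$ of $q$; recursively, they are the assertions required in the completion-rule derivations of already-required assertions, cf.\ Lemma~\ref{lem:complabox}), attach a globally fresh copy of the target and continue, not unraveling edges internal to $D$ since these already appear in $\Amc_0$. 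Because $\Tmc$ is in normal form, at each node at most one successor is needed per existential CI, so the unraveled part has outdegree at most $|\Tmc|$; since also in $\Imc_{\Amc^{\ast},\Tmc}$ each element spawns at most one anonymous successor per existential CI, $\Amc^{\ast}$ has outdegree at most $|\Tmc|$. Copies of distinct trees are pairwise disjoint and disjoint from $D$, so $\Amc^{\ast}$ is a pseudo tree $\Sigma$-ABox of width $|\mn{Ind}(\Amc_0)|=|D|\le|q|$ with $\vec{a}$ in its core, and the map $g$ sending every copy back to the $\Amc$-individual it was made from is a homomorphism $\Amc^{\ast}\to\Amc$ that is the identity on $D$, hence on $\vec{a}$; this yields the second and third properties in the statement, and then Lemma~\ref{lem:hom-ABox}, applied to $g$ using the third property, gives consistency of $\Amc^{\ast}$ with $\Tmc$. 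If $\Tmc$ is an \ELHFbot TBox there are no inverse roles, so the completion rules propagate only along forward edges and the anonymous forest of the canonical model uses only forward edges; hence each $\Amc_a$ is ditree-shaped and $\Amc^{\ast}$ is a pseudo ditree ABox.

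The heart of the argument is $\Amc^{\ast}\models Q(\vec{a})$. The choice of which edges to unravel guarantees that every $a\in D$ has, in the completion of $\Amc^{\ast}$ w.r.t.\ $\Tmc$, at least all the concept memberships that $h$ relies on; and since the anonymous forest below a fixed ABox individual is determined by that individual's concept memberships — the only interaction with ABox material occurring at functional-role successors, which by the construction of $D$ again lie in $D$ — the part of the anonymous forest below each $a\in D$ that $h$ uses reappears in $\Imc_{\Amc^{\ast},\Tmc}$. One then defines $h'\colon q\to\Imc_{\Amc^{\ast},\Tmc}$ to agree with $h$ on variables sent into $D$ and to send a variable mapped to an anonymous element to the corresponding element of the regenerated subtree, and verifies that $h'$ is a homomorphism with $h'(\vec{x})=\vec{a}$: the concept atoms are covered by the re-derived memberships and the regenerated forest, role atoms between two core elements follow from Lemma~\ref{lem:complabox} since their witnessing assertion (possibly via a role inclusion) is kept in $\Amc_0$, and role atoms touching a tree lie within a single anonymous subtree rooted in $D$. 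Thus $\Imc_{\Amc^{\ast},\Tmc}\models q(\vec{a})$, and as every model of $\Amc^{\ast}$ and $\Tmc$ receives a homomorphism from $\Imc_{\Amc^{\ast},\Tmc}$ that is the identity on ABox individuals, $\Amc^{\ast}\models Q(\vec{a})$.

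I expect the main obstacle to be precisely this last step: identifying the finite fragment of $\Amc$ that must be retained so that all derivations of the concept memberships used by $h$ survive, and checking that the anonymous forest regenerates faithfully despite the interplay between functional roles (which couple anonymous elements to ABox individuals) and, in the \ELIHFbot case, inverse roles (which make the completion rules propagate backward along edges). Both complications are absorbed by the invariant that every ABox individual touched by $h$, and every root of an anonymous subtree used by $h$, already belongs to $D$, so that the bounded core $\Amc_0$ together with the trees $\Amc_a$ suffices.
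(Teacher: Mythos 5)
The paper itself does not prove this proposition in detail (it invokes the unraveling result of Baader--Bienvenu--Lutz--Wolter, KR~2010, and asserts the extension to \ELIHFbot is routine), so there is no line-by-line comparison to make; your construction for the pseudo-\emph{tree} case is indeed the intended standard unraveling and is essentially sound. Two details there need tightening. First, your initial seed of ``needed assertions'' consists only of the atoms $A(b)$ with $h(v)=b$ an ABox individual; but to regenerate the anonymous elements that $h$ uses below some $a\in D$ you must also seed, for each such $a$, the derived concept names at $a$ that spawn those anonymous subtrees (your later sentence assumes this has been done, but the construction as written does not do it; the cheapest fix is to seed \emph{all} of $\{A \mid A(a)\in\Amc^{c}_{\Tmc},\, a\in D\}$, which is still finite). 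Second, ``attach a globally fresh copy of the target'' must be replaced by ``one copy per used $\Amc$-edge per node'': otherwise two derivations using the same edge $r(b,c)$ produce two $r$-successors of the copy of $b$, violating condition~3 (and hence the hypothesis of Lemma~\ref{lem:hom-ABox} that you rely on for consistency) whenever $r$ is functional in $\Amc$.

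The genuine gap is the ditree case. Your justification --- ``there are no inverse roles, so the completion rules propagate only along forward edges'' --- is false for \ELHFbot: completion rule~(iv) (equivalently, the third bullet in the definition of derivation trees) derives $B(b)$ at the $r$-\emph{successor} $b$ of $a$ from $A(a)$, $A\sqsubseteq\exists r.B\in\Tmc$ and $\mn{func}(r)\in\Tmc$, i.e.\ information flows \emph{along} the direction of role edges even without inverse roles. Unraveling such a step at a node of the tree requires attaching an $r$-\emph{predecessor}, that is, an edge pointing toward the root, which is exactly what ditree-shapedness forbids. This is not a cosmetic problem: take $\Tmc=\{B_i\sqsubseteq\exists r.B_{i+1}\mid i<n\}\cup\{\mn{func}(r),\ \exists s.B_n\sqsubseteq A^{*}\}$, $\Sigma=\{B_0,r,s\}$, $\Amc=\{B_0(a_0)\}\cup\{r(a_i,a_{i+1})\mid i<n\}\cup\{s(c,a_n)\}$ and $q=A^{*}(x)$ with $\Amc\models Q(c)$. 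Any $\Sigma$-ABox entailing $A^{*}(c)$ must contain a chain $B_0(z_0),r(z_0,z_1),\dots,r(z_{n-1},z_n),s(c,z_n)$, and in a pseudo ditree an individual with an edge into a core individual must itself be a core individual, so $z_n$ (which has in-degree two) and then the entire chain are forced into the core --- incompatible with width bounded by $|q|$ for large $n$. So the ditree part cannot be obtained by the one-sentence argument you give; handling the interaction between $\mn{func}$ and the arborescence orientation is precisely where the real work of the \ELHFbot case lies, and your proof as it stands does not address it.
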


\medskip
\PROPconttree*

\smallskip
\noindent
\begin{proof}
The direction from right to left is trivial. Now assume that $Q_{1}\not\subseteq Q_{2}$.
Then there exists a $\Sigma$-ABox $\Amc$ that is consistent with $\Tmc_{1}$
and $\Tmc_{2}$ and $\vec{a}$ in ${\sf Ind}(\Amc)$ such that
$\Amc \models Q_{1}(\vec{a})$ and $\Amc\not\models Q_{2}(\vec{a})$.
By Proposition~\ref{forest-witness} there exists a pseudo tree $\Sigma$-ABox 
$\Amc^{\ast}$
that is consistent with $\Tmc_{1}$, of width at most $|q_{1}|$ and of outdegree bounded by $|\Tmc_{1}|$ 
such that $\vec{a}$ is in the core of $\Amc^*$ with 
\begin{itemize}
\item $\Amc^{\ast} \models Q_{1}(\vec{a})$;
\item there is a homomorphism from $\Amc^{\ast}$ to $\Amc$ that is the identity on $\vec{a}$;
\item if a role $r$ is functional in $\Amc$, then $r$ is functional in $\Amc^{\ast}$
\end{itemize}
It follows from Lemma~\ref{lem:hom-ABox} that $\Amc^{\ast}$ is consistent with $\Tmc_{2}$
and that $\Amc^{\ast}\not\models Q_{2}(\vec{a})$, as required.
\end{proof}

\subsection{Preliminary: Tree-shaped queries}
We show how Boolean CQs can be rewritten into a set of tree-shaped CQs and then encoded into 
$\mathcal{ELIHF}^{\cap\text{-}\mn{lhs}}_{\bot}$ TBoxes in such a way that their entailment due 
to matches in tree-shaped parts of the canonical model is preserved.

For a CQ $q$ we denote by $\mn{var}(q)$ the set of variables in $q$.
A CQ $q$ is \emph{weakly tree-shaped} if the undirected graph with nodes $\mn{var}(q)$ and
edges $\{\{x,x'\} \mid r(x,x')\in q\}$ is acyclic and connected.
$q$ is called \emph{weakly ditree-shaped} if the directed graph with nodes $\mn{var}(q)$ 
and edges $\{(x,x') \mid (x,x')\in q\}$ is a tree. 

Given a weakly tree-shaped query $q$ we denote
by $C_{q}$ the corresponding $\mathcal{ELI}^{\cap}$ concept (the obvious $\mathcal{ELI}^{\cap}$ concept
for which for any interpretation $\Imc$ and any $d\in \Delta^{\Imc}$ we have $d\in C^{\Imc}$ iff $\Imc\models q(d)$).
If $q$ is a Boolean weakly tree-shaped query, we denote by $C_{q}$
the $\mathcal{ELI}^{\cap}$ concept corresponding to an arbitarily chosen query $q'$ that results from $q$ by 
regarding one of its variables as an answer variable (in what follows it will not matter which variable we choose).  
Note that if $q$ is a weakly ditree-shaped CQ then we can assume that $C_{q}$ is an $\mathcal{EL}^{\cap}$ concept.

Call an interpretation $\Imc$ \emph{weakly tree-shaped} if the undirected graph
with nodes $\Delta^{\Imc}$ and edges $\{ \{d,d'\} \mid (d,d')\in r^{\Imc}\}$ is acyclic
and connected. Call $\Imc$ \emph{weakly ditree-shaped} if the directed graph with nodes $\Delta^{\Imc}$
and edges $\{(d,d') \mid (d,d')\in r^{\Imc}\}$ is a tree. Observe that in the
canonical model $\Imc_{\Amc,\Tmc}$ the interpretation $\Imc_{a}$ attached to the individual
names $a\in {\sf Ind}(\Amc)$ are weakly tree-shaped. Moreover, if $\Tmc$ is an
$\mathcal{ELHF}^{\cap\text{-}\mn{lhs}}_{\bot}$ TBox, then they are weakly ditree-shaped.
It follows that in the canonical model $\Imc_{\Tmc,\Amc}$ of an \ELIHFbot TBox 
$\Tmc$ and pseudo tree
ABox $\Amc$ the only non weakly tree-shaped part is the core of $\Amc$. Moreover, if $\Tmc$
is a \ELHFbot TBox then the only non weakly ditree-shaped part of $\Imc_{\Amc,\Tmc}$ is again
the core of $\Amc$. The following result is straightforward.
\begin{lemma}
\label{lem:queriesasconcepts}  
For any Boolean CQ $q$ there are sets ${\sf tree}(q)$ and ${\sf dtree}(q)$
of $\mathcal{ELI}^{\cap}$-concepts and, respectively, $\mathcal{EL}^{\cap}$-concepts such
that
\begin{enumerate} 
\item $|{\sf tree}(q)| \leq 2^{|q|}$ and for any weakly tree-shaped interpretation $\Imc$, $\Imc\models q$ iff 
there exists $C\in {\sf tree}(q)$ such that $C^{\Imc}\not=\emptyset$;
\item $|{\sf dtree}(q)| \leq 1$ and for any weakly ditree-shaped interpretation $\Imc$, $\Imc\models q$ iff 
there exists $C\in {\sf dtree}(q)$ such that $C^{\Imc}\not=\emptyset$.
\end{enumerate}
\end{lemma}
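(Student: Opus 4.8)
The goal is to rewrite a Boolean CQ $q$ into a family of weakly (di)tree-shaped CQs whose satisfiability in a weakly (di)tree-shaped interpretation $\Imc$ witnesses $\Imc \models q$, and to bound the size of this family. The key observation is that a homomorphism $h$ from $q$ into a weakly tree-shaped interpretation $\Imc$ identifies those variables of $q$ that are mapped to the same element of $\Delta^\Imc$; modding out $q$ by this identification yields a query $q'$ whose image under $h$ is a connected subtree of $\Imc$, hence $q'$ is itself weakly tree-shaped (its Gaifman graph embeds into a tree, so it is acyclic and connected). So the plan is: (i) define $\mn{tree}(q)$ to be the set of all quotients $q/{\sim}$ of $q$ under equivalence relations $\sim$ on $\mn{var}(q)$ such that the quotient is a weakly tree-shaped CQ (discarding those $\sim$ for which it is not), and then take the associated $\mathcal{ELI}^\cap$-concepts $C_{q/\sim}$ as supplied by the construction preceding the lemma; (ii) do the same with $\mn{dtree}(q)$ but additionally requiring the quotient to be weakly \emph{ditree}-shaped.

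\smallskip

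First I would prove the ``if'' direction: if some $C \in \mn{tree}(q)$ has $C^\Imc \neq \emptyset$, then by definition $C = C_{q'}$ for a weakly tree-shaped quotient $q'$ of $q$, and $C^\Imc \neq \emptyset$ means $\Imc \models q'$ via some homomorphism $g$; composing $g$ with the quotient map $\mn{var}(q) \to \mn{var}(q')$ gives a homomorphism from $q$ into $\Imc$, so $\Imc \models q$. For the ``only if'' direction, suppose $\Imc \models q$ via homomorphism $h$. Let $\sim$ be the kernel of $h$ (i.e.\ $x \sim y$ iff $h(x) = h(y)$) and let $q' = q/{\sim}$. Then $h$ factors through an injective homomorphism $\bar h$ from $q'$ into $\Imc$; since $\Imc$ is weakly tree-shaped and $q'$ is connected (being a quotient of... wait, $q$ itself need not be connected — but the statement is about Boolean CQs in general). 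I would handle this by noting that the quotient construction applies componentwise, or more simply by observing that the image of an injective homomorphism of $q'$ into the acyclic-connected-graph interpretation $\Imc$ has an acyclic Gaifman graph, hence $q'$ is weakly tree-shaped, so $q' \in \mn{tree}(q)$ (after possibly picking the right representative), and $C_{q'}^\Imc \ni \bar h(x_0) \neq \emptyset$. The ditree case is entirely analogous, using that a weakly ditree-shaped $\Imc$ forces the quotient's directed Gaifman graph to be a tree; and there is at most one such quotient up to isomorphism (the whole point of $|\mn{dtree}(q)| \le 1$ is that a ditree-shaped interpretation, being a single-edge-between-any-two-nodes directed tree, is so rigid that all variables of a connected $q$ mapped into it are forced into a unique shape — for a disconnected $q$ one may need a small argument or the lemma implicitly assumes connectedness via the surrounding usage).

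\smallskip

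For the size bounds: $\mn{tree}(q)$ has at most one element per equivalence relation on $\mn{var}(q)$, and the number of equivalence relations (partitions) on a set of size $|\mn{var}(q)| \le |q|$ is the Bell number $B_{|q|}$, which is bounded by $2^{|q|\log|q|}$ — but the lemma claims $2^{|q|}$, so I would instead bound more carefully: the relevant quotients are in bijection with a subset of the partition lattice, and since we only keep tree-shaped ones, a cruder but sufficient bound is to count subsets of $\mn{var}(q)^2$, giving $2^{|q|^2}$; to hit $2^{|q|}$ exactly one argues that a tree-shaped quotient on $n' \le |q|$ variables is determined by which variables of $q$ map to which of the (at most $|q|$) nodes, and such functions from an $|q|$-set to an $|q|$-set... this still gives $|q|^{|q|}$. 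I suspect the intended bound treats ``$|q|$'' generously (e.g.\ the number of \emph{atoms}, with variables a small constant factor) or uses that a connected tree-shaped query on the variables of $q$ is essentially a spanning-forest-like structure; I would simply record whichever clean bound of the form $2^{O(|q|\log|q|)}$ the automata construction can tolerate, and note $|\mn{dtree}(q)| \le 1$ follows from the rigidity of ditrees. The main obstacle is precisely this counting/rigidity bookkeeping for the two size bounds — the homomorphism-factoring core is routine — so I would be careful to align the definition of $\mn{tree}(q)$ with the exact size claim, possibly by defining it as the quotients that are \emph{minimal} tree-shaped and arguing these are few.
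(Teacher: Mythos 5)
The paper gives no proof of this lemma (it is asserted to be ``straightforward''), and your quotient-by-the-kernel-of-the-match construction, together with fork-rewriting rigidity for the ditree case, is exactly the intended argument; the two directions you give (factoring $h$ through an injective homomorphism onto a subtree, and composing a match of a quotient with the quotient map) are correct. Both subtleties you flag are real. First, the lemma does implicitly require $q$ to be connected: for disconnected $q$ the quotient $q/{\ker h}$ need not be connected, and in fact the statement is false for, say, $q=\exists x \exists y\,(A(x)\wedge B(y))$ by a locality argument (matches of the two components can lie arbitrarily far apart in a large tree, which no finite set of $\mathcal{ELI}^{\cap}$-concepts can detect); the paper only ever applies the lemma to connected queries (conCQ, or a single connected component whose match avoids the core), so this is a gap in the statement rather than in your proof. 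Second, the bound $|{\sf tree}(q)|\le 2^{|q|}$ is indeed best read as $2^{p(|q|)}$ for a polynomial $p$ --- consistent with the paper's own bound on the number of forest decompositions --- and any such bound suffices for every downstream use (it only enters $k_0$ and the TBox $\Tmc_{{\sf tree}(q)}$ inside an already doubly exponential expression). For $|{\sf dtree}(q)|\le 1$ it would strengthen your write-up to name the mechanism explicitly: in a weakly ditree-shaped $\Imc$ every element has at most one role predecessor, so any match must identify $x$ and $y$ whenever $r(x,z),s(y,z)\in q$; exhaustive fork elimination is confluent, so the maximal fork rewriting is unique, and ${\sf dtree}(q)$ is either $\{C_{q^{\mathsf{f}}}\}$ if that rewriting is weakly ditree-shaped or $\emptyset$ otherwise.
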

We use simple $\mathcal{ELIHF}^{\cap\text{-}\mn{lhs}}_{\bot}$ TBoxes to encode 
entailment of weakly tree-shaped queries. For any set $\mathcal{Q}$ of $\mathcal{ELI}^{\cap}$ concepts
denote by $\Tmc_{\mathcal{Q}}$ the $\mathcal{ELIHF}^{\cap\text{-}\mn{lhs}}_{\bot}$ TBox that is obtained 
by computing the normal form of 
$$
\{ C \sqsubseteq A_{C}\mid C\in \mathcal{Q}\},
$$
where the $A_{C}$ are fresh concept names for each $C\in \mathcal{Q}$. 
A \emph{match} of a CQ $q=\exists \vec{x}\varphi(\vec{x},\vec{y})$ in an interpretation
$\Imc$ is a mapping $\pi$ from the variables $\vec{x}\cup \vec{y}$ of $q$ into $\Delta^{\Imc}$
such that $\Imc\models \varphi(\pi(\vec{x},\vec{y}))$.
\begin{lemma}
\label{lem:queriesasconcepts2}  
Let $Q=(\Tmc,\Sigma,q)$, where $\Tmc$ is an \ELIHFbot TBox and $q$ is Boolean CQ.
Let $\Amc$ be a pseudo tree $\Sigma$-ABox and $\Tmc' = \Tmc \cup \Tmc_{{\sf 
tree}(q)}$. 
If $q$ has a match in $\Imc_{\Amc,\Tmc}$ whose range does not intersect 
with the core of $\Amc$, then $\Amc,\Tmc'\models \exists x A_{C}(x)$ for some $C\in {\sf tree}(q)$.

Moreover, if $\Tmc$ is an \ELHFbot TBox and $\Amc$ a pseudo ditree $\Sigma$-ABox, then this still
holds if $\Tmc'$ is replaced by $\Tmc \cup \Tmc_{{\sf dtree}(q)}$ and ${\sf tree}(q)$ by ${\sf dtree}(q)$. 
\end{lemma}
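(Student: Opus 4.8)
The plan is to prove Lemma~\ref{lem:queriesasconcepts2} by analyzing where a match of $q$ in the canonical model $\Imc_{\Amc,\Tmc}$ can live and then decomposing it into tree-shaped pieces via Lemma~\ref{lem:queriesasconcepts}. Recall from the discussion preceding Lemma~\ref{lem:queriesasconcepts} that the canonical model $\Imc_{\Amc,\Tmc}$ of an \ELIHFbot TBox $\Tmc$ and a pseudo tree ABox $\Amc$ decomposes as the core of $\Amc$ together with weakly tree-shaped interpretations $\Imc_a$ attached at each individual $a$: the part $\Delta^{\Imc_{\Amc,\Tmc}} \setminus \mn{Ind}(\Amc_0)$, restricted to each tree of $\Amc$ with its anonymous subtrees, carries no multi-edges and is acyclic (this is exactly the observation that the only non-weakly-tree-shaped part is the core). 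So first I would fix a match $\pi$ of $q$ in $\Imc_{\Amc,\Tmc}$ whose range avoids the core $\Ind(\Amc_0)$, and observe that the range of $\pi$ therefore lies entirely in the disjoint union of the weakly tree-shaped pieces hanging off the non-core individuals.

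Next I would use connectedness together with this decomposition to argue that $q$ is itself matched into a \emph{single} weakly tree-shaped subinterpretation $\Jmc$ of $\Imc_{\Amc,\Tmc}$. Since $q$ is a Boolean CQ, after picking any maximal connected component of $q$ it suffices to handle connected $q$; the image of a connected query under $\pi$ is a connected subset of $\Delta^{\Imc_{\Amc,\Tmc}}$ avoiding the core, hence contained in one of the weakly tree-shaped blocks. Now $\Jmc$ is a weakly tree-shaped interpretation with $\Jmc \models q$, so by Lemma~\ref{lem:queriesasconcepts}(1) there is some $C \in \mn{tree}(q)$ with $C^{\Jmc} \neq \emptyset$, say $d \in C^{\Jmc}$. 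The key point is then that membership $d \in C^{\Jmc}$ is determined locally in the tree-shaped part, and we can transport it: the element $d$ sits below some ABox individual $a$ (possibly $d = a^{\Imc_{\Amc,\Tmc}}$ itself, with $a$ a non-core individual), and by construction of $\Imc_{\Amc,\Tmc}$ via the chase, the witness tree for $d \in C^{\Jmc}$ is realized by the existential rules of $\Tmc$ together with the information present at~$a$.

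It then remains to conclude $\Amc, \Tmc' \models \exists x\, A_C(x)$ where $\Tmc' = \Tmc \cup \Tmc_{\mn{tree}(q)}$. Here I would use that $\Tmc_{\mn{tree}(q)}$ is (the normal form of) $\{C' \sqsubseteq A_{C'} \mid C' \in \mn{tree}(q)\}$: in every model $\Imc'$ of $\Amc$ and $\Tmc'$, there is a homomorphism from $\Imc_{\Amc,\Tmc}$ into $\Imc'$ fixing $\Ind(\Amc)$ (Lemma~\ref{lem:canmodelproperties}), so the image $d'$ of $d$ satisfies $d' \in C^{\Imc'}$ (concepts are preserved under homomorphisms into models of $\Tmc$), and since $\Imc'$ satisfies $C \sqsubseteq A_C$ we get $d' \in A_C^{\Imc'}$, hence $\Imc' \models \exists x\, A_C(x)$. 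A small technical wrinkle is that $\Tmc_{\mn{tree}(q)}$ is the \emph{normal form} of the encoding CIs rather than the CIs themselves, so strictly one invokes the conservativity statement for $\mathcal{ELIHF}^{\cap\text{-}\mn{lhs}}_\bot$ (the analogue of Theorem~\ref{thm:reduce} noted just before Section~\ref{canmod-def}) to pass between $\{C \sqsubseteq A_C\}$ and $\Tmc_{\mn{tree}(q)}$; also one must check that adding $\Tmc_{\mn{tree}(q)}$ does not disturb consistency of $\Amc$ (it only adds fresh names), and that the fresh names of $\Tmc_{\mn{tree}(q)}$ do not clash with $\Sigma$. The \ELHFbot/pseudo-ditree case is entirely parallel, replacing ``weakly tree-shaped'' by ``weakly ditree-shaped'', $\mn{tree}(q)$ by $\mn{dtree}(q)$, and using Lemma~\ref{lem:queriesasconcepts}(2) together with the fact that the non-core blocks of $\Imc_{\Amc,\Tmc}$ are then weakly ditree-shaped.

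The main obstacle I anticipate is making precise the ``single weakly tree-shaped block'' step and the locality of $d \in C^{\Jmc}$: one has to be careful that the edges of $\Imc_{\Amc,\Tmc}$ coming from role inclusions and functionality do not create shortcuts between different trees of the pseudo tree ABox, and that ``$C^{\Jmc}\neq\emptyset$ for a subinterpretation $\Jmc$'' genuinely implies ``$C$ is satisfied at a point of $\Imc_{\Amc,\Tmc}$ using only chase-generated structure,'' i.e. that $\Jmc$ is an \emph{induced} weakly tree-shaped subinterpretation and $C$, being an $\mathcal{ELI}^\cap$ concept, is evaluated the same way in $\Jmc$ as in $\Imc_{\Amc,\Tmc}$ restricted there. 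Given the structural facts about canonical models of pseudo tree ABoxes already recorded in the excerpt, this is routine but needs to be stated carefully; hence the lemma is phrased as ``straightforward.''
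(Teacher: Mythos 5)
Your proof is correct and follows exactly the route the paper intends (the paper states this lemma without proof): restrict the match to a single weakly tree-shaped component of $\Imc_{\Amc,\Tmc}$ outside the core, apply Lemma~\ref{lem:queriesasconcepts} to obtain $C\in{\sf tree}(q)$ with $C^{\Jmc}\neq\emptyset$, and transfer this to every model of $\Amc$ and $\Tmc'$ via monotonicity of $\mathcal{ELI}^{\cap}$ concepts, the canonical-model homomorphism of Lemma~\ref{lem:canmodelproperties}, and the inclusion $C\sqsubseteq A_C$. The only caveat is that your reduction to connected $q$ by ``picking a maximal connected component'' does not genuinely handle disconnected $q$, whose components could match into different weakly tree-shaped blocks so that no single $C\in{\sf tree}(q)$ is entailed; this imprecision is inherited from the statement of Lemma~\ref{lem:queriesasconcepts} itself, and is harmless because both lemmas are only ever applied to connected queries (cf.\ Lemma~\ref{lem:part1a}).
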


\subsection{Proof of Theorem~\ref{lem:char}}

\noindent
\LEMchar*

\smallskip
\noindent
\begin{proof}
(1) $\Rightarrow$ (2). 
Assume $\varphi$ is an FO rewriting of $Q=(\Tmc,\Sigma,q)$ but (2)
does not hold. Let $\text{qr}(\varphi)$ denote the quantifier rank of $\varphi$.
Consider first the case in which $q$ is Boolean and take $k>2^{\text{qr}(\varphi)}$ and a $\Sigma$-ABox $\Amc$
that is consistent with $\Tmc$ such that 
\begin{itemize}
\item $\Amc, \Tmc \models q$;
\item $\Amc|_{\leq k}, \Tmc \not\models q$;
\item $\Amc|_{>0},\Tmc \not\models q$.
\end{itemize}
Let $\Amc'$ be the disjoint union of $\text{qr}(\varphi)$ many copies of 
$\Amc|_{>0}$ and $\Amc|_{\leq k}$, respectively, and let $\Amc''$ be the disjoint union of 
$\Amc$ and $\Amc'$. We have 
\begin{itemize}
\item $\Amc'',\Tmc\models q$ and $\Amc',\Tmc\not\models q$ (since $q$ is connected).
\end{itemize}
Hence $\Imc_{\Amc''}\models \varphi$ and $\Imc_{\Amc'}\not\models \varphi$.
On the other hand, one can easily prove using Ehrenfeucht-Fra\"iss\'e games 
that
\begin{itemize}
\item $\Imc_{\Amc'}\equiv_{\text{qr}(\varphi),\Sigma,()}\Imc_{\Amc''}$.
\end{itemize}
It follows that $\Imc_{\Amc'}\models \varphi$ and we have derived a contradiction.

\medskip

Now assume that $q$ is not Boolean. Take $k>2^{\text{qr}(\varphi)}$ and a $\Sigma$-ABox $\Amc$
that is consistent with $\Tmc$ and $\vec{a}$ in the core of $\Amc$ such that 
\begin{itemize}
\item $\Amc, \Tmc \models q(\vec{a})$;
\item $\Amc|_{\leq k}, \Tmc \not\models q(\vec{a})$.
\end{itemize}
Let $\Amc_{0}$ be the disjoint union of $\text{qr}(\varphi)$ many copies of 
$\Amc$ and $\Amc|_{\leq k}$, respectively.
Now let $\Amc'$ be the disjoint union $\Amc_{0}$ and $\Amc_{|\leq k}$ and let
$\Amc''$ be the disjoint union of $\Amc_{0}$ and $\Amc$.
We have 
\begin{itemize}
\item $\Amc'',\Tmc\models q(\vec{a})$ and $\Amc',\Tmc\not\models q(\vec{a})$ (since $q$ is connected).
\end{itemize}
Hence $\Imc_{\Amc''}\models \varphi(\vec{a})$ and $\Imc_{\Amc'}\not\models \varphi(\vec{a})$.
On the other hand, one can again easily prove using Ehrenfeucht-Fra\"iss\'e 
games that
\begin{itemize}
\item $\Imc_{\Amc'}\equiv_{\text{qr}(\varphi),\Sigma,\vec{a}}\Imc_{\Amc''}$.
\end{itemize}
It follows that $\Imc_{\Amc'}\models \varphi(\vec{a})$ and we have derived a contradiction.

\medskip

(2) $\Rightarrow$ (1). Let $k_{0}$ be such that (2) holds.
  Consider the set $\Gamma$ of pairs $(\Amc,\vec{c})$ of pseudo tree $\Sigma$-ABoxes \Amc of width at most $|q|$, 
  outdegree
  at most $|\Tmc|$, and depth at most $k_{0}$ that are with to $\Tmc$ and such that 
  $\vec{c}$ is in the core of $\Amc$ and $\Amc,\Tmc\models q(\vec{c})$.

  Now regard each $(\Amc,\vec{c}) \in \Gamma$ as a CQ 
  $q_{\Amc,\vec{c}}(\vec{x})$, where each individual name in \Amc is viewed as a 
  variable, and $\vec{c}$ corresponds to the answer variables $\vec{x}$. 
  We will show that 
  \[ \varphi(\vec{x}) = \bigvee_{\Amc,\vec{c} \in \Gamma} q_{\Amc,\vec{c}}(\vec{x}) \]
  is an FO-rewriting of $Q$.
  
  \medskip\noindent
  Assume $\Amc$ is a $\Sigma$ ABox that is consistent with $\Tmc$ and 
  $\Imc_\Amc \models \varphi(\vec{a})$. Then $\Imc_\Amc \models q_{\Bmc,\vec{c}}(\vec{a})$ for some 
  $(\Bmc,\vec{c}) \in \Gamma$ and so there is a homomorphism $h$ from \Bmc to \Amc 
  mapping $\vec{c}$ to $\vec{a}$. By definition of $\Gamma$, it holds that 
  $\Bmc, \Tmc \models q(\vec{c})$, and therefore $\Amc, \Tmc \models 
  q(\vec{a})$.
   
  \medskip\noindent
  Assume that $\Amc$ is a $\Sigma$ ABox that is consistent with $\Tmc$
  and $\Amc,\Tmc\models q(\vec{a})$. 
  By Proposition~\ref{forest-witness}, there is a pseudo tree $\Sigma$-ABox
  $\Amc^{\ast}$ of width at most $|q|$ and outdegree at most $|\Tmc|$
  that is consistent with $\Tmc$ such that
\begin{itemize}
\item $\Amc^{\ast},\Tmc\models q(\vec{a})$;
\item There is a homomorphism $h$ from $\Amc^{\ast}$ to $\Amc$ that is the identity on $\vec{a}$.
\end{itemize}
Assume first that $q$ is non Boolean. 
By (2) we have $\Amc^{\ast}|_{\leq k_{0}},\Tmc\models q(\vec{a})$. Thus $(\Amc^{\ast}|_{\leq k_{0}},\vec{a})\in \Gamma$.
The homomorphism $h$ (restricted to $\Amc^{\ast}|_{\leq k_{0}}$) shows that $\Imc_{\Amc}\models\varphi(\vec{a})$. 

Now assume that $q$ is Boolean. Take a minimal subset $\Amc'$ of $\Amc^{\ast}$ such that $\Amc',\Tmc\models q$.
$\Amc'$ is a pseudo tree $\Sigma$ ABox with some core $\Amc_{0}'$. By minimality, $\Amc'|_{>0},\Tmc\not\models q$.
Thus, by (2) and minimality we have $\Amc'|_{\leq k_{0}} = \Amc'$. Thus $(\Amc',())\in \Gamma$.
The homomorphism $h$ (restricted to $\Amc'$) shows that $\Imc_{\Amc}\models\varphi$.   

The proof that for $\mathcal{ELFH}_{\bot}$ TBoxes it is sufficient to consider pseudo 
ditree $\Sigma$-ABoxes is similar and uses the fact that in Proposition~\ref{forest-witness}
pseudo tree ABoxes can be replaced by pseudo ditree $\Sigma$-ABoxes.
\end{proof}

\subsection{Proof of Theorem~\ref{lem:witabox}}

\noindent
\LEMwitabox*

\medskip
\noindent
For the pumping argument, we require some preparation. For an ABox $\Amc$
and TBox $\Tmc$ we employ the completion sequence $\Amc_{0},\Amc_{1},\ldots$ of $\Amc$ w.r.t.~$\Tmc$
and the completion $\Amc^{c}_{\Tmc}$ defined in the section on canonical models.
For an ABox $\Amc$ and individual $a$, we set 
$$
\Amc|_{a}   =  \{ A(a) \mid A(a)\in \Amc, A\in \NC \}.
$$
For a set $\Xmc$ of concepts and an individual $u$, we set
$\Xmc(u)= \{C(u) \mid C\in \Xmc\}$.  Let $\Amc$ be a
pseudo tree $\Sigma$-ABox and $u \in \Ind(\Amc_j)$ for some tree of $\Amc$.
Define
$$
{\sf AT}_{\Amc}^{\vdash}(u) := \{A \in \NC \mid A(u) \in 
\Amc^{c}_{\Tmc}\} 
$$
Let $\Amc_{u}^{\downarrow}$ denote the subtree of $\Amc_{j}$ rooted at $u$, and 
let $\Amc_{u}^{\uparrow}$ be the ABox obtained from $\Amc$ by dropping 
$\Amc_{u}^{\downarrow}$ from $\Amc$ except for $u$ itself.
Define the \emph{transfer sequence $\Xmc_{0},\Xmc_{1}, \ldots$ 
of $(\Amc,u)$ \wrt $\Tmc$} 
by induction as follows: 
\begin{itemize}
  \item $\Xmc_{0}= {\sf AT}_{\Amc^{0}}^{\vdash}(u)$, where $\Amc^{0}= \Amc_{u}^{\uparrow}$;
  \item $\Xmc_{1} = {\sf AT}_{\Amc^{1}}^{\vdash}(u)$, where $\Amc^{1} = 
  \Amc_{u}^{\downarrow} \cup 
  \Xmc_{0}(u)$;
  \item $\Xmc_{2i+2}= {\sf AT}_{\Amc^{2i+2}}^{\vdash}(u)$, where 
  $\Amc^{2i+2}= \Amc^{2i} \cup 
  \Xmc_{2i+1}(u)$, for $i\geq 0$;
  \item $\Xmc_{2i+1}= {\sf AT}_{\Amc^{2i+1}}^{\vdash}(u)$, where 
  $\Amc^{2i+1}= \Amc^{2i-1} \cup 
  \Xmc_{2i}(u)$, for $i\geq 1$.
\end{itemize}
The sequence of ABoxes $\Amc^{0},\Amc^{1}\ldots$ defined above is called the 
\emph{ABox transfer sequence for $(\Amc,u)$ \wrt $\Tmc$}. 

\begin{lemma}\label{lem:transfer1}
  Let $n= |{\sf sig}(\Tmc)|+1$. Then $\Xmc_{n}= \Xmc_{m}$ 
  for all $m>n$ and $(\Amc^{n-1})^c_\Tmc \cup (\Amc^{n})^c_\Tmc = \Amc_{\Tmc}^{c}$. 
\end{lemma}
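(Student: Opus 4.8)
\emph{Setup.} The plan is to derive both claims from two structural facts about the transfer sequence $\Xmc_0,\Xmc_1,\dots$: that it is monotonically non-decreasing, and that once it repeats a value it is constant from then on. Throughout I use that $\Bmc\mapsto\Bmc^{c}_{\Tmc}$ is a closure operator (monotone, extensive, idempotent) and that, by definition, ${\sf AT}_{\Bmc}^{\vdash}(u)=\{A\in\NC\mid A(u)\in\Bmc^{c}_{\Tmc}\}$.

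\emph{Monotonicity and eventual constancy.} The recurrences give $\Xmc_j(u)\subseteq\Amc^{j+1}$ for every $j\ge 0$ (from $\Amc^{1}=\Amc_u^{\downarrow}\cup\Xmc_0(u)$ and the two even/odd recurrences), so extensivity of $\cdot^{c}_{\Tmc}$ yields $\Xmc_j\subseteq\Xmc_{j+1}$. Moreover, if $\Xmc_j=\Xmc_{j+1}$, then the ABox defining $\Xmc_{j+2}$ arises from the one defining $\Xmc_j$ by adding the assertions $\Xmc_{j+1}(u)=\Xmc_j(u)$, which already lie in the completion of the latter ABox; by idempotence the two completions coincide, hence $\Xmc_{j+2}=\Xmc_{j+1}$. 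Thus the strict increases form an initial segment. Finally, any $A\in\Xmc_j\setminus\Xmc_0$ must arise from a $\Tmc$-derivation at $u$ — a concept name asserted at $u$ in $\Amc$ occurs in both $\Amc_u^{\uparrow}$ and $\Amc_u^{\downarrow}$, hence already in $\Xmc_0$ — so $\Xmc_j\subseteq\Xmc_0\cup({\sf sig}(\Tmc)\cap\NC)$ and the sequence undergoes at most $|{\sf sig}(\Tmc)|$ strict increases. Hence it is constant from index $n-1=|{\sf sig}(\Tmc)|$ on, which gives the first claim; write $\Xmc_\infty$ for the common value.

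\emph{The equality $(\Amc^{n-1})^{c}_{\Tmc}\cup(\Amc^{n})^{c}_{\Tmc}=\Amc^{c}_{\Tmc}$.} For ``$\subseteq$'' I would show by induction along the definition of the transfer sequence that $\Amc^{j}\subseteq\Amc^{c}_{\Tmc}$ for all $j$ (base cases: $\Amc_u^{\uparrow},\Amc_u^{\downarrow}\subseteq\Amc\subseteq\Amc^{c}_{\Tmc}$; step: $\Xmc_j(u)\subseteq(\Amc^{j})^{c}_{\Tmc}\subseteq\Amc^{c}_{\Tmc}$ by the induction hypothesis and monotonicity), and then apply $\cdot^{c}_{\Tmc}$. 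For ``$\supseteq$'', put $\Bmc:=(\Amc^{n-1})^{c}_{\Tmc}\cup(\Amc^{n})^{c}_{\Tmc}$ and note three facts: $\Amc^{n-1}$ and $\Amc^{n}$ (one of them built over $\Amc_u^{\uparrow}$, the other over $\Amc_u^{\downarrow}$, each augmented at $u$) share exactly the individual $u$; $u$ is a cut vertex of the pseudo tree $\Amc$, so every role assertion of $\Amc$, hence of $\Amc_0$, lies entirely within one of the two halves; and the concept assertions about $u$ in $(\Amc^{n-1})^{c}_{\Tmc}$ and in $(\Amc^{n})^{c}_{\Tmc}$ both equal $\Xmc_\infty(u)$ — this is exactly what $\Xmc_{n-1}=\Xmc_n=\Xmc_\infty$ and the defining equation ${\sf AT}_{\Amc^{n-1}}^{\vdash}(u)=\{A\mid A(u)\in(\Amc^{n-1})^{c}_{\Tmc}\}$ express. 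From these I would conclude that $\Bmc$ is already closed under every completion rule of $\Tmc$: a rule instance touching only some $a\ne u$ sees only the (completed) half containing $a$; a binary rule along an edge incident to $u$ sees at $u$ only names from $\Xmc_\infty(u)$, which occur in both halves; and a local rule at $u$ adds nothing because $\{A\mid A(u)\in(\Xmc_\infty(u))^{c}_{\Tmc}\}=\Xmc_\infty$ (idempotence plus the defining equation). Since $\Amc\subseteq\Amc_0\subseteq\Bmc$ and $\Bmc$ is completed, $\Amc^{c}_{\Tmc}\subseteq\Bmc$, which finishes the proof.

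\emph{Main obstacle.} The delicate step is the ``$\supseteq$'' inclusion: one must verify that $\Bmc$ is closed under \emph{all} completion rules, in particular rule~(vii), whose premise uses $\Tmc$-entailment from the concept label of a single individual. The argument rests on the locality of the completion rules — no rule transports concept information past the articulation point $u$ except through $u$'s own label — together with the fact, furnished by the first claim, that the two halves assign $u$ the \emph{same}, stabilised label $\Xmc_\infty(u)$.
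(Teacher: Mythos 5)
Your proof of the first claim — monotonicity of the $\Xmc_j$, constancy once two consecutive terms agree, and the pigeonhole bound via ${\sf sig}(\Tmc)$ — is exactly the paper's argument. The paper's own proof stops there and does not spell out the identity $(\Amc^{n-1})^c_\Tmc\cup(\Amc^{n})^c_\Tmc=\Amc^{c}_{\Tmc}$; your cut-vertex/locality argument for it is sound and supplies the omitted detail, the essential points being precisely the ones you isolate: no role assertion of $\Amc$ crosses $u$, and both halves carry the same stabilised label $\Xmc_{n-1}(u)=\Xmc_{n}(u)$ at $u$, so the union is already closed under every completion rule.
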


\begin{proof}
  By definition, $\Xmc_{m}\subseteq \Xmc_{m+1}$, for all $m>0$. 
  Moreover, if $\Xmc_{m+1}= \Xmc_{m}$ for some $m>0$ then, by 
  Lemma~\ref{lem:complabox}, 
  \begin{itemize}
  \item all $\Amc^{(m+1)+2i}$, $i\geq 0$, coincide;
  \item all $\Amc^{(m+2)+2i}$, $i\geq 0$, coincide.
  \end{itemize}
  It follows that $\Xmc_{m'}=\Xmc_{m}$ for all $m'>m$.
\end{proof}

We say that $(\Amc,a)$ and $(\Bmc,b)$ \emph{coincide locally w.r.t.~$\Tmc$}
(in symbols $(\Amc,a)\sim_{\Tmc}(\Bmc,b)$): 
\begin{itemize}
\item $\{A \in \mn{sig}(\Tmc) \mid A(a) \in \Amc\} = \{ B \in \mn{sig}(\Tmc) \mid B(b)\in \Bmc\}$;
\item for every role $r$ with ${\sf func}(r)\in \Tmc$: there exists $a'$ with
$r(a,a')\in \Amc_{a}^{\uparrow}$ iff there exists $b'$ with $r(b,b')\in \Bmc_{b}^{\uparrow}$;
\item for every role $r$ with ${\sf func}(r)\in \Tmc$: there exists $a'$ with
$r(a,a')\in \Amc_{a}^{\downarrow}$ iff there exists $b'$ with $r(b,b')\in \Bmc_{b}^{\downarrow}$;
\end{itemize} 

\begin{lemma}\label{lem:transfer}
Let $\Amc$ and $\Bmc$ be pseudo tree $\Sigma$ ABoxes with $a\in {\sf 
Ind}(\mn{trees}(\Amc))$ and $b\in {\sf Ind}(\mn{trees}(\Bmc))$ such that 
\begin{itemize}
  \item $(\Amc,a)$ and $(\Bmc,b)$ coincide locally w.r.t.~$\Tmc$;
  \item the transfer sequence of $(\Amc,a)$ w.r.t.~$\Tmc$ coincides with the
  transfer sequence of $(\Bmc,b)$ w.r.t.~$\Tmc$ and is given by 
  $\mathcal{X}_{0},\ldots$. 
\end{itemize}
Denote by $\Cmc$ the ABox obtained from $\Amc$ by replacing the subtree 
$\Amc_{a}^{\downarrow}$ by $\Bmc_{b}^{\downarrow}$.
Then
\begin{itemize}
\item $\mathcal{X}_{0},\ldots$ is also the transfer sequence of $(\Cmc,b)$ 
w.r.t.~$\Tmc$.
\item Given the ABox transfer sequences $\Amc^{0},\ldots$ and 
$\Bmc^{0},\ldots$ of $(\Amc,a)$ and
$(\Bmc,b)$ \wrt \Tmc\!\!\!, respectively, the ABox transfer sequence 
$\Cmc^{0},\ldots$ of $(\Cmc,b)$ 
\wrt \Tmc is given by setting $\Cmc^{2i}= \Amc^{2i}$ and 
$\Cmc^{2i+1}=\Bmc^{2i+1}$, for $i\geq 0$.
\end{itemize}
\end{lemma}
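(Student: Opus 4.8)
The plan is to prove the two assertions together by a single induction on the index $j$, exploiting the fact that the transfer-sequence construction is ``alternating'' in a rigid way: the even-indexed ABoxes $\Amc^{2i}$ are assembled purely from the upward fragment $\Amc_a^{\uparrow}$ by adding concept assertions at $a$ taken from the odd-indexed sets, while the odd-indexed ABoxes $\Amc^{2i+1}$ are assembled purely from the downward fragment $\Amc_a^{\downarrow}$ by adding concept assertions at $a$ taken from the even-indexed sets; the same holds for $(\Bmc,b)$. Since $\Cmc$ is glued from the upward fragment of $\Amc$ and the downward fragment of $\Bmc$, and since by hypothesis $(\Amc,a)$ and $(\Bmc,b)$ have one and the same transfer sequence $\Xmc_0,\Xmc_1,\dots$, the even half of the transfer sequence of $(\Cmc,b)$ must track that of $(\Amc,a)$ and the odd half that of $(\Bmc,b)$; in particular all three transfer sequences coincide.

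The first step is to make the gluing explicit and pin down the two facts $\Cmc_b^{\uparrow}=\Amc_a^{\uparrow}$ and $\Cmc_b^{\downarrow}=\Bmc_b^{\downarrow}$, after renaming the individuals of $\Bmc_b^{\downarrow}$ other than $b$ to be fresh and identifying $b$ with $a$. Here the hypothesis $(\Amc,a)\sim_{\Tmc}(\Bmc,b)$ does the work: because the (TBox-relevant) concept labels of $a$ in $\Amc$ and of $b$ in $\Bmc$ agree, the concept label that $\Cmc$ attaches to the cut individual is the same as the one $\Amc$ attaches to $a$ and the one $\Bmc$ attaches to $b$, so deleting the $\Bmc$-subtree recovers $\Amc_a^{\uparrow}$ and deleting the $\Amc$-part recovers $\Bmc_b^{\downarrow}$ on the nose; and because the functional-role edges issuing upward (resp.\ downward) from the cut agree on both sides, the gluing introduces no functional clash, so $\Cmc$ is a pseudo tree $\Sigma$-ABox and is consistent with $\Tmc$ whenever $\Amc$ and $\Bmc$ are. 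I would also record the routine fact that ${\sf AT}^{\vdash}$ at a marked individual, and hence the whole transfer sequence, depends only on the isomorphism type of the pointed ABox, since the completion chase of the section on canonical models is deterministic; this justifies treating $a$ and $b$ as the same individual from now on.

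The induction then unwinds the recursion. Writing $\Xmc_j^{\Amc},\Xmc_j^{\Bmc},\Xmc_j^{\Cmc}$ for the members of the three transfer sequences (so the hypothesis reads $\Xmc_j^{\Amc}=\Xmc_j^{\Bmc}=\Xmc_j$), the base case is $\Cmc^{0}=\Cmc_b^{\uparrow}=\Amc_a^{\uparrow}=\Amc^{0}$, hence $\Xmc_0^{\Cmc}={\sf AT}_{\Amc^{0}}^{\vdash}(a)=\Xmc_0$, and then $\Cmc^{1}=\Cmc_b^{\downarrow}\cup\Xmc_0^{\Cmc}(b)=\Bmc_b^{\downarrow}\cup\Xmc_0(b)=\Bmc^{1}$, hence $\Xmc_1^{\Cmc}=\Xmc_1$. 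For the step, assuming $\Cmc^{2i}=\Amc^{2i}$, $\Cmc^{2i+1}=\Bmc^{2i+1}$ and $\Xmc_j^{\Cmc}=\Xmc_j$ for $j\le 2i+1$, we get $\Cmc^{2i+2}=\Cmc^{2i}\cup\Xmc_{2i+1}^{\Cmc}(b)=\Amc^{2i}\cup\Xmc_{2i+1}(a)=\Amc^{2i+2}$, so $\Xmc_{2i+2}^{\Cmc}=\Xmc_{2i+2}$, and $\Cmc^{2i+3}=\Cmc^{2i+1}\cup\Xmc_{2i+2}^{\Cmc}(b)=\Bmc^{2i+1}\cup\Xmc_{2i+2}(b)=\Bmc^{2i+3}$, so $\Xmc_{2i+3}^{\Cmc}=\Xmc_{2i+3}$. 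This yields the second bullet, and the first bullet is read off along the way.

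The step I expect to be the main obstacle is the bookkeeping in the second paragraph: one has to verify, against the precise definitions of $\Amc_u^{\uparrow}$, $\Amc_u^{\downarrow}$ and of the completion chase, that the split of $\Cmc$ at the cut individual really reproduces $\Amc_a^{\uparrow}$ and $\Bmc_b^{\downarrow}$, and that the chase on $\Cmc^{2i}$ (resp.\ $\Cmc^{2i+1}$) runs step-for-step like the chase on $\Amc^{2i}$ (resp.\ $\Bmc^{2i+1}$). The one genuinely delicate ingredient is the functionality rule~(iv) of the chase: it forces one to check that the functional-role edges at the cut individual are partitioned cleanly between the up-fragment and the down-fragment, which is exactly what the functional-role clauses of $\sim_{\Tmc}$ (together with consistency of $\Amc$ and $\Bmc$ with $\Tmc$, which forbids two outgoing edges for the same functional role) provide. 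Everything past that point is a mechanical unfolding of the definitions.
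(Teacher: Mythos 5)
Your proposal is correct and matches the paper's intent: the paper dismisses this lemma with a one-line ``Straightforward using Lemma~\ref{lem:complabox}'', and your alternating induction on the index, together with the observations that $\Cmc_b^{\uparrow}=\Amc_a^{\uparrow}$ and $\Cmc_b^{\downarrow}=\Bmc_b^{\downarrow}$ and that the completion (hence ${\sf AT}^{\vdash}$) is invariant under pointed-ABox isomorphism, is exactly the unfolding the authors leave to the reader. Your extra care about the functional-role clauses of $\sim_{\Tmc}$ at the cut individual is also consistent with how the paper later uses local coincidence in the proof of Lemma~\ref{lem:part1} to argue consistency of the pumped ABox.
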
  

\begin{proof}
Straightforward using Lemma~\ref{lem:complabox}.
\end{proof}

\medskip\noindent
Let $Q=(\Tmc,\Sigma,q)$ be an OMQ from
$(\mathcal{ELIHF}_{\bot},\text{conCQ})$ and $k \geq 0$. 
A pair $\mathcal{A}$, $\vec{a}$ with $\Amc$ a pseudo tree $\Sigma$-ABox and
$\vec{a}$ a tuple in the core of $\Amc$ is a \emph{$k$-entailment witness} for $Q$ if
\begin{enumerate}
\item $\Amc$ is consistent with $\Tmc$;
\item $\Amc,\Tmc \models q(\vec{a})$;
\item and  
\begin{itemize}
\item $q$ is not Boolean and $\Amc|_{\leq k},\Tmc\not\models q(\vec{a})$ or
\item $q$ is Boolean, $\Amc|_{\leq k},\Tmc\not\models q$ and $\Amc|_{>0},\Tmc \not\models q$.
\end{itemize}
\end{enumerate}
If $q$ is Boolean then we say that $\mathcal{A}$ is a $k$-entailment witness for $Q$
if $\Amc$, $()$ is a $k$-entailment witness for $Q$.

The following Lemma implies Part~1 of 
Theorem~\ref{lem:witabox} for queries that are not Boolean.

\begin{lemma}\label{lem:part1}
Let $Q=(\Tmc,\Sigma,q)$ be an OMQ from
$(\mathcal{ELIHF}_{\bot},\text{conCQ})$. 
If $q$ is not Boolean,
then $Q$ is not FO-rewritable iff there exists a $k_{0}$-entailment witness for $Q$
of outdegree bounded by $|\Tmc|$ for $k_{0}= |q|+2^{3 m^2}$ where $m=|\Tmc|$.
\end{lemma}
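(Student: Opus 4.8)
The plan is to read the statement off the semantic characterisation in Theorem~\ref{lem:char} and to bridge the gap between its ``there is a $k$'' and the explicit bound $k_0=|q|+2^{3m^2}$ by a pumping argument on pseudo tree ABoxes. For non-Boolean $q$, Theorem~\ref{lem:char} says that $Q$ is \emph{not} FO-rewritable iff for \emph{every} $k\geq 0$ there is a pseudo tree $\Sigma$-ABox of outdegree at most $|\Tmc|$ and width at most $|q|$ which, together with a tuple from its core, is a $k$-entailment witness for $Q$. One direction is thus immediate: if $Q$ is not FO-rewritable, instantiating this at $k=k_0$ already yields a $k_0$-entailment witness of the required outdegree. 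For the converse, assume a $k_0$-entailment witness $(\Amc,\vec a)$ of outdegree at most $|\Tmc|$. By Theorem~\ref{lem:char} it suffices to produce, for every $k$, a $k$-entailment witness. For $k\leq k_0$ the pair $(\Amc,\vec a)$ works verbatim, since $\Amc|_{\leq k}\subseteq\Amc|_{\leq k_0}$ and hence $\Amc|_{\leq k},\Tmc\not\models q(\vec a)$. The substantive case is $k>k_0$, where the witness must be made deeper without spoiling $\Amc,\Tmc\models q(\vec a)$ or the relevant non-entailment.

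To pump, I would first isolate a long branch. Fix a homomorphism $h$ from $q$ into the canonical model $\Imc_{\Amc,\Tmc}$ that sends the answer variables to $\vec a$. Since $q$ is connected and $\vec a$ lies in the core, the image of $h$ only involves individuals of $\Amc$ at depth below $|q|$, together with anonymous elements below them. These individuals already occur in $\Amc|_{\leq k_0}$, so the failure $\Amc|_{\leq k_0},\Tmc\not\models q(\vec a)$ must be caused by some atom $A(x)$ of $q$ with $A(h(x))\in\Amc^c_{\Tmc}$ but $A(h(x))\notin(\Amc|_{\leq k_0})^c_{\Tmc}$; tracing a derivation of $A(h(x))$ in the completion (Lemma~\ref{lem:complabox}), it must involve an individual at depth exceeding $k_0$, so some branch $\beta$ of the relevant tree of $\Amc$ carries individuals at every depth in the window $[\,|q|,k_0\,]$. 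To each node $u$ of $\beta$ I assign its \emph{profile}: the $\sim_{\Tmc}$-relevant data at $u$ (the concept names of $\mn{sig}(\Tmc)$ holding at $u$ in $\Amc$ and the $\mn{func}$-information) together with the transfer sequence of $(\Amc,u)$ w.r.t.\ $\Tmc$. Since transfer sequences are monotone and become stationary after $|\mn{sig}(\Tmc)|+1$ steps (Lemma~\ref{lem:transfer1}), a routine count bounds the number of profiles by $2^{3m^2}$; as $\beta$ has $2^{3m^2}+1$ nodes in the window, two of them, say $u$ strictly above $v$, share a profile, whence $(\Amc,u)\sim_{\Tmc}(\Amc,v)$ and their transfer sequences coincide.

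Now pump by letting $\Amc'$ be $\Amc$ with the subtree $\Amc_v^\downarrow$ replaced by a fresh isomorphic copy of $\Amc_u^\downarrow$. Lemma~\ref{lem:transfer} guarantees that the transfer sequence at the new seam is the old one and that the completion of $\Amc'$ agrees with $\Amc^c_{\Tmc}$ on everything strictly above $v$. Consequently $\Amc'$ is still consistent with $\Tmc$, is still a pseudo tree of outdegree at most $|\Tmc|$ and width at most $|q|$, still satisfies $\Amc',\Tmc\models q(\vec a)$ via the same $h$ (whose image lies above $v$), and still has $\Amc'|_{\leq k_0},\Tmc\not\models q(\vec a)$, as the analogous bookkeeping shows that chopping $\Amc'$ at level $k_0$ entails nothing more about the shallow individuals than chopping $\Amc$ does. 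What has changed is that the structure on which the match ultimately relies has slid down by $\mn{dist}(u,v)\geq 1$. Iterating the pump sufficiently often slides that structure below level $k$, producing a pseudo tree ABox $\Amc^{(\ell)}$ with $\Amc^{(\ell)},\Tmc\models q(\vec a)$ but $\Amc^{(\ell)}|_{\leq k},\Tmc\not\models q(\vec a)$, i.e.\ a $k$-entailment witness of the required outdegree.

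The step I expect to be hardest is making the pump correct in the presence of inverse roles, where concept names flow both down and up a tree. One must verify via Lemmas~\ref{lem:transfer1} and~\ref{lem:transfer} not only that grafting a copy of an ancestor's subtree leaves unchanged which concept names are entailed at the shallow individuals touched by $h$, but --- more delicately --- that this persists after chopping at level $k$, so that the non-entailment is genuinely preserved across the iteration rather than being repaired by the inserted copies. The accompanying combinatorial estimate is exactly what pins down the constant: the $|q|$ summand in $k_0=|q|+2^{3m^2}$ absorbs the depth reached by $h$, while $2^{3m^2}$ caps the number of (local type, transfer sequence, $\mn{func}$-data) triples, guaranteeing that any branch surviving past level $k_0$ repeats a profile within the window $[\,|q|,k_0\,]$.
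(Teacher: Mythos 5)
Your overall strategy (pigeonhole on profiles along a deep branch, then graft a copy of an ancestor's subtree to push the depth past $k$) is the same as the paper's, but there is a genuine gap at exactly the step you flag as hardest, and your setup cannot close it. Your pigeonhole profile consists of the local data at a node together with the transfer sequence of $(\Amc,u)$ w.r.t.\ $\Tmc$ computed in the \emph{full} ABox $\Amc$. That profile controls, via Lemma~\ref{lem:transfer}, which assertions are entailed in the pumped ABox $\Amc'$ itself, so it does justify consistency and $\Amc',\Tmc\models q(\vec a)$. It says nothing, however, about what is entailed in a \emph{truncation} of $\Amc'$: after grafting, the copy of $\Amc_u^\downarrow$ sits at a greater depth, so $\Amc'|_{\leq k}$ cuts it at a different relative level than $\Amc|_{\leq k_0}$ cuts $\Amc_u^\downarrow$, and there is no ABox known to fail the entailment of which $\Amc'|_{\leq k}$ is a sub-ABox. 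The sentence ``the analogous bookkeeping shows that chopping $\Amc'$ at level $k_0$ entails nothing more about the shallow individuals than chopping $\Amc$ does'' is precisely the claim that needs proof, and the transfer sequences you carry do not establish it.

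The paper resolves this with two ingredients you are missing. First, it assumes w.l.o.g.\ that the witness $\Amc$ is \emph{minimal}: dropping the subtree rooted at any tree individual destroys $\Amc,\Tmc\models q(\vec a)$. Hence for the deepest leaf $w$ one knows $\Amc^{-w},\Tmc\not\models q(\vec a)$. Second, the pigeonhole profile records \emph{two} transfer sequences, namely those of $(\Amc,u)$ and of $(\Amc^{-w},u)$ (this is also where the factor $3$ in the exponent $2^{3m^2}$ comes from: two sequences of size $\le 2^{m^2}$ each, plus the local type; with your single sequence the constant would not match your own accounting). After grafting, the second sequence yields $\Bmc^{-w},\Tmc\not\models q(\vec a)$, and since the (renamed) leaf $w$ now lies at depth $\ge k+2$, we have $\Bmc|_{\leq k+1}\subseteq \Bmc^{-w}$, so the required non-entailment of the truncation follows by monotonicity rather than by any direct analysis of the cut. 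Without the minimality assumption and the second transfer sequence in the profile, your iteration does not go through.
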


\begin{proof}
The direction $(\Rightarrow)$ follows from Theorem~\ref{lem:char}.
Conversely, assume that there is a $k_{0}$-entailment witness $\Amc$, $\vec{a}$
for $Q=(\Tmc,\Sigma, q)$. We show that for every $k>k_{0}$ there exists a 
$k$-entailment witness for $Q$.
Then non FO-rewritability of $Q$ follows from Theorem~\ref{lem:char}.

Assume $\Amc$, $\vec{a}$ is a $k$-entailment witness for $Q$ for some $k\geq k_{0}$. 
It is sufficient to construct a pseudo tree $\Sigma$-ABox $\Bmc$ which, together with $\vec{a}$
is a $k'$-entailment witness for $Q$ for some $k'>k$. We may assume \Wlog that $\Amc$ is minimal in
the sense that, for every individual $a$ from the trees of $\Amc$ we 
have $\Amc^{-a},\Tmc\not\models q(\vec{a})$, where $\Amc^{-a}$ is 
obtained from $\Amc$ by dropping the subtree rooted at $a$ (including $a$).

Let $w$ be a leaf node in \Amc of maximal distance from the core of $\Amc$ 
and $\rho$ be the root of the tree $\Amc_{i}$ of $\Amc$ containing $w$.
Then the distance of $w$ from $\rho$ is at least $k+1$. Since by Lemma~\ref{lem:transfer1} the number of transfer
sequences \wrt \Tmc does not exceed $2^{|\Tmc|^2}\!\!$, on
the path from $\rho$ to $w$ there must be at least two individuals
$u_1$ and $u_2$ with distance at least $|q|$ from $\rho$ such that
\begin{itemize}
\item[(a)] $(\Amc, u_{1})$ and $(\Amc,u_{2})$ coincide locally w.r.t.~$\Tmc$;
\item[(b)] the transfer sequences of $(\Amc,u_{1})$ and $(\Amc,u_{2})$ 
w.r.t.~$\Tmc$ coincide;
\item[(c)] the transfer sequences of $(\Amc^{-w}|,u_{1})$ and $(\Amc^{-w}|,u_{2})$ 
w.r.t.~$\Tmc$ coincide.
\end{itemize}
We may assume that $u_{1}$ is between $\rho$ and $u_{2}$.
Let $\Bmc$ be the ABox obtained from $\Amc$ by replacing 
$\Amc_{u_{2}}^{\downarrow}$ by $\Amc_{u_{1}}^{\downarrow}$ in $\Amc$. By 
renaming nodes in $\Amc_{u_{1}}^{\downarrow}$, we can assume that the root of 
the subtree $\Amc_{u_{1}}^{\downarrow}$ of $\Bmc$ is denoted by $u_{2}$.

We show that $\Bmc$, $\vec{a}$ is a $k+1$-entailment witness for $\Tmc$, $\Sigma$, and $q$.
To this end we show:
\begin{enumerate}
\item $\Bmc$ is consistent relative to $\Tmc$;
\item $\Bmc,\Tmc\models q(\vec{a})$;
\item $\Bmc|_{\leq k+1},\Tmc\not\models q(\vec{a})$.
\end{enumerate}
First observe that by (a) $(\Amc,u_{1})$ and $(\Amc,u_{2})$ coincide locally w.r.t.~$\Tmc$. Thus,
since $\Amc$ is consistent relative to $\Tmc$, $\Imc_{\Bmc}$ satisfies the functionality constraints of $\Tmc$.
Also, it follows from (b) and Lemma~\ref{lem:complabox} and Lemma~\ref{lem:transfer}
that we obtain a canonical model $\Imc_{\Bmc,\Tmc}$ of $\Bmc$ and $\Tmc$ by 
replacing the subtree-interpretation rooted at $u_{2}$ in $\Imc_{\Amc,\Tmc}$ with the
subtree-interpretation rooted at $u_{1}$ in $\Imc_{\Amc,\Tmc}$. Thus, $\Bmc$ is consistent relative to $\Tmc$.  

It follows from $\Amc,\Tmc\models q(\vec{a})$, the condition that $q$ is connected,
and the fact that $\Amc$ coincides with $\Bmc$ for individuals with distance 
$\leq |q|$ from the
core of $\Amc$ that $\Bmc,\Tmc\models q(\vec{a})$.

It follows from (c), Lemma~\ref{lem:complabox}, and Lemma~\ref{lem:transfer}
that we obtain a canonical model $\Imc_{\Bmc^{-w},\Tmc}$ of $\Bmc^{-w}$ and $\Tmc$ by 
replacing the interpretation $\Imc_{u_{2}}$ rooted at $u_{2}$ in $\Imc_{\Amc^{-w},\Tmc}$ with the
interpretation $\Imc_{u_{1}}$ rooted at $u_{1}$ in $\Imc_{\Amc^{-w},\Tmc}$.

Now recall that $\Amc^{-w},\Tmc\not\models q(\vec{a})$. It follows from the condition that $q$ is connected
and the fact that $\Amc^{-w}$ coincides with $\Bmc^{-w}$ 
for individuals with distance $\leq |q|$ from the core of $\Amc$ that $\Bmc^{-w},\Tmc\not\models q(\vec{a})$.

Clearly $\Bmc^{-w}\supseteq \Bmc|_{\leq k+1}$. Thus, $\Bmc|_{\leq k+1},\Tmc\not\models q(\vec{a})$, as required.
\end{proof}

We now consider the case in which $q$ is Boolean. Let $\Amc$ be a 
pseudo tree $\Sigma$-ABox. In contrast to the
non Boolean case, $q$ can have matches in the canonical model $\Imc_{\Amc,\Tmc}$ that
do not hit the core of $\Amc$. Thus, to ensure that after
pumping $\Amc$, no additional matches of $q$ are introduced we have
to consider transfer sequences that are invariant under possible
matches of $q$. 

Given a CQ $q$, we thus consider the additional TBox $\Tmc_{{\sf tree}(q)}$ defined 
above and consider transfer sequence relative to $\Tmc\cup \Tmc_{{\sf tree}(q)}$
rather than $\Tmc$ only. Observe that this has the desired effect as the canonical
model $\Imc_{\Amc,\Tmc}$ is weakly tree-shaped except for the individuals in its core.

The following Lemma implies Part~2 of Theorem~\ref{lem:witabox}.

\begin{lemma}\label{lem:part1a}
Let $Q=(\Tmc,\Sigma,q)$ be an OMQ from
$(\mathcal{ELIHF}_{\bot},\text{conCQ})$ 
If $q$ is Boolean, then $Q$ is not FO-rewritable iff there exists a $k_{0}$-entailment witness for $Q$
of outdegree bounded by $|\Tmc|$ for $k_{0}= |q|+2^{4 m^2}$ where $m=|\Tmc|+2^{|q|}$.
\end{lemma}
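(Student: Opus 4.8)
The plan is to follow the pumping argument in the proof of Lemma~\ref{lem:part1} and to add what is needed to cope with the two phenomena specific to Boolean~$q$: a homomorphism witnessing $\Amc\models Q$ need not touch the core of the pseudo tree ABox, and a $k$-entailment witness must additionally satisfy $\Amc|_{>0},\Tmc\not\models q$. To handle the first point I will run the whole argument with respect to the enlarged TBox $\Tmc'=\Tmc\cup\Tmc_{{\sf tree}(q)}$: by Lemma~\ref{lem:queriesasconcepts} and Lemma~\ref{lem:queriesasconcepts2}, a homomorphism from $q$ into the weakly tree-shaped part of a canonical model is then visible as the satisfaction of one of the fresh concept names $A_C$, $C\in{\sf tree}(q)$, so that such matches become accessible to the transfer-sequence machinery; since $|{\sf tree}(q)|\le 2^{|q|}$, this only raises the TBox to size $m=|\Tmc|+2^{|q|}$, and now $\mn{sig}(\Tmc')$ contains every symbol occurring in~$q$.

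The direction ``$\Rightarrow$'' is immediate: if $Q$ is not FO-rewritable, then by Theorem~\ref{lem:char} (the version with condition~$2'$) there is, taking $k=k_0$ there, a pseudo tree $\Sigma$-ABox of width $\le|q|$ and outdegree $\le|\Tmc|$, consistent with $\Tmc$, with $\Amc\models Q$, $\Amc|_{>0}\not\models Q$ and $\Amc|_{\le k_0}\not\models Q$ --- that is, a $k_0$-entailment witness. For ``$\Leftarrow$'' I will show that every $k$-entailment witness with $k\ge k_0$ (of outdegree $\le|\Tmc|$) can be turned into a $(k+1)$-entailment witness. Iterating this, and using that $\Amc|_{\le j}\subseteq\Amc|_{\le k}$ together with monotonicity of certain answers under extension of the ABox, one obtains $j$-entailment witnesses for every $j$, so condition~$2'$ fails for every $k$ and $Q$ is not FO-rewritable by Theorem~\ref{lem:char}.

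For the pumping step I may assume, as in Lemma~\ref{lem:part1}, that $\Amc$ is subtree-minimal, so that $\Amc^{-w},\Tmc\not\models q$ for every leaf $w$. Since $\Amc|_{\le k}\not\models q$ while $\Amc\models q$, $\Amc$ has depth $\ge k+1$; fix a deepest leaf $w$, in a tree with root~$\rho$. To each individual $u$ on the $\rho$-to-$w$ path at distance $\ge|q|$ from $\rho$ and strictly above $w$ I assign a \emph{pumping type}: the local type of $(\Amc,u)$ with respect to $\Tmc'$, together with the stabilized $\Tmc'$-transfer sequences of $(\Amc,u)$, of $(\Amc^{-w},u)$, and of $(\Amc|_{>0},u)$. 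By Lemma~\ref{lem:transfer1} there are at most $2^{4m^2}$ pumping types, whereas the path supplies more than that many such individuals (its length is $\ge k+1$ and $k\ge k_0=|q|+2^{4m^2}$); pigeonhole then gives $u_1,u_2$ of equal pumping type with $u_1$ between $\rho$ and $u_2$. Let $\Bmc$ be obtained from $\Amc$ by replacing the subtree rooted at $u_2$ by a fresh copy of the subtree rooted at $u_1$ (root renamed $u_2$); this is again a pseudo tree $\Sigma$-ABox of the same width and outdegree, and the copy of $w$ now sits at depth $\ge k+2$. Consistency of $\Bmc$ with $\Tmc$ and $\Bmc,\Tmc\models q$ follow as in Lemma~\ref{lem:part1} from the coincidence of the transfer sequences of $(\Amc,u_i)$ via Lemma~\ref{lem:transfer}, using that the splice sits at depth $\ge|q|$, so a match of $q$ that reaches the core cannot touch it, while a match avoiding the core is tree-shaped and is carried through the splice via the preserved $A_C$-instances. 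For the negative conditions, $\Bmc^{-w}$ is the analogous pump of $\Amc^{-w}$ and $\Bmc|_{>0}$ the analogous pump of $\Amc|_{>0}$; since neither $\Amc^{-w}$ nor $\Amc|_{>0}$ entails $q$, no non-core individual carries an $A_C$ in the $\Tmc'$-completion, and by Lemma~\ref{lem:transfer} the splice introduces none, whence $\Bmc^{-w}\not\models q$ and $\Bmc|_{>0}\not\models q$; as $\Bmc|_{\le k+1}\subseteq\Bmc^{-w}$, monotonicity yields $\Bmc|_{\le k+1}\not\models q$. Thus $\Bmc$ is a $(k+1)$-entailment witness, as required.

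The main obstacle is this last point --- guaranteeing that pumping neither destroys the witnessing homomorphism nor creates a new one. The Boolean case is genuinely harder than that of Lemma~\ref{lem:part1} because matches are no longer anchored at the core; this is precisely what forces the passage to $\Tmc'=\Tmc\cup\Tmc_{{\sf tree}(q)}$ (raising $|\Tmc|$ to $m=|\Tmc|+2^{|q|}$) and the additional transfer sequence for $(\Amc|_{>0},u)$ (raising the type count from $2^{3m^2}$ to $2^{4m^2}$), hence $k_0=|q|+2^{4m^2}$. Verifying that the transfer-sequence data is the correct invariant --- i.e.\ that Lemma~\ref{lem:transfer} really transports both presence and absence of $A_C$-instances across the splice --- is the delicate technical heart, and it is carried out exactly as the corresponding step in the proof of Lemma~\ref{lem:part1}.
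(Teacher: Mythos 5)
Your proposal is correct and follows essentially the same route as the paper's proof: pass to $\Tmc\cup\Tmc_{{\sf tree}(q)}$ so that core-avoiding matches become visible as $A_C$-instances, pump a deepest branch using pigeonhole on local types together with the three transfer sequences for $(\Amc,u)$, $(\Amc^{-w},u)$ and $(\Amc|_{>0},u)$, and transport both positive and negative entailment across the splice via Lemmas~\ref{lem:transfer1} and~\ref{lem:transfer}, splitting on whether a match touches the core. The accounting of the bound ($m=|\Tmc|+2^{|q|}$, exponent $4m^2$) and the final reduction of $\Bmc|_{\le k+1}\not\models q$ to $\Bmc^{-w}\not\models q$ also match the paper.
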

\begin{proof}
We modify the proof of Lemma~\ref{lem:part1}. The direction
$(\Rightarrow)$ follows again from Theorem~\ref{lem:char}.

Conversely, assume that there is a $k_{0}$-entailment witness for $Q$.
We show that for every $k>k_{0}$ there exists a 
$k$-entailment witness for $Q$.

Assume $\Amc$ is a $k$-entailment witness for $Q$ for some $k\geq k_{0}$. 
It is sufficient to construct a pseudo tree $\Sigma$-ABox $\Bmc$ that is consistent
relative to $\Tmc$ and is a $k'$-entailment witness for $Q$
for some $k'>k$. We may assume \Wlog that $\Amc$ is minimal in
the sense that, for every individual $a$ in any tree of $\Amc$ 
we have $\Amc^{-a},\Tmc\not\models q$.

Let $w$ be a leaf node in \Amc of maximal distance from the core of $\Amc$ 
and $\rho$ be the root of its tree. Then the distance of $w$ from $\rho$  is 
at least $k+1$. Since by Lemma~\ref{lem:transfer1} the number of transfer
sequences w.r.t.~$\Tmc\cup \Tmc_{{\sf tree}(q)}$ does not exceed $2^{(|\Tmc|+|\Tmc_{{\sf tree}(q)}|)^{2}}\!\!$, on
the path from $\rho$ to $w$ there must be at least two individuals
$u_1$ and $u_2$ with distance at least $|q|$ from $\rho$ such that
\begin{itemize}
\item[(a)] $(\Amc, u_{1})$ and $(\Amc,u_{2})$ coincide locally;
\item[(b)] the transfer sequences of $(\Amc,u_{1})$ and $(\Amc,u_{2})$ 
w.r.t.~$\Tmc\cup \Tmc_{{\sf tree}(q)}$ coincide;
\item[(c)] the transfer sequences of $(\Amc^{-w},u_{1})$ and $(\Amc^{-w},u_{2})$ 
w.r.t.~$\Tmc\cup \Tmc_{{\sf tree}(q)}$ coincide.
\item[(d)] the transfer sequences of $(\Amc|_{>0},u_{1})$ and $(\Amc|_{>0},u_{2})$ 
w.r.t.~$\Tmc\cup \Tmc_{{\sf tree}(q)}$ coincide.
\end{itemize}
We may assume that $u_{1}$ is between $\rho$ and $u_{2}$.
Let $\Bmc$ be the ABox obtained from $\Amc$ by replacing 
$\Amc_{u_{2}}^{\downarrow}$ by $\Amc_{u_{1}}^{\downarrow}$ in $\Amc$. By 
renaming nodes in $\Amc_{u_{1}}^{\downarrow}$, we can assume that the root of 
the subtree $\Amc_{u_{1}}^{\downarrow}$ of $\Bmc$ is denoted by $u_{2}$.

We show that
\begin{itemize}
\item $\Bmc$ is consistent with $\Tmc$;
\item $\Bmc,\Tmc\models q$;
\item $\Bmc|_{> 0},\Tmc\not\models q$;
\item $\Bmc|_{\leq k+1},\Tmc\not\models q$.
\end{itemize}

The argument for (a) is the same as in the proof of Lemma~\ref{lem:part1} and omitted.

In what follows we apply the observation that for any $\Sigma$-ABox $\Amc$ one obtains a canonical 
model of $\Amc$ and $\Tmc$ from a 
canonical model of $\Amc$ and $\Tmc\cup \Tmc_{{\sf tree}(q)}$ by taking the reduct to 
the symbols in $\Sigma\cup\Tmc$. In fact, every canonical model of $\Amc$ and $\Tmc$ can be obtained in this way.

It follows from Lemma~\ref{lem:complabox} and Lemma~\ref{lem:transfer}
and conditions (b), (c), and (d), respectively, that we obtain a canonical model 
\begin{itemize}
\item[(b')] $\Imc_{\Bmc,\Tmc\cup \Tmc^{q}}$ of $\Bmc$ and $\Tmc\cup \Tmc_{{\sf tree}(q)}$ by 
replacing the subtree rooted at $u_{2}$ in $\Imc_{\Amc,\Tmc\cup \Tmc_{{\sf tree}(q)}}$ with the
subtree rooted at $u_{1}$ in $\Imc_{\Amc,\Tmc\cup \Tmc_{{\sf tree}(q)}}$;  
\item[(c')] $\Imc_{\Bmc^{-w},\Tmc\cup \Tmc^{q}}$ of $\Bmc^{-w}$ and $\Tmc\cup \Tmc_{{\sf tree}(q)}$ by 
replacing the subtree rooted at $u_{2}$ in $\Imc_{\Amc^{-w},\Tmc\cup \Tmc_{{\sf tree}(q)}}$ with the
subtree rooted at $u_{1}$ in $\Imc_{\Amc^{-w},\Tmc\cup \Tmc_{{\sf tree}(q)}}$;
\item[(d')] $\Imc_{\Bmc|_{>0},\Tmc\cup \Tmc_{{\sf tree}(q)}}$ of $\Bmc|_{>0}$ and $\Tmc\cup \Tmc_{{\sf tree}(q)}$ by 
replacing the subtree rooted at $u_{2}$ in $\Imc_{\Amc|_{>0},\Tmc\cup \Tmc_{{\sf tree}(q)}}$ with the
subtree rooted at $u_{1}$ in $\Imc_{\Amc|_{>0},\Tmc\cup \Tmc_{{\sf tree}(q)}}$.
\end{itemize}
To show that $\Bmc,\Tmc\models q$, we distinguish two cases: if $q$ has a match in $\Imc_{\Amc,\Tmc}$ that intersects
with the core of $\Amc$, then $q$ has such a match as well in $\Imc_{\Bmc,\Tmc}$ since 
$\Amc$ coincides with $\Bmc$ for individuals with distance $\leq |q|$ from the core of $\Amc$.
If $q$ does not have such match, then $\Amc,\Tmc\cup \Tmc_{{\sf tree}(q)}\models \exists x\;A_{C}(x)$ for 
some $C\in {\sf trees}(q)$.
But then $A_{C}^{\Imc_{\Amc,\Tmc\cup \Tmc_{{\sf tree}(q)}}}\not=\emptyset$ and so 
$A_{C}^{\Imc_{\Bmc,\Tmc\cup \Tmc_{{\sf tree}(q)}}}\not=\emptyset$.
Thus $\Bmc,\Tmc\models q$.

$\Bmc|_{>0},\Tmc\not\models q$ follows from (d') and the fact that $\Amc|_{>0},\Tmc\not\models q$.

To show $\Bmc|_{\leq k+1},\Tmc\not\models q$ it is sufficient to show that 
$\Bmc^{-w},\Tmc\not\models q$. But this follows from (c') and the fact that
$\Amc^{-w},\Tmc\not\models q$.
\end{proof}

It remains to prove the claim of Theorem~\ref{lem:witabox} for
$\mathcal{ELHF}_{\bot}$ TBoxes and Boolean queries. In this case, since one can work
with pseudo ditree $\Sigma$-ABoxes rather than arbitrary pseudo tree $\Sigma$-ABoxes 
one can employ the linear size TBox $\Tmc_{{\sf dtree}(q)}$ rather than
the possibly exponential size TBox $\Tmc_{{\sf tree}(q)}$. The remaining part
of the proof is exactly the same as before and so one obtains:

\begin{lemma}\label{lem:part3}
Let $Q=(\Tmc,\Sigma,q)$ be an OMQ from
$(\mathcal{ELHF}_{\bot},\text{conCQ})$. 
If $q$ is Boolean, then $Q$ is not FO-rewritable iff there exists a $k_{0}$-entailment witness for $Q$
of outdegree bounded by $|\Tmc|$ for $k_{0}= |q|+2^{4 m^{2}}$ where $m=|\Tmc|+|q|$.
\end{lemma}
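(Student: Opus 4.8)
The plan is to rerun the proof of Lemma~\ref{lem:part1a}, exploiting two features of the present, more restricted setting (both recorded in the Preliminary on tree-shaped queries): since $\Tmc$ is an $\mathcal{ELHF}_\bot$ TBox we may work throughout with pseudo \emph{ditree} $\Sigma$-ABoxes — the pseudo-ditree variants of Theorem~\ref{lem:char} and of Proposition~\ref{forest-witness} apply, as $\mathcal{ELHF}_\bot=\ELplus$ — and for such ABoxes the only part of the canonical model $\Imc_{\Amc,\Tmc}$ that is not weakly \emph{ditree}-shaped is the core of $\Amc$. Consequently we may replace $\Tmc_{{\sf tree}(q)}$ everywhere by the TBox $\Tmc_{{\sf dtree}(q)}$, which by Lemma~\ref{lem:queriesasconcepts}(2) is built from a single $\mathcal{EL}^{\cap}$ concept and hence has size linear in $|q|$. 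This substitution is the only change relative to Lemma~\ref{lem:part1a}, and it is exactly what turns the quantity $|\Tmc|+2^{|q|}$ there into $m=|\Tmc|+|q|$ here.

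For the direction $(\Rightarrow)$, I would argue directly from the pseudo-ditree variant of Theorem~\ref{lem:char}: if $Q$ is not FO-rewritable then its Condition~$2'$ fails for $k=k_0$, producing a pseudo ditree $\Sigma$-ABox $\Amc$ of outdegree at most $|\Tmc|$, consistent with $\Tmc$, such that $\Amc,\Tmc\models q$, $\Amc|_{>0},\Tmc\not\models q$ and $\Amc|_{\leq k_0},\Tmc\not\models q$ — that is, a $k_0$-entailment witness for $Q$. For $(\Leftarrow)$ I would show that any $k$-entailment witness with $k\geq k_0$ can be pumped into a $(k+1)$-entailment witness, so that witnesses exist for all $k$ and hence, by Theorem~\ref{lem:char}, $Q$ is not FO-rewritable. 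For the pumping step, assume $\Amc$ is a (pseudo ditree) $k$-entailment witness with $k\geq k_0$, chosen minimal in the sense that removing the subtree rooted at any individual of a tree of $\Amc$ destroys $\Amc,\Tmc\models q$. Let $w$ be a leaf of maximal distance from the core and $\rho$ the root of the tree of $\Amc$ containing it; the distance from $\rho$ to $w$ then exceeds $k\geq k_0=|q|+2^{4m^2}$. By Lemma~\ref{lem:transfer1} there are only singly exponentially many (in $|\Tmc|+|q|$) transfer sequences with respect to $\Tmc\cup\Tmc_{{\sf dtree}(q)}$, and exactly as in Lemma~\ref{lem:part1a} the combined data consisting of the local coincidence type of a node together with the transfer sequences of $(\Amc,\cdot)$, of $(\Amc^{-w},\cdot)$ and of $(\Amc|_{>0},\cdot)$ with respect to $\Tmc\cup\Tmc_{{\sf dtree}(q)}$ ranges over at most $2^{4m^2}$ values. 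Ignoring the first $|q|$ individuals on the path from $\rho$ to $w$, the pigeonhole principle yields two individuals $u_1,u_2$ on it with $u_1$ between $\rho$ and $u_2$ and satisfying conditions (a)--(d) of the proof of Lemma~\ref{lem:part1a}, read with $\Tmc_{{\sf dtree}(q)}$ in place of $\Tmc_{{\sf tree}(q)}$. Let $\Bmc$ arise from $\Amc$ by replacing $\Amc_{u_2}^{\downarrow}$ with a renamed copy of $\Amc_{u_1}^{\downarrow}$; $\Bmc$ is again a pseudo ditree $\Sigma$-ABox.

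It then remains to check, exactly as in Lemma~\ref{lem:part1a}, the four required properties of $\Bmc$: consistency with $\Tmc$ (functionality from (a), the rest from (b) together with Lemmas~\ref{lem:complabox} and~\ref{lem:transfer}); $\Bmc,\Tmc\models q$ (if some match of $q$ in the canonical model meets the core it survives since $\Amc$ and $\Bmc$ agree within distance $|q|$ of the core, and otherwise $\Amc,\Tmc\cup\Tmc_{{\sf dtree}(q)}\models\exists x\,A_C(x)$ for some $C\in{\sf dtree}(q)$ by Lemma~\ref{lem:queriesasconcepts2}, and nonemptiness of $A_C$ transfers to $\Bmc$ by (b)); $\Bmc|_{>0},\Tmc\not\models q$ (from (d) and $\Amc|_{>0},\Tmc\not\models q$); and $\Bmc|_{\leq k+1},\Tmc\not\models q$ (it suffices that $\Bmc^{-w'},\Tmc\not\models q$ for the new maximal-distance leaf $w'$, which follows from (c) and $\Amc^{-w},\Tmc\not\models q$). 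Hence $\Bmc$ is a $(k+1)$-entailment witness, completing the induction.

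I do not expect a genuine obstacle. The one place where the $\mathcal{ELHF}_\bot$ restriction is truly used is in the step $\Bmc,\Tmc\models q$: that $\Tmc_{{\sf dtree}(q)}$ (rather than $\Tmc_{{\sf tree}(q)}$) already records every match of $q$ that avoids the core rests on the weakly-ditree-shapedness of the non-core part of $\Imc_{\Amc,\Tmc}$, which is precisely the observation from the Preliminary on tree-shaped queries; given that, Lemmas~\ref{lem:queriesasconcepts}(2) and~\ref{lem:queriesasconcepts2} apply verbatim with ${\sf dtree}$ and $\Tmc_{{\sf dtree}(q)}$. The three parallel transfer sequences and the case split for $\Bmc,\Tmc\models q$ are the most delicate of the routine steps, but all of them already appear in the proof of Lemma~\ref{lem:part1a}, so the argument is a mechanical transfer of that proof with the improved parameter $m=|\Tmc|+|q|$.
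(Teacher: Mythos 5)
Your proposal is correct and follows essentially the same route as the paper: the paper's own proof of this lemma consists precisely of the observation that for $\mathcal{ELHF}_\bot$ one may work with pseudo ditree ABoxes and replace $\Tmc_{{\sf tree}(q)}$ by the linear-size $\Tmc_{{\sf dtree}(q)}$, the rest being a verbatim rerun of the pumping argument of Lemma~\ref{lem:part1a}. Your write-up just fills in the details of that rerun, correctly identifying the one point where weak ditree-shapedness of the non-core part of the canonical model is actually needed.
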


\section{Proofs for Section~\ref{sect:automata}}

\subsection{Preliminary: Tree Automata}

\medskip
We introduce two-way alternating parity automata on finite trees
(TWAPAs). 
%
%
We assume w.l.o.g.\ that all
 nodes in an $m$-ary tree are from $\{1,\dots,m\}^*$.
%
For any set $X$,
let $\Bmc^+(X)$ denote the set of all positive Boolean formulas over
$X$, i.e., formulas built using conjunction and disjunction over the
elements of $X$ used as propositional variables, and where the special
formulas $\mn{true}$ and $\mn{false}$ are allowed as well. 
An \emph{infinite path} $P$ of a tree $T$ is a
prefix-closed set $P \subseteq T$ such that for every $i \geq 0$,
there is a unique $x \in P$ with $|x|=i$.

\begin{definition}[TWAPA]
  A \emph{two-way alternating parity automaton 
    (TWAPA) on finite $m$-ary trees} is a tuple
  $\Amf=(S,\Gamma,\delta,s_0,c)$ where $S$ is a finite set of
  \emph{states}, $\Gamma$ is a finite alphabet, $\delta: S \times
  \Gamma \rightarrow \Bmc^+(\mn{tran}(\Amf))$ is the \emph{transition
    function} with $\mn{tran}(\Amf) = \{ \dia{i} s, \ [i] s
  \mid -1 \leq
  i \leq m \text{ and } s \in S \}$ the set of
  \emph{transitions} of \Amf, $s_0 \in S$ is the \emph{initial state},
  and $c:S \rightarrow \Nbbm$ is the \emph{parity condition} that 
  assigns to each state a \emph{priority}.
\end{definition}
Intuitively, a transition $\dia{i} s$ with $i>0$ means that
a copy of the automaton in state $s$ is sent to the $i$-th successor
of the current node, which is then required to exist. Similarly,
$\dia0 s$ means that the automaton stays at the current
node and switches to state $s$, and $\dia{-1} s$ indicates
moving to the predecessor of the current node, which is then required
to exist. Transitions $[i] s$ mean that a copy of the automaton in
state $s$ is sent to the relevant successor if that successor exists
(which is not required).\footnote{In our automata constructions, we
  will explicitly use only transitions of the form $\dia{i} s$. The dual 
  transitions are needed for closure of TWAPAs under
  complement.}
\begin{definition}[Run, Acceptance]
  A \emph{run} of a TWAPA $\Amf = (S,\Gamma,\delta,s_0,c)$ on a finite
  $\Gamma$-labeled tree $(T,L)$ is a $T \times S$-labeled tree
  $(T_r,r)$ such that the following conditions are satisfied:
  \begin{enumerate}

  \item $r(\varepsilon) = ( \varepsilon, s_0)$;
    

  \item if $y \in T_r$, $r(y)=(x,s)$, and $\delta(s,L(x))=\vp$, then
    there is a (possibly empty) set $S \subseteq \mn{tran}(\Amf)$ such
    that $S$ (viewed as a propositional valuation) satisfies $\vp$ as
    well as the following conditions:
    \begin{enumerate}

    \item if $\dia{i} s' \in S$, then $x \cdot i$ is defined and 
      there is a node $y \cdot j \in T_r$ such that $r(y \cdot j)=(x 
      \cdot i,s')$;

    \item if $[i]s' \in S$ and $x \cdot i$ is defined and in $T$, then
      there is a
      node $y \cdot j \in T_r$ such that $r(y \cdot j)=(x \cdot
      i,s')$.

    \end{enumerate}

  \end{enumerate}
  We say that $(T_r,r)$ is \emph{accepting} if on all infinite paths
  $\varepsilon = y_1 y_2 \cdots$ of $T_r$, the maximum priority that
  appears infinitely often is even.  A finite $\Gamma$-labeled tree
  $(T,L)$ is \emph{accepted} by \Amf if there is an accepting run of
  \Amf on $(T,L)$. We use $L(\Amf)$ to denote the set of all finite
  $\Gamma$-labeled tree accepted by \Amf.
\end{definition}
It is known (and easy to see) that TWAPAs are closed under
complementation and intersection, and that these constructions
involve only a polynomial blowup.
%
It is also known
that 
their emptiness problem can be solved in time single exponential in
the number of states and polynomial in all other components of the
automaton. 
In what follows, we shall generally only explicitly
analyze the number of states of a TWAPA, but only implicitly take care
that all other components are of the allowed size for the complexity
result that we aim to obtain.
%
%
%

\LEMbooleancontainment*
\begin{proof}
(1.) Consider an OMQ $Q=(\Tmc,\Sigma,q(\vec{x}))$, where $\vec{x}=x_{1},\ldots,x_{n}$.
Take fresh concept names $A_{1},\ldots,A_{n}$ and let $\Sigma'=\Sigma \cup \{A_{1},\ldots,A_{n}\}$.
Denote by $q'(\vec{x})$ the result of adding the conjuncts 
$A_{j}(x_{j})$ to $q(\vec{x})$ for $j\in \{1,\ldots,n\}$.
One can show that $Q'=(\Tmc,\Sigma',\exists \vec{x}q'(\vec{x}))$ is FO-rewritable iff
$Q$ is FO-rewritable. In fact, if $\varphi(\vec{x})$ is an FO-rewriting of $Q$, 
then obtain $\psi(\vec{x})$ from $\varphi(\vec{x})$ by adding the conjuncts $A_{j}(x)$ for 
$j\in \{1,\ldots,n\}$ to $\varphi(\vec{x})$. Then $\exists \vec{x}\psi(\vec{x})$ is an FO-rewriting
of $Q'$. 

(2.) Let $Q_{i}=(\Tmc_{i},\Sigma,q_{i}(\vec{x}))$ be OMQs for $i=1,2$.
We form the Boolean OMQs $Q_{i}'$ in the same way as above.
Then $Q_{1}\subseteq Q_{2}$ iff $Q_{1}'\subseteq Q_{2}'$, as required.
\end{proof}   

\medskip 

\subsection{Preliminary: Forest Decompositions}
\label{app:forestdecomp}

Before proving Proposition~\ref{prop:autocentral}, we carefully
analyse query matches in canonical models of pseudo tree ABoxes.  We
start with the case where the TBox is formulated in
$\mathcal{ELIHF}_\bot$ and afterwards consider $\mathcal{ELHF}_\bot$.

Let $q$ be a connected CQ. We use $\mn{id}(q)$ to denote the set of
all queries that can be obtained from $q$ by identifying variables. A
\emph{forest decomposition} of $q$ is a tuple
$F=(q_{\mn{core}}, q_1,x_1, \ldots, q_k,x_k, \mu)$ where
$(q_{\mn{core}}, q_1, \ldots, q_k)$ is a partition of (the atoms of) a
query from $\mn{id}(q)$, $x_1,\dots,x_k$ are variables from
$q_{\mn{core}}$, and $\mu$ is a mapping from $\Var(q_{\mn{core}})$ to
$\Ind_{\mn{core}}$ such that the following conditions are satisfied
for $1 \leq i,j \leq k$;
\begin{enumerate}
  \item $q_{\mn{core}}$ is non-empty;

  \item $q_i$ is weakly tree-shaped with root $x_i$;

  \item $\mn{Var}(q_i) \cap \mn{Var}(q_{\mn{core}})=\{ x_i \}$;

  \item $\Var(q_i) \cap \Var(q_j) \subseteq 
  \Var(q_{\mn{core}})$ if $i \neq j$;

  \item $q_i$ contains no atom $A(x_i)$;

  \item $x_i$ has a single successor in $q_i$.

\end{enumerate}
With $\mn{fdec}(q)$, we denote the set of all forest decompositions of
$q$. The following lemma shows how certain matches of $q$ in 
the
canonical models of pseudo tree ABoxes give rise to forest
decompositions of $q$.
\begin{lemma}\label{forest-decomp-lemma}
  Let \Tmc be an $\mathcal{ELIHF}_\bot$ TBox, \Amc a pseudo tree ABox,
  and $q$ a Boolean connected CQ.  Then the following are equivalent:
  \begin{enumerate}
    
  \item there is a match $\pi$ of $q$ in $\Imc_{\Tmc,\Amc}$ such that
    at least one individual from the core of \Amc is in the range of $\pi$;

  \item there is a forest decomposition $F=(q_{\mn{core}},
    q_1,x_1, \ldots,$ $q_k,x_k,\mu)$ of $q$ such that
  \begin{itemize}
    \item $\mu$ is a match for $q_{\mn{core}}$ in $\Imc_{\Tmc,\Amc}$ whose
      range consists solely of core individuals from \Amc;
    \item for $1 \leq i \leq k$, there is a match $\pi_i$ for $q_i$ in 
    $\Imc_{\Tmc,\Amc}$ such that $\pi_i(x_i)=\mu(x_i)$. 
  \end{itemize}
  \end{enumerate}
\end{lemma}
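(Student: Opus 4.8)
The plan is to prove the two implications separately, using throughout the structural fact recorded in the preliminary section on tree-shaped queries: the canonical model $\Imc_{\Tmc,\Amc}$ of an $\mathcal{ELIHF}_\bot$ TBox and a pseudo tree ABox is again a pseudo tree interpretation. Concretely, its domain partitions into the core individuals of $\Amc$ and a family of pairwise disjoint, weakly tree-shaped pieces, each hanging off a single core individual at a unique first-level element; and every role edge of $\Imc_{\Tmc,\Amc}$ lies inside the core, inside one of these pieces, or connects a core individual with the root of one of its pieces. The direction $2 \Rightarrow 1$ is then routine: given $F=(q_{\mn{core}},q_1,x_1,\dots,q_k,x_k,\mu)$ with witnessing matches $\mu,\pi_1,\dots,\pi_k$, I would glue them into a single assignment $\nu$ on the variables of the underlying query $q' \in \mn{id}(q)$ by taking $\nu=\mu$ on $\mn{var}(q_{\mn{core}})$ and $\nu=\pi_i$ on $\mn{var}(q_i)$. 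Conditions~3 and~4 of a forest decomposition together with $\pi_i(x_i)=\mu(x_i)$ guarantee that $\nu$ is well defined; since every atom of $q'$ belongs to $q_{\mn{core}}$ or some $q_i$, $\nu$ is a match of $q'$ in $\Imc_{\Tmc,\Amc}$, and precomposing with the variable identification $q \to q'$ yields a match $\pi$ of $q$. As $q_{\mn{core}}$ is non-empty and $\mu$ maps into core individuals, the range of $\pi$ meets the core.

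For $1 \Rightarrow 2$, let $\pi$ be a match of $q$ in $\Imc_{\Tmc,\Amc}$ whose range contains a core individual. First I would identify all variables with a common $\pi$-image, obtaining $q' \in \mn{id}(q)$ together with an \emph{injective} induced match $\pi'$ and a surjection $h\colon\mn{var}(q)\to\mn{var}(q')$ with $\pi=\pi'\circ h$. Let $V_{\mn{core}}$ be the set of variables $v$ of $q'$ with $\pi'(v)$ a core individual; it is non-empty by assumption, and since $q'$ is connected (being an identification of the connected query $q$) there is a path in $q'$ from every variable to some variable of $V_{\mn{core}}$. The partition is now read off the shape of $\Imc_{\Tmc,\Amc}$: put into $q_{\mn{core}}$ every atom all of whose variables lie in $V_{\mn{core}}$ (in particular every concept atom on a core variable), and set $\mu=\pi'|_{V_{\mn{core}}}$. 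For each connected component $C$ of the subquery of $q'$ induced by the non-core variables, a role atom inside $C$ connects two non-core variables and must therefore be a within-piece edge, so all of $C$ is mapped into one piece $T_C$; and because $q'$ is connected, $C$ is attached to the core through role atoms $r(x,v)$ with $x\in V_{\mn{core}}$, $v\in C$, each of which must be a core-to-root edge, forcing $\pi'(x)$ to be the core individual below which $T_C$ hangs and $\pi'(v)$ to be the root of $T_C$. By injectivity of $\pi'$ this pins down a unique such $x=:x_C$ and a unique such $v$. Taking $q_i$ to consist of the atoms of $C$ plus the attachment atom(s) $r(x_C,v)$, with root $x_i=x_C$, and $\pi_i=\pi'|_{\mn{var}(q_i)}$, gives the candidate forest decomposition.

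The verification that this candidate satisfies all six conditions is where the actual work lies, and I expect it to be the main obstacle. Injectivity of $\pi'$ and the tree shape of $T_C$ give that $q_i$ is weakly tree-shaped with root $x_i$; they also give $\mn{var}(q_i)\cap\mn{var}(q_{\mn{core}})=\{x_i\}$ and $\mn{var}(q_i)\cap\mn{var}(q_j)\subseteq\mn{var}(q_{\mn{core}})$, since distinct pieces of $\Imc_{\Tmc,\Amc}$ meet only in the core. Condition~5 holds because concept atoms on core variables were routed to $q_{\mn{core}}$, and condition~6 because $T_C$ has a single root, so two atoms from $x_i$ into $C$ would have targets that coincide under $\pi'$ and hence are equal. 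The remaining care points are mostly bookkeeping: $q_{\mn{core}}$ must be recorded as owning the variables $x_1,\dots,x_k$ even when they occur in none of its atoms, so I would work with the convention that a CQ carries an explicit variable set; and one has to make sure that parallel or opposite-direction atoms between $x_i$ and its successor do not destroy tree-shapedness. Finally $\mu$ is a match of $q_{\mn{core}}$ with range in the core by construction, and each $\pi_i$ is a match of $q_i$ with $\pi_i(x_i)=\pi'(x_i)=\mu(x_i)$, establishing condition~2.
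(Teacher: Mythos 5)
Your proof is correct, and it is essentially the argument the paper has in mind: the paper omits the proof of this lemma entirely, deferring to the analogous forest-decomposition lemmas of Lutz (2008), and your splitting of an (injectivized) match according to the core/tree-piece structure of $\Imc_{\Tmc,\Amc}$ is exactly that standard argument. You also correctly identify the one genuine subtlety, namely that $q_{\mn{core}}$ may contain a root variable $x_i$ occurring in none of its atoms (e.g.\ when a core variable's only atoms are role atoms into a tree piece), so that the non-emptiness condition and the domain of $\mu$ only make sense under the convention that a CQ carries an explicit variable set; the lemma would otherwise fail, and your resolution is the intended reading.
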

%
Lemma~\ref{forest-decomp-lemma} is a minor variation of
similar lemmas proved e.g.\ in \cite{DBLP:conf/cade/Lutz08}; proof details are
omitted.

It can be verified that there is a polynomial $p$ such that for every
connected CQ $q$, the number of forest decompositions of $q$ is
bounded by $2^{p(|q|)}$.

\medskip

Now for the case of $\mathcal{ELHF}_\bot$. We say that a CQ $q'$ is
obtained from a CQ $q$ \emph{by fork elimination} if $q'$ is obtained
from $q$ by selecting two atoms $r(x,z), s(y,z)$ and identifying the
variables $x$ and $y$. We call $q'$ a \emph{fork rewriting} of $q$ if
$q'$ can be obtained from $q$ by repeated (but not necessarily
exhaustive) fork elimination. We say that $q'$ is a \emph{maximal 
fork rewriting} of $q$ if it is a fork rewriting and no further fork
elimination is possible. A \emph{directed forest decomposition}
is defined like a forest decomposition except that
\begin{enumerate}

\item $(q_{\mn{core}}, q_1, \ldots, q_k)$ is a partition of (the atoms
  of) a fork rewriting of $q$, instead of a query from $\mn{id}(q)$;

\item the queries $q_1,\dots,q_k$ are required to be weakly
  ditree-shaped.

\end{enumerate}
%
We now
establish a strengthened version of Lemma~\ref{forest-decomp-lemma}
for $\mathcal{ELHF}_\bot$ TBoxes.
\begin{lemma}
\label{EL-forest-decomp-lemma}
When $\Tmc$ is an $\mathcal{ELHF}_\bot$ TBox, then the equivalence in
Lemma~\ref{forest-decomp-lemma} is true when forest decompositions are
replaced with 
directed forest decompositions.
\end{lemma}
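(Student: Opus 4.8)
The plan is to adapt the proof of Lemma~\ref{forest-decomp-lemma} by observing where the structure of $\mathcal{ELHF}_\bot$ canonical models differs from the $\mathcal{ELIHF}_\bot$ case, and to tighten the decomposition accordingly. Recall from the section on canonical models that when $\Tmc$ is an $\mathcal{ELHF}_\bot$ TBox and $\Amc$ is a pseudo \emph{ditree} ABox, the only part of $\Imc_{\Tmc,\Amc}$ that is not weakly ditree-shaped is the core of $\Amc$; every anonymous subtree attached to an individual (core or non-core) is weakly ditree-shaped, and all edges point away from the core. I would first fix such an $\Amc$ and a match $\pi$ of the Boolean connected CQ $q$ in $\Imc_{\Tmc,\Amc}$ whose range meets the core. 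For the direction $(1)\Rightarrow(2)$, I would classify the variables of $q$ by where $\pi$ sends them: let $V_{\mn{core}}$ be the variables mapped to core individuals, and for each maximal connected piece of $q$ that ``hangs off'' $V_{\mn{core}}$ I would cut it at its unique attachment variable, exactly as in the forest-decomposition argument for Lemma~\ref{forest-decomp-lemma}. The new ingredient is that any atom $r(x,y)$ with $\pi(x)$ and $\pi(y)$ both in a single anonymous ditree must respect the direction of edges in that ditree; consequently, after possibly merging variables that $\pi$ identifies, each cut-off piece $q_i$ is not merely weakly tree-shaped but weakly \emph{di}tree-shaped with root $x_i$. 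The merging of variables that $\pi$ identifies within the core yields the query from $\mn{id}(q)$ in the original statement; here, because distinct anonymous ditrees share no elements and edges are directed, the identifications \emph{outside} the core are precisely a sequence of fork eliminations (two edges $r(x,z),s(y,z)$ into a common successor $z$ force $x,y$ to be mapped to the same predecessor in a ditree), so the partitioned query is a fork rewriting of $q$ rather than an arbitrary element of $\mn{id}(q)$. This is the content of clauses~(1) and~(2) in the definition of a directed forest decomposition. The map $\mu$ is the restriction of $\pi$ (composed with the naming of core individuals via $\Ind_{\mn{core}}$), and the matches $\pi_i$ are the restrictions of $\pi$ to each $q_i$; conditions (3)--(6) of the (directed) forest decomposition are verified exactly as in the undirected case, using minimality/normalisation tricks (identify away self-loops at $x_i$, and split a multi-successor root into several trees).

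For the converse $(2)\Rightarrow(1)$, I would simply reassemble: given $\mu$ and the matches $\pi_i$ with $\pi_i(x_i)=\mu(x_i)$, define $\pi$ on $q$ by gluing $\mu$ with all the $\pi_i$ along the shared root variables $x_i$; since the partition was of a fork rewriting of $q$, a match of the fork rewriting pulls back to a match of $q$ itself (fork elimination only identifies variables, so any homomorphism of the rewritten query composed with the identification is a homomorphism of $q$). That the range meets the core is immediate because $\mu$'s range is a nonempty set of core individuals and $q_{\mn{core}}$ is nonempty. The only care needed is that the reassembled $\pi$ is well-defined on the overlap $\Var(q_i)\cap\Var(q_{\mn{core}})=\{x_i\}$, which holds by the assumption $\pi_i(x_i)=\mu(x_i)$, and on $\Var(q_i)\cap\Var(q_j)\subseteq\Var(q_{\mn{core}})$, where the two matches already agree via $\mu$.

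The main obstacle, and the place where I would spend the most effort, is establishing that the identifications of variables forced by $\pi$ away from the core really are captured by \emph{fork} rewriting rather than arbitrary variable identification. The key point is structural: in a weakly ditree-shaped interpretation, two variables $x,y$ with $\pi(x)=\pi(y)=d$ that are each the source of an atom landing on a common target cannot be separated, and more generally any identification that $\pi$ induces on variables sitting in the anonymous ditrees propagates ``upwards'' along the unique incoming edges, which is exactly a fork elimination step; one has to argue this by induction on the depth in the ditree and check that no identification ``within the core'' is needed for the anonymous part, so that $q_{\mn{core}}$ can still be taken from $\mn{id}(q)$ while $q_1,\dots,q_k$ come from a fork rewriting. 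Once this is in place, the bound of $2^{p(|q|)}$ on the number of directed forest decompositions follows as before, since a directed forest decomposition is determined by a fork rewriting (of which there are at most $2^{O(|q|^2)}$, each being a quotient of $\Var(q)$), a choice of core atoms, a choice of roots $x_i$, and the map $\mu:\Var(q_{\mn{core}})\to\Ind_{\mn{core}}$.
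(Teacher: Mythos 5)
Your proposal is correct and is essentially the argument the paper intends: the paper omits the proof altogether (deferring, as for Lemma~\ref{forest-decomp-lemma}, to the constructions in \cite{DBLP:conf/cade/Lutz08} and \citeA{Lutz-DL-07}), and your key observation --- that in the weakly ditree-shaped parts of $\Imc_{\Tmc,\Amc}$ every non-root element has a unique predecessor, so the only variable identifications needed to make the off-core components ditree-shaped and singly attached are exactly fork eliminations, located by induction at local maxima of $\pi$-depth along paths --- is precisely the intended one. Two small remarks: (i) you correctly read the hypothesis as $\Amc$ being a pseudo \emph{ditree} ABox (for a pseudo tree ABox whose trees contain backward edges the strengthened statement would fail), a restriction the paper leaves implicit; (ii) your remark that $q_{\mn{core}}$ ``can still be taken from $\mn{id}(q)$'' conflicts with the paper's definition, under which the entire partition must come from a single fork rewriting of $q$ --- but this is harmless, since no identifications are needed inside the core at all (the match $\mu$ may simply be non-injective), so the partitioned query is a genuine fork rewriting as required.
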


\subsection{Preliminary: Derivation Trees}

We characterize entailment of AQs in terms of derivation trees. Fix an
$\mathcal{ELIHF}^{\cap\text{-}\mn{lhs}}_{\bot}$ TBox \Tmc in normal
  form and an ABox \Amc. 
A
\emph{derivation tree} for an assertion $A_0(a_0)$ in \Amc with $A_0
\in \NC \cup \{ \bot \}$ is a finite $\mn{Ind}(\Amc) \times (\NC \cup
\{\bot \})$-labeled tree $(T,V)$ that satisfies the following conditions:
\begin{enumerate}

\item $V(\varepsilon)=(a_0,A_0)$;




\item if $V(x)=(a,A)$ and neither $A(a) \notin \Amc$ nor $\top
  \sqsubseteq A \in \Tmc$, then one of the following holds:
  \begin{itemize}

  \item $x$ has successors $y_1,\dots,y_k$, $k \geq 1$ with
    $V(y_i)=(a,A_i)$ for $1 \leq i \leq k$ and $\Tmc \models A_1
    \sqcap \cdots \sqcap A_k \sqsubseteq A$;

  \item $x$ has a single successor $y$ with $V(y)=(b,B)$ and there
    is an $\exists R . B \sqsubseteq A \in \Tmc$ and an $R'(a,b) \in
    \Amc$ such that
    $\Tmc \models R' \sqsubseteq R$;

  \item $x$ has a single successor $y$ with $V(y)=(b,B)$ and there
    is a $B \sqsubseteq \exists r . A \in \Tmc$ such that $r(b,a) \in \Amc$ and
    $\mn{func}(r) \in \Tmc$. 

  \end{itemize}

\end{enumerate}
%
Note that the first item of Point~2 above requires $\Tmc \models
A_1\sqcap \dotsb\sqcap A_n \sqsubseteq A$ instead of $A_1 \sqcap A_2
\sqsubseteq A \in \Tmc$ to `shortcut' anonymous parts of the canonical
model. In fact, the derivation of $A$ from $A_1 \sqcap \dots \sqcap
A_n$ by $\Tmc$ can involve the introduction of anonymous elements.

We call a TBox \Tmc \emph{satisfiable} if it has a model.
The main property of derivation trees is the following.
\begin{lemma}
\label{lem:derivationtrees}
~\\[-4mm]
  \begin{enumerate}

  \item $\Amc,\Tmc \models A(a)$ iff \Amc is inconsistent with \Tmc or there is
    a derivation tree for $A(a)$ in \Amc, for all assertions $A(a)$
    with $A \in \NC$ and $a \in \mn{Ind}(\Amc)$;

  \item \Amc is inconsistent with \Tmc iff \Tmc is unsatisfiable,
    there is a derivation tree for $\bot(a)$ in \Amc for some $a \in 
    \mn{Ind}(\Amc)$, or \Amc violates a functionality assertion in \Tmc.
  \end{enumerate}
\end{lemma}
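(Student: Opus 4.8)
The plan is to connect derivation trees to the completion sequence $\Amc_{0},\Amc_{1},\ldots$ of $\Amc$ w.r.t.\ $\Tmc$ and to the completion $\Amc^{c}_{\Tmc}$, using Lemma~\ref{lem:complabox} (which already equates $\Amc,\Tmc\models A(a)$ with $A(a)\in\Amc^{c}_{\Tmc}$ for $\Amc$ consistent with $\Tmc$) and the homomorphism property of canonical models from Lemma~\ref{lem:canmodelproperties}. Part~1 then splits into a soundness direction (a derivation tree forces the entailment in every model) and a completeness direction (the entailment produces a derivation tree), and Part~2 follows the same pattern while additionally tracking $\bot$ and functionality.

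For soundness in Part~1, I would assume $\Amc$ consistent with $\Tmc$ (otherwise there is nothing to show), fix a model $\Imc$ of $\Amc$ and $\Tmc$, and let $(T,V)$ be a derivation tree for $A(a)$. By induction on the height of the subtree rooted at a node $x$ with $V(x)=(b,B)$ I would prove $b\in B^{\Imc}$: leaves satisfy $B(b)\in\Amc$ or $\top\sqsubseteq B\in\Tmc$, so the claim is immediate, and in the inductive step each of the three cases of Point~2 feeds exactly one kind of normal-form axiom. The conjunction case uses $\Tmc\models A_{1}\sqcap\dots\sqcap A_{k}\sqsubseteq A$; the $\exists R.B\sqsubseteq A$ case uses that $R'(a,b)\in\Amc$ with $\Tmc\models R'\sqsubseteq R$ puts $(a,b)$ into $R^{\Imc}$ (unfolding the role intersection and the role inclusions satisfied by $\Imc$), so $a\in(\exists R.B)^{\Imc}\subseteq A^{\Imc}$; the functional case uses $B\sqsubseteq\exists r.A\in\Tmc$ to get some $r$-successor of $b$ in $A^{\Imc}$, which by $\mn{func}(r)\in\Tmc$ and $r(b,a)\in\Amc$ must be $a$. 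The identical argument, now allowing the root label $\bot$, gives the ``if'' direction of Part~2.

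For completeness in Part~1, assume $\Amc$ consistent with $\Tmc$ and $\Amc,\Tmc\models A(a)$; by Lemma~\ref{lem:complabox}, $A(a)\in\Amc_{i}$ for some $i$, and I would induct on the least such $i$. If $A(a)\in\Amc_{0}$ then $A(a)\in\Amc$ (the construction of $\Amc_{0}$ adds only role assertions), giving a one-node derivation tree; otherwise $A(a)$ was added by one of the rules (i)--(iv),(vii). Rule~(ii) gives the leaf case $\top\sqsubseteq A\in\Tmc$; rule~(iii) is the conjunction case with $k=2$; rule~(vii), firing because $\Amc_{i}|_{a},\Tmc\models A(a)$, is the general conjunction (``shortcut'') case, whose children are the derivation trees — available by the induction hypothesis since they appear at earlier stages — for the assertions in $\Amc_{i}|_{a}$, with side condition exactly $\Amc_{i}|_{a},\Tmc\models A(a)$; rules~(i) and (iv) give the two remaining cases, where one notes that every role assertion they reference already lies in $\Amc_{0}$ and hence has the form $R'(a,b)\in\Amc$ with $\Tmc\models R'\sqsubseteq\cdot$ the relevant (possibly inverse) role. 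For the remaining direction of Part~2, assume $\Amc$ inconsistent with $\Tmc$ and $\Tmc$ satisfiable. If $\Amc$ violates a functionality assertion of $\Tmc$ we are done; otherwise $\Amc$ is consistent with the TBox $\widehat{\Tmc}$ obtained by deleting all inclusions $A\sqsubseteq\bot$ from $\Tmc$, so $\Imc_{\Amc,\widehat{\Tmc}}$ exists by Lemma~\ref{lem:canmodelproperties} but is not a model of $\Tmc$, hence $d\in A^{\Imc_{\Amc,\widehat{\Tmc}}}$ for some $A\sqsubseteq\bot\in\Tmc$. Tracing $d$ back along the deterministic existential steps of the chase to an ABox individual $a$, one obtains a conjunction $\psi$ of concept names from $\mn{sig}(\Tmc)$ with $\Amc,\widehat{\Tmc}\models A_{j}(a)$ for each conjunct $A_{j}$ of $\psi$ (by canonicity) and $\Tmc\models\psi\sqsubseteq\bot$ (since any element in $\psi$ in a model of $\Tmc$ would be forced an $A$-descendant, impossible). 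Feeding each $A_{j}(a)$ through the completeness part of Part~1 for $\widehat{\Tmc}$ — a derivation tree for $\widehat{\Tmc}$ being one for the larger $\Tmc$ — and joining the results under a fresh root $(a,\bot)$ via the conjunction case with side condition $\Tmc\models\psi\sqsubseteq\bot$ produces a derivation tree for $\bot(a)$.

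I expect the main obstacle to be the bookkeeping in the completeness direction of Part~1: reconciling the ``big-step'' rule~(vii) and the role-intersection rule~(i) of the completion sequence with the precise shape of derivation-tree nodes, in particular verifying that inverse roles and role inclusions are threaded correctly through the semantic side conditions $\Tmc\models R'\sqsubseteq R$. A secondary difficulty is the back-tracing step in Part~2, where one follows the chase from an anonymous $\bot$-witness down to an ABox individual while handling anonymous nodes that were created and possibly merged by the functional-role rule; isolating the conjunction $\psi$ and establishing $\Tmc\models\psi\sqsubseteq\bot$ there is the real work.
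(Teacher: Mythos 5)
The paper gives no proof of this lemma at all---it simply asserts that the statement is ``a straightforward extension of an analogous result for $\ELI_\bot$ in [Bienvenu et al., 2013]'' and omits the details---so there is no argument of the authors' to compare yours against. That said, your two-part structure (soundness of derivation trees by induction on the height of subtrees against an arbitrary model; completeness by induction on the stage of the completion sequence, using Lemma~\ref{lem:complabox} to bridge entailment and the completion $\Amc^c_\Tmc$; and, for Point~2, isolating the $\bot$-witness in the canonical model of the $\bot$-free TBox and pulling it back to an ABox individual via a conjunction $\psi$ with $\Tmc \models \psi \sqsubseteq \bot$) is exactly the standard argument this kind of lemma admits, and it is what the cited result for $\ELI_\bot$ does in the setting without role intersections.

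One step of your completeness direction is genuinely too quick, and it is precisely the extension beyond $\ELI_\bot$ that the paper waves at. Completion rule~(i) fires when $r_1(a,b),\dots,r_n(a,b), B(b) \in \Amc_i$ for a role intersection $R = r_1 \cap \cdots \cap r_n$; you observe that each $r_j(a,b)$ already lies in $\Amc_0$ and hence is witnessed by some assertion of $\Amc$, but the derivation-tree case for $\exists R.B \sqsubseteq A$ demands a \emph{single} $R'(a,b) \in \Amc$ with $\Tmc \models R' \sqsubseteq R$, i.e.\ one ABox role assertion lying below \emph{all} of $r_1,\dots,r_n$ simultaneously. If the $n$ assertions in $\Amc_0$ trace back to distinct assertions of $\Amc$ (e.g.\ $r(a,b)$ and $s(a,b)$ both in $\Amc$ with $R = r \cap s$ and no role inclusions), your argument does not produce a node of the required form, and indeed no derivation tree in the literal sense of the definition exists even though the entailment holds. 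You should either record the additional hypothesis under which the two notions coincide (in the pseudo-tree ABoxes the lemma is actually applied to, every non-core edge carries exactly one role, so a single witnessing $R'$ always exists there), or read the derivation-tree condition as permitting a set of witnessing assertions and prove the lemma for that variant. You correctly identified this as the main obstacle; it needs to be resolved rather than deferred. The remaining delicate point---that in Part~2 the anonymous subtree below an ABox individual $a$ is determined by the set of concept names derived at $a$, so that $\Tmc \models \psi \sqsubseteq \bot$ for $\psi$ the derived type of $a$---is a standard property of the chase and your sketch of it is sound.
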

The proof is a straightforward extension of an analogous result for
$\ELI_\bot$ in \cite{ijcai-2013}. Details are omitted.

\subsection{Proof of Proposition~\ref{prop:autocentral}}
\label{app:mainauto}

We next give the main automaton construction underlying our upper
complexity bounds, stated as Proposition~\ref{prop:autocentral} in
the main paper. For convenience, we repeat the proposition here.

\smallskip 
\noindent
\PROPautocentral*

\smallskip We start with proving Point~1 of
Proposition~\ref{prop:autocentral}.  Let $Q=(\Tmc,\Sigma,q)$ be an OMQ
from $(\mathcal{ELIHF}_\bot,\text{BCQ})$. The automaton $\Amf_Q$ is
the intersection of two automata $\Amf_{Q,1}$ and $\Amf_{Q,2}$ and the
automaton $\Amf_\Tmc$ from Point~2 of
Proposition~\ref{prop:autocentral}. All of them run on $(|\Tmc| \cdot
|q|)$-ary $\Sigma_\varepsilon \cup \Sigma_N$-labeled trees. The first
automaton $\Amf_{Q,1}$ accepts the input tree $(T,L)$ iff it is proper.
This automaton is very simple to construct and its number of states is 
polynomial in $|\Tmc|$ and independent of $q$; we
omit details. The second automaton $\Amf_{Q,2}$ accepts $(T,L)$ iff
$\Amc_{(T,L)} \models Q$, provided that $\Amc_{(T,L)}$ is consistent
with
$\Tmc$. 

Before constructing $\Amf_{Q,2}$, we first extend the TBox \Tmc as
follows.  Let $q_1,\dots,q_\ell$ be the maximal connected components
of the BCQ $q$ from $Q$. We use \Qmc to denote the set of queries
that contains
\begin{itemize}

\item all queries from $\mn{id}(q_1) \cup \cdots \cup \mn{id}(q_\ell)$  
  which are weakly tree-shaped;

\item the queries $p_1,\dots,p_k$ from any forest decomposition
  $(p_{\mn{core}}, p_1,x_1, \ldots, p_k,x_k, \mu)$ in
  $\mn{fdec}(q_1) \cup \cdots \cup \mn{fdec}(q_\ell)$.
\end{itemize}
Each query in $q' \in \Qmc$ can be viewed as an
$\ELI^\cap$-concept~$C_{q'}$. We add to \Tmc the inclusion
$C_{q'} \sqsubseteq A_{q'}$ for each $q' \in \Qmc$, and convert to normal 
form. Call the resulting TBox $\Tmc^+$. The 
following lemma will guide the construction of the automaton $\Amf_{Q,2}$. 
%
\begin{lemma}
\label{lem:threecases}
Let \Amc be a pseudo tree ABox that is consistent with \Tmc. Then $\Amc,\Tmc 
\models q$
iff for all maximal connected components $q_j$ of $q$, one 
of the following properties is satisfied:
    \begin{enumerate}
  
  \item there is a forest decomposition $F=(q_{\mn{core}},
    q_1,x_1, \ldots,$ $q_k,x_k,\mu)$ of $q_j$ such that
  \begin{itemize}
    \item $\mu$ is a match for $q_{\mn{core}}$ in $\Imc_{\Tmc,\Amc}$
      whose
      range consists solely of core individuals from \Amc;
    \item for $1 \leq i \leq k$, there is a match $\pi$ for $q_i$ in 
    $\Imc_{\Tmc,\Amc}$ such that $\pi(x_i)=\mu(x_i)$;
  \end{itemize}

\item there is a query $q' \in \mn{id}(q_j)$ that is weakly tree-shaped
  and an $a \in \mn{Ind}(\Amc)$ that is not from the core part of
  \Amc and satisfies $\Amc,\Tmc^+ \models A_{q'}(a)$;

\item there is an $a \in \mn{Ind}(\Amc)$ and a set $S$ of
  concept names from $\Tmc$ such that $a \in A^{\Imc_{\Amc,\Tmc}}$ 
  for all $A \in S$ and $\Amc_S,\Tmc \models q_j$, where $\Amc_S
  = \{ A(a) \mid A \in S \}$. 

  \end{enumerate}

\end{lemma}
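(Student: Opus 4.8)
I would prove Lemma~\ref{lem:threecases} by first reducing it to a statement about a single connected component and then classifying query matches in the canonical model according to where they lie relative to the core. First, since $q$ is Boolean and its maximal connected components $q_1,\dots,q_\ell$ share no variables, $\Amc,\Tmc\models q$ iff $\Imc_{\Amc,\Tmc}\models q$ iff $\Imc_{\Amc,\Tmc}\models q_j$ for every $j$ (using that $\Imc_{\Amc,\Tmc}$ is a universal model of $\Amc$ and $\Tmc$, Lemma~\ref{lem:canmodelproperties}). It thus suffices to show, for each connected Boolean CQ $q_j$, that $\Imc_{\Amc,\Tmc}\models q_j$ iff one of the properties (1)--(3) holds for $q_j$.

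For the ``if'' direction I would argue each case separately. If (1) holds, Lemma~\ref{forest-decomp-lemma} (applied with $q_j$ in the role of $q$) directly yields a match of $q_j$ in $\Imc_{\Amc,\Tmc}$ whose range meets the core. If (3) holds, then $\Imc_{\Amc,\Tmc}$ is a model of $\Amc_S\cup\Tmc$ (since $a\in A^{\Imc_{\Amc,\Tmc}}$ for all $A\in S$), so by universality there is a homomorphism $\Imc_{\Amc_S,\Tmc}\to\Imc_{\Amc,\Tmc}$; composing it with a match of $q_j$ in $\Imc_{\Amc_S,\Tmc}$ (which exists because $\Amc_S,\Tmc\models q_j$) gives $\Imc_{\Amc,\Tmc}\models q_j$. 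If (2) holds, I would use that $\Tmc^+$ is a conservative extension of $\Tmc$ and that $A_{q'}$ is a fresh name ``defining'' $C_{q'}$, so that $A_{q'}^{\Imc_{\Amc,\Tmc^+}}=C_{q'}^{\Imc_{\Amc,\Tmc^+}}$ and the $\mn{sig}(\Tmc)$-reduct of $\Imc_{\Amc,\Tmc^+}$ coincides with $\Imc_{\Amc,\Tmc}$; hence $\Amc,\Tmc^+\models A_{q'}(a)$ gives $a\in C_{q'}^{\Imc_{\Amc,\Tmc}}$, i.e.\ $q'$ matches at $a$, and since $q'\in\mn{id}(q_j)$ this lifts to a match of $q_j$.

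For the ``only if'' direction I would take a match $\pi$ of $q_j$ in $\Imc_{\Amc,\Tmc}$. If the range of $\pi$ contains a core individual, Lemma~\ref{forest-decomp-lemma} yields~(1). Otherwise the range is connected and disjoint from the core, so — using the structure of the canonical model of a pseudo tree ABox, namely that deleting the core individuals leaves a forest whose trees are either subtrees of a named tree $\Amc_i$ (fleshed out with anonymous elements) or purely anonymous subtrees hanging off a single core individual, and that a named individual never has an anonymous ancestor — the range lies entirely within one such tree, with a unique topmost element $a$. If that tree is purely anonymous, let $b$ be the unique named individual at the top of this branch and $S$ its concept type in $\Imc_{\Amc,\Tmc}$; since the anonymous subtree below $b$ is determined by $S$ alone, there is a homomorphism from it into every model of $\Amc_S\cup\Tmc$ fixing $b$, whence $\Amc_S,\Tmc\models q_j$ and (3) holds. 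Otherwise the range contains a named individual, so $a$ itself is a non-core named individual; collapsing $\pi$ (identifying variables with the same image) yields $q'\in\mn{id}(q_j)$ that is weakly tree-shaped and matches with root at $a$, so $a\in C_{q'}^{\Imc_{\Amc,\Tmc}}=C_{q'}^{\Imc_{\Amc,\Tmc^+}}$, hence $\Amc,\Tmc^+\models A_{q'}(a)$ and (2) holds.

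I expect the main difficulty to be the structural analysis of the canonical model of a pseudo tree ABox that underpins the case distinction in the ``only if'' direction: one must be careful that inverse roles allow concept names to propagate ``upward'' along role edges, so the concept type of a named individual depends on its neighbours and not merely on its subtree, and that functional roles interact with the chase. Establishing cleanly that the anonymous subtree below a node depends only on that node's concept type, and that the non-core part of the canonical model forms a forest rooted at named individuals (or anonymous children of core individuals), is where the real work lies; the rest follows routinely from Lemma~\ref{forest-decomp-lemma} and the basic properties of canonical models in Lemma~\ref{lem:canmodelproperties}.
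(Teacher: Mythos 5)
Your proposal is correct and follows essentially the same route as the paper's (sketched) proof: the same three-way classification of matches of each connected component $q_j$ in $\Imc_{\Amc,\Tmc}$ according to whether the match hits a core individual (yielding Point~1 via Lemma~\ref{forest-decomp-lemma}), hits a tree individual but no core individual (yielding Point~2 via the inclusions $C_{q'}\sqsubseteq A_{q'}$ in $\Tmc^+$), or hits no ABox individual at all (yielding Point~3 via the concept type of the ABox individual rooting the anonymous tree). You in fact spell out the converse direction and the structural analysis of the canonical model in more detail than the paper's proof sketch does, but the underlying argument is the same.
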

\begin{proof}
  (sketch)
  $\Amc,\Tmc \models q$ is witnessed by a match for every maximal connected 
  component $q_j$ of $q$ into the canonical model $\Imc_{\Amc,\Tmc}$ of \Amc 
  and \Tmc. We distinguish three kinds of such matches: 
  (i)~Matches that involve a core individual of \Amc: As $q_j$ is connected, 
  by Lemma~\ref{forest-decomp-lemma} we directly obtain Point~1.
  (ii)~Matches that involve a tree individual $a$ of \Amc but no core 
  individual define a weakly tree-shaped query $q'$ that results from 
  identifying variables in $q_j$. The root of $q'$ is mapped to $a$, and as
  $C_{q'} \sqsubseteq A_{q'} \in \Tmc^+$, it follows that $\Amc,\Tmc^+ \models 
  A_{q'}(a)$.
  (iii)~Matches that do not involve any ABox individuals: Consider the ABox 
  individual $a$ that is the root of the anonymous tree $q_j$ is mapped to. As 
  $\Tmc^+$ is in normal form, it is easy to prove using canonical models that 
  there is a set $S$ of concept names from $\Tmc$ such that $a \in 
  A^{\Imc_{\Amc,\Tmc}}$ for all $A \in S$ and $\Amc_S,\Tmc \models q_j$.
\end{proof}
We use $\mn{CN}(\Tmc^+)$ to denote the set of all concept names
in~$\Tmc^+$ (and likewise for $\mn{CN}(\Tmc)$) and $\mn{rol}(\Tmc^+)$
to denote the set of all role intersections in~$\Tmc^+$.
Define the TWAPA $\Amf_{Q,2} = (S,\Sigma,\delta,s_0,c)$ by setting
\begin{align*}
  S = \;& \{ s_0, s^j_{\mn{core}}, s^j_{\mn{tree}}, s^j_{\mn{anon}} \mid 1 
  \leq j \leq
  \ell \} \; \uplus \\
   & \{ s_{A,a},\, s_A,\, s_{A,R,a} ,\, s_{A,R},\, s_{A,R,\uparrow}\mid \\ 
   & \quad a \in 
   \Ind_{\mn{core}}, \ R \in \mn{rol}(\Tmc^+), \ A \in \mn{CN}(\Tmc^+) \}
\end{align*}
and $c(s)=1$ for all $s \in S$ (\ie, exactly the finite runs are
accepting). We introduce the transition function $\delta$ in several
steps and provide some explanation along the way. Start by setting
for all $\rho\in \Sigma_\varepsilon$ %
\begin{itemize}
  \item $\delta(s_0,\rho) = \displaystyle\bigwedge_{j=1..\ell}
    \dia0 s^j_{\mn{core}} \vee \dia0 s^j_{\mn{tree}} \vee 
    \dia0 s^j_{\mn{anon}}$,
\end{itemize}
which distinguishes the three cases in Lemma~\ref{lem:threecases}, for
each connected component of $q$.  Next put for all
$\rho\in \Sigma_\varepsilon$, $\nu\in \Sigma_N$, and
$1 \leq j \leq \ell$:
\begin{itemize}

  \item $\displaystyle \delta(s^j_{\mn{core}},\rho) = 
  \!\!\!\!\!\!\!\!
\bigvee_{ {(p_{\mn{core}}, p_1,x_1, \ldots, p_k,x_k, \mu) 
  \in \mn{fdec}(q_j)} \atop{\text{ with } \mu \text{ a homomorphism from } 
  p_{\mn{core}} \text{ to } \rho} } \; \bigwedge_{i \in 1..k} 
  \dia0 s_{A_{p_i},\mu(x_i)}$;

  \item $\displaystyle \delta(s^j_{\mn{tree}},\rho) = 
\bigvee_{i \in 
  1..m} \dia{i}s^j_{\mn{tree}}$;

  \item $\displaystyle \delta(s^j_{\mn{tree}},\nu) = 
\bigvee_{q' \in \mn{id}(q_j)}  \dia0 s_{A_{q'(x)}}
\vee 
\bigvee_{i \in 1..m} 
  \dia{i}s^j_{\mn{tree}}$;


  \item $\displaystyle \delta(s^j_{\mn{anon}},\rho) = 
    \bigvee_{a \in \mn{Ind}(\rho)} \bigvee_{S \subseteq \mn{CN}(\Tmc)
      \atop{\Amc_S,\Tmc\models q_j}}
    \Big (
    \bigwedge_{A \in S}
    \dia0  s^j_{A,a}
\Big )\;
\vee$\\[1mm]
\hspace*{20.65mm}$
\displaystyle
\bigvee_{i \in 
  1..m} \dia{i}s^j_{\mn{anon}}$;

  \item $\displaystyle \delta(s^j_{\mn{anon}},\nu) = 
    \bigvee_{S \subseteq \mn{CN}(\Tmc) \atop{\Amc_S,\Tmc\models q_j}}
    \Big (
    \bigwedge_{A \in S}
    \dia0  s^j_{A}
    \Big )
    \vee
    \bigvee_{i \in 1..m} \dia{i}s^j_{\mn{anon}} $.
  
\end{itemize}
The first line selects a forest decomposition as in Point~1 of
Lemma~\ref{lem:threecases}; lines two and three select an ABox
individual in a tree component of the pseudo tree ABox $\Amc_{(T,L)}$
as in Point~2 of that lemma; and lines four and five select an
individual from $\Amc_{(T,L)}$ as in Point~3, which can be either in
the core part or in a tree part. It remains to implement the proof
obligations expressed by states of the form $s_{A,a}$ and $s_A$.  The
former indicates that the concept name $A$ is made true in the
canonical model by the core individual $a$ and the latter that $A$ is
made true in the canonical model by the tree individual that
corresponds to the current point of the input tree. We make sure that
these obligations are satisfied by checking the existence of
corresponding derivation trees according to
Lemma~\ref{lem:derivationtrees}. We start with doing this for the core
part of pseudo tree ABoxes. Set for all $\rho\in \Sigma_\varepsilon$
and all $\nu\in \Sigma_N$:
\begin{itemize}

\item $\delta(s_{A,a},\rho) = \mn{true}$ if $A(a)\in \rho$ or $a \in
  \Ind(\rho)$ and $\top \sqsubseteq A \in \Tmc^+$;
  

  \item $\displaystyle \delta(s_{A,a},\rho) = \\[1mm]
  \hspace*{0.75mm}\bigvee_{\Tmc^+ \models A_1\sqcap \dotsb\sqcap A_n \sqsubseteq 
  A} \big(\dia0 s_{A_1,a} \wedge \dotsb \wedge \dia0 s_{A_n,a}\big) \vee 
  \\[1mm]
  \hspace*{4.7mm} \bigvee_{\exists R.B\sqsubseteq A\in \Tmc^+} \Big( 
  \bigvee_{R'(a,b) \in \rho \text{ with } \Tmc^+ \models R' \sqsubseteq R} \; 
  \dia0 s_{B,b} \vee \\[1mm]
  \hspace*{37mm} \bigvee_{i\in1..m}\dia{i}s_{B,R,a} \Big) \vee \\[1mm]
  \hspace*{3mm} \bigvee_{B\sqsubseteq \exists r . A\in \Tmc^+, \
    \mn{func}(r) \in \Tmc^+} \Big( 
  \bigvee_{r(b,a) \in \rho} \; 
  \dia0 s_{B,b} \vee \\[1mm]
  \hspace*{39.7mm} \bigvee_{i\in1..m}\dia{i}s_{B,r^-,a} \Big) $ \\[1mm]
  if $a \in \Ind(\rho)$ and $A(a) 
  \notin \rho$;
  
  \item $\displaystyle \delta(s_{A,R,a}, \nu) = \dia0 s_A$ if $a \in \nu$
    and there is an $R' \in \nu$ with $\Tmc^+ \models R' \sqsubseteq R$.

\end{itemize}
It should be obvious how the above transitions verify the existence of
a derivation tree. Note that the tree must be finite since runs are
required to be finite. We now deal with proof obligations in the trees
of pseudo tree ABoxes.  Set for all $\rho\in \Sigma_\varepsilon$ and
all $\nu\in \Sigma_N$:
\begin{itemize}

  \item $\displaystyle \delta(s_A,\nu) =$ true for all $s_A \in S$
    with $A\in \nu$ or 
    \mbox{$\top \sqsubseteq A \in \Tmc^+$};

  \item $\delta(s_A,\nu)= \displaystyle \!\bigvee_{\Tmc^+ \models A_1
  \sqcap \cdots \sqcap A_n \sqsubseteq A} \big( \dia0 s_{A_1} \wedge \cdots
  \wedge \dia0 s_{A_n} \big) \; \vee$
  \\[1mm]
  \hspace*{12.6mm}$\displaystyle \bigvee_{\exists R.B \sqsubseteq A
  \in \Tmc^+} \Big(\dia0 s_{B,R^-,\uparrow} \vee 
  \bigvee_{i \in 1..m} \dia{i} s_{B,R} \Big) \; \vee$
  \\[1mm]
  \hspace*{5mm}$\displaystyle \bigvee_{B \sqsubseteq \exists r . A
  \in \Tmc^+,\ \mn{func}(r) \in \Tmc^+} \Big(\dia0 s_{B,r,\uparrow} \vee 
  \bigvee_{i \in 1..m} \dia{i} s_{B,r^-} \Big)$ \\[2mm]
  for all $A \in \mn{CN}(\Tmc^+)$ with
$A \notin \nu$;
  
\item $\delta(s_{A,R,\uparrow},\nu) = \dia{-1}s_A$ if there is an $R' \in \nu$ 
with
  $\Tmc^+ \models R'\sqsubseteq R$ and $\nu \cap \Ind_{\mn{core}} =
  \emptyset$;
  
  \item $\delta(s_{A,R,\uparrow},\nu) = \dia{-1}s_{A,a}$ if there is an $R' 
  \in \nu$ with $\Tmc^+ \models R' \sqsubseteq R$
  and $a \in \nu$;
    
  \item $\delta(s_{A,R},\nu)= \dia0 s_A$ if there is an 
  $R' \in \nu$ with $\Tmc^+ \models R' \sqsubseteq R$.

%
\end{itemize}
We define $\delta(s,\alpha) = \mn{false}$ for all $s \in S, \alpha \in
\Sigma_\varepsilon \cup \Sigma_N$ not covered above. Using the
intuitions above and Lemmas~\ref{forest-decomp-lemma}
and~\ref{lem:derivationtrees}, one can prove the following.
\begin{lemma}
\label{lem:autoQcorr}
  Let $(T,L)$ be a $\Sigma_\varepsilon \cup \Sigma_N$-labeled tree
  that is proper 
  and such that $\Amc_{(T,L)}$ is
  consistent with \Tmc. Then $\Amf_{Q,2}$ accepts $(T,L)$ iff
  $\Amc_{(T,L)} \models Q$.
\end{lemma}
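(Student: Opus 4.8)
\textit{Proof plan.}
The plan is to reduce the claim, via Lemma~\ref{lem:threecases}, to a correspondence between accepting subruns of the ``derivation fragment'' of $\Amf_{Q,2}$ and derivation trees for $\Tmc^+$ as provided by Lemma~\ref{lem:derivationtrees}. Fix a proper tree $(T,L)$ and write $\Amc := \Amc_{(T,L)}$. By hypothesis $\Amc$ is consistent with $\Tmc$; since $\Tmc^+$ extends $\Tmc$ only by inclusions $C_{q'} \sqsubseteq A_{q'}$ with the $A_{q'}$ fresh, $\Amc$ is also consistent with $\Tmc^+$, so the inconsistency alternatives in Lemma~\ref{lem:derivationtrees} do not occur. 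Moreover, by definition of $A_{q'}$ (cf.\ Lemma~\ref{lem:queriesasconcepts2}) one has $\Amc,\Tmc^+ \models A_{q'}(a)$ iff there is a weakly tree-shaped match of $q'$ into $\Imc_{\Amc,\Tmc}$ rooted at $a$, and $\Amc,\Tmc^+ \models A(a)$ iff $\Amc,\Tmc \models A(a)$ for $A \in \mn{CN}(\Tmc)$ by conservativity.

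The technical heart is the following claim, proved in both directions by induction on the structure of a derivation tree and of an accepting subrun, respectively: for a core individual $a$ occurring in the root label $\rho$, $\Amf_{Q,2}$ has an accepting subrun from $s_{A,a}$ at the root iff $\Amc,\Tmc^+ \models A(a)$; and for a non-root node $x$ with associated ABox individual $u_x$, $\Amf_{Q,2}$ has an accepting subrun from $s_A$ at $x$ iff $\Amc,\Tmc^+ \models A(u_x)$. The three disjuncts of $\delta(s_{A,a},\rho)$ (resp.\ $\delta(s_A,\nu)$) mirror exactly the three bullets in the definition of a derivation tree: the conjunctive bullet $\Tmc^+ \models A_1 \sqcap \cdots \sqcap A_n \sqsubseteq A$, the existential-left bullet $\exists R.B \sqsubseteq A \in \Tmc^+$ together with $R'(a,b) \in \Amc$ and $\Tmc^+ \models R' \sqsubseteq R$, and the functional bullet $B \sqsubseteq \exists r.A \in \Tmc^+$ with $\mn{func}(r) \in \Tmc^+$ and $r(b,a) \in \Amc$. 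The main obstacle of this step is the bookkeeping about how a role assertion of $\Amc$ incident to a node is encoded in $(T,L)$ — it may reside in the label of the node itself, of its parent, of a child, or (for core individuals) of a core successor — which is precisely why the auxiliary ``move'' states $s_{A,R,a}$, $s_{A,R}$ and $s_{A,R,\uparrow}$ are present; one has to verify that their transitions traverse exactly the edges that the translation $\Amc_{(T,L)}$ admits. Finiteness of accepting runs (forced by the priority assignment $c(s)=1$ for all $s$) matches finiteness of derivation trees, so no infinite-run subtleties arise.

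With the claim in hand, I would prove ``$\Amc \models Q \Rightarrow (T,L)$ accepted'' via Lemma~\ref{lem:threecases}: for each maximal connected component $q_j$ one of Points~1--3 holds, and $\delta(s_0,\rho)$ lets the run pick the corresponding disjunct $s^j_{\mn{core}}$, $s^j_{\mn{tree}}$ or $s^j_{\mn{anon}}$. In case of Point~1 the run selects the forest decomposition $F$, checks its core match against $\rho$ and, via $s_{A,a}$-obligations (using the claim), against the concept atoms on core individuals, and discharges each tree obligation $s_{A_{p_i},\mu(x_i)}$, which is possible since the match $\pi_i$ with $\pi_i(x_i)=\mu(x_i)$ gives $\Amc,\Tmc^+ \models A_{p_i}(\mu(x_i))$. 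In case of Point~2 it descends to the node for $a$ and fires $\dia0 s_{A_{q'}}$, again via the claim. In case of Point~3 it descends to the node for $a$, guesses the set $S$ with $\Amc_S,\Tmc \models q_j$ (a disjunct already present in the transition), and discharges each $\dia0 s_{A,a}$ or $\dia0 s_A$ using $a \in A^{\Imc_{\Amc,\Tmc}}$, i.e.\ $\Amc,\Tmc \models A(a)$ and hence $\Amc,\Tmc^+ \models A(a)$. Gluing these subruns yields an accepting run.

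Conversely, from a fixed accepting run I would read off, for each $j$, the disjunct of $\delta(s_0,\rho)$ used and recover data witnessing one of Points~1--3: the $s^j_{\mn{core}}$-branch yields a forest decomposition plus, by the claim, the entailments $\Amc,\Tmc^+ \models A_{p_i}(\mu(x_i))$ (hence matches $\pi_i$ for $p_i$) and a core match $\mu$ into $\Imc_{\Amc,\Tmc}$; the $s^j_{\mn{tree}}$-branch yields a weakly tree-shaped $q' \in \mn{id}(q_j)$ and a non-core individual $a$ with $\Amc,\Tmc^+ \models A_{q'}(a)$; and the $s^j_{\mn{anon}}$-branch yields an individual $a$ and a set $S$ with $\Amc_S,\Tmc \models q_j$ and, by the claim, $a \in A^{\Imc_{\Amc,\Tmc}}$ for all $A \in S$. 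Thus every maximal connected component of $q$ satisfies one of the three alternatives, so Lemma~\ref{lem:threecases} gives $\Amc,\Tmc \models q$, i.e.\ $\Amc \models Q$. The corresponding statement for $\mathcal{ELHF}_\bot$ TBoxes is obtained in exactly the same way, using directed forest decompositions and Lemma~\ref{EL-forest-decomp-lemma} in place of Lemma~\ref{forest-decomp-lemma}.
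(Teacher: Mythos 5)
Your proposal matches the paper's intended argument: the paper itself only sketches this lemma, stating that it follows ``using the intuitions above and Lemmas~\ref{forest-decomp-lemma} and~\ref{lem:derivationtrees}'', and your plan fills in exactly that route — the three disjuncts of $\delta(s_0,\rho)$ correspond to the three cases of Lemma~\ref{lem:threecases}, and the proof obligations $s_{A,a}$/$s_A$ are discharged by a back-and-forth between accepting (necessarily finite) subruns and derivation trees, with the auxiliary states handling the encoding of role assertions in $(T,L)$. Correct and essentially the same approach as the paper.
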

By construction and Lemma~\ref{lem:autoQcorr}, the overall TWAPA
$\Amf_Q$ accepts the tree language described in
Proposition~\ref{prop:autocentral} and it remains to analyse its size.

\smallskip 

First assume that \Tmc is formulated in $\mathcal{ELIHF}_\bot$. Using
the bound on the number of forest decompositions from
Section~\ref{app:forestdecomp}, it can be verified that the size of
the extended TBox~$\Tmc^+$ is at most $2^{p(|q|+\mn{log}(|\Tmc|))}$
for some polynomial $p$ and thus the same is true for the number of
states of the TWAPA $\Amf_{Q,2}$. Since the number of states of
$\Amf_{Q,1}$ is polynomial in the size of \Tmc and independent of $q$
and by the size bounds for $\Amf_\Tmc$ given in Point~2 of
Proposition~\ref{prop:autocentral} (and since intersection blows up
TWAPAs only polynomially), the bound of at most
$2^{p(|q|+\mn{log}(|\Tmc|))}$ states also applies to the overall TWAPA
$\Amf_Q$. 

Now for case when \Tmc is formulated in $\mathcal{ELHF}_\bot$.  By
Lemma~\ref{EL-forest-decomp-lemma}, we can then replace forest
decompositions with 
directed forest decompositions in
the construction of $\Amf_Q$. Moreover, Point~2 of
Lemma~\ref{lem:threecases}, can be replaced with
\begin{enumerate}

\item[2$'$.] the maximal fork rewriting $q^\fsf_j$ of $q_j$ is
  weakly ditree-shaped and there is an $a \in \mn{Ind}(\Amc)$ that is
  not from the core part of \Amc and satisfies $\Amc,\Tmc^+ \models
  A_{q^\fsf_j}(a)$

\end{enumerate}
Consequently, the set \Qmc used in the construction of $\Tmc^+$ now
only needs to contain
\begin{itemize}

\item the queries $q^\fsf_1,\dots,q^\fsf_\ell$;

\item the queries $p_1,\dots,p_k$ from any 
directed
  forest decomposition $(p_{\mn{core}}, p_1,x_1, \ldots, p_k,x_k,
  \mu)$ in $\mn{fdec}(q_1) \cup \cdots \cup \mn{fdec}(q_\ell)$.
\end{itemize}
It is then a consequence of the following lemma that the size of the
extended TBox $\Tmc^+$ is now bounded by $p(|q|+|\Tmc|)$ for some
polynomial $p$. The same arguments as before then allow us to carry
over that bound to the number of states in~$\Amf_Q$. The following is
a reformulation of Lemma~4 in \citeA{Lutz-DL-07}. Note that
this result crucially relies on Condition~6 from the definition of forest decompositions.
%
\begin{lemma}
  Let $q$ be a connected BCQ and let \Qmc be the set of all queries that
  occur as a tree component in a 
directed forest
  decomposition of $q$.  Then the cardinality of \Qmc is polynomial in~$q$.
\end{lemma}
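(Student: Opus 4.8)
The plan is to bound the number of distinct \emph{shapes} of tree components, where queries are counted up to renaming of variables (which is all that matters, since a tree component enters $\Qmc$ only through the associated concept $C_{q'}$). The main tool is the \emph{maximal fork rewriting} $q^{\fsf}$ of $q$. Fork elimination only identifies two variables when they have a common successor, so it is confluent and $q^{\fsf}$ is unique; moreover $q^{\fsf}$ is a quotient of $q$, hence has at most $|\mn{Var}(q)| \le |q|$ variables, and every variable of $q^{\fsf}$ has at most one in-neighbour, so $q^{\fsf}$ is essentially an arborescence (up to a possible cyclic ``core'', which will be harmless). I will show that each tree component is determined, up to renaming of its root, by a single vertex of $q^{\fsf}$ together with one role, which yields $|\Qmc| = O(|q|^{2})$.

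First I fix a directed forest decomposition $F = (q_{\mn{core}}, q_1, x_1, \ldots, q_k, x_k, \mu)$, which partitions the atoms of some fork rewriting $q'$ of $q$, and I isolate a tree component $q_i$. The key step is to show that $q_i$ is \emph{closed}: whenever $z \in \mn{Var}(q_i) \setminus \{x_i\}$ and $\alpha$ is an atom of $q'$ mentioning $z$, then $\alpha \in q_i$. Indeed, $\alpha$ lies in exactly one block of the partition; it cannot lie in $q_{\mn{core}}$, for then $z \in \mn{Var}(q_{\mn{core}})$, contradicting Condition~3 since $z \ne x_i$; and it cannot lie in some $q_j$ with $j \ne i$, for then $z \in \mn{Var}(q_i) \cap \mn{Var}(q_j) \subseteq \mn{Var}(q_{\mn{core}})$ by Condition~4, again contradicting Condition~3. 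Combining this with Conditions~2, 5 and 6 (the component $q_i$ is a weakly ditree-shaped query rooted at $x_i$, with no atom $A(x_i)$, in which $x_i$ has a single successor $y_i$, say via role $\sigma_i$), it follows that $q_i$ consists precisely of the edge $\sigma_i(x_i, y_i)$ plus the entire subquery of $q'$ forward-reachable from $y_i$.

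Next I push $q_i$ through the canonical folding $h \colon q' \to q^{\fsf}$. Because $q_i$ is a ditree it is internally fork-free, and because it is closed no fork elimination performed in forming $q^{\fsf}$ acts on its interior; hence $h$ restricts to an isomorphism from the part of $q_i$ below $y_i$ onto the subtree of $q^{\fsf}$ rooted at $v = h(y_i)$, and $\sigma_i$ is the role of the (unique) incoming edge of $v$. Therefore, up to renaming $x_i$, the component $q_i$ is determined by the pair $(v, \sigma_i)$. Since $q^{\fsf}$ has at most $|q|$ variables and $q$ uses at most $|q|$ roles, there are at most $|q|^{2}$ pairwise non-isomorphic tree components, so $|\Qmc|$ is polynomial (indeed quadratic) in $|q|$.

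The main obstacle is making the two structural claims precise for fork rewritings $q'$ that are not maximal: one must verify that $q_i$ is not ``artificially unfolded'' relative to $q^{\fsf}$ (so that $h$ really is injective on its interior), and handle degenerate situations where $q^{\fsf}$ is not literally a tree (e.g.\ directed cycles of $q$, which the decomposition necessarily confines to $q_{\mn{core}}$, or parallel edges). These details are a reformulation of Lemma~4 of \citeA{Lutz-DL-07}. The reliance on Condition~6 is essential: if $x_i$ were allowed arbitrarily many successors, a tree component would correspond to an arbitrary frontier of subtrees of $q^{\fsf}$, of which there can be exponentially many; this is precisely why the bound for $\mathcal{ELHF}_\bot$ improves on the $\mathcal{ELIHF}_\bot$ case, where already the identifications in $\mn{id}(q)$ contribute exponentially many tree components.
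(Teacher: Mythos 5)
Your argument is correct, and it is worth noting that the paper itself offers no proof of this lemma at all: it only points to Lemma~4 of the cited DL-2007 paper and remarks that Condition~6 is what makes it work. Your reconstruction supplies exactly the right two ingredients. First, the \emph{closure} of a tree component $q_i$ (every atom of the fork rewriting $q'$ that mentions a variable of $\mn{Var}(q_i)\setminus\{x_i\}$ already lies in $q_i$) follows cleanly from Conditions~3 and~4 as you argue, and together with Conditions~2 and~6 it pins $q_i$ down as the single edge into $y_i$ plus everything reachable from $y_i$. Second, closure plus ditree-shapedness gives that every interior variable of $q_i$ has exactly one in-neighbour in $q'$, and a short induction over the rewriting steps (which you defer, but which is routine) shows that no fork elimination ever touches the interior, so the quotient map onto the maximal fork rewriting $q^{\fsf}$ is injective there and the component is a copy of a rooted subtree of $q^{\fsf}$; confluence of fork elimination, which you assert, does hold by Newman's lemma since overlapping eliminations commute. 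The one imprecision is the parametrization by a pair $(v,\sigma_i)$: Condition~6 forces a single successor \emph{node}, not a single role atom, so the edge $x_i\to y_i$ may carry several roles. But by closure the full set of roles on that edge equals the set of roles entering $v$ in $q^{\fsf}$ (all from its unique in-neighbour, since $q^{\fsf}$ is fork-free), so the component is in fact determined by $v$ alone and your bound only improves. Your closing remark correctly isolates why the argument breaks without Condition~6 and why the $\mathcal{ELIHF}_\bot$ case is exponential, which is precisely the distinction the paper exploits in the automaton sizes.
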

We now sketch the construction of the TWAPA $\Amf_\Tmc$ from Point~2
of Proposition~\ref{prop:autocentral}. We first build a TWAPA $\Amf$
which accepts a proper 
$\Sigma_\varepsilon \cup \Sigma_N$-labeled tree $(T,L)$ iff
$\Amc_{(T,L)}$ is inconsistent with \Tmc, and then obtain $\Amf_\Tmc$
from $\Amf$ by complementing and intersecting with $\Amf_{Q,1}$. 

We can assume that \Tmc is satisfiable because in all considered
cases,
\begin{enumerate}

\item TBox satisfiability is not harder than the reasoning problem
  (containment or FO rewritability) that  we are interested in;

\item the result of containment or FO rewritability is trivial if at
  least one of the involved TBoxes is unsatisfiable.

\end{enumerate}
By Lemma~\ref{lem:derivationtrees}, we thus have to construct $\Amf$
such that it accepts the input tree $(T,L)$ iff a
functionality assertion from \Tmc is violated by $\Amc_{T,L}$ or
there is a derivation tree for $\bot(a)$ for some $a \in
\mn{Ind}(\Amc_{(T,L)})$. The former is straightforward
and the latter can be done in almost exactly the same way as in the
automaton $\Amf_{Q,2}$ above. To start, we put the following
transitions for all $\rho\in \Sigma_\varepsilon$ and all $\nu\in
\Sigma_N$, where $s_0$ is the initial state:
\begin{itemize}

\item $ \delta(s_0,\rho)= \displaystyle\bigvee_{a \in \mn{Ind}(\rho)}
  \dia0  s_{\bot,a} \vee
  \bigvee_{i \in 1..m} \dia{i} s_0$;

\item $ \delta(s_0,\nu)= \displaystyle s_{\bot} \vee
  \bigvee_{i \in 1..m} \dia{i} s_0$.

\end{itemize}
It thus remains to deal with the proof obligations $s_{\bot,a}$ and
$s_{\bot}$, which is done as in $\Amf_{Q,2}$ except that we use the
original TBox \Tmc in place of the extended TBox $\Tmc^+$ (and treat
$\bot$ like a concept name from $\mn{CN}(\Tmc)$).  This finishes the
construction of the automaton $\Amf_\Tmc$. The size of at most
$p(|\Tmc|)$ states is easily verified. Note that the construction is
essentially independent of $q$ (except for condition~(iv) of
properness) because the transitions of $\Amf_{Q,2}$ that refer to
forest decompositions are replaced by the transitions given above.

\subsection{Deciding Containment}

We prove the upper bounds for containment stated in
Theorem~\ref{thm:main}. Actually, they follow directly from
Proposition~\ref{prop:autocentral}, the fact that $Q_1 \subseteq Q_2$
(where $Q_i=(\Tmc_i,\Sigma,q_i)$) iff $L(\Amf_{Q_1}) \cap L(\Amf_{\Tmc_2})
\subseteq L(\Amf_{Q_2})$ iff $L(\Amf_{Q_1}) \cap L(\Amf_{\Tmc_2}) \cap
\overline{L(\Amf_{Q_2})}$ is empty, the closure of TWAPAs under
(polynomial) intersection and complement, and the complexity of TWAPA
emptiness.

\subsection{Deciding FO rewritability}

We prove the upper bounds for FO rewritability stated in
Theorems~\ref{thm:main} and~\ref{thm:mainzoom}. The proof is based on
the characterization from Points~2 (for $\mathcal{ELIHF}_\bot$) and
Point~1 (for $\mathcal{ELHF}_\bot$) of Theorem~\ref{lem:witabox} and
uses the automata from Proposition~\ref{prop:autocentral}, in a
slightly adapted form. 

\smallskip 

Let $Q=(\Tmc,\Sigma,q)$ be an OMQ from
($\mathcal{ELIHF}_\bot$, conBCQ) and $k_0 =
2^{4(|\Tmc|+2^{|q|})^2}$ the bound from Point~2 of
Theorem~\ref{lem:witabox}. By that theorem, $Q$ is not FO-rewritable
iff there is a pseudo tree ABox \Amc of width at most $|q|$ and
outdegree at most $|\Tmc|$ that satisfies the following conditions:
\begin{enumerate}


\item $\Amc$ is consistent with \Tmc;

\item $\Amc\models Q$;

\item $\Amc|_{>0} \not\models Q$;

\item $\Amc|_{\leq k_0}\not\models Q$.

\end{enumerate}
We aim to build a TWAPA \Amf that accepts representations of such
ABoxes; it then remains to decide emptiness. To deal with the
`truncated' ABoxes $\Amc|_{\leq k_0}$, we need to endow
$\Sigma_\varepsilon \cup \Sigma_N$-labeled trees with a counting
component. More precisely, we now use $\Sigma_\varepsilon \cup
(\Sigma_N \times [k_0])$-labeled trees, where
$[k_0]=\{1,\dots,k_0+1\}$. All notions for $\Sigma_\varepsilon \cup
\Sigma_N$-labeled trees such as properness, the associated ABox 
carry over to the extended alphabet. Additionally,
we say that a $\Sigma_\varepsilon \cup (\Sigma_N \times
[k_0])$-labeled tree $(T,L)$ is \emph{counting} if for every node $x
\in T$ on level $i >0$, $L(T)=(\alpha,j)$ implies $j=\mn{min}(i,k_0+1)$.

The desired TWAPA \Amf is the intersection of five TWAPAs
$\Amf_0,\dots, \Amf_4$. While $\Amf_0$ makes sure that the input tree
$(T,L)$ is proper and counting, each of the 
automata
$\Amf_1,\dots,\Amf_4$ makes sure that the ABox $\Amc_{(T,L)}$
satisfies the corresponding condition from the above list. In fact, we
have already seen in Section~\ref{app:mainauto} how to build TWAPAs
for Conditions~1 and~2; they are easily adapted to the new input
format and simply ignore the additional counting component of input
trees. Moreover, the automaton $\Amf_Q$ which ensures Condition~2
is easily modified to ensure Conditions~3 and~4 provided that the 
input tree is counting; the modified automaton simply ignores those
parts of the input that are `truncated away'. 

It thus remains to verify that we can build the automaton~$\Amf_0$.
Properness was already dealt with in
Section~\ref{app:mainauto}. We additionally need to verify that the
input tree is counting. This can be done with $O(\mn{log}(k_0))$
states: we send a copy to the automaton to every tree node, for every
$i$-th bit, $i \in \{1,\dots,\lceil\mn{log}(k_0)\rceil\}$. Based on the node
label, we determine the value $t \in \{0,1\}$ of the $i$-th bit at all
successors and send a copy of the automaton in state ``$\mn{bit}\;i{=}t$''
to all successors nodes, where that value is verified. 

It can be verified that the constructed overall TWAPA $\Amf$ has
${2^{p(|q_1| + \mn{log}(|\Tmc_1|)) }}$ states, $p$ a polynomial.  From
the complexity of TWAPA emptiness, we thus get Point~1 in
Theorem~\ref{thm:mainzoom}.  If $\Tmc$ is formulated in
$\mathcal{ELIHF}_\bot$, then \Amf has only $p(|q_1| + |\Tmc_1|)$
states due to the improved bounds for this logic in
Theorem~\ref{lem:witabox} and Proposition~\ref{prop:autocentral}.
Consequently, we also obtain the upper bound in Point~2 of
Theorem~\ref{thm:main}.
 
\section{Rooted Queries}

We now establish the co\NExpTime\ upper bounds for rooted queries (Theorem \ref{thm:rootedupper}).
We first give the proof for containment, and 
afterwards we explain how the construction can be modified to handle FO-rewritability.

\subsection{Overview of upper bound for containment }


For convenience, we repeat the result we aim to prove. 

\begin{theorem}
  Containment for OMQs in $(\mathcal{ELIHF}_\bot,\text{rCQ})$ is in co\NExpTime.  
\end{theorem}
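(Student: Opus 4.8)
The proof proceeds by a guess-and-check argument that shows non-containment can always be witnessed by a pseudo tree ABox whose essential information is of at most exponential size. Let $Q_i=(\Tmc_i,\Sigma,q_i)$, $i\in\{1,2\}$, be OMQs from $(\mathcal{ELIHF}_\bot,\text{rCQ})$. By Proposition~\ref{prop:conttree}, $Q_1\not\subseteq Q_2$ iff there is a pseudo tree $\Sigma$-ABox $\Amc$ of outdegree at most $|\Tmc_1|$ and width at most $|q_1|$ that is consistent with both $\Tmc_1$ and $\Tmc_2$, together with a tuple $\vec a$ from the core of $\Amc$ such that $\Amc\models Q_1(\vec a)$ and $\Amc\not\models Q_2(\vec a)$. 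The key structural observation is that since $q_1$ is \emph{rooted} (connected with at least one answer variable), any homomorphism witnessing $\Amc\models Q_1(\vec a)$ into the canonical model $\Imc_{\Amc,\Tmc_1}$ maps every variable of $q_1$ into the restriction $\Amc|_{\le |q_1|}$ together with the anonymous subtrees hanging off it; the same holds for $q_2$ relative to $\Amc\not\models Q_2(\vec a)$ (here non-entailment is monotone, so this direction is automatic once we control where potential matches could live). Consequently, the only part of $\Amc$ that matters for the two query conditions is the ``initial piece'' $\Amc|_{\le |q_1|}$, which has at most exponentially many individuals (outdegree $|\Tmc_1|$, depth $|q_1|$, width $|q_1|$), decorated with, for each individual, the set of concept names entailed at it by $\Amc$ and $\Tmc_i$.

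\textbf{The NExpTime algorithm.} First I would guess $\Amc|_{\le|q_1|}$ explicitly — exponentially many individuals and assertions — together with, for each individual $a$ in it and each $i\in\{1,2\}$, a set $S_i(a)\subseteq\mn{CN}(\Tmc_i)$ that is intended to be exactly $\{A\mid \Amc,\Tmc_i\models A(a)\}$. Using $S_i(\cdot)$ and the forest-decomposition / derivation-tree machinery (Lemma~\ref{forest-decomp-lemma}, Lemma~\ref{lem:derivationtrees}, Lemma~\ref{lem:threecases}), one can check in exponential time whether $\Amc\models Q_1(\vec a)$ and whether $\Amc\not\models Q_2(\vec a)$ purely from this guessed data, since all relevant matches live in $\Amc|_{\le|q_1|}$ and the anonymous trees below it, which are determined by $\Tmc_i$ and the local concept types. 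It then remains to verify that the guess is \emph{realizable}: that $\Amc|_{\le|q_1|}$ extends to a full pseudo tree $\Sigma$-ABox $\Amc$, consistent with both $\Tmc_1$ and $\Tmc_2$, by attaching tree-shaped ABoxes of outdegree at most $|\Tmc_1|$ at the individuals on level $|q_1|$, in such a way that the entailed concept sets at each individual of $\Amc|_{\le|q_1|}$ are exactly the guessed $S_i(a)$, \emph{and} that no new match of $q_2$ at $\vec a$ is created (non-entailment must be preserved by the extension). Because $q_2$ is rooted and $\vec a$ is in the core, no such new match can touch the attached trees below level $|q_1|$ anyway, so only consistency and the exactness of the $S_i(a)$ need to be maintained during the extension. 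This realizability test is a reachability-type question about which concept-name sets can be produced at a cut point by a tree-shaped ABox extension, and can be decided either by a fixpoint computation over the (exponentially many) candidate cut-types or by a small two-way alternating tree automaton running on the attached trees, whose emptiness is checked in exponential time — hence still within a single nondeterministic exponential-time bound overall.

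\textbf{Main obstacle.} The delicate point is the interaction between the two TBoxes at the boundary: the attached trees must \emph{simultaneously} produce exactly $S_1(a)$ with respect to $\Tmc_1$ and exactly $S_2(a)$ with respect to $\Tmc_2$ at each level-$|q_1|$ individual $a$, while keeping the whole ABox consistent with both. With inverse roles present, concept entailment propagates both up and down the trees, so a naive per-subtree guess does not suffice; this is precisely the kind of subtlety handled via transfer sequences in the proof of Theorem~\ref{lem:witabox}. I expect to reuse that transfer-sequence bookkeeping, or equivalently encode the admissible tree extensions as (the input trees of) a polynomially-stated TWAPA that tracks the derivation history across the cut, so that realizability reduces to TWAPA nonemptiness. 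Getting this automaton (or the fixpoint) to respect \emph{exactness} of the $S_i(a)$ — not merely $\supseteq$ — while also certifying consistency with $\Tmc_1$ and $\Tmc_2$ is the technically heaviest step; once it is in place, the overall bound follows by combining the exponential-size guess of $\Amc|_{\le|q_1|}$ with an exponential-time verification, giving membership in $\coNExpTime$ for $Q_1\subseteq Q_2$ (equivalently, $\NExpTime$ for non-containment). The FO-rewritability case is then obtained by the same scheme applied to the $k$-entailment witnesses of Theorem~\ref{lem:witabox}, as sketched in the body of the paper: one additionally guesses the ``truncated'' concept sets coming from $\Amc|_{\le k}$ and checks $\Amc|_{\le k}\not\models Q(\vec a)$ alongside $\Amc\models Q(\vec a)$, with $k$ the (doubly exponential, but only its logarithm is needed) bound from Theorem~\ref{lem:witabox}.
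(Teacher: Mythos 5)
Your proposal follows essentially the same route as the paper: guess the initial portion of the pseudo tree witness up to bounded depth together with the entailed concept assertions for each TBox, check the two query conditions on this exponential-size data, and certify realizability of the missing subtrees via transfer-sequence bookkeeping and TWAPA emptiness at the frontier individuals. The only slip is that the cut must be taken at depth $\max(|q_1|,|q_2|)$ rather than $|q_1|$, since a match of $q_2$ rooted at the core can reach ABox individuals down to depth $|q_2|$; with that correction the argument coincides with the paper's three-step procedure.
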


Let $\Tmc_1$ and $\Tmc_2$ be $\mathcal{ELIHF}_\bot$ TBoxes, let $\Sigma$ be an ABox signature, and let $q_1$ and $q_2$
be rooted CQs. 
We recall that by Proposition \ref{prop:conttree}, $(\Tmc_1,\Sigma, q_{1}) \not \subseteq (\Tmc_2,\Sigma, q_{2})$
iff there is a pseudo
tree $\Sigma$-ABox \Amc of outdegree at most $|\Tmc_1|$ and width at
most~$|q_{1}|$ that is consistent with both $\Tmc_1$ and $\Tmc_2$ and a
tuple \abf from the core of \Amc such that $\Tmc_1, \Amc \models q_1(\vec{a})$ and $\Tmc_2, \Amc \not \models q_2(\vec{a})$.
To test for the existence of such a witness ABox and tuple, we proceed as follows. \smallskip

\noindent\textbf{Step 1} Guess the following:
\begin{itemize}
\item pseudo tree $\Sigma$-ABox $\Amc_\mn{init}$ whose core is bounded by $|q_1|$, whose outdegree is bounded by $|\Tmc_1|$, 
whose depth is bounded by $m_q= \mathsf{max}(|q_1|, |q_2|)$ (with $\Umc_q$ the set of individuals that are at distance 
exactly $m_q$ from the core)
\item tuple $\vec{a}$ of individuals from the core of $\Amc$, of the same arity as $q_1$ and $q_2$ 
\item ABoxes $\cabox_1$ and $\cabox_2$ such that $\Amc \subseteq \cabox_{i}$ and
$\cabox_i \setminus \Amc \subseteq \{B(a) \mid a \in \mn{Ind}(\Amc), B \in \NC\}$, for $i \in \{1,2\}$
\item two `global' candidate transfer sequences $\mathcal{Y}_0^1, \ldots, \mathcal{Y}^1_{N_1}$ and $\mathcal{Y}^2_0, \ldots, \mathcal{Y}^2_{N_2}$ 
for $(\Umc_q, \Tmc_1)$ and $(\Umc_q, \Tmc_2)$ respectively (precise definition given later)
\end{itemize}

\noindent Intuitively, the guessed ABox $\Amc_\mn{init}$ is the initial portion of a witness for non-containment,
obtained by restricting the witness ABox to individuals within distance $m_q=\mathsf{max}(|q_1|, |q_2|)$ of the core, 
and $\vec{a}$ is a tuple witnessing the non-containment. 
The ABox $\cabox_{i}$ ($i \in \{1, 2\}$) enriches $\Amc_\mn{init}$ with the concept 
assertions over $\mn{Ind}(\Amc)$ that are entailed from the full witness ABox and the TBox $\Tmc_i$.
To keep track of the interactions between the guessed part $\Amc_\mn{init}$ and the missing trees, 
we generalize the notion of transfer sequence to 
sets of individuals (rather than a single individual).
The guessed sequence $\mathcal{Y}^{i}_0, \ldots, \mathcal{Y}^{i}_{N_i}$ ($i \in \{1,2\}$) 
corresponds to the transfer sequence of the full witness ABox with respect to the individuals in $\Umc_q$ (occuring at depth $m_q$)
and the TBox~$\Tmc_i$. 

\medskip

\noindent\textbf{Step 2} Verify that:
\begin{itemize}
\item for $i \in \{1,2\}$, $\Bmc_i$ is consistent with $\Tmc_i$
\item $\cabox_1, \Tmc_1 \models q_1(\vec{a})$ and $\cabox_2, \Tmc_2 \not \models q_2(\vec{a})$
\item for $i \in \{1,2\}$, the candidate transfer sequence $\mathcal{Y}^{i}_0, \ldots, \mathcal{Y}^{i}_{N_i}$ is compatible with $(\Amc, \cabox_i)$ 
\end{itemize}
and return no if one of these conditions fails to hold. 
\smallskip

\noindent The second point corresponds to checking that, with respect to the full witness, we have $q_1(\vec{a})$ but not $q_2(\vec{a})$. 
Indeed, since $q_1$ and $q_2$ are rooted, 
we know that query matches only involve individuals that are within distance 
$m_q$
of the core. 
Since $\cabox_i$ contains all concept assertions for these individuals that are entailed w.r.t.\ the full witness ABox,
it can be used in place of the witness.
The compatibility checks in the third item (which will be made precise further) 
will be used to ensure that $\mathcal{Y}^{i}_0, \ldots, \mathcal{Y}^{i}_{N_i}$ 
is the transfer sequence of the full witness ABox w.r.t.\ $\Umc_q$ and $\Tmc_i$. 

\medskip

\noindent\textbf{Step 3} For each 
individual $u \in \Umc_q$, 
construct a tree automaton that checks whether there is 
a tree-shaped ABox $\Amc_u$ rooted at $u$ 
that does not contain any concept assertion $A(u)$ and is such that 
for both $i \in \{1,2\}$, we have:
\begin{itemize}
\item 
 $\mathcal{Y}^{i}_0, \ldots, \mathcal{Y}^{i}_{N_i}$ is compatible with $\Amc_u$ at $u$ w.r.t.\ $\Tmc_i$
\item $\Amc_u \cup \mathcal{Y}^{i}_{N_i}$ is consistent with $\Tmc_i$ 
\item if $\mathsf{func}(r) \in \Tmc_i$ and $r(u,u') \in \Amc$, then $\Amc_u$ does not contain any assertion of the form $r(u,u'')$
\end{itemize}
Return yes if all of these automata are non-empty, 
otherwise return no. \smallskip

\noindent The final step checks that it is possible to construct, for every individual $u$ at depth $m_q$,
a tree-shaped ABox $\Amc_u$, such that the ABox $\Amc_\mn{init} \cup \bigcup_{u \in \Umc_q} \Amc_u$ that is 
obtained by attaching all of these trees to $\Amc_\mn{init}$  yields the full witness ABox (by renaming individuals,
we can assume that $\mn{Ind}(\Amc_\mn{init}) \cap \mn{Ind}(\Amc_u)=\{u\}$ and $\mn{Ind}(\Amc_u) \cap \mn{Ind}(\Amc_{v})=\emptyset$
for $u,v \in $ with $u \neq v$). 
For this to be the case, we need to ensure that 
the tree-shaped ABoxes allow us to infer exactly those concept assertions 
present in the candidate transfer sequence (this is the purpose of the compatibility condition, formalized further). 
We must also ensure that after adding the entailed assertions $\mathcal{Y}^{i}_{N_i}$ to ABox $\Amc_u$,
the resulting ABox is consistent with both TBoxes and 
that no violations of functionality assertions 
are introduced when attaching $\Amc_u$ to $\Amc_\mn{init}$. 

\medskip

In what follows, we provide more details on Steps 2 and 3 of this procedure. 

\subsection{Query entailment checks in Step 2}
We briefly explain how to perform the query entailment checks in Step 2. 
We focus on the first entailment check $\cabox_1, \Tmc_1 \models q_1(\vec{a})$, but the same construction can be used to decide 
whether $\cabox_2, \Tmc_2 \not \models q_2(\vec{a})$. 
The idea is as follows: to decide whether $\cabox_1, \Tmc_1 \models q_1(\vec{a})$, we 
will compute the restriction $\Imc_{\cabox_1, \Tmc}^{q_1}$ of the canonical model $\Imc_{\cabox_1, \Tmc}$
to the ABox individuals in $\Bmc$ and the new domain elements that are 
within distance $|q_1|$ of one of these individuals. Then it suffices to iterate over all (exponentially many) 
mappings $\pi$  from the variables of $q_1$ into $\Delta^{\Imc_{\Bmc, \Tmc}^{q_1}}$
and to check if one of these mappings is a match for the query. 

We sketch how to construct the interpretation $\Imc^{q_1}_{\cabox_1,\Tmc}$ in exponential time. 
For convenience, we adopt the ABox representation of interpretations. 
We first include all concept and role assertions that are entailed from $\cabox_1,\Tmc$; such assertions can be computed in 
exponential time (e.g., by applying the modified closure rules from Appendix \ref{canmod-def}). 
Next, for each individual $a \in \mn{Ind}(\cabox_1)$, we let $C_a$ be the conjunction of concepts $A$
such that $A(a) \in \cabox_1$.  To determine which successors we need to connect to $a$, 
we compute all axioms of the form $C_a \sqsubseteq \exists R. D$, 
where $R$ is a conjunction of roles from $\mn{sig}(\Tmc)$ and $D$ is a conjunction of concept names from $\mn{sig}(\Tmc)$. 
We keep only the `strongest' such axioms, i.e.\ those for which there does not exist an entailed axioms $C_a \sqsubseteq \exists R'. D'$ 
where $R'$ (resp.\ $D'$) contains a superset of role (resp.\ concept) names, and at least one of these superset relationships is strict. 
It is not hard to show that there can be at most $|\Tmc|$ strongest axioms, one for each existential restriction on the right-hand side 
of an inclusion in $\Tmc$. If $C_a \sqsubseteq \exists R. D$ is a strongest entailed axiom, and 
there is no $b \in \mn{Ind}(\Amc)$ such that $\Amc, \Tmc \models R(a,b)$
and $\Amc, \Tmc \models D(b)$, then we pick a fresh individual $c$ and add the following assertions:
$
\{r(a,c) \mid r \in R\} \cup \{A(c) \mid A \in D\} 
$. 
For each of the newly introduced individuals, we proceed in exactly the same manner to construct its successors, 
stopping when an individual has no successors, or when the individual occurs at distance $|q|$ from one of the original individuals. 
Since the number of successors of an individual is bounded by $|\Tmc|$, and we stop producing successors at depth $|q|$, we only introduce exponentially 
many individuals. Moreover, to decide which successors to add (and which concepts and roles they should satisfy), 
we perform at most exponentially many entailment checks, and every
such check can be performed in exponential time. 

\subsection{Transfer sequences for frontier individuals}
We next formally introduce the generalized notion of transfer sequence, as well as the candidate transfer sequences that we guess in Step 1. 

Consider an arbitrary pseudo tree ABox $\Amc$. 
We call a set of individuals $\{u_1, \ldots, u_\ell\} \subseteq \mn{Ind}(\Amc)$ a \emph{valid frontier for $\Amc$}
if there do not exist $u_i \neq u_j$ such that $u_i$ is a descendant of $u_j$
in one of the tree-shaped ABoxes of $\Amc$. 
If $\Umc=\{u_1, \ldots, u_\ell\}$ is a valid frontier for $\Amc$, then we use $\Amc_{\Umc}^{\uparrow}$ 
to denote the ABox obtained from $\Amc$ by dropping the subtrees
$\Amc_{u_1}^{\downarrow}$, $\ldots $, $\Amc_{u_\ell}^{\downarrow}$ from $\Amc$,  
excepting the individuals $u_1, \ldots, u_\ell$.  Slightly abusing notation, 
we will extend the notation ${\sf AT}_{\Amc}^{\vdash}$ 
to sets of individuals as follows:  
$$
{\sf AT}_{\Amc, \Tmc}^{\vdash}(\Umc) := \{A(u) \mid u \in \Umc, A(u) \in 
\Amc^{c}_{\Tmc}\}. 
$$
(Note that we add $\Tmc$ to the subscript to make clear which TBox was used to complete $\Amc$.)
%

If $\Umc=\{u_1, \ldots, u_\ell\}$ is a valid frontier for $\Amc$,
then the \emph{transfer sequence $\Xmc_{0},\Xmc_{1}, \ldots$ 
of $(\Amc,\Umc)$ \wrt $\Tmc$} 
is defined inductively as follows: 
\begin{itemize}
  \item $\Xmc_{0}=  {\sf AT}_{\Amc^{0}}^{\vdash}(\Umc)$, where $\Amc^{0}= \Amc_{\Umc}^{\uparrow}$;
  \item $\Xmc_{1} = {\sf AT}_{\Amc^{1}}^{\vdash}(\Umc)$, where $\Amc^{1} = 
  \bigcup_{u \in \Umc} \Amc_{u}^{\downarrow} \cup 
  \Xmc_{0}$;
  \item for $i\geq 0$, $\Xmc_{2i+2}= {\sf AT}_{\Amc^{2i+2}}^{\vdash}(\Umc)$, where 
  $\Amc^{2i+2}= \Amc^{2i} \cup 
  \Xmc_{2i+1}$ (equivalently: $\Amc^{2i+2}= \Amc_{\Umc}^{\uparrow} \cup 
  \Xmc_{2i+1}$);
  \item for $i\geq 1$, $\Xmc_{2i+1}= {\sf AT}_{\Amc^{2i+1}}^{\vdash}(\Umc)$, where 
  $\Amc^{2i+1}= \Amc^{2i-1} \cup 
  \Xmc_{2i}$ (equivalently: $\Amc^{2i+1}= \bigcup_{u \in \Umc} \Amc_{u}^{\downarrow} \cup 
  \Xmc_{2i}$).
\end{itemize}

\smallskip

An analogue of Lemma \ref{lem:transfer1} can be shown:

\begin{lemma}\label{gen-ts-lemma}
Let $N= (|\Umc| \cdot |\mn{sig}(\Tmc)|) +1$. Then $\Xmc_{N}= \Xmc_{N'}$ 
 for all $N'>N$ and $(\Amc^{N-1})^c_\Tmc \cup (\Amc^{N})^c_\Tmc = \Amc_{\Tmc}^{c}$. 
\end{lemma}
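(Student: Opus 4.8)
\medskip\noindent\textbf{Proof plan.}
The plan is to mirror the proof of Lemma~\ref{lem:transfer1}, the only genuinely new ingredient being the counting bound, which must now accommodate the $|\Umc|$ individuals of the frontier in place of a single individual. Throughout I assume, as elsewhere, that $\Amc$ is consistent with $\Tmc$, so that completions $\cdot^c_\Tmc$ are defined and Lemma~\ref{lem:complabox} applies.

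First I would record two elementary facts about the sequences $\Xmc_0,\Xmc_1,\dots$ and $\Amc^0,\Amc^1,\dots$. \emph{Monotonicity}: $\Xmc_i\subseteq\Xmc_{i+1}$ for all $i\ge 0$, because in each defining recurrence $\Xmc_i$ occurs literally as a set of assertions inside the ABox $\Amc^{i+1}$ with $\Xmc_{i+1}={\sf AT}^{\vdash}_{\Amc^{i+1}}(\Umc)$ (e.g.\ $\Amc^{2i+1}=\bigcup_{u\in\Umc}\Amc_u^{\downarrow}\cup\Xmc_{2i}$, $\Amc^{2i+2}=\Amc_\Umc^{\uparrow}\cup\Xmc_{2i+1}$), hence every $A(u)\in\Xmc_i$ lies in $(\Amc^{i+1})^c_\Tmc$ and so in $\Xmc_{i+1}$. \emph{Stabilisation}: if $\Xmc_j=\Xmc_{j+1}$ for some $j$, then $\Xmc_{j'}=\Xmc_j$ for all $j'\ge j$; here I would note that $\Amc^{j+2}=\Amc^{j}\cup\Xmc_{j+1}=\Amc^{j}\cup\Xmc_j$ and that every assertion of $\Xmc_j={\sf AT}^{\vdash}_{\Amc^{j}}(\Umc)$ is already entailed by $\Amc^{j}$ and $\Tmc$ (Lemma~\ref{lem:complabox}), so $(\Amc^{j+2})^c_\Tmc=(\Amc^{j})^c_\Tmc$ and thus $\Xmc_{j+2}=\Xmc_j$; the same computation applied to $\Amc^{j+3}=\Amc^{j+1}\cup\Xmc_{j+2}$ gives $\Xmc_{j+3}=\Xmc_{j+1}=\Xmc_j$, and one proceeds by induction. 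The first claim of the lemma then follows by pigeonhole: each $\Xmc_i$ consists of assertions $A(u)$ with $u\in\Umc$ and $A$ a concept name of $\Tmc$, of which there are at most $|\Umc|\cdot|\mn{sig}(\Tmc)|$, so the chain $\Xmc_0\subseteq\Xmc_1\subseteq\cdots\subseteq\Xmc_N$ of length $N+1=|\Umc|\cdot|\mn{sig}(\Tmc)|+2$ cannot be strictly increasing throughout; hence $\Xmc_j=\Xmc_{j+1}$ for some $j\le N-1$, and stabilisation gives $\Xmc_{N'}=\Xmc_j=\Xmc_N$ for all $N'>N$.

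For the second claim I would set $\Xmc:=\Xmc_N=\Xmc_{N-1}$ (the latter equality also follows from the pigeonhole bound, since the chain has already become constant by index $N-1$) and introduce the two ``stable'' ABoxes $\Amc^{\uparrow}:=\Amc_\Umc^{\uparrow}\cup\Xmc$ and $\Amc^{\downarrow}:=\bigcup_{u\in\Umc}\Amc_u^{\downarrow}\cup\Xmc$. Up to adding assertions that are already entailed—which by Lemma~\ref{lem:complabox} does not change the completion—$\Amc^{N-1}$ and $\Amc^N$ are, in some order, exactly $\Amc^{\uparrow}$ and $\Amc^{\downarrow}$: for $\Amc^N$ this is immediate as $\Xmc_N=\Xmc$, and for $\Amc^{N-1}$ one adds $\Xmc=\Xmc_{N-1}={\sf AT}^{\vdash}_{\Amc^{N-1}}(\Umc)\supseteq\Xmc_{N-2}$, so that $(\Amc^{N-1})^c_\Tmc=(\Amc^{\uparrow})^c_\Tmc$ or $(\Amc^{\downarrow})^c_\Tmc$; moreover ${\sf AT}^{\vdash}_{\Amc^{\uparrow}}(\Umc)={\sf AT}^{\vdash}_{\Amc^{\downarrow}}(\Umc)=\Xmc$. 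It thus suffices to show $(\Amc^{\uparrow})^c_\Tmc\cup(\Amc^{\downarrow})^c_\Tmc=\Amc^c_\Tmc$. The inclusion ``$\subseteq$'' is the easy direction: by induction on $i$, every assertion of $\Amc^i$ is entailed by $\Amc$ and $\Tmc$ (the $\Amc_\Umc^{\uparrow}$ and $\Amc_u^{\downarrow}$ parts are subsets of $\Amc$, the $\Xmc_{<i}$ parts are entailed by induction hypothesis and Lemma~\ref{lem:complabox}), so $\Amc^i\subseteq\Amc^c_\Tmc$ and hence $(\Amc^i)^c_\Tmc\subseteq\Amc^c_\Tmc$ by monotonicity and idempotency of the completion operation. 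For ``$\supseteq$'' I would show that $\Emc:=(\Amc^{\uparrow})^c_\Tmc\cup(\Amc^{\downarrow})^c_\Tmc$ contains $\Amc$ and is closed under all completion rules, so $\Emc^c_\Tmc=\Emc$ and therefore $\Amc^c_\Tmc\subseteq\Emc$ by monotonicity.

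The main obstacle is precisely this closure check, and it is where the pseudo tree structure enters. The point is that $\Amc=\Amc_\Umc^{\uparrow}\cup\bigcup_{u\in\Umc}\Amc_u^{\downarrow}$, that these parts overlap only in the individuals of $\Umc$, and that every role assertion of $\Amc$ lies entirely within one of the two parts (an edge incident to some $u\in\Umc$ belongs to $\Amc_\Umc^{\uparrow}$ if it goes to the parent of $u$ and to $\Amc_u^{\downarrow}$ if it goes to a child of $u$). Consequently every completion rule either acts locally at a single individual or along a single adjacent pair, and in the second case both endpoints lie on the same side, so the rule can be simulated inside $(\Amc^{\uparrow})^c_\Tmc$ or inside $(\Amc^{\downarrow})^c_\Tmc$ and its conclusion is already present in $\Emc$; the only place the two sides interact is at the frontier individuals $u\in\Umc$, but there the restrictions of $(\Amc^{\uparrow})^c_\Tmc$ and $(\Amc^{\downarrow})^c_\Tmc$ to $u$ coincide—both equal $\Xmc$ restricted to $u$—so no rule fired by combining information from the two sides can produce a new concept assertion at $u$. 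This yields $\Emc^c_\Tmc=\Emc$ and hence $(\Amc^{N-1})^c_\Tmc\cup(\Amc^N)^c_\Tmc=\Emc=\Amc^c_\Tmc$, completing the argument.
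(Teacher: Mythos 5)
Your proof is correct. The first claim (stabilisation) is argued exactly as the paper argues the single-individual case in Lemma~\ref{lem:transfer1}: monotonicity of the chain $\Xmc_0\subseteq\Xmc_1\subseteq\cdots$, the observation that a single coincidence $\Xmc_j=\Xmc_{j+1}$ propagates forever via Lemma~\ref{lem:complabox}, and a pigeonhole count against the at most $|\Umc|\cdot|\mn{sig}(\Tmc)|$ possible assertions $A(u)$ with $u\in\Umc$ and $A$ a concept name of $\Tmc$. For the second claim the paper offers no argument at all: Lemma~\ref{gen-ts-lemma} is dismissed as ``an analogue of Lemma~\ref{lem:transfer1}'', and even the proof of that lemma silently omits the identity $(\Amc^{n-1})^c_\Tmc\cup(\Amc^{n})^c_\Tmc=\Amc^c_\Tmc$. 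Your closure argument fills this gap in the natural way: ``$\subseteq$'' follows because every assertion of every $\Amc^i$ is entailed from $\Amc$ and $\Tmc$, and for ``$\supseteq$'' you exploit that in a pseudo tree ABox every role assertion lies entirely on one side of the frontier, so each completion rule can be simulated inside one of $(\Amc^{\uparrow})^c_\Tmc$, $(\Amc^{\downarrow})^c_\Tmc$ provided the two completions agree on the frontier individuals --- which they do, both restricting there to the stabilised set $\Xmc$ (this equality itself follows from applying the recurrence two more steps and invoking stabilisation). The one point to spell out carefully in a write-up is the identification of $(\Amc^{N-1})^c_\Tmc$ and $(\Amc^{N})^c_\Tmc$ with the completions of your two stable ABoxes: the paper's ``equivalently'' clauses give $\Amc^{N}$ literally as one of them, and for $\Amc^{N-1}$ one adds $\Xmc=\Xmc_{N-1}={\sf AT}^{\vdash}_{\Amc^{N-1}}(\Umc)$ and appeals to Lemma~\ref{lem:complabox} to see the completion is unchanged; you note both points, and the order of the two ABoxes depends only on the parity of $N$, which your ``in some order'' correctly absorbs.
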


By \emph{candidate transfer sequence} for $(\Umc, \Tmc)$ 
we mean a sequence $\Xmc_{0}, \Xmc_{1}, \ldots, \Xmc_{N}$ 
such that $N= (|\Umc| \cdot |\mn{sig}(\Tmc)|) +1 $ and for every $j \geq 0$,
$\Xmc_{j} \subseteq \{A(u_i) \mid A \in \NC \cap \mathsf{sig}(\Tmc), u_i \in \Umc\}$ 
and $\Xmc_{j} \subseteq \Xmc_{j+1}$. In our procedure, we consider 
candidate transfer sequences for $(\Umc_q, \Tmc_1)$ and $(\Umc_q, \Tmc_2)$,
which will terminate by the indices
$N_1 = (|\Umc_q| \cdot |\mn{sig}(\Tmc_1)|) +1$ and $N_2 =(|\Umc_q| \cdot |\mn{sig}(\Tmc_2)|) +1$, respectively. 
Observe that $|\Umc_q| \leq |\Tmc_i|^{m_q}$, so $N_i$ is polynomial in $|\Tmc_i|$ and exponential in 
$\mathsf{max}(|q_1|, |q_2|)$. 

\subsection{Compatibility of candidate transfer sequences} 
Let $\Amc$ be a pseudo tree ABox and $\Bmc \supseteq \Amc$ be an ABox with $\Bmc \setminus \Amc \subseteq \{B(a) \mid a \in \mn{Ind}(\Amc), B \in \NC\}$.
Further let $\Umc 
\subseteq \mn{Ind}(\Amc)$ be a subset of the leaves  of $\Amc$ (i.e.\ individuals occurring in one of the trees  of $\Amc$ but without any successors),
and let
$\Xmc=\Xmc_{0}, \Xmc_{1}, \ldots, \Xmc_{N}$ be a candidate transfer sequence for $(\Umc, \Tmc)$.
We say that $\Xmc$ 
is \emph{compatible with ($\Amc$, $\Bmc$) 
w.r.t.\ $(\Umc, \Tmc)$} iff: 
\begin{enumerate}
\item $\Xmc_{0}=  {\sf AT}_{\Amc^0, \Tmc}^{\vdash}(\Umc)$, where $\Dmc^{0}= \Amc$; 
\item for every  $i\geq 0$ with $2i+2 < N$:\\
$\Xmc_{2i+2}=  {\sf AT}_{\Amc^{2i+2},\Tmc}^{\vdash}(\Umc)$ where $\Dmc^{2i+2}= 
\Amc \cup\Xmc_{2i+1}$;
\item $\Xmc_{N}= \{A(u) \in \Bmc \mid u \in \Umc\}$; 
\item for every $B(a)$ with $a \in \mn{Ind}(\Amc)$ and $B \in \NC$:\newline $B(a) \in \Bmc$ iff $\Tmc, \Amc \cup \Xmc_{N} \models B(a)$ 
\end{enumerate}
Checking compatibility of a candidate transfer sequence w.r.t. a pair of ABoxes 
can be decided in \ExpTime. Indeed, 
all four conditions involve computing the closure of an exponential-sized ABox w.r.t.\ $\Tmc$, which can be done in exponential time. 
Indeed, there are only exponentially many concept assertions that can 
be added, so only exponentially many rule applications are needed to reach the closure,
and finding the next rule to apply involves an exponential number of (\ExpTime) entailment checks.

Next let $u \in \Umc$, and let $\Gmc$ be a tree-shaped ABox with root~$u$. 
We say that $\Xmc$ is \emph{compatible with 
$\Gmc$ at $u$ w.r.t.\ $\Tmc$} if and only if:
\begin{itemize}
\item $\Xmc_{1}(u) = {\sf AT}_{\Dmc^{1}, \Tmc}^{\vdash}(u)$ where $\Dmc^{1} =  \Gmc \cup 
  \Xmc_{0}(u)$
\item for every $i \geq 1$ with $2i+1 \leq N$: $\Xmc_{2i+1}(u)= {\sf AT}_{\Dmc^{2i+1},\Tmc}^{\vdash}(u)$, where 
  $\Dmc^{2i+1}= \Gmc \cup  \Xmc_{2i}(u)$
\end{itemize}
where, slightly abusing notation, we use the notation $\Xmc_i(u)$ to mean the set $\{A(u) \mid A(u) \in \Xmc_i\}$.

\smallskip



\subsection{Automata construction} 
Let $\Amc$ be the guessed pseudo tree ABox, 
let $\Ymc^1$ 
and $\Ymc^2$ 
be the guessed candidate transfer sequences for $(\Umc_q, \Tmc_1)$ and $(\Umc_q, \Tmc_2)$ respectively,
and let $\Umc_q=\{u_1, \ldots, u_{\ell_q}\}$. 

To implement Step 3 of the procedure, we need to construct, for every $1 \leq j \leq \ell_q$, a TWAPA $\Amf_j$
that accepts encodings of tree-shaped ABoxes $\Gmc_j$ with root node $u_j$ 
satisfying the conditions of Proposition \ref{prop-cont-nexp}. 
%
The desired automaton $\Amf_j$ can be obtained by intersecting 
the following automaton:
\begin{itemize}
\item $\Amf_\mn{cons}$ that ensures that the encoded ABox is consistent with 
the TBoxes $\Tmc_1$ and $\Tmc_2$.
\item $\Amf_\mn{funct}$ that ensures that the encoded ABox, when added to the ABox $\Amc$, 
does not violate the functionality assertions in $\Tmc_1$ and $\Tmc_2$.
Specifically, we need to ensure that if $\mathsf{func}(r) \in \Tmc_k$ ($k \in \{1,2\}$)
and $r(u_j,u') \in \Amc$, then the encoded ABox (whose root individual is $u_j$) does not contain any assertion of the form $r(u_j,u'')$.
\item for every $1 \leq 2i+1 \leq N$ and $k \in \{1,2\}$, an automaton $\Amf^{\Ymc^k}_{2i+1}$
that determines whether $\Ymc_{2i+1}^k(u_j)$ is precisely the set of concept assertions about $u_j$
that are entailed from  
$\Tmc$, the encoded ABox, and the assertions in $\Ymc_{2i}^k(u_j)$.
\end{itemize}
Note that the automaton $\Amf^{\Ymc^k}_{2i+1}$ in the third item 
can be constructed by intersecting automata that check 
whether a given concept assertion $A(u_j) \in \Ymc_{2i+1}^k(u_j)$ is entailed 
with those checking that each concept assertion $A(u_j) \not \in \Ymc_{2i+1}^k(u_j)$ 
(with $A \in \NC \cap \mathsf{sig}(\Tmc)$)
is not entailed. Moreover, the automata checking whether a concept 
is not entailed at the root $u_j$ can be obtained by complementing 
the automaton that accepts trees in which the concept is entailed at the root. 

Importantly, because the sets $\Ymc_{i}(u_j)$ 
increase monotonically
and only contain concept assertions about the individual $u_j$, there are only 
\emph{polynomially many} different elements in the set 
$\{ (\Ymc_{2i+1}(u_j), \Ymc_{2i}(u_j))\mid 1 \leq 2i+1 \leq N\}$.
It follows that $\Amf_j$ can be obtained by \emph{intersecting a polynomial 
number of automata}. Moreover, it is not hard to see that each of the
component automata can be \emph{constructed in polynomial time}. 
Since there are (at most) single exponentially many elements in $\Umc_{q}$,
and emptiness of TWAPAs can be tested in single-exponential time, it follows that Step 3 can be
performed in \ExpTime.

\subsection{Correctness of the procedure}

We have already given the main lines of the argument in the overview, so here we concentrate on
the following proposition, which is the key step to establishing correctness. 

\begin{proposition}\label{prop-cont-nexp}
Let $\Amc$ be a pseudo tree ABox, 
let $\Bmc \supseteq \Amc$ be an ABox with $\Bmc \setminus \Amc \subseteq \{B(a) \mid a \in \mn{Ind}(\Amc), B \in \NC\}$ that is consistent with $\Tmc$,
and let $\Umc= \{u_1, \ldots, u_\ell\} \subseteq \mn{Ind}(\Amc)$ be a subset of the leaves in $\Amc$. 
Suppose that
\begin{enumerate} 
\item the candidate transfer sequence $\Xmc=\Xmc_0, \ldots, \Xmc_N$  
 is compatible with $(\Amc, \Bmc)$ w.r.t.\ $(\Umc, \Tmc)$, and 
\item 
there exist tree-shaped ABoxes $\Gmc_1, \ldots, \Gmc_\ell$ such that 
$\mn{Ind}(\Gmc_j) \cap \mn{Ind}(\Gmc_j')=\emptyset$ for every $j \neq j'$
and for every $1 \leq j \leq \ell$:
\begin{itemize}
\item $\mn{Ind}(\Amc) \cap \mn{Ind}(\Gmc_j)=\{u_j\}$,
\item $\Gmc_j$ does not contain any concept assertion $A(u_j)$,
\item $\Gmc_j \cup \Xmc_N$ is consistent with $\Tmc$,
\item $\Amc \cup \Gmc_j$ does not violate any functionality assertion in $\Tmc$,
\item $\Gmc_j$ is compatible with $\Xmc$ at $u_j$ w.r.t.\ $\Tmc$. 
\end{itemize}
\end{enumerate}
Let $\Amc^* = \Amc \cup \bigcup_{1 \leq j \leq \ell} \Gmc_j$. 
Then: 
\begin{itemize}
\item $\Amc^*$ is consistent with $\Tmc$, 
\item $\Xmc$ is the transfer sequence for $(\Amc^*, \{u_1, \ldots, u_\ell\})$, and 
\item $\Tmc, \Amc^* \models A(a)$ iff $A(a) \in \Bmc$ (for $a \in \mn{Ind}(\Amc)$, $A \in \NC$) 
\end{itemize}
\end{proposition}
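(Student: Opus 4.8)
The plan is to prove the three conclusions in sequence, using the compatibility conditions to pin down exactly which concept assertions the attached trees can contribute, and then invoke the properties of canonical models (Lemma~\ref{lem:canmodelproperties}, Lemma~\ref{lem:complabox}) together with the definition of transfer sequences. The central technical device is to compare the completion sequence of $\Amc^*$ with the interleaved ABoxes $\Amc^{2i}$ and $\Dmc^{2i+1}$ appearing in the two compatibility conditions. Intuitively, $\Amc^*$ is built from the ``upward'' part $\Amc$ and the ``downward'' parts $\Gmc_1,\dots,\Gmc_\ell$, and derivations of concept names at an individual alternate between information flowing up into the $\Gmc_j$'s and information flowing back down; the transfer sequence $\Xmc$ records precisely the fixed point of this alternation at the frontier $\Umc$.

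\smallskip

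\textbf{Step 1: $\Xmc$ is the transfer sequence of $(\Amc^*,\Umc)$.} First I would show, by induction on $j$, that the $j$-th element $\Xmc_j$ of the guessed sequence coincides with the $j$-th element of the genuine transfer sequence of $(\Amc^*,\Umc)$ w.r.t.\ $\Tmc$. The base case $\Xmc_0 = {\sf AT}^{\vdash}_{\Amc}(\Umc)$ follows from compatibility condition~1 together with the observation that $(\Amc^*)^{\uparrow}_{\Umc} = \Amc$, since the $\Gmc_j$'s are exactly the subtrees rooted at the $u_j$. For the even steps, condition~2 gives $\Xmc_{2i+2} = {\sf AT}^{\vdash}_{\Amc\cup\Xmc_{2i+1}}(\Umc)$, which matches the definition of the transfer sequence because $(\Amc^*)^{2i+2} = (\Amc^*)^{\uparrow}_{\Umc}\cup\Xmc_{2i+1} = \Amc\cup\Xmc_{2i+1}$. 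For the odd steps, the per-tree compatibility condition (``$\Gmc_j$ is compatible with $\Xmc$ at $u_j$'') gives $\Xmc_{2i+1}(u_j) = {\sf AT}^{\vdash}_{\Gmc_j\cup\Xmc_{2i}(u_j)}(u_j)$; since the $\Gmc_j$'s share only the (concept-assertion-free) roots with $\Amc$ and are pairwise disjoint, derivations at $u_j$ inside $\bigcup_j \Gmc_j \cup \Xmc_{2i}$ depend only on $\Gmc_j$ and $\Xmc_{2i}(u_j)$, so taking the union over $j$ recovers $\Xmc_{2i+1} = {\sf AT}^{\vdash}_{(\Amc^*)^{2i+1}}(\Umc)$. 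The inductive step needs the two ``equivalently'' reformulations of the transfer-sequence definition from the generalized definition, together with Lemma~\ref{lem:complabox}. Finally, stabilization at index $N$ is exactly Lemma~\ref{gen-ts-lemma}.

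\smallskip

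\textbf{Step 2: consistency and the entailment characterization.} For consistency of $\Amc^*$ with $\Tmc$, I would use Lemma~\ref{gen-ts-lemma} in the form $(\Amc^{*,N-1})^c_\Tmc \cup (\Amc^{*,N})^c_\Tmc = (\Amc^*)^c_\Tmc$, so that a derivation of $\bot$ in $\Amc^*$ would live entirely within one of the two ABoxes $\Amc^{*,N-1}$ or $\Amc^{*,N}$. The ABox $\Amc^{*,N} = \bigcup_j \Gmc_j \cup \Xmc_N$ splits into the pairwise-disjoint consistent pieces $\Gmc_j\cup\Xmc_N$ (consistent by hypothesis; note $\Xmc_N(u_j)\subseteq\Xmc_N$ restricted to $u_j$, and derivations in a disjoint union are local to components), and $\Amc^{*,N-1} = \Amc\cup\Xmc_N$, which is consistent by compatibility condition~3 (which says $\Xmc_N = \{A(u)\in\Bmc\mid u\in\Umc\}$) together with consistency of $\Bmc$ with $\Tmc$ and the fact that $\Amc\cup\Xmc_N \subseteq \Bmc$ as far as derivations are concerned. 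One also checks no functionality assertion is violated: conflicts could only arise at the roots $u_j$, and these are ruled out by the hypothesis that $\Amc\cup\Gmc_j$ violates no functionality assertion. For the last bullet, $\Tmc,\Amc^*\models A(a)$ iff $A(a)\in\Bmc$: the ``$\Leftarrow$'' direction follows since compatibility condition~4 gives $\Bmc$-membership equivalent to $\Tmc,\Amc\cup\Xmc_N\models A(a)$, and any model of $\Amc^*$ and $\Tmc$ is in particular (after restriction) essentially a model of $\Amc\cup\Xmc_N$ because $\Xmc_N$ consists exactly of the entailed assertions at the frontier by Step~1. The ``$\Rightarrow$'' direction uses the same splitting: by Lemma~\ref{gen-ts-lemma} a derivation tree for $A(a)$ with $a\in\mn{Ind}(\Amc)$ can be confined to $\Amc\cup\Xmc_N$, and then condition~4 converts the resulting entailment into $A(a)\in\Bmc$.

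\smallskip

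\textbf{Main obstacle.} The delicate point is making the ``derivations confine to one side'' arguments rigorous in the presence of inverse roles and functionality: a derivation tree for a concept at an ABox individual can zig-zag up and down across the boundary between $\Amc$ and the $\Gmc_j$'s many times, and it is precisely the alternating structure of transfer sequences (and the $2i$/$2i+1$ bookkeeping) that bounds this. I expect most of the work to go into the inductive proof in Step~1 that the guessed $\Xmc$ really is the transfer sequence, because all of Step~2 then reduces cleanly to the stabilization lemma plus the compatibility conditions. The functionality handling is a routine but necessary side condition, localized at the frontier individuals.
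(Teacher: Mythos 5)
Your Step~1 (the inductive identification of $\Xmc$ with the transfer sequence of $(\Amc^*,\Umc)$) and your treatment of the third bullet match the paper's argument closely: same induction on the index, same use of $(\Amc^*)^\uparrow_\Umc=\Amc$, of the disjointness of the $\Gmc_j$, and of the splitting $(\Dmc_{N-1})^c_\Tmc\cup(\Dmc_N)^c_\Tmc=(\Amc^*)^c_\Tmc$ together with compatibility condition~4. These parts are fine.

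The gap is in your consistency argument. You derive consistency of $\Amc^*$ from Lemma~\ref{gen-ts-lemma} applied to $\Amc^*$, i.e.\ from $(\Amc^{*,N-1})^c_\Tmc\cup(\Amc^{*,N})^c_\Tmc=(\Amc^*)^c_\Tmc$, arguing that a derivation of $\bot$ must then live in one of the two consistent halves. But the completion $(\Amc^*)^c_\Tmc$ and the completion sequence it is built from are only defined under the standing assumption that the ABox is consistent with $\Tmc$ (this is explicit in the canonical-model section), so invoking that decomposition to \emph{establish} consistency is circular. Moreover, the place where this matters is not only $\bot$-derivations but functionality: the hypothesis that $\Amc\cup\Gmc_j$ violates no functionality assertion rules out clashes between ABox edges at $u_j$, but it does not rule out the situation where the canonical model of $\Amc\cup\Xmc_N$ and the canonical model of $\Gmc_j\cup\Xmc_N(u_j)$ \emph{each} create an anonymous $r$-successor of $u_j$ for a functional $r$; naively superposing the two pieces then yields a structure that is not a model of $\Tmc$. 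The paper's proof handles exactly this by an explicit construction: it forms the union $\Jmc$ of the canonical models $\Imc_{\Amc^+,\Tmc}$ and $\Imc_{\Gmc_j^+,\Tmc}$, then prunes, for each $u_j$ and each functional role, the redundant anonymous witness subtree (obtaining $\Jmc^-$), and verifies that $\Jmc^-$ is a model of $\Amc^*$ and $\Tmc$. Your proof needs either this model construction or a reworking of the transfer-sequence/derivation machinery that tracks $\bot$ and functionality on possibly inconsistent ABoxes; as written, the consistency step does not go through.
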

%
%
\begin{proof}
Let $\Amc, \Bmc, \Gmc_1, \ldots, \Gmc_\ell, \Umc, \Tmc, \Xmc$ be as in the statement. 

To show consistency of $\Amc^*$ with $\Tmc$,  first note that $\Amc^*$ does not violate 
any functionality assertions in $\Tmc$, since each of the ABoxes $\Amc, \Gmc_1, \ldots, \Gmc_\ell$
is consistent with $\Tmc$, there is no individual shared by two different 
$\Gmc_j$, 
and by assumption, for every $1 \leq j \leq \ell$, the ABox $\Amc \cup \Gmc_j$ does not violate any functionality assertion in $\Tmc$. 

Let $\Amc^+ = \Amc \cup \Xmc_N$, and  let $\Gmc_j^+ = \Gmc_j \cup \Xmc_N(u_j)$ for $1 \leq j \leq \ell$. 
We know that each of the ABoxes $\Amc^+, \Gmc_1^+, \ldots, \Gmc_\ell^+$ is consistent with $\Tmc$, and hence possesses a canonical model. 
We let $\Imc_{\Amc^+,\Tmc}$ and $\Imc_{\Gmc_j^+,\Tmc}$ be the canonical models for $\Amc^+, \Tmc$ (resp.\ $\Gmc_j^+, \Tmc$), as defined in Appendix \ref{canmod-def}. 
Without loss of generality, we may assume that $\Delta^{\Imc_{\Gmc_j,\Tmc}} \cap \Delta^{\Imc_{\Gmc_k,\Tmc}}=\emptyset$
for every $j\neq k$, and that $\Delta^{\Imc_{\Amc,\Tmc}} \cap \Delta^{\Imc_{\Gmc_j,\Tmc}}=\emptyset$ for every $1 \leq j \leq \ell$. 
We recall that these interpretations can be seen as adding to the original ABox all entailed assertions about the ABox individuals, and additionally attaching weakly tree-shaped interpretations to each of the ABox individuals in order to witness the existential restrictions on the right-hand side of TBox axioms. 
For every $u_j \in \Umc$, we let $\Imc_j^\Amc$ (resp.\ $\Imc_j^\Gmc$) be the weakly tree-shaped interpretation that is attached to the individual $u_j$ in $\Imc_{\Amc,\Tmc}$
(resp.\ $\Imc_{\Gmc_k,\Tmc}$). 
Define the interpretation $\Jmc$ as the union of the interpretations $\Amc^+, \Gmc_1^+, \ldots, \Gmc_\ell^+$: 
\begin{itemize}
\item $\Delta^\Jmc = \Delta^{\Imc_{\Amc,\Tmc}} \cup \bigcup_{1 \leq j \leq \ell}  \Delta^{\Imc_{\Gmc_j,\Tmc}}$
\item for every $A \in \NC$: $A^\Jmc = A^{\Imc_{\Amc,\Tmc}} \cup \bigcup_{1 \leq j \leq \ell}  A^{\Imc_{\Gmc_j,\Tmc}}$
\item for every $r \in \NR$: $r^\Jmc = r^{\Imc_{\Amc,\Tmc}} \cup \bigcup_{1 \leq j \leq \ell}  r^{\Imc_{\Gmc_j,\Tmc}}$
\end{itemize}
To satisfy the functionality assertions, we proceed as follows. For every $u_j \in \Umc$ and functional role $R$:
\begin{itemize}
\item If $u_j \in \exists R^{\Imc_{\Amc^+,\Tmc}}$ and there is no $b$ such that $\Amc^+, \Tmc \models R(u_j,b)$, then let $e$ be  the unique element in $\Delta^{\Imc_{\Amc,\Tmc}} $ such that $(u_j, e) \in R^{\Imc_{\Amc,\Tmc}}$. This element must belong to the weakly tree-shaped subinterpretation $\Imc_j^\Amc$. Remove the element $e$ and all of its descendants in $\Imc_j^\Amc$ from $\Delta^\Jmc$. 
\item If $u_j \in \exists R^{\Imc_{\Gmc_j^+,\Tmc}}$ and there is no $b$ such that $\Gmc_j^+, \Tmc \models R(u_j,b)$, then let $e$ be the unique element in $\Delta^{\Imc_{\Amc,\Tmc}} $ such that $(u_j, e) \in R^{\Imc_{\Gmc_j^+,\Tmc}}$. This element must belong to the weakly tree-shaped subinterpretation $\Imc_j^\Gmc$. Remove the element $e$ and all of its descendants in $\Imc_j^\Gmc$ from $\Delta^\Jmc$. 
\end{itemize}
Call the resulting interpretation $\Jmc^-$. We claim that $\Jmc^-$ is a model of $\Amc^*$ and $\Tmc$. First observe that $\Jmc^-$ makes true all ABox assertions in $\Amc^*$, since $\Jmc$ satisfies this property and the modifications that were made to $\Jmc$ only involve elements that did not occur in the original ABoxes.
Because of our modifications, we have resolved all of the violations of functionality axioms that were introduced when combining the interpretations.
It can also be easily seen that axioms of the forms $ A \sqsubseteq \bot$, $ \top  \sqsubseteq A$,  $ B_1 \sqcap B_2 \sqsubseteq A$, and $\exists r . B \sqsubseteq A$
are all satisfied in $\Jmc^-$, since they were satisfied in each of the interpretations $\Imc_{\Amc^+,\Tmc}$, $\Imc_{\Gmc_j^+,\Tmc}$, $\ldots$, $\Imc_{\Gmc_j^+,\Tmc}$. 
Finally, if $e \in A^{\Jmc^-}$ and $A \sqsubseteq \exists r . B \in \Tmc$, then either we have the same witnessing $r$-successor $e'$ as was used in the component interpretation containing the element $e$, or $e \in \Umc$, and we were only allowed to remove $e'$ (and the whole tree-shaped interpretation rooted at $e'$) because in the ABox $\Amc^*$, 
there was an ABox individual that acted as the witnessing $r$-successor. (and which is present in $\Jmc^-$) Thus, $\Jmc^-$ is a model of $\Amc^*, \Tmc$, so $\Amc^*$ is consistent with $\Tmc$. 

\smallskip

We next prove by induction that $\Xmc=\Xmc_0, \ldots, \Xmc_n$ is the transfer sequence for $(\Amc^*, \Umc)$. 
We start by considering the first set in the sequence ($\Xmc_0$). We know that $\Xmc_{0}=  {\sf AT}_{\Amc, \Tmc}^{\vdash}(\Umc)$ since 
$\Xmc$ is compatible with $(\Amc, \Bmc)$ w.r.t.\ $(\Umc, \Tmc)$. 
We then use the fact that, for every $1 \leq j \leq \ell$, the ABox $\Gmc_j$ is such that $\mn{Ind}(\Amc) \cap \mn{Ind}(\Gmc_j)=\{u_j\}$
and does not contain any assertion $A(u_j)$ to infer that $(\Amc^*)^\uparrow_\Umc= \Amc$.
Thus, $\Xmc_0$ is the first element in the transfer sequence for $(\Amc^*, \Umc)$. 

For the second element $\Xmc_1$, we first note that, for every $0 \leq i \leq n$, $\Xmc_i= \bigcup_{1 \leq j \leq \ell} \Xmc_i(u_j)$. 
Further note that for every $1 \leq j \leq \ell$, by the compatibility of $\Xmc$ with $\Gmc_j$ at $u_j$, we have 
$\Xmc_{1}(u_j) = {\sf AT}_{\Dmc^{1}_j, \Tmc}^{\vdash}(u_j)$ where $\Dmc^{1}_j =  \Gmc_j \cup \Xmc_{0}(u_j)$. 
Let $\Dmc^1 = \bigcup_{1 \leq j \leq \ell} \Dmc^1_j$. Since $\mn{Ind}(\Dmc^{1}_j)=\mn{Ind}(\Gmc_j)$ and we know that $\mn{Ind}(\Gmc_j) \cap \mn{Ind}(\Gmc_j')=\emptyset$ for every $j \neq j'$, it follows that $ {\sf AT}_{\Dmc^{1}_j, \Tmc}^{\vdash}(u_j)= {\sf AT}_{\Dmc^{1}, \Tmc}^{\vdash}(u_j)$, and hence that 
$\Xmc_1= {\sf AT}_{\Dmc^{1}, \Tmc}^{\vdash}(\Umc)$. Finally, we note that since $(\Amc^*)^\downarrow_{u_j}=\Gmc_j$, 
we have that $(\Amc^*)^\downarrow_\Umc = \bigcup_{1 \leq j \leq \ell} \Gmc_j$, and thus $\Dmc^1= (\Amc^*)^\downarrow_\Umc \cup \Xmc_0$, 
which shows that $\Xmc_1$ is as desired. 

Next consider an index $1 < 2i+2 \leq N$. As we know that $\Xmc$ is compatible with $(\Amc, \Bmc)$ w.r.t.\ $(\Umc, \Tmc)$, we can infer that
$\Xmc_{2i+2}=  {\sf AT}_{\Dmc^{2i+2},\Tmc}^{\vdash}(\Umc)$ where $\Dmc^{2i+2}= \Amc  \cup\Xmc_{2i+1}$.
Since $(\Amc^*)^\uparrow_\Umc= \Amc$, it follows that $\Xmc_{2i+2}$ is the correct $2i+2$th element in the transfer sequence for 
$(\Amc^*, \Umc)$. 

Finally consider an index $0 < 2i+1 \leq N$. We know that for every $1 \leq j \leq \ell$, the ABox 
$\Gmc_j$ is compatible with $\Xmc$ at $u_j$ w.r.t.\ $\Tmc$, so we have $\Xmc_{2i+1}(u)= {\sf AT}_{\Dmc^{2i+1}_j,\Tmc}^{\vdash}(u)$, where 
  $\Dmc^{2i+1}_j= \Gmc_j \cup  \Xmc_{2i}(u)$. 
Let $\Dmc^{2i+1} = \bigcup_{1 \leq j \leq \ell} \Dmc^{2i+1}_j$. Using the same arguments as for $\Xmc_1$, we can show that 
$\Xmc_{2i+1}= {\sf AT}_{\Dmc^{2i+1}, \Tmc}^{\vdash}(\Umc)$ and that $\Dmc^{2i+1}= (\Amc^*)^\downarrow_\Umc \cup \Xmc_{2i}$, as required. 
  
\smallskip

Now we show the third point. For the right-to-left direction, we note that since $\Xmc$ 
is compatible with ($\Amc$, $\Bmc$) 
w.r.t.\ $(\Umc, \Tmc)$,  we have $\Tmc, \Amc \cup \Xmc_{N} \models B(a)$ for every $B(a) \in \Bmc$. 
Since $\Xmc$ is the transfer sequence for $\Amc^*$  w.r.t.\ $\Tmc$, 
we must have $\Amc^*, \Tmc \models \Xmc_N$, and by definition, $\Amc^*$ contains $\Amc$. 
It follows that $\Tmc, \Amc^* \models B(a)$ for every $B(a) \in \Bmc$. 

For the left-to-right direction, suppose that $\Tmc, \Amc^* \models B(a)$, where 
$a \in \mn{Ind}(\Amc)$ and $B \in \NC$. Then $B(a) \in (\Amc^*)^c_\Tmc$. 
As $\Xmc$ is the transfer sequence for $(\Amc^*, \Umc)$, it follows from Lemma \ref{gen-ts-lemma} that 
we have $(\Amc^*)^c_\Tmc = (\Dmc_{N-1})^c_\Tmc \cup (\Dmc_{N})^c_\Tmc$
where $\Dmc_{N-1}= (\Amc^*)^\downarrow_\Umc \cup \Xmc_{N-1}$ and $\Dmc_{N}= (\Amc^*)^\uparrow_\Umc \cup \Xmc_{N}$.
First suppose that  $B(a) \in (\Dmc_{N})^c_\Tmc$. 
Since $(\Amc^*)^\uparrow_\Umc = \Amc$, we have $\Dmc_{N}= \Amc \cup \Xmc_{N-1}$, so $\Tmc, \Amc \cup \Xmc_{N-1} \models B(a)$.
As $\Xmc_{N-1} \subseteq \Xmc_N$, we also have $\Tmc, \Amc \cup \Xmc_{N-1} \models B(a)$, 
which implies $B(a) \in \Bmc$, due to the fourth condition of compatibility of $\Xmc$ with ($\Amc$, $\Bmc$) 
w.r.t.\ $(\Umc, \Tmc)$. 
Now consider the case in which $B(a) \in (\Dmc_{N-1})^c_\Tmc$, but $B(a) \not \in (\Dmc_{N})^c_\Tmc$.
Since $\Dmc_{N-1}= (\Amc^*)^\downarrow_\Umc \cup \Xmc_{N-2}$, we must have $a \in \Umc$,
and from $B(a) \in (\Dmc_{N-1})^c_\Tmc$, we obtain $B(a) \in \Xmc_{N-1}$.
It follows that $B(a) \in \Dmc^N$, which contradicts our assumption that $B(a) \not \in (\Dmc_{N})^c_\Tmc$.
\end{proof}

\subsection{Upper bound for FO-rewritability}
We aim to prove the following:

\begin{theorem}
FO-rewritability in ($\mathcal{ELIHF}_{\bot}$, rCQ) is in co\textsc{NExpTime}. 
\end{theorem}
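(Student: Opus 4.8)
The plan is to adapt the \coNExpTime\ procedure just developed for containment, deciding \emph{non}-FO-rewritability in \NExpTime. Fix an OMQ $Q=(\Tmc,\Sigma,q)$ with $q$ rooted. Since $q$ has an answer variable it is not Boolean, so Point~1 of Theorem~\ref{lem:witabox} applies: $Q$ is not FO-rewritable iff there is a pseudo tree $\Sigma$-ABox $\Amc$ of outdegree at most $|\Tmc|$ and width at most $|q|$ that is consistent with $\Tmc$, together with a tuple $\vec a$ from its core, such that $\Amc\models Q(\vec a)$ and $\Amc|_{\leq k}\not\models Q(\vec a)$, where $k=|q|+2^{4(|\Tmc|+|q|)^{2}}$ is only \emph{singly} exponential. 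Put $m_q=|q|$. As $q$ is rooted and $\vec a$ lies in the core, any homomorphism from $q$ into a canonical model of $\Amc$ (or of $\Amc|_{\leq k}$) and $\Tmc$ mapping the answer variables to $\vec a$ reaches only individuals at distance at most $m_q$ from the core, plus anonymous elements in the subtrees below them. Hence whether $\Amc\models Q(\vec a)$ depends only on $\Amc|_{\leq m_q}$ and the concept assertions over $\mn{Ind}(\Amc|_{\leq m_q})$ entailed by $\Amc$ and $\Tmc$, and similarly for $\Amc|_{\leq k}$; since $m_q\le k$, the ABoxes $\Amc$ and $\Amc|_{\leq k}$ agree on $\Amc|_{\leq m_q}$.

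The procedure mirrors Steps~1--3 of the containment algorithm, the twist being that we now have a single TBox~$\Tmc$ but two ``versions'' of the witness ABox, namely $\Amc$ and its truncation $\Amc|_{\leq k}$. In \textbf{Step~1} we guess: the initial segment $\Amc_{\mn{init}}=\Amc|_{\leq m_q}$, of singly exponential size since its out-tree has at most $|\Tmc|^{m_q}$ nodes each carrying a polynomial label; the answer tuple $\vec a$ from the core of $\Amc_{\mn{init}}$; the frontier $\Umc_q$ of individuals at depth exactly $m_q$; two ``completion'' ABoxes $\Bmc,\Bmc'$ with $\Amc_{\mn{init}}\subseteq\Bmc$, $\Bmc\setminus\Amc_{\mn{init}}\subseteq\{A(a)\mid a\in\mn{Ind}(\Amc_{\mn{init}}),A\in\NC\}$, and likewise for $\Bmc'$, recording the concept assertions over $\mn{Ind}(\Amc_{\mn{init}})$ entailed by $\Amc,\Tmc$ and by $\Amc|_{\leq k},\Tmc$ respectively; and two candidate transfer sequences $\Xmc=\Xmc_0,\dots,\Xmc_N$ and $\Xmc'=\Xmc'_0,\dots,\Xmc'_N$ for $(\Umc_q,\Tmc)$, intended as the transfer sequences of $\Amc$ and of $\Amc|_{\leq k}$ w.r.t.\ $\Umc_q$ and $\Tmc$. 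All guessed objects are of singly exponential size, so this is a \NExpTime\ guess. In \textbf{Step~2} we verify deterministically in \ExpTime: consistency of $\Amc_{\mn{init}}$, $\Bmc$ and $\Bmc'$ with $\Tmc$; that $\Bmc,\Tmc\models q(\vec a)$ and $\Bmc',\Tmc\not\models q(\vec a)$ (by the query-entailment check of the containment proof: construct the bounded-depth canonical models $\Imc^{q}_{\Bmc,\Tmc}$ and $\Imc^{q}_{\Bmc',\Tmc}$ in exponential time and iterate over the exponentially many candidate matches); and compatibility of $\Xmc$ with $(\Amc_{\mn{init}},\Bmc)$ and of $\Xmc'$ with $(\Amc_{\mn{init}},\Bmc')$ w.r.t.\ $(\Umc_q,\Tmc)$.

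In \textbf{Step~3}, for every $u\in\Umc_q$ we build a TWAPA $\Amf_u$ accepting encodings of tree-shaped ABoxes $\Gmc_u$ rooted at $u$ that contain no assertion $A(u)$ and satisfy: (i)~$\Xmc$ is compatible with $\Gmc_u$ at $u$ w.r.t.\ $\Tmc$ and $\Gmc_u\cup\Xmc_N$ is consistent with $\Tmc$; (ii)~$\Xmc'$ is compatible, at $u$ w.r.t.\ $\Tmc$, with the truncation $\Gmc_u|_{\leq k-m_q}$ of $\Gmc_u$ to depth $k-m_q$, and $\Gmc_u|_{\leq k-m_q}\cup\Xmc'_N$ is consistent with $\Tmc$; and (iii)~no functionality assertion of $\Tmc$ is violated when $\Gmc_u$ is attached to $\Amc_{\mn{init}}$. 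The truncation in~(ii) is handled by annotating each tree node with its depth up to~$k$; because $k$ is only singly exponential, this counter needs $O(\log k)=\mathrm{poly}(|\Tmc|,|q|)$ bits, so $\Amf_u$ remains of polynomial size. As in the containment proof, even though $N$ is exponential, the sets $\Xmc_i(u)$ and $\Xmc'_i(u)$ grow monotonically and mention only concept names at $u$, so only polynomially many distinct compatibility automata arise; hence each $\Amf_u$ is the intersection of polynomially many polynomial-size TWAPAs, and its emptiness can be decided in single-exponential time. Since $|\Umc_q|$ is at most exponential, Step~3 runs in \ExpTime. The procedure accepts (declaring $Q$ not FO-rewritable) iff all checks in Step~2 succeed and every $\Amf_u$ is non-empty; altogether it is a \NExpTime\ procedure, so FO-rewritability in $(\mathcal{ELIHF}_\bot,\text{rCQ})$ is in \coNExpTime.

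Correctness follows from an analogue of Proposition~\ref{prop-cont-nexp} that tracks both $\Amc^*$ and its truncation coherently, where $\Amc^*=\Amc_{\mn{init}}\cup\bigcup_{u\in\Umc_q}\Gmc_u$ with $\Gmc_u\in L(\Amf_u)$ (individuals renamed so the $\Gmc_u$ are pairwise disjoint and meet $\Amc_{\mn{init}}$ only at $u$): applying that proposition once to $(\Amc_{\mn{init}},\Bmc,(\Gmc_u)_u,\Xmc)$ yields that $\Amc^*$ is consistent with $\Tmc$, that $\Xmc$ is the transfer sequence of $(\Amc^*,\Umc_q)$, and that $\Bmc$ captures exactly the concept assertions over $\mn{Ind}(\Amc_{\mn{init}})$ entailed by $\Amc^*,\Tmc$; since $m_q\le k$ we have $\Amc^*|_{\leq k}=\Amc_{\mn{init}}\cup\bigcup_u\Gmc_u|_{\leq k-m_q}$, and a second application to $(\Amc_{\mn{init}},\Bmc',(\Gmc_u|_{\leq k-m_q})_u,\Xmc')$ gives the corresponding statements for $\Amc^*|_{\leq k}$. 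Combined with the query checks of Step~2, this shows $\Amc^*\models Q(\vec a)$ and $\Amc^*|_{\leq k}\not\models Q(\vec a)$, so by Theorem~\ref{lem:witabox} $Q$ is not FO-rewritable; the converse direction is immediate, reading $\Amc_{\mn{init}},\Bmc,\Bmc',\Xmc,\Xmc'$ and the trees $\Gmc_u$ off a genuine witness $\Amc$. \textbf{The main obstacle} is to design $\Amf_u$ so that it simultaneously enforces the ``full-tree'' conditions against $\Xmc$ and the ``truncated-tree'' conditions against $\Xmc'$ while staying polynomial in size; this is precisely where it is essential that, for rooted (hence non-Boolean) queries, the bound $k$ of Theorem~\ref{lem:witabox} is only singly exponential, so the depth counter up to $k$ is affordable, and correspondingly one must lift Proposition~\ref{prop-cont-nexp} to reason about $\Amc^*$ and $\Amc^*|_{\leq k}$ at the same time.
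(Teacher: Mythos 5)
Your proposal matches the paper's own proof essentially step for step: reduce to the existence of a $k_0$-entailment witness via Point~1 of Theorem~\ref{lem:witabox}, guess the initial segment up to depth $|q|$ together with two completion ABoxes and two candidate transfer sequences (one for the full witness, one for its depth-$k_0$ truncation), check the query (non-)entailments on the completions, and use per-frontier-individual tree automata enforcing compatibility with the first sequence on the whole tree and with the second on its truncation. The details you highlight — that the singly exponential bound on $k$ for non-Boolean queries makes the depth counter affordable, and that monotonicity of the transfer sets keeps the number of component automata polynomial — are exactly the points the paper relies on, so the approach is correct and the same.
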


By Lemma \ref{lem:part1}, we know that $(\Tmc,\Sigma, q(\vec{x}))$ is \emph{not} FO-rewritable
iff there exists a $k_{0}$-entailment witness for \Tmc, $\Sigma$, and 
$q(\vec{x})$ of outdegree bounded by $|\Tmc|$ for $k_{0}= |q|+2^{3 m^2}$ where $m=|\Tmc|$.
Thus, it suffices to provide an \NExpTime\ procedure for deciding whether such a witness exists. 

The procedure will be quite similar to the \NExpTime\ procedure for testing non-containment
of rooted queries. In what follows, we outline the main differences. 

In Step 1, the  guessed ABox $\Amc_\mn{init}$ corresponds to initial portion of the $k_{0}$-entailment witness (up to depth $|q|$), 
the tuple $\vec{a}$ is the answer tuple associated with the witness, 
and we take $\Umc_q$ to be the set of individuals that occur in $\Amc_\mn{init}$ at depth $|q|$. 
In place of the ABoxes $\Bmc_1$ and $\Bmc_2$,
we guess two ABoxes $\cabox$ and $\ckabox$, with the former being used for the concept assertions 
involving the individuals in $\Amc_\mn{init}$ that hold in the full entailment witness (i.e. once we have added back the missing trees), 
and the latter containing only those assertions that can be obtained using the entailment witness cut off at depth $k_0$. 
We also guess two candidate transfer sequences $\mathcal{Y}=\mathcal{Y}_0, \ldots, \mathcal{Y}_{N}$ and $\Zmc=\mathcal{Z}_0, \ldots, \mathcal{Z}_{N}$ (with $N= (|\Umc| \cdot |\mn{sig}(\Tmc)|) +1 $),
both with respect to $(\Umc_q, \Tmc)$. The first sequence $\Ymc$ is intended to track concept entailments w.r.t.\ the full entailment witness, and the second is for 
the ABox obtained by restricting the entailment witness to those individuals that occur at depth $k_0$ or less. 

In Step 2, we test whether $\Tmc, \cabox \models q(\vec{a})$ and $\Tmc, \ckabox \not \models q(\vec{a})$. 
We also verify that the first candidate transfer sequence $\Ymc$ is compatible with $(\Amc_\mn{init}, \cabox)$ and the 
second candidate transfer sequence $\Zmc$ is compatible with $(\Amc_\mn{init}, \ckabox)$. 

In Step 3, for each $u_j \in U_q$, we build an automaton $\Amf_j$ that accepts (encodings of) pseudo tree ABoxes $\Gmc_j$ such that:
\begin{itemize}
\item $\mn{Ind}(\Amc_\mn{init}) \cap \mn{Ind}(\Gmc_j)=\{u_j\}$,
\item $\Gmc_j$ is consistent with $\Tmc$,
\item $\Amc_\mn{init} \cup \Gmc_j$ does not violate any functionality assertion in $\Tmc$,
\item $\Gmc_j$ is compatible with $\Ymc$ at $u_j$ w.r.t.\ $\Tmc$,
\item $\Gmc_j|_{\leq k_0 - |q|}$ is compatible with $\Zmc$ at $u_j$ w.r.t.\ $\Tmc$. 
\end{itemize}
Note that in the last item, we cut off $\Gmc_j$ at depth $k_0 - |q|$ so that when we attach it to $\Amc_\mn{init}$ (in which $u_j$ occurs at depth $|q|$), 
we obtain an ABox having depth $k_0$. 

Using similar arguments as for containment, we can show that the modified procedure 
runs in \NExpTime\ and it returns yes just in the case that $(\Tmc,\Sigma, q(\vec{x}))$ is not FO-rewritable. 

\section{Lower bounds}

\subsection{coNExpTime lower bounds for rooted CQs}
\label{app:conexplower}

%

An \emph{(exponential torus) tiling problem} $P$ is a triple
$(T,H,V)$, where $T = \{0,\dots,k\}$ is a finite set of \emph{tile
  types} and $H,V \subseteq T \times T$ represent the \emph{horizontal
  and vertical matching conditions}. An \emph{initial condition} for
$P$ takes the form $c = (c_0,\dots,c_{n-1}) \in T^n$.  A mapping
$\tau: \{0,\dots,2^{n}-1\} \times \{0,\dots,2^{n}-1\} \to T$ is a
\emph{solution} for $P$ given $c$ if for all $x,y < 2^{n}$, the
following holds (where $\oplus_i$ denotes addition modulo $i$):
  \begin{itemize}
  \item if $\tau(x,y) = t_1$ and $\tau(x \oplus_{2^{n}} 1,y) =
    t_2$, then $(t_1,t_2) \in H$
  \item if $\tau(x,y) = t_1$ and $\tau(x,y \oplus_{2^{n}} 1) =
    t_2$, then $(t_1,t_2) \in V$
  \item $\tau(i,0) = c_{i}$ for all $i < n$.
  \end{itemize}
It is well-known that there exists a tiling problem $P=(T,H,V)$ such
that, given an initial condition $c$, it is \NExpTime-complete to
decide whether there exists a solution for $P$ given $c$. For the
following constructions, we fix such a $P$. 

\smallskip

%
\begin{lemma}
\label{lem:nexpbasic} 
Given an input $c$ for $P$ of length $n$, one can construct in
polynomial time an $\ELI$ TBox~$\Tmc_c$, a rooted CQ $q_c(x)$,
and an ABox signature $\Sigma_c$ such that, for a selected concept
name $A^* \notin \Sigma_c$,
\begin{enumerate}

\item  $P$ has a solution given $c$ iff there is a $\Sigma_c$-ABox \Amc and
  an $a \in \mn{Ind}(\Amc)$ such that $\Amc,\Tmc_c \models A^*(a)$
  and $\Amc,\Tmc_c \not\models q_c(a)$;



\item there is an \ELI-concept $C_{q_c}$ such that $d \in C_{q_c}^\Imc$
  implies $\Imc \models q_c(d)$ for all interpretations \Imc and $d
  \in \Delta^\Imc$;

\item $q_c$ is FO-rewritable relative to $\Tmc_c$ and $\Sigma_c$. 

\end{enumerate}
\end{lemma}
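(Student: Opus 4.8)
The statement to prove is Lemma~\ref{lem:nexpbasic}, the basic building block for the \coNExpTime lower bound: from a torus tiling instance $c$ of length $n$, I want to build in polynomial time an \ELI TBox $\Tmc_c$, a rooted CQ $q_c(x)$, and an ABox signature $\Sigma_c$ with the three stated properties. The overall idea, following the template of \cite{DBLP:conf/cade/Lutz08} but with the crucial twist announced in the paper (the torus lives in the \emph{ABox witnessing non-entailment}, not in the anonymous part), is to let $\Sigma_c$-ABoxes encode candidate solutions of the tiling problem as a $2^n \times 2^n$ grid of individuals, each carrying a tile type and an $n$-bit horizontal and $n$-bit vertical coordinate; $\Tmc_c$ will derive $A^*$ at a distinguished individual whenever the encoded structure is genuinely a grid and a correct tiling (so $\Amc,\Tmc_c\models A^*(a)$ witnesses a solution), while $q_c(x)$ is designed to be \emph{matched} (i.e.\ $\Amc,\Tmc_c\models q_c(a)$) precisely when the encoding is \emph{defective} — a missing edge, a coordinate inconsistency, a matching-condition violation, or a wrong initial tile. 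Thus ``$A^*(a)$ derivable but $q_c(a)$ not matched'' is equivalent to the existence of a correct solution, which is Part~1.

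\textbf{Key steps.} First I would fix a propagation mechanism by which a single seed individual, via \ELI concept inclusions $\exists r.A\sqsubseteq A$ (and its inverse-role analogue), forces $A^*$ to travel through the whole encoded grid; combined with local ``well-formedness'' concept names that $\Tmc_c$ can only derive when a cell has both a successor with the right incremented coordinate and consistent tile data, this gives the ``$\Rightarrow$'' direction of Part~1. Second, and this is where the design work lies, I would engineer $q_c(x)$ as a single connected rooted CQ that is a disjunction-free ``error catcher'': because we cannot use UCQs, all the finitely many error patterns (coordinate bit $i$ disagrees across a claimed $H$- or $V$-neighbour, a forbidden pair $(t_1,t_2)\notin H$ or $\notin V$ occurs, initial condition violated) must be folded into one CQ, typically by routing the query through a small ``gadget'' subgraph that $\Tmc_c$ creates in the anonymous part so that the query can non-deterministically select which error to look for — exactly the role of the concept $C_{q_c}$ in Part~2, which is the \ELI-concept encoding ``this element sits at the root of the gadget from which $q_c$ is entailed.'' Third, for the ``$\Leftarrow$'' direction I would argue the contrapositive: if $\Amc$ encodes something that is \emph{not} a correct solution, then some defect is present, hence some error gadget embeds, hence $q_c(a)$ is matched; and conversely a perfect grid admits no such embedding, so $q_c$ is not matched while $A^*$ propagates. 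Fourth, Part~3: with $c$ and hence $\Sigma_c,\Tmc_c$ fixed, the grid has bounded size $2^n\times 2^n$, so query matches of $q_c$ can only reach ABox individuals within a bounded distance of $a$ (the coordinates are $n$ bits, the gadgets have constant depth), hence by the characterization machinery (Theorem~\ref{lem:char}/\ref{lem:witabox} restricted to this fixed instance, or directly by a locality argument) $q_c$ is FO-rewritable relative to $\Tmc_c,\Sigma_c$ — indeed one can exhibit a concrete FO-rewriting enumerating the finitely many error patterns up to the relevant depth.

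\textbf{The main obstacle.} The hard part is packaging \emph{all} error conditions into a \emph{single connected} CQ rather than a UCQ, while keeping $\Tmc_c$ in plain \ELI (no disjunction, no functionality needed here, no inverse roles beyond the propagation trick) and keeping everything polynomial in $n$. The comparison of two $n$-bit coordinates requires, in a CQ, either $n$ explicitly named bit-positions (fine, that is polynomial) threaded into one connected pattern, plus a selector gadget — built by $\Tmc_c$ in the anonymous part — that lets the single query ``choose'' the offending bit and the offending neighbour direction; getting this gadget right so that (a) it is always available in the canonical model, (b) the query embeds into it \emph{iff} a real defect exists, and (c) it does not accidentally embed into a correct grid, is the delicate combinatorial core. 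Once the gadget is designed, Parts~2 and~3 fall out almost immediately: $C_{q_c}$ is just the \ELI-concept describing the gadget's root, and FO-rewritability holds because, with $c$ fixed, only a bounded neighbourhood of $a$ is relevant. I would therefore spend essentially all the effort on the query/gadget construction and treat the propagation of $A^*$, the correctness of the canonical-model reasoning, and the FO-rewritability claim as comparatively routine verifications.
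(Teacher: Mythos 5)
There is a genuine gap, and it concerns the division of labour between the TBox and the query — which is the heart of this lemma. In the paper's construction, the \emph{only} defect that $q_c$ detects by matching into the ABox itself is the global one: two leaves of the structure that represent the \emph{same} grid position but carry \emph{different} tile types. All other defects (coordinate inconsistencies, local violations of $H$/$V$, the initial condition) are detected by $\Tmc_c$, which then derives the concept $C_{q_c}$; the anonymous tree generated by $C_{q_c}$ in the canonical model is merely an escape hatch into which $q_c$ trivially maps. Your proposal inverts this: you route \emph{all} error patterns, including the coordinate-comparison between two cells, through ``a selector gadget built by $\Tmc_c$ in the anonymous part.'' That cannot work, because for the TBox to build such a gadget it would already have to \emph{detect} that two elements at distance $\Theta(n)$ agree on all $2n$ address bits yet disagree on the tile — and a polynomial-size \ELI TBox cannot express this (if it could, the whole problem would be in \ExpTime and the \coNExpTime lower bound would collapse). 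The paper instead makes $q_c$ match \emph{directly into the ABox}: $n$ connected component queries $p_0,\dots,p_{n-1}$, each forcing agreement on one address bit via a path-length argument combined with a complementary $A_i/\overline{A}_i$ labelling of the leaves' predecessors (Figure~\ref{fig:q1}); this labelling is itself enforced by $\Tmc_c$ via the $C_{q_c}$ trick. Your plan omits the one mechanism that makes the reduction possible.

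Two further issues. First, you encode the torus as a flat $2^n\times 2^n$ grid of individuals; a polynomial \ELI TBox cannot verify that such a grid is complete. The paper uses a full binary \emph{tree} of depth $2n$ branching on the address bits (enforced by requiring, at each level $i$, successors labelled $A_i$ and $\overline{A}_i$), so that every position is guaranteed to be represented by some leaf gadget, each gadget locally checks $H$/$V$ against \emph{claimed} right/up neighbours, and the query only has to reconcile multiply-represented positions. Second, your propagation axiom $\exists r.A\sqsubseteq A$ inside $\Tmc_c$ introduces unbounded recursion, which endangers Part~3: the paper's FO-rewriting of $q_c$ exists precisely because, after removing the CIs of the form $D\sqsubseteq C_{q_c}$, the recursion depth of $\Tmc_c$ is bounded by $2n+1$ and every such $D$ mentions only $\Sigma_c$-symbols. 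The axiom $\exists r.A\sqsubseteq A$ belongs to the \emph{subsequent} reduction from Lemma~\ref{lem:nexpbasic} to FO-rewritability hardness (where non-rewritability is exactly what one wants to create), not to $\Tmc_c$ itself.
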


We will now prove the containment and FO-rewritability lower bounds, assuming the previous lemma. 
The proof of the lemma is given in the following subsection. 
\begin{theorem}
  Containment in $(\ELI,\text{rCQ})$ is \coNExpTime-hard. 
\end{theorem}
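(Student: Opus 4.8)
The plan is to reduce the complement of the exponential torus tiling problem $P$, which is \coNExpTime-complete, to containment in $(\ELI,\text{rCQ})$ with a shared TBox. Given an input $c$ of length $n$ for $P$, I would first invoke Lemma~\ref{lem:nexpbasic} to obtain in polynomial time the \ELI TBox $\Tmc_c$, the rooted CQ $q_c(x)$, the ABox signature $\Sigma_c$, and the distinguished concept name $A^* \notin \Sigma_c$. I then set $Q_1 = (\Tmc_c,\Sigma_c,A^*(x))$ and $Q_2 = (\Tmc_c,\Sigma_c,q_c(x))$. Both are OMQs from $(\ELI,\text{rCQ})$: $A^*(x)$ is an atomic query, hence a connected CQ with an answer variable, i.e.\ rooted, and $q_c(x)$ is rooted by Lemma~\ref{lem:nexpbasic}; moreover $Q_1$ and $Q_2$ share the TBox $\Tmc_c$, so the reduction will also witness hardness of the version of the problem in which the two OMQs have the same TBox.

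The heart of the argument is the equivalence ``$P$ has a solution given $c$ iff $Q_1 \not\subseteq Q_2$'', which is essentially just Point~1 of Lemma~\ref{lem:nexpbasic} read through the definition of OMQ containment. Concretely, I would unfold $Q_1 \not\subseteq Q_2$ to: there is a $\Sigma_c$-ABox $\Amc$ consistent with $\Tmc_c$ and some $a \in \mn{Ind}(\Amc)$ with $\Amc,\Tmc_c \models A^*(a)$ and $\Amc,\Tmc_c \not\models q_c(a)$, and observe that this is precisely the condition on the right of Point~1 except for the word ``consistent''. The one point needing care is this discrepancy: an ABox inconsistent with $\Tmc_c$ entails every assertion, so it can never satisfy $\Amc,\Tmc_c \not\models q_c(a)$ and is therefore a witness for neither side, whence the two quantifications coincide. (If \ELI is read without $\bot$, every $\Sigma_c$-ABox is consistent with $\Tmc_c$ and this point vanishes.) From the equivalence, ``$P$ has no solution given $c$'' iff $Q_1 \subseteq Q_2$; since the former is \coNExpTime-complete and the reduction is polynomial, \coNExpTime-hardness of containment in $(\ELI,\text{rCQ})$ follows, even under the shared-TBox restriction.

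Consequently the reduction itself is routine, and all the real difficulty is pushed into Lemma~\ref{lem:nexpbasic}, whose proof comes in the next subsection and which is where the tiling problem is actually encoded into an \ELI TBox together with a rooted CQ. The hard part will therefore lie inside that lemma: designing $\Tmc_c$, $q_c$ and $\Sigma_c$ so that the failure of a $q_c$-match at an $A^*$-individual exactly captures the existence of a correct solution to $P$, while staying within \ELI and keeping $q_c$ rooted --- not the present theorem, which follows immediately.
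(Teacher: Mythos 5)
Your proposal is correct and matches the paper's proof exactly: both reduce from the exponential torus tiling problem by taking $Q_1=(\Tmc_c,\Sigma_c,A^*)$ and $Q_2=(\Tmc_c,\Sigma_c,q_c)$ with the shared TBox from Lemma~\ref{lem:nexpbasic} and invoking its Point~1. Your extra remark reconciling the consistency requirement in the definition of containment with the unrestricted quantification in Point~1 is a sound (and slightly more careful) observation that the paper leaves implicit.
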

\begin{proof}
  Let $c$ be an input to $P$, and let $\Tmc_c$, $q_c(x)$, $\Sigma_c$,
  and $A^*$ be as in Lemma~\ref{lem:nexpbasic}. By Condition~1 
  of Lemma~\ref{lem:nexpbasic}, $(\Tmc_c,\Sigma_c,A^*) \not\subseteq
  (\Tmc_c,\Sigma_c,q_c)$ over $\Sigma_c$-ABoxes iff $P$ has a solution
  given $c$.
\end{proof}

\begin{theorem}
  FO-rewritability in $(\ELI,\text{rCQ})$ is \coNExpTime-hard.
\end{theorem}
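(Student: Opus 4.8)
The plan is to reduce the complement of the \NExpTime-complete tiling problem $P$ (fixed before Lemma~\ref{lem:nexpbasic}) to FO-rewritability in $(\ELI,\text{rCQ})$. Given an input $c$ of length $n$, I would first invoke Lemma~\ref{lem:nexpbasic} to obtain, in polynomial time, $\Tmc_c$, $q_c(x)$, $\Sigma_c$, the concept name $A^*\notin\Sigma_c$, and the $\ELI$-concept $C_{q_c}$ with $d\in C_{q_c}^\Imc\Rightarrow\Imc\models q_c(d)$; the construction will additionally rely on the fact that the $q_c$ produced by the lemma is tree-shaped with its answer variable as root, so that $d\in C_{q_c}^\Imc \Leftrightarrow \Imc\models q_c(d)$. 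From these I assemble a single OMQ $Q_c=(\Tmc,\Sigma,G(x))$, where $s$ is a fresh role name and $R,G$ are fresh concept names, $\Sigma=\Sigma_c\cup\{s,R\}$, $G\notin\Sigma$, and
\[\Tmc \;=\; \Tmc_c \cup \{\, R\sqcap C_{q_c}\sqsubseteq\bot,\ \ R\sqcap A^*\sqsubseteq G,\ \ \exists s.G\sqsubseteq G \,\}.\]
Here $R\sqcap A^*\sqsubseteq G$ and $\exists s.G\sqsubseteq G$ implement a backward propagation of the fresh ``goal'' concept $G$ along $s$-chains, triggered at an ABox-flagged individual ($R$) at which $A^*$ is entailed, while $R\sqcap C_{q_c}\sqsubseteq\bot$ uses the $\bot$-constructor to make an ABox inconsistent as soon as an $R$-flagged individual also satisfies $q_c$. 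Note that $G(x)$ is a rooted CQ, $\Tmc$ is an $\ELI$ TBox (after normalization), and $Q_c$ is computable in polynomial time from $c$. The claim is that $Q_c$ is FO-rewritable iff $P$ has no solution given $c$, which gives the \coNExpTime\ lower bound.

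For ``$P$ has a solution $\Rightarrow$ $Q_c$ is not FO-rewritable'', I would use Lemma~\ref{lem:nexpbasic}(1) to get a $\Sigma_c$-ABox $\Amc_0$ consistent with $\Tmc_c$ and an $a_0\in\mn{Ind}(\Amc_0)$ with $\Amc_0,\Tmc_c\models A^*(a_0)$ and $\Amc_0,\Tmc_c\not\models q_c(a_0)$. Since $q_c$ is not entailed at $a_0$, the (now bidirectional) correspondence with $C_{q_c}$ shows $C_{q_c}$ is not forced at $a_0$, so $\Amc_0\cup\{R(a_0)\}$ is consistent with $\Tmc$ (the new axioms derive nothing over $\Sigma_c$, and $R$ is flagged only at $a_0$, where no conflict arises). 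For each $m$, form $\Amc_m$ by adding to $\Amc_0\cup\{R(a_0)\}$ an $s$-chain $s(c_0,c_1),\dots,s(c_{\ell-1},a_0)$ of length $\ell$ (with $\ell$ suitably large relative to $m$, e.g.\ $\ell=2^{m}+1$) from a fresh $c_0$, and let $\Bmc_m$ be that bare $s$-chain alone. One checks: $\Amc_m,\Tmc\models G(c_0)$ (the $R$-flagged $a_0$ with $A^*$ triggers $G$, which propagates back to $c_0$); $\Bmc_m,\Tmc\not\models G(c_0)$ (no $R$-flag, so $G$ is never triggered); both ABoxes are consistent with $\Tmc$; and $\Imc_{\Amc_m}\equiv_{m,\Sigma,c_0}\Imc_{\Bmc_m}$ by the standard Ehrenfeucht-Fra\"iss\'e argument on long paths. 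Lemma~\ref{lem:m-rewrite} then yields that $Q_c$ is not FO-rewritable.

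For the converse, ``$P$ has no solution $\Rightarrow$ $Q_c$ is FO-rewritable'', I would show that $\Amc,\Tmc\not\models G(a)$ for \emph{every} $\Sigma$-ABox $\Amc$ consistent with $\Tmc$ and every $a\in\mn{Ind}(\Amc)$, so that an unsatisfiable FO-formula (with free variable $x$) is an FO-rewriting of $Q_c$. Suppose instead $\Amc,\Tmc\models G(a)$. Analysing how $G$ can be derived (only via $R\sqcap A^*\sqsubseteq G$ and $\exists s.G\sqsubseteq G$; the new axioms do not affect entailments over $\Sigma_c$, and $R,s$ are not derived by any axiom), there is an $s$-chain in $\Amc$ from $a$ to some $b$ with $R(b)\in\Amc$ and $\Amc',\Tmc_c\models A^*(b)$, where $\Amc'$ is the $\Sigma_c$-reduct of $\Amc$. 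Since $P$ has no solution, Lemma~\ref{lem:nexpbasic}(1) forces $\Amc',\Tmc_c\models q_c(b)$ as well. In any model $\Imc$ of $\Amc$ and $\Tmc$ we then have $b\in R^\Imc$, $b\in (A^*)^\Imc$, and $\Imc\models q_c(b)$, hence $b\in C_{q_c}^\Imc$; thus $b\in R^\Imc\cap C_{q_c}^\Imc$, contradicting $\Imc\models R\sqcap C_{q_c}\sqsubseteq\bot$. So $\Amc$ is inconsistent with $\Tmc$, a contradiction, and $Q_c$ is FO-rewritable.

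The main obstacle is this converse direction. The query $G(x)$ is satisfied only through an inherently unbounded $s$-chain, so $Q_c$ can be FO-rewritable only if that chain can never be relevant, i.e.\ only if the trigger ($R$ together with $A^*$) can never be consistently forced; turning ``$P$ has no solution'' into exactly this requires faithfully encoding the \emph{negative} condition ``$q_c$ fails at $b$'' using only monotone $\ELI$ machinery, which is possible precisely because Lemma~\ref{lem:nexpbasic} supplies a tree-shaped $q_c$ together with the concept $C_{q_c}$ so that the disjointness axiom $R\sqcap C_{q_c}\sqsubseteq\bot$ does the work. A secondary point to verify carefully is that the fresh $s/G$-gadget and the $\bot$-axiom neither perturb $\Tmc_c$-entailments nor spoil consistency of the ABoxes used in the first direction; isolating the $R$-flag to a single individual is what makes this clean. (Lemma~\ref{lem:nexpbasic}(3) is not strictly needed for this construction, but it offers an alternative route through an explicit FO-rewriting of $q_c$.)
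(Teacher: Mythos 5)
Your reduction has a fatal flaw, and there is a quick structural way to see that it must: the OMQ you build is $(\Tmc,\Sigma,G(x))$ with $G(x)$ an \emph{atomic} query, but FO-rewritability of atomic queries under TBoxes with inverse roles (and $\bot$) is in \ExpTime\ \cite{ijcai-2013}, as this paper itself emphasizes in Section~\ref{sect:mainres}. A correct polynomial-time reduction from the complement of the tiling problem to that problem would yield $\coNExpTime\subseteq\ExpTime$. The concrete failure is in your converse direction, and it sits exactly where you placed your trust: the assumption that $d\in C_{q_c}^\Imc\Leftrightarrow \Imc\models q_c(d)$. Lemma~\ref{lem:nexpbasic} only guarantees the implication from left to right, and this is not an omission one can repair: the $q_c$ constructed there is \emph{not} tree-shaped (each component $p_i$ consists of two internally disjoint $S$-paths between $x$ and $x'$, i.e.\ a cycle, and $q_c$ matches a structure precisely via one of several distinct variable identifications). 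No single $\ELI$-concept can be equivalent to such a query; the $C_{q_c}$ supplied by the lemma encodes just one designated collapse of $q_c$. Consequently, a \emph{defective} torus tree $\Amc'$ with root $b$ satisfies $\Amc',\Tmc_c\models A^*(b)$ and $\Amc',\Tmc_c\models q_c(b)$ (via the defect-detecting match) but $\Amc',\Tmc_c\not\models C_{q_c}(b)$; hence $\Amc'\cup\{R(b)\}$ remains consistent with your $\Tmc$, the axiom $R\sqcap A^*\sqsubseteq G$ fires, and arbitrarily long $s$-chains witness non-FO-rewritability of $G(x)$ even though $P$ has no solution. Your reduction therefore answers incorrectly on all negative instances. (A secondary, fixable issue: in the forward direction your $\Bmc_m$ is a bare $s$-chain, which is separated from $\Imc_{\Amc_m}$ already by the rank-$1$ sentence $\exists x\,R(x)$, so the claimed $m$-equivalence fails; one must pad with disjoint copies as in the proof of Theorem~\ref{lem:char}, or argue via condition~2 of that theorem directly.)

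The paper's construction is designed precisely to avoid ever needing the unavailable direction ``$q_c$ matches $\Rightarrow C_{q_c}$ holds''. It keeps $q_c$ itself as the (rooted, non-atomic) query and extends $\Tmc_c$ with $\exists r.A\sqsubseteq A$ and $A\sqcap A^*\sqsubseteq C_{q_c}$: when $P$ has a solution, a defect-free torus tree with an appended $r$-chain ending in $A$ forces $C_{q_c}$ (hence $q_c$, using only the sound direction of Point~2) at the root, while any truncation of the chain does not; when $P$ has no solution, Point~1 of the lemma guarantees that $q_c$ is already entailed wherever $A^*$ is, so the new axioms add nothing and the FO-rewriting of $q_c$ relative to $\Tmc_c$ from Point~3 carries over. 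If you want to salvage a gadget in the spirit of yours, the trigger must be ``$A^*$ and \emph{not} $q_c$'', and that negative condition cannot be internalized into an $\ELI$ TBox; it has to live in the query, which is why the actual query must remain (a variant of) $q_c$ rather than an atomic goal predicate.
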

\begin{proof}
  Let $c$ be an input to $P$, and let $\Tmc_c$, $q_c(x)$, $\Sigma_c$, and
  $A^*$ be as in Lemma~\ref{lem:nexpbasic}. We obtain a TBox \Tmc by
  extending $\Tmc_c$ with the following:
$$
\begin{array}{rcll}
\exists r . A &\sqsubseteq& A \\[1mm]
A \sqcap A^*& \sqsubseteq& C_{q_c} 
\end{array}
$$
where $A$ and $r$ do not occur in $\Tmc_c$ and $q_c$, $A^* \notin
\Sigma_c$ is the concept name from Lemma~\ref{lem:nexpbasic} and
$C_{q_c}$ the concept from Point~2 of that lemma. Set $\Sigma =
\Sigma_c \cup \{ A,r \}$.
It remains to prove the following. 
\\[2mm]
{\bf Claim}. $P$ has a solution given $c$ iff $q_c$ is not
FO-rewritable relative to \Tmc and $\Sigma$.
\\[2mm]
First assume that $P$ has a solution given $c$. By Point~1 of
Lemma~\ref{lem:nexpbasic}, there is a $\Sigma_c$-ABox \Amc and an $a_0
\in \mn{Ind}(\Amc)$ such that $\Amc,\Tmc_c \models A^*(a_0)$ and
$\Amc,\Tmc_c \not\models q_c(a_0)$. Since every \ELI TBox is
unraveling tolerant \citeA{lutz-2012} and by compactness, we can assume
w.l.o.g.\ that \Amc is tree-shaped with root $a_0$. Let $\ell$ be the
depth of~\Amc. For each $k > \ell$, let $\Amc_k$ be the ABox obtained
by extending \Amc with
$$
   r(a_0,a_{1}),\dots, r(a_{k-1},a_{k}), A(a_k)
$$
where $a_k,\dots,a_1$ do not occur in \Amc. Note that $\Amc_k$ is
tree-shaped and of depth at least $k$. Since $\Amc,\Tmc_c \models
A^*(a_0)$, it follows from Point 2 of Lemma~\ref{lem:nexpbasic} that $\Amc_k,\Tmc \models q_c(a_0)$. Now consider the
ABox $\Amc_k|_{\leq k-1}$. We aim to show that
$\Amc_k|_{\leq k-1},\Tmc \not\models q_c(a_0)$ and then to apply
Theorem~~\ref{lem:char} to show that $q_c$ is not FO-rewritable
relative to \Tmc and $\Sigma$. Note that $\Amc_k|_{\leq k-1}$ does not
contain $A$. On such ABoxes, $\Tmc$ can be replaced with $\Tmc_c$
since the left-hand sides of the concept inclusions in \Tmc will never
apply. It thus suffices to show that $\Amc_k|_{k-1},\Tmc_c \not\models
q_c(a_0)$. This follows from $\Amc,\Tmc_c \not\models q_c(a_0)$
and the fact that $r$ (the only symbol in assertions from
$(\Amc_k|_{k-1}) \setminus \Amc$) occurs neither in $\Tmc_c$ nor in~$q_c$.

\smallskip

Now assume that $P$ has no solution given $c$. Let $\widehat q_c(x)$
be an FO-rewriting of $q_c(x)$ relative to $\Tmc_c$ and $\Sigma_c$.
We argue that
%
%
$\widehat q_c(x)$ is also an FO-rewriting of $q_c(x)$ relative to \Tmc
and $\Sigma$.  

First assume $\Amc \models \widehat q_c(a)$ for some
$\Sigma$-ABox~\Amc. Since $\widehat q_c(x)$ uses only symbols from
$\Sigma_c$, this means that $\Amc' \models \widehat q_c(a)$ where
$\Amc'$ is the reduct of $\Amc$ to symbols in $\Sigma_c$. Thus
$\Amc',\Tmc_c \models q_c(a)$, implying $\Amc,\Tmc \models q_c(a)$.

Conversely, assume that $\Amc,\Tmc \models q_c(a)$ for some
$\Sigma$-ABox \Amc and $a \in \mn{Ind}(\Amc)$. Using
  canonical models and the construction of \Tmc, one can show that
  this implies (i)~$\Amc,\Tmc_c \models q_c(a)$ or
  (ii)~$\Amc,\Tmc_c \models A^*(a)$. In Case~(i), we get
$\Amc',\Tmc_c \models q_c(a)$, where $\Amc'$ is the $\Sigma_c$-reduct
of $\Amc$. 
Thus $\Amc' \models \widehat q_c(a)$, which implies
$\Amc \models \widehat q_c(a)$. In Case~(ii), Point~1 of
Lemma~\ref{lem:nexpbasic} yields $\Amc,\Tmc_c \models q_c(a)$ and thus
we can proceed as in Case~(i).
%
%
%
\end{proof}

%
%
%
%

\subsection{Proof of Lemma~\ref{lem:nexpbasic}}
\label{app:2explower}

Let $c=(c_0,\dots,c_{n-1})$ be an input for $P$.
 We show how to
construct the TBox $\Tmc_c$, query $q_c$, and ABox signature
$\Sigma_c$ that satisfy Points~1 and~2 of
Lemma~\ref{lem:nexpbasic}. We will first use a UCQ for $q_c$ and later
show how to improve to a CQ.  The general idea is that $\Tmc_c$
verifies in a bottom-up way the existence of (a homomorphic image of)
what we call a \emph{torus tree} in the ABox. A torus tree represents
the $2^n \times 2^n$-torus along with a tiling that respects the
tiling conditions in $P$ and initial condition~c, except that the
representation might be defective in that there can be different
elements which represent the same grid node but are labeled with
different tile types. If a torus tree is found, then $\Tmc_c$
ensures that $A^*$ is derived at the root of the tree.  The query
$q_c$ will be constructed to become true at the root if and only if
the torus tree has a defect. It can then be verified that
\begin{itemize}

\item[($*$)] there is a solution for $P$ given $c$ if and only if
  there is an ABox \Amc and an $a \in \mn{Ind}(\Amc)$ with
  $\Amc,\Tmc_c \models A^*(a)$ and $\Amc,\Tmc_c \not\models q_c(a)$

\end{itemize}
where intuitively \Amc is a defect-free torus tree with root $a$.

Torus trees are of depth $2n+2$ and all tree edges are labeled with
the role composition $r^-;r$, where $r$ is the only role name used in
the reduction. For readability, we use $S$ to abbreviate $r^-;r$. For
example, $\exists S . C$ stands for $\exists r^- . \exists r .C$. Note
that $S$ behaves like a reflexive-symmetric role.  The ABox signature
$\Sigma_c$ consists of the following symbols:
\begin{enumerate}

\item concept names $A_0,\dots,A_{2n-1}$ and
  $\overline{A}_0,\dots,\overline{A}_{2n-1}$ that serve as bits in the
  binary representation of a number between 0 and $2^{2n}-1$;

\item concept names $T_0,\dots,T_k$ which represent tile types;

\item concept names $H$, $R$, $U$ which stand for ``here'',
``right'', ``up'';

\item concept names $L_0,\dots,L_{2n}$ to identify the levels of torus 
  trees and concept names $F$ and $G$ to identify certain other nodes;

\item the role name $r$ used in the composition $S$.

\end{enumerate}
We refer to numbers between $0$ and $2^{2n}-1$ as a \emph{grid
  position}: in its binary representation, bits $0$ to $n-1$ represent
the horizontal position in the grid and bits $n$ to $2n-1$ the
vertical position.

The next step is to define the TBox $\Tmc_c$. We first give a few more
details about torus trees, illustrated in Figure~\ref{fig:mod}.  There
is binary branching on levels 0 to $2n-1$ and, intuitively, nodes on
levels 0 to $2n$ form the torus tree proper while nodes on levels
$2n+1$ and $2n+2$ form gadgets appended to the tree nodes on level
$2n$.  Such a gadget is highlighted in Figure~\ref{fig:mod}. All nodes
on level $2n+2$ are labeled with the concept name $G$, all nodes on
level $2n+1$ with the concept name~$F$, and all nodes on levels
$i=0..2n$ with the concept name $L_i$.
\begin{figure}[t!]
  \begin{center}
    \framebox[1\columnwidth]{\input{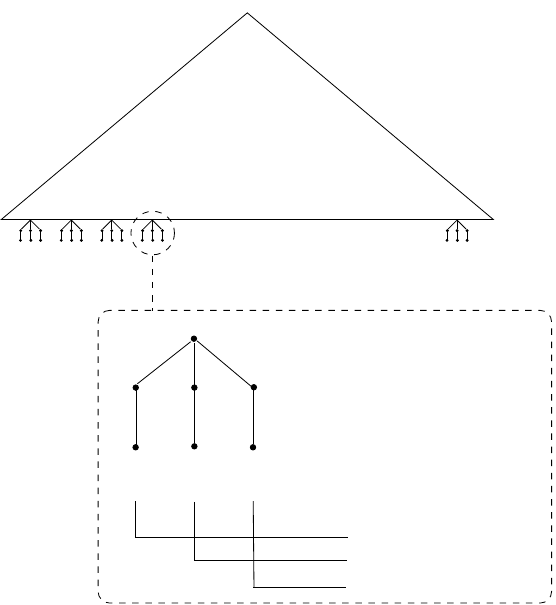_t}}
    \caption{Structure of torus trees.}
    \label{fig:mod}
  \end{center}
\end{figure}
Moreover, every $L_{2n}$-node is associated with a grid position via
the concept names $A_i,\overline{A}_i$ (not shown in the figure). The
$G$-node leaf below it that is labeled $H$ is associated with the same
position as its $L_{2n}$-node ancestor. In contrast, the $R$-leaf is
associated with the neighboring position to the right and the $U$-leaf
with the neighboring position to the top (all via
$A_i,\overline{A}_i$). Every $G$-node is labeled with a tile $T_i$
(not shown in the figure) such that the tiles of $H$- and $R$-nodes in
the same gadget satisfy the horizontal matching condition, and
likewise for the $H$- and $U$-node and the vertical matching
condition. For technical reasons related to the query construction,
the $F$-node is labeled complementarily
regarding the concept names
$A_i,\overline{A}_i$ compared to its $G$-node successor.  Note that,
so far, we have only required that the matching conditions are
satisfied locally in each gadget. To ensure that a torus tree
represents a solution, we will enforce later using the query $q_c$
that whenever two $G$-nodes represent the same position, then they
are labeled with the same tile. 
%




 
We now construct the TBox $\Tmc_c$. The last level of torus trees must
be identified by the concept name $G$. Proper verification of that
level is indicated by the concept name $G\mn{ok}$ (which is not in
$\Sigma_c$):
$$
\begin{array}{c}
  A_i  \sqsubseteq  \mn{ok}_i \qquad
  \overline{A}_i  \sqsubseteq  \mn{ok}_i  \qquad
  T_j  \sqsubseteq  T\mn{ok} \\[1mm]
  \mn{ok}_0 \sqcap \cdots \sqcap \mn{ok}_{2n-1} \sqcap T\mn{ok} \sqcap G \sqsubseteq G\mn{ok}
\end{array}
$$
where $i$ ranges over $0..2n-1$. 
Note that, to receive a $G\mn{ok}$ label, a $G$-node must be labeled
with at least one of $A_i$ and $\overline{A}_i$ for each $i$, and by
at least one concept name of the form $T_j$.
%
%
We next verify $F$-nodes:
$$
\begin{array}{r@{\;}c@{\;}l}
  A_i \sqcap \exists S . (G\mn{ok} \sqcap \overline{A}_i) & \sqsubseteq & \mn{ok}'_i \\[1mm]
  \overline{A}_i \sqcap \exists S . (G\mn{ok} \sqcap A_i) & \sqsubseteq & \mn{ok}'_i \\[1mm]
  \mn{ok}'_0 \sqcap \cdots \sqcap \mn{ok}'_{2n-1} \sqcap F
&\sqsubseteq& F\mn{ok}  
\end{array}
$$
where $i$ ranges over $0..2n-1$. 
Note that we
have not yet guaranteed that $G$-nodes make true at most one of $A_i$
and $\overline{A}_i$ for each $i$. 
Moreover, the first two lines may speak about different
$S$-successors. It is thus not clear that they achieve the intended
complementary labeling.  We fix these problems by adding the following
inclusion:
%
$$
\begin{array}{r@{\;}c@{\;}l}
  \exists S^{2n+1}.(\exists S.(G \sqcap A_i) \sqcap \exists S.(G \sqcap \overline{A_i}) )
  & \sqsubseteq& C_{q_c} 
\end{array}
$$
where $i$ ranges over $0..2n-1$, 
$\exists S^\ell . C$ denotes $\ell$-fold quantification
$\exists S . \cdots \exists S . C$, and $C_{q_c}$ is an \ELI-concept
to be defined later that will satisfy Point~2 of
Lemma~\ref{lem:nexpbasic}, that is, make the query $q_c$ true at the
root of the torus tree.  To understand this, assume for
example that there is a $G$-node labeled with both $A_0$ and
$\overline{A}_0$. Then $C_{q_c}$ will be made true at the root of the
torus tree and thus the ABox is ruled out as a witness in ($*$) above. 

\smallskip



We now verify the existence of level $2n$ of the tree, identified by
the concept name $L_{2n}$. Each $L_{2n}$-node needs to have three
$S$-successors, all of them $F$-nodes, labeled with $H, R, U$,
respectively.  Moreover, it must be labeled with $A_i,\overline{A}_i$
to represent the same grid position as the $H$-leaf below:
$$
\begin{array}{r@{\;}c@{\;}l}
  A_i \sqcap \exists S . (F\mn{ok} \ \sqcap \exists S . (G\mn{ok} \sqcap H \sqcap
  A_i)) & \sqsubseteq & \mn{ok}''_i \\[1mm]
  \overline{A}_i \sqcap \exists S . (F\mn{ok} \sqcap \exists S . (G\mn{ok} \sqcap H \sqcap \overline{A}_i )) & \sqsubseteq & \mn{ok}''_i \\[1mm]
  \exists S . (F\mn{ok} \sqcap \exists S . (G\mn{ok} \sqcap R)) & \sqsubseteq & R\mn{ok} \\[1mm]
  \exists S . (F\mn{ok} \sqcap \exists S . (G\mn{ok} \sqcap U)) & \sqsubseteq & U\mn{ok} \\[1mm]
   \exists S^{2n}.(\exists S^2.(G \sqcap X \sqcap 
   A_i)  \sqcap 
\exists S^2.(G
  \sqcap X \sqcap \overline{A}_i))
  & \sqsubseteq& C_{q_c} 
\end{array}
$$
where $i$ ranges over $0..2n-1$  and
$X$ ranges over $H,R,U$.  The last inclusion makes
such that all $H$-leaves below a $L_{2n}$-node have
the same labeling regarding $A_i,\overline{A}_i$, and 
likewise for all $R$-leaves and all $U$-leaves. We next verify
that the  the grid positions of the $H,R,U$-nodes
below a level $2n$-node relate in the intended way. We start
with copying up the grid positions from the $R$-leaf and the
$U$-leaf, for convenience:
$$
\begin{array}{r@{\;}c@{\;}l}
   L_{2n} \sqcap \exists S^2 . (G \sqcap X \sqcap A_i) & \sqsubseteq & A^X_i \\[1mm]
   L_{2n} \sqcap \exists S^2 . ((G \sqcap X \sqcap \overline{A}_i )) & \sqsubseteq & \overline{A}^X_i 
\end{array}
$$
where $i$ ranges over $0..2n-1$  and
$X$ ranges over $R,U$.
%
%
The following inclusions are then used to verify that the horizontal
component of the $R$-node is incremented compared to the $H$-node:
$$
\begin{array}{r@{\;}c@{\;}l}
   A_0 \sqcap \cdots \sqcap A_{i-1} \sqcap \overline{A}_i \sqcap 
   A^R_i &\sqsubseteq& \mn{ok}_{HRi}
\\[1mm]
   A_0 \sqcap \cdots \sqcap A_{i-1} \sqcap A_i \sqcap 
   \overline{A}^R_i &\sqsubseteq& \mn{ok}_{HRi}
\\[1mm]
  \overline{A}_j \sqcap \overline{A}_i \sqcap \overline{A}^R_i 
  &\sqsubseteq& \mn{ok}_{HRi} 
\\[1mm]
  \overline{A}_j \sqcap A_i \sqcap A^R_i 
  &\sqsubseteq& \mn{ok}_{HRi} 
\end{array}
$$
where $i$ ranges over $0..n$ and $j$ over $0..i$. We can use similar inclusions
setting concept names $\mn{ok}_{HRn},\dots,\mn{ok}_{HR2n-1}$ when the
vertical component of the $R$-node is identical to that of the
$H$-node, concept names $\mn{ok}_{HU0},\dots,\mn{ok}_{HUn-1}$ when the
horizontal component of the $U$-node is identical to that of the
$H$-node, and concept names $\mn{ok}_{HUn},\dots,\mn{ok}_{HU2n-1}$
when the vertical component of the $U$-node is incremented compared to
the $H$-node.  To make $L_{2n}$ true, which identifies level $2n$-nodes, we require that all checks
succeeded:
$$
\begin{array}{r@{\,}c@{\,}l}
    \mn{ok}_{HR0} \sqcap \cdots \sqcap     \mn{ok}_{HRm-1} \, \sqcap 
    \\[1mm]
    \mn{ok}_{HU0} \sqcap \cdots \sqcap     \mn{ok}_{HUm-1} \, \sqcap
    \\[1mm]
    \mn{ok}''_0 \sqcap \cdots \sqcap \mn{ok}''_{2n-1} \, \sqcap
    \\[1mm]
    R\mn{ok}
    \sqcap U\mn{ok} \sqcap
    L_{2n} &\sqsubseteq& L_{2n}\mn{ok}.
\end{array}
$$
To locally ensure the tiling conditions at $L_{2n}$-nodes, we put for all
$(i,j) \notin H$ and all $(i,\ell) \notin V$:
$$
\begin{array}{r@{\,}c@{\,}l}
  \exists S^{2n}. (\exists S^2 . (H \sqcap T_i)  \sqcap \exists S^2 
  . (R \sqcap T_j)) &\sqsubseteq& C_{q_c} \\[1mm]
  \exists S^{2n}. (\exists S^2 . (H \sqcap T_i)  \sqcap \exists S^2 
  . (U \sqcap T_\ell)) &\sqsubseteq& C_{q_c}.
\end{array}
$$
We next verify the existence of levels $2n-1$ to 0 of the tree. To
make sure that the required successors are present on all levels, we
branch on the concept names $A_i$, $\overline{A}_i$ at level $i$ and
for all $j< i$, keep our choice of $A_j$, $\overline{A}_j$:
%
%
$$
%
\begin{array}{r@{\;}c@{\;}l}
   \exists S . (L_{i+1}\mn{ok} \sqcap A_{i}) \sqcap 
   \exists S . (L_{i+1}\mn{ok} \sqcap \overline{A}_{i}) & \sqsubseteq & \mn{succ}_i \\[1mm]
   A_j \sqcap \exists S . (L_{i+1}\mn{ok} \sqcap A_j) &\sqsubseteq& \mn{ok}_{i,j} \\[1mm]
   \overline{A}_j \sqcap \exists S . (L_{i+1}\mn{ok} \sqcap \overline{A}_j) &\sqsubseteq&
   \mn{ok}_{i,j} \\[1mm]
   \mn{succ}_i \sqcap \mn{ok}_{i,0} \sqcap \cdots \sqcap
   \mn{ok}_{i,i-1} \sqcap L_i
   &\sqsubseteq& L_{i}\mn{ok} \\[1mm]
  \exists S^i . (\exists S.(L_{i+1} \sqcap A_j) \sqcap \exists S.(L_{i+1} \sqcap \overline{A_j})) 
  & \sqsubseteq& C_{q_c}
\end{array}
$$
%
%
where $i$ ranges over $0..2n-1$ and $j$ over $0..i-1$.  
%
The initial condition is verified at the $G$-nodes. Put
$$
  \exists S^{2n+2} . (\overline{A}_0 \sqcap \cdots \sqcap \overline{A}_{2n-1} 
  \sqcap T_i) \sqsubseteq C_{q_c}
$$
for all $i \in \{0,\dots,k\}$ with $i \neq c_0$, and similarly
for the grid positions $(1,0),\dots,(n-1,0)$.

\smallskip 

We next define the query $q_c$ to ensure that all $G$-nodes that are
associated with the same grid position are labeled with the same tile
type. A bit more verbosely, we have to guarantee that
\begin{itemize}

\item[($*$)] if $a$ and $b$ are $G$-nodes labeled identically
  regarding the concept names $A_i,\overline{A}_i$, then there are
  no distinct tile types $k,j$ such that $a$ is labeled with $T_k$ and
  $b$ with $T_j$.

\end{itemize}
The UCQ $q_c^{\vee}$ contains one CQ for each choice of tile types $k,j$.
Fix concrete such $k,j$.  We construct the required CQ $q$ from
component queries $p_0,\dots,p_{n-1}$, which all take the form of the
query show on the left-hand side of Figure~\ref{fig:q1}. 
\begin{figure}[t!]
  \begin{center}
    \framebox[1\columnwidth]{\input{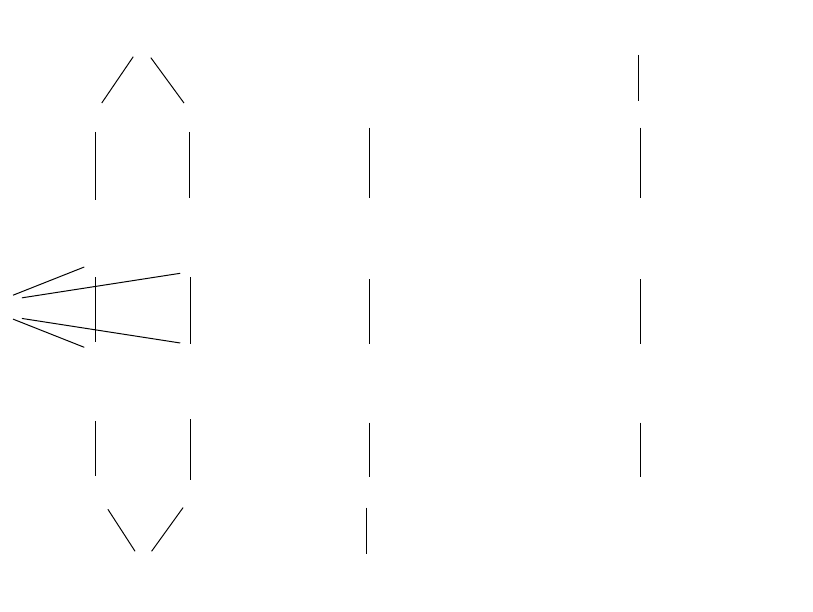_t}}
    \caption{The query $p_i$ (left) and two of its 
      identifications (middle and right).}
    \label{fig:q1}
  \end{center}
\end{figure}
Note that all edges are $S$-edges, the only difference between the
component queries is which concept names $A_i$ and $\overline{A}_i$
are used, and $x_{\mn{ans}}$ is the only answer variable. We assemble
$p_0,\dots,p_{n-1}$ into the desired query $q_c^{\vee}$ by taking variable
disjoint copies of $p_0,\dots,p_{n-1}$ and then identifying (i)~the
$x$-variables of all components and (ii)~the $x'$-variables of all
components.

To see why $q^\vee_c$ achieves~($*$), first note that the variables $x$ and
$x'$ must be mapped to leaves of the torus tree because of their
$G$-label. Call these leaves $a$ and~$a'$. Since $x_0$ and
$x'_0$ are connected to $x$ in the query, both must then be mapped
either to $a$ or to its predecessor; likewise, $x_{4n+3}$ and
$x'_{4n+3}$ must be mapped either to $a'$ or to its predecessor.
Because of the labeling of $a$ and $a'$ and the predecessors in the
torus tree with $A_i$ and $\overline{A}_i$, we are actually even
more constrained: 
exactly one of $x_0$ and $x'_0$ must be mapped to
$a$, and exactly one of $x_{4n+3}$ and $x'_{4n+3}$ to~$a'$. If $x_0$
is mapped to $a$, then $x_{\mn{ans}}$ must be identified with
$x_{2n+2}$ because as an answer variable it has to be mapped to the
root of the tree and the only other option (identifying $x_{\mn{ans}}$
with $x_{2n+1}$) would thus require a path of length $2n+1$ between $a$
and the root. Also for path length reasons, this means that $x_{4n+3}$
must be mapped to the predecessor of $a'$, thus $x'_{4n+3}$ is mapped
to $a'$. Analogously, we can show that mapping $x'_0$ to $a$ requires
mapping $x_{4n+3}$ to $a'$. These two options give rise to the two
variable identifications in each query $p_i$ shown in
Figure~\ref{fig:q1}. Note that the first case implies that $a$ and
$a'$ are both labeled with $A_i$ while they are both labeled with
$\overline{A}_i$ in the second case. In summary, $a$ and $a'$ must
thus agree on all concept names $A_i$, $\overline{A}_i$. Since $a$
must satisfy $T_i$ and $a'$ must satisfy $T_j$ due to the labeling of
$x$ and $x'$, we have achieved ($*$).

We now show how to replace the UCQ $q_c^\vee$ with a single CQ $q_c$. This
requires the following changes:
\begin{enumerate}

\item the $F$-nodes in configuration trees receive additional labels:
  when a $G$-node is labeled with $T_i$, then its predecessor $F$-node
  is labeled with $T_j$ for all $j \neq i$;

\item the query construction is modified.

\end{enumerate}
Point~1 is important for the CQ to be constructed to work correctly
and can be achieved in a straightforward way by modifying $\Tmc_c$,
details are omitted. We thus concentrate on Point~2. The desired CQ
$q_c$ is again constructed from component queries. We use $n$ components
as shown in Figure~\ref{fig:q1}, except that the $T_i$- and
$T_j$-labels are dropped. We further add the component shown in
Figure~\ref{fig:q2} where again $x$ and $x'$ are the variables shared
with the other components, and where we assume for simplicity
that $T=\{0,1,2\}$; the generalization to an unrestricted number of
tile types is straightforward, see \citeA{Lutz-DL-07}. 
\begin{figure}[t!]
  \begin{center}
    \framebox[1\columnwidth]{\input{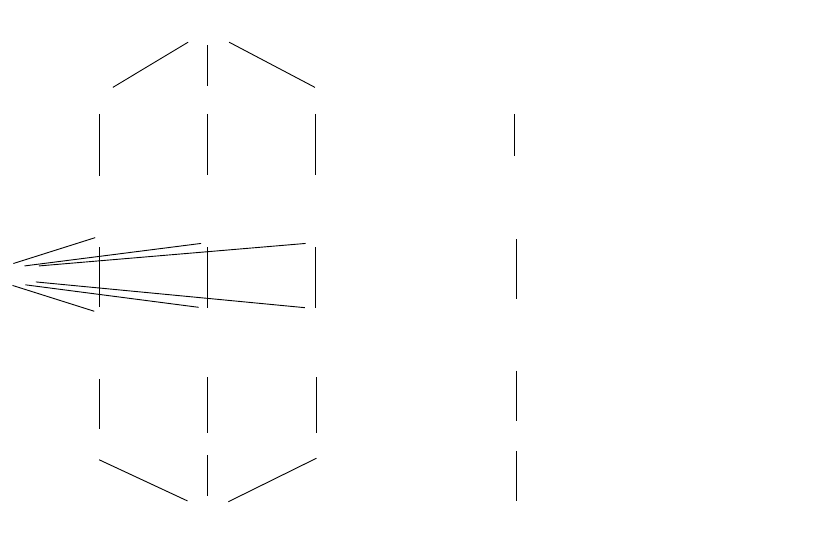_t}}
    \caption{The query $q_\mn{tile}$ (left) and one of its  
     identifications (right).}
    \label{fig:q2}
  \end{center}
\end{figure} 
The additional component can be understood essentially in the same way
as the previous query components.
\begin{lemma}
  $\Tmc_c$, $q_c$, and $\Sigma_c$ satisfy Points~1 and~2 from
  Lemma~\ref{lem:nexpbasic} when choosing $A^* = L_0\mn{ok}$.
\end{lemma}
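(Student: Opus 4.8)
\textbf{Point~2: the concept $C_{q_c}$.} The plan is first to pin down the \ELI-concept $C_{q_c}$ left unspecified in the construction, following the CQ-to-path idea of \citeA{Lutz-DL-07}. Rooting $q_c$ at its answer variable $x_\mn{ans}$, I take $C_{q_c}$ to be the linear \ELI-concept obtained by laying the components $p_0,\dots,p_{n-1}$ and $q_\mn{tile}$ out along a single branch of $\exists S$-nestings (with $S=r^-;r$ throughout), retaining the $A_i$/$\overline A_i$, $G$ and $T_j$ labels and identifying the shared variables $x$, $x'$, $x_\mn{ans}$ along that branch; its depth is $O(n)$, hence polynomial. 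The claim to prove is that $d\in C_{q_c}^{\Imc}$ implies $\Imc\models q_c(d)$, i.e.\ there is a match of $q_c$ with $x_\mn{ans}\mapsto d$. I would prove this by exhibiting the homomorphism component by component: one side of each diamond in $p_i$ is mapped onto the witnessing branch, the other side folded back onto it using that $S$ is reflexive and symmetric; the treatment of $q_\mn{tile}$ is analogous, and the tile-complement labels added to the $F$-nodes in Point~1 of the CQ modification are exactly what lets the single component $q_\mn{tile}$ simulate the union of the tile-CQs of the earlier UCQ $q_c^\vee$. Modulo bookkeeping this is the construction of \citeA{Lutz-DL-07} adapted to the present query shapes.

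\textbf{Point~1, direction $\Rightarrow$.} Given a solution $\tau$ of $P$ for $c$, I build the defect-free torus-tree $\Sigma_c$-ABox $\Amc_\tau$ with root $a$: full binary branching on levels $0,\dots,2n-1$ (splitting on bit $i$ at level $i$ and copying bits $0,\dots,i-1$ downward), below each level-$2n$ node the $H/R/U$ gadget of Figure~\ref{fig:mod} with the $H$-leaf at that node's grid position and the $R/U$-leaves at the right/upper neighbour, tiles of $G$-leaves taken from $\tau$, exactly one of $A_i,\overline A_i$ true at each node for each $i$, and each $F$-node labelled complementarily to its $G$-successor on the $A_i$'s and with the tile-complements of that successor. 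Since $\Tmc_c$ contains no $\bot$, $\Amc_\tau$ is consistent with $\Tmc_c$; and $\Amc_\tau,\Tmc_c\models L_0\mn{ok}(a)$ follows by walking the bottom-up cascade $G\mn{ok}\rightsquigarrow F\mn{ok}\rightsquigarrow L_{2n}\mn{ok}\rightsquigarrow\cdots\rightsquigarrow L_0\mn{ok}$, checking that every verification CI applies and — because $\Amc_\tau$ has unique bits, correct increments, locally correct $H$/$V$-tilings and the correct initial row — that no error CI (those with $C_{q_c}$ on the right) applies, so no element receives $C_{q_c}$ and $\Imc_{\Amc_\tau,\Tmc_c}$ creates no anonymous elements. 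Finally $\Amc_\tau,\Tmc_c\not\models q_c(a)$: any match of $q_c$ would send $x,x'$ to $G$-nodes of $\Amc_\tau$ (there being no anonymous $C_{q_c}$-structure), and the component analysis shows this requires the two $G$-nodes to share a grid position but carry different tiles, impossible as $\tau$ is a function. (Note $L_0\mn{ok}\notin\Sigma_c$, as required.)

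\textbf{Point~1, direction $\Leftarrow$.} Conversely, let $\Amc$ be a $\Sigma_c$-ABox and $a\in\mn{Ind}(\Amc)$ with $\Amc,\Tmc_c\models L_0\mn{ok}(a)$ and $\Amc,\Tmc_c\not\models q_c(a)$. From a derivation tree for $L_0\mn{ok}(a)$ (Lemma~\ref{lem:derivationtrees}) I extract, level by level, a homomorphic image of a torus tree rooted at $a$ inside $\Amc$: the $\mn{succ}_i$/$L_i\mn{ok}$ CIs force full binary branching through all $2^{2n}$ bit-combinations down to level $2n$, the $L_{2n}\mn{ok}$ CIs force the $H/R/U$-gadget with correctly related grid positions below each level-$2n$ node, and $G\mn{ok}$ forces each $G$-leaf to carry a bit value for every index and at least one tile. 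Now $\Amc,\Tmc_c\not\models q_c(a)$ gives two things. First, no error CI fires in $a$'s torus tree: each is of the form ``$\exists S^{\ell}.(\text{bad pattern})\sqsubseteq C_{q_c}$'' with $\ell$ tuned so that the presence of the corresponding defect (a folded node with both $A_i$ and $\overline A_i$, a wrong increment, a local $H$/$V$ violation, a wrong initial row) anywhere below $a$ makes the left-hand side true \emph{at} $a$, which by Point~2 would give $q_c(a)$. Second, $q_c$ has no match within $\Amc$ taking $x_\mn{ans}$ to $a$, which by the component analysis forbids two $G$-leaves below $a$ at the same grid position with different tiles. Hence every index has a unique bit at every node (so the extracted structure assigns grid positions injectively and is a genuine, unfolded torus tree), the initial row equals $c$, horizontal/vertical neighbours satisfy $H$/$V$, and the tile of a $G$-leaf depends only on its grid position; since every grid position occurs (by the branching), $\tau(x,y):=$ the tile of the $H$-leaf at $(x,y)$ is a well-defined solution of $P$ for $c$.

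\textbf{Main obstacle.} I expect the interface between Point~2 and the $\Leftarrow$ direction to be the hard part: one must show at once that $C_{q_c}$ forces a match of the \emph{cyclic} query $q_c$ in \emph{every} interpretation (not just in well-structured torus trees) and that any ``accidental'' match of $q_c$ in an arbitrary canonical model already encodes a defect. Concretely this is the matching of the diamond-component combinatorics — which pairs of $G$-leaves may be hit simultaneously, under which variable identifications, and which path-length constraints these impose — against the $\exists S^{\ell}$-patterns on the left of the error CIs, so that the torus-tree depth $2n+2$, the complementary $F$-labelling, and the reflexive-symmetric behaviour of $S$ all line up. This is exactly the argument to be carefully imported and adapted from \cite{DBLP:conf/cade/Lutz08} and \citeA{Lutz-DL-07}.
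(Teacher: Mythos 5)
Your treatment of Point~1 follows the paper's route exactly: for the ``if'' direction, the defect-free torus tree built from a solution, together with the observation that no error CI fires, so no anonymous $C_{q_c}$-structure is created and $q_c$ has no match; for the ``only if'' direction, extraction of a homomorphic image of a torus tree from a derivation of $L_0\mn{ok}(a)$, using $\Amc,\Tmc_c\not\models q_c(a)$ both to rule out the defects caught by the error CIs (via Point~2) and to rule out same-position--different-tile defects (via a direct match of $q_c$), so that a solution can be read off. The only place you genuinely diverge is the concrete choice of $C_{q_c}$. The paper sets $C_{q_c}=\exists S^{2n+1}.(F_1\sqcap\exists S.G_1)\sqcap\exists S^{2n+1}.(F_2\sqcap\exists S.G_2)$, where $G_1,G_2$ describe two $G$-leaves at grid position $(0,\dots,0)$ carrying the distinct tiles $T_0$ and $T_1$ and $F_1,F_2$ carry the complementary bit and tile labels; that is, it plants below $d$, at the correct depth $2n+2$, exactly the ``two leaves, same position, different tiles'' defect that $q_c$ was designed to detect, so the required match of $q_c(d)$ falls out of the query analysis already done for the torus tree. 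Your single-path linearization also goes through: the merged query path has even length $4n+4$ with $x_{\mn{ans}}$ in the middle, so both halves fold onto one branch of depth $2n+2$ using the symmetry of $S$, the union of labels can be asserted at each depth without conflict, and the reflexive $S$-loops available at anonymous elements absorb the residual identifications in the tile component. But it buys nothing over the two-branch gadget and forces you to re-verify the fold's path-length and labelling constraints from scratch (including the parity of the folded path and the complement labels on the intermediate nodes), whereas the paper's choice lets the defect-detection argument be reused verbatim. Either way the lemma is established; I would simply adopt the paper's explicit concept to shorten the verification of Point~2.
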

\begin{proof}(sketch) We show the following:
  \begin{enumerate}

  \item If $P$ has a solution given $c$, then there is a
    $\Sigma_c$-ABox \Amc and an $a \in \mn{Ind}(\Amc)$ such that
    $\Amc,\Tmc_c \models A^*(a)$ and $\Amc,\Tmc_c \not\models q_c(a)$.

  \item If $P$ has no solution given $c$, then for any $\Sigma_c$-ABox 
    \Amc and $a \in \mn{Ind}(\Amc)$, $\Amc,\Tmc_c \models A^*(a)$
    implies $\Amc,\Tmc_c \models q_c(a)$. 

\item There is an \ELI-concept $C_{q_c}$ such that $d \in C^\Imc$
  implies $\Imc \models q_c(d)$. 

\item $q_c$ is FO-rewritable relative to $\Tmc_c$ and $\Sigma_c$. 

\end{enumerate}
(1) Take as \Amc a torus tree that encodes a solution for $P$
given~$c$ (viewed as an ABox) and let $a$ be the root of the tree. The
verification of torus trees by $\Tmc_c$ yields $\Amc,\Tmc_c \models
L_0\mn{ok}(a)$. Since the torus tree is not defective, we have
$\Amc,\Tmc_c \not\models q_c(a)$.

\smallskip
\noindent  
 (2)    Since the verification of (homomorphic images of) torus trees by
    $\Tmc_c$ is sound, $\Amc,\Tmc_c \models L_0\mn{ok}(a)$ implies that \Amc
    contains a homomorphic image of a torus tree whose root is
    identified by $a$. Since there is no solution for $P$ given $c$,
    that tree must be defective.  Consequently, $\Amc,\Tmc_c \models
    q_c(a)$.



\smallskip
\noindent 
(3)  Set
  $$
  \begin{array}{r@{\,}c@{\,}l}
  G_1 &=& G \sqcap \overline{A}_0 \sqcap \cdots \sqcap \overline{A}_k \sqcap T_0 \\[1mm]
  G_2 &=& G \sqcap \overline{A}_0 \sqcap \cdots \sqcap \overline{A}_k \sqcap T_1 \\[1mm]
  F_1 &=& A_0 \sqcap \cdots \sqcap A_n \sqcap T_1 \sqcap \cdots \sqcap 
  T_{k} \\[1mm]
  F_2 &=& A_0 \sqcap \cdots \sqcap A_n \sqcap T_0 \sqcap T_2 \sqcap \cdots \sqcap 
  T_{k} \\[1mm]
  C_{q_c} &=& \exists S^{2n+1}.(F_1 \sqcap \exists S . G_1) \sqcap
\exists S^{2n+1}.(F_2 \sqcap \exists S . G_2) 
  \end{array}
  $$
  It can be verified that $C_{q_c}$ is as required. 

\smallskip
\noindent 
(4) Note that whenever $D \sqsubseteq C_{q_c}$ is in $\Tmc_c$, then
$D$ uses symbols from $\Sigma_c$, only. One can construct an 
FO-rewriting of $q_c$ relative to $\Tmc_c$ and $\Sigma_c$ that has
the form 
$$
  q_0(x) \vee \bigvee_{D \sqsubseteq C_{q_c} \in \Tmc_c} q_D(x)
$$
where $q_D$ is $D$ viewed as a CQ. To define $q_0$, let $\Tmc^0_c$ be
the result of removing from $\Tmc_c$ all CIs of the form $D
\sqsubseteq C_{q_c}$. Note that the recursion depth of $\Tmc^0_c$ is
bounded by $2n+1$. We can thus choose
$$
  q_0(x) = \bigvee_{\Amc \in \Amf} q_\Amc(x)
$$
where \Amf is the set of all pseudo tree $\Sigma_c$-ABoxes \Amc of
depth at most ${2n+1}$, width at most $|q_c|$, outdegree at most
$|\Tmc_c|$, and with root $a_0$ such that $\Amc,\Tmc_c \models
q_c[a_0]$ and where $q_\Amc$ is \Amc viewed as a CQ.
\end{proof}

\subsection{2ExpTime lower bounds}

We consider Boolean (connected) CQs. We reduce the word problem of
exponentially space bounded alternating Turing machines (ATMs), see
\citeA{DBLP:journals/jacm/ChandraKS81}.  An \emph{Alternating Turing
  Machine (ATM)} is of the form $M =
(Q,\Sigma,\Gamma,q_0,\Delta)$. The set of \emph{states} $Q = Q_\exists
\uplus Q_\forall \uplus \{q_a\} \uplus \{q_r\}$ consists of
\emph{existential states} in $Q_\exists$, \emph{universal states} in
$Q_\forall$, an \emph{accepting state} $q_a$, and a \emph{rejecting
  state} $q_r$; $\Sigma$ is the \emph{input alphabet} and $\Gamma$ the
\emph{work alphabet} containing a \emph{blank symbol} $\square$ and
satisfying $\Sigma \subseteq \Gamma$; $q_0 \in Q_\exists \cup
Q_\forall$ is the \emph{starting} state; and the \emph{transition
  relation} $\Delta$ is of the form
$$
  \Delta \; \subseteq \; Q \times \Gamma \times Q \times \Gamma
  \times \{ L,R \}.
$$
We write $\Delta(q,\sigma)$ to denote $\{ (q',\sigma',M) \mid
(q,\sigma,q',\sigma',M) \in \Delta \}$ and assume w.l.o.g.\ that
the state $q_0$ cannot be reached by any transition.

A \emph{configuration} of an ATM is a word $wqw'$ with $w,w' \in
\Gamma^*$ and $q \in Q$. The intended meaning is that the one-side
infinite tape contains the word $ww'$ with only blanks behind it,
the machine is in state $q$, and the head is on the symbol just after
$w$. The \emph{successor configurations} of a configuration $wqw'$
are defined in the usual way in terms of the transition relation
$\Delta$. A \emph{halting configuration} is of the form $wqw'$ with $q
\in \{ q_a, q_r \}$.

A \emph{computation} of an ATM $M$ on a word $w$ is a
(finite or infinite) sequence of configurations $K_0,K_1,\dots$ such
that $K_0=q_0w$ and $K_{i+1}$ is a successor configuration of $K_i$
for all $i \geq 0$.  The ATMs considered in the following have only
\emph{finite} computations on any input. Since this case is simpler
than the general one, we define acceptance for ATMs with finite
computations, only.
Let $M$ be such an ATM. A halting configuration is
\emph{accepting} iff it is of the form $wq_aw'$.  For other
configurations $K=w q w'$, acceptance depends on $q$: if $q \in
Q_\exists$, then $K$ is accepting iff at least one successor
configuration is accepting; if $q \in Q_\forall$, then $K$ is
accepting iff all successor configurations are accepting.  Finally,
the ATM $M$ with starting state $q_0$ \emph{accepts} the
input $w$ iff the \emph{initial configuration} $q_0w$ is accepting.
We use $L(M)$ to denote the language accepted by
$M$.

\smallskip

There is an exponentially space bounded ATM $M$ whose word
problem is {\sc 2Exp\-Time}-hard and we may assume that the length of
every computation path of $M$ on $w \in \Sigma^n$ is bounded
by $2^{2^{n}}$, and all the configurations $wqw'$ in such computation
paths satisfy $|ww'| \leq 2^{n}$, see \citeA{DBLP:journals/jacm/ChandraKS81}.  

\smallskip

\begin{lemma}
\label{lem:2expbasic}
Given an input $w$ to $M$, 
one can construct in polynomial time an
$\ELI$ TBox~$\Tmc_w$, a Boolean connected CQ $q_w$, and
an ABox signature $\Sigma_w$ such that, for a selected concept name
$A^* \notin \Sigma_w$,
\begin{enumerate}

\item $M$ accepts $w$ iff there is a $\Sigma_w$-ABox \Amc 
such that $\Amc,\Tmc_w \models 
\exists x
  \, 
  A^*(x)$
  and $\Amc,\Tmc_w \not\models q_w$;
  
\item $M$ accepts $w$ iff there is a $\Sigma_w$-ABox \Amc 
and an $a \in \mn{Ind}(\Amc)$ 
such that $\Amc,\Tmc_w \models 
  A^*(a)$
  and $\Amc,\Tmc_w \not\models q_w$;  


\item $q_w$ is FO-rewritable relative to $\Tmc_w$ and $\Sigma_w$;

\item there is an \ELI-concept $C_{q_w}$ such that $C_{q_w}^\Imc \neq
  \emptyset$ implies $\Imc \models q_w$. 

\end{enumerate}
\end{lemma}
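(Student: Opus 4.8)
The plan is to mirror the construction of the previous subsection, replacing torus trees (which encode solutions of a tiling problem) by \emph{computation trees} that encode accepting runs of the fixed exponentially space bounded ATM $M$. Given $w$ with $|w|=n$, the ABox will encode a tree whose internal nodes are the configurations occurring in an accepting computation of $M$ on $w$: a configuration node with an existential state has one child, one with a universal state has one child per applicable transition, and an accepting halting configuration is a leaf. Below each configuration node we append a binary \emph{cell tree} of depth $n$ whose $2^n$ leaves represent the $2^n$ tape cells; a leaf is labelled, via the $S=r^-;r$ encoding that (as in the torus construction) behaves like a reflexive--symmetric navigation role in $\ELI$, with its $n$-bit address through concept names $A_i,\overline A_i$, with the tape symbol there, and, on the cell under the head, with the current state and the transition chosen at that configuration. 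The TBox $\Tmc_w$ verifies such a structure \emph{bottom-up}, exactly in the style of $\Tmc_c$: local checks (each cell tree is a correct binary counter and each leaf carries exactly one symbol; each configuration node has, for each applicable transition, a child marked with that transition; the chosen transition is in the transition relation; the head cell and its tape neighbours in the child configuration are updated correctly; the root configuration is $q_0w$) are performed by concept inclusions with a concept name on the right-hand side, and a verdict ``this subtree encodes an accepting computation'' is propagated towards the root, where it surfaces as the selected concept name $A^*$; every \emph{local} inconsistency triggers, through a concept inclusion $D\sqsubseteq C_{q_w}$ with $D$ a small $\Sigma_w$-concept, the concept $C_{q_w}$, which in turn forces $q_w$ to hold at the root. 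As in the torus case, the head arithmetic (``the head in the child configuration sits at address $h\pm1$'', etc.) can be carried out by polynomially many inclusions performing the comparison bit-by-bit while descending the cell tree, just like the increment checks $A_0\sqcap\dots\sqcap A_{i-1}\sqcap\overline A_i\sqcap A^R_i\sqsubseteq\mn{ok}_{HRi}$ of $\Tmc_c$.

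The single defect that no bounded-size concept inclusion can exclude is the one that renders the representation potentially defective: that two cell leaves with equal $n$-bit address---one in a configuration $K$, the other in its child configuration $K'$, both flagged as lying away from the head---carry different tape symbols. This is the exact analogue of the ``two $G$-nodes with equal grid position but different tile'' defect, and I would detect it with a single Boolean connected CQ $q_w$ assembled, as in Figures~\ref{fig:q1} and~\ref{fig:q2}, from $O(n)$ components: the $i$-th component compares address bit $i$ of the two leaves, the components are glued at the two leaf variables and at an answer-free variable forced onto the root, and the reflexive--symmetric behaviour of $S$ together with the short connecting path $\ell\to\dots\to K\to K'\to\dots\to\ell'$ (through the $K$--$K'$ junction, which is in fact shorter than the root-to-leaf paths used in the torus proof) pins down the intended identifications; a few auxiliary labels on the connector/$F$-nodes, as in Point~1 of the torus construction, turn the UCQ over symbol pairs into one CQ. Then $M$ accepts $w$ iff there is a $\Sigma_w$-ABox in which $A^*$ is derivable (at a prescribed individual, resp.\ at some individual---these are handled identically, since derivability of $A^*$ at a node already forces that node to be the root of a verified computation whose initial configuration is $q_0w$) and which is defect-free, i.e.\ does not entail $q_w$; this yields Points~1 and~2. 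For Point~4, $C_{q_w}$ is defined, exactly as $C_{q_c}$, as a constant-depth $\ELI$-concept that ``plants'' precisely the pieces $q_w$ searches for---two cell gadgets with contradictory tape symbols linked through a configuration junction---so that $C_{q_w}^\Imc\neq\emptyset$ forces $\Imc\models q_w$.

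For FO-rewritability of $q_w$ (Point~3) the rewriting again has the shape $q_0\vee\bigvee_{D\sqsubseteq C_{q_w}\in\Tmc_w}q_D$, with $q_D$ the concept $D$ read as a CQ and $q_0$ the finite UCQ ranging over all pseudo tree $\Sigma_w$-ABoxes of width at most $|q_w|$, outdegree at most $|\Tmc_w|$ and depth $O(n)$ that entail $q_w$ with respect to $\Tmc_w$. The only genuine departure from the torus argument---and the point that needs an explicit remark---is that $\Tmc_w$ does not have polynomially bounded recursion depth: the acceptance verdict is propagated up a computation tree of doubly exponential depth. One checks that this is harmless for $q_w$, because the deep recursion derives only auxiliary, non-$\Sigma_w$ concept names, and the sub-TBox obtained by deleting the inclusions $D\sqsubseteq C_{q_w}$ has no existential restriction on any right-hand side, hence introduces no anonymous elements and no new $\Sigma_w$-assertions; consequently any entailment $\Amc,\Tmc_w\models q_w$ is witnessed either by a match of $q_w$ inside an $O(n)$-depth fragment of $\Amc$ or by derivability of some $D$ with $D\sqsubseteq C_{q_w}\in\Tmc_w$, both of which are caught by $q_0\vee\bigvee_D q_D$. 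The main obstacle I expect is the joint design of $q_w$ and $C_{q_w}$ and its correctness proof: obtaining a \emph{single connected} CQ of polynomial size that pins down exactly the ``equal address, transition-violating content'' defect---in particular forcing the two compared cells to lie in a configuration and its immediate successor and nowhere else---while $C_{q_w}$ is crafted so that whenever a local check fails the induced query match is unavoidable; this is the delicate bookkeeping that the auxiliary connector/$F$-node labels are introduced to make possible.
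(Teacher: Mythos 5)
Your overall architecture (bottom-up verification by an \ELI TBox, defect detection by a single connected Boolean CQ, an \ELI-concept $C_{q_w}$ that forces a query match, and the rewriting $\bigvee_D q_D$ plus the query itself for Point~3) matches the paper's. However, there is a genuine gap in your division of labor between the TBox and the query. You store \emph{one} configuration per configuration tree and assign to the TBox the check that ``the head cell and its tape neighbours in the child configuration are updated correctly.'' This check requires identifying the leaf at the head's address $h$ (and at $h\pm 1$) in the \emph{child} tree and comparing it with the head leaf of the \emph{parent} tree. Those two leaves lie in different configuration trees, and their only common ancestor is the junction between the trees; a polynomial-size \ELI concept navigating down from that junction cannot enforce that the two leaves it reaches carry the \emph{same} $n$-bit address, because distinct conjuncts of an existential restriction may be witnessed by distinct leaves. (The bit-by-bit descent trick you invoke for increment checks works only when both leaves sit below a common ancestor \emph{inside one tree}, so that agreement on the high-order bits is forced by sharing the path; it does not transfer to two separate trees.) Indeed, if such cross-tree equal-address comparisons were expressible by a polynomial TBox, the query would be unnecessary altogether and the reduction would collapse into an AQ reduction, contradicting the \ExpTime upper bound for $(\ELI,\text{AQ})$. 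Your query, as described, only catches the frame condition (cells away from the head), so the head update and head movement are checked by nobody.

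The paper avoids this by having every configuration tree store \emph{two} configurations at its leaves --- the previous one and the current one, via disjoint families of concept names $A^i_\sigma, A^i_{q,\sigma}$ --- so that all transition checks (symbol written, head moved left/right, frame condition relative to the $W$-marker) compare two labels of leaves below the \emph{same} level-$n$ node of a \emph{single} tree and are therefore local TBox checks. The query is then only responsible for the pure equality ``current configuration of $T$ $=$ previous configuration of its successor $T'$,'' which involves no head arithmetic. A second point your proposal glosses over is that $S=r^-;r$ is symmetric, so predecessor and successor configuration trees are indistinguishable; the paper breaks this symmetry by cycling through three tree types using six families of concept names and the pairing $(i,j)\in\{(2,3),(4,5),(6,1)\}$, which your construction would also need in some form to prevent the query (and the acceptance propagation) from matching in the wrong direction. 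Your observations on Point~3 (unbounded recursion depth being harmless because the deep recursion derives only non-$\Sigma_w$ concept names and introduces no existentials on right-hand sides) are essentially the paper's argument and are fine.
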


\begin{theorem}
\label{thm:app:cont}
  Containment in $(\ELI,\text{CQ})$ is 2\ExpTime-hard.
\end{theorem}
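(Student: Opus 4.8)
The plan is to reduce the word problem of the exponentially space-bounded ATM $M$ fixed above — which is 2\ExpTime-hard by \citeA{DBLP:journals/jacm/ChandraKS81} — to non-containment in $(\ELI,\text{CQ})$, invoking Lemma~\ref{lem:2expbasic} as a black box. Given an input $w$ for $M$, I would first apply Lemma~\ref{lem:2expbasic} to obtain, in polynomial time, the $\ELI$ TBox $\Tmc_w$, the Boolean connected CQ $q_w$, the ABox signature $\Sigma_w$, and the distinguished concept name $A^*\notin\Sigma_w$. I then form the two OMQs
$$
  Q_1 \;=\; (\Tmc_w,\Sigma_w,\exists x\, A^*(x))
  \qquad\text{and}\qquad
  Q_2 \;=\; (\Tmc_w,\Sigma_w,q_w),
$$
both of which belong to $(\ELI,\text{CQ})$ — here $\exists x\, A^*(x)$ is a Boolean CQ and $q_w$ is a Boolean connected CQ — and which crucially share the same TBox $\Tmc_w$.

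The key step is to show that $M$ accepts $w$ iff $Q_1\not\subseteq Q_2$. Unfolding the definition of OMQ containment for Boolean queries, $Q_1\not\subseteq Q_2$ means that there is a $\Sigma_w$-ABox $\Amc$ consistent with $\Tmc_w$ (the only TBox involved) such that $()\in\mn{cert}(Q_1,\Amc)$ but $()\notin\mn{cert}(Q_2,\Amc)$, i.e.\ $\Amc,\Tmc_w\models\exists x\, A^*(x)$ and $\Amc,\Tmc_w\not\models q_w$. By Point~1 of Lemma~\ref{lem:2expbasic}, such an $\Amc$ exists precisely when $M$ accepts $w$; moreover, any such $\Amc$ is automatically consistent with $\Tmc_w$, since an inconsistent ABox would entail $q_w$, so the consistency proviso in the definition of containment is vacuous here. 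Thus $w\mapsto(Q_1,Q_2)$ is a polynomial-time reduction from acceptance to non-containment, which shows that non-containment — and hence, since 2\ExpTime is closed under complement, containment — in $(\ELI,\text{CQ})$ is 2\ExpTime-hard. Because $Q_1$ and $Q_2$ use the same TBox, this argument also yields the ``same TBox'' strengthening asserted in Theorem~\ref{thm:lower}.

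The only genuine difficulty has already been absorbed into Lemma~\ref{lem:2expbasic}, which I take as given: there one must build an $\ELI$ TBox that verifies, bottom-up in the canonical model of a guessed ABox, the presence of (a homomorphic image of) a tree encoding an accepting computation of $M$ — of doubly exponential length over exponentially long configurations — deriving $A^*$ at the root exactly in that case, paired with a connected Boolean CQ $q_w$ that becomes entailed iff the guessed computation tree is defective. This follows the strategy of the $(\ALCI,\text{CQ})$ lower bound in \citeA{DBLP:conf/cade/Lutz08}, with the difference that the computation is represented in the witnessing ABox rather than in the anonymous part of the model. For the present theorem, however, nothing beyond the statement of Lemma~\ref{lem:2expbasic} is needed: the two-OMQ construction above is the whole proof.
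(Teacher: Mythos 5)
Your proposal is correct and follows exactly the paper's own proof: the paper likewise forms the two OMQs $(\Tmc_w,\Sigma_w,\exists x\,A^*(x))$ and $(\Tmc_w,\Sigma_w,q_w)$ over the shared TBox and concludes directly from Point~1 of Lemma~\ref{lem:2expbasic} that non-containment holds iff $M$ accepts $w$. Your additional remark that the consistency proviso in the definition of containment is vacuous here (an inconsistent ABox would entail $q_w$) is a correct detail the paper leaves implicit.
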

\begin{proof}
  Let $w$ be an input to $M$, $\Tmc_w$, $q_w$, and $\Sigma_w$ as in
  Lemma~\ref{lem:2expbasic}. By Point~1 of Lemma~\ref{lem:2expbasic},
  $(\Tmc_w,\Sigma_w,\exists x \, A^*(x)) \not\subseteq (\Tmc_w,\Sigma_w,q_w)$ over
  $\Sigma_w$-ABoxes iff $M$ accepts $w$.
\end{proof}

\begin{theorem}
\label{thm:app:FOrewr}
  FO-rewritability in $(\ELI,\text{CQ})$ is 2\ExpTime-hard.
\end{theorem}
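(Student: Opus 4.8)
The plan is to mirror the proof of Theorem~\ref{thm:app:cont}, but now exploiting the finer structure provided by Lemma~\ref{lem:2expbasic}. Given an input $w$ to $M$, take $\Tmc_w$, $q_w$, $\Sigma_w$, and $A^* \notin \Sigma_w$ as in that lemma. I would build a single TBox $\Tmc$ by extending $\Tmc_w$ with the two inclusions
$$
\exists r . A \sqsubseteq A \qquad A \sqcap A^* \sqsubseteq C_{q_w},
$$
where $A,r$ are fresh (not in $\Tmc_w$ or $q_w$) and $C_{q_w}$ is the \ELI-concept from Point~4 of Lemma~\ref{lem:2expbasic}, and set $\Sigma = \Sigma_w \cup \{A, r\}$. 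This is exactly the construction used in the \coNExpTime\ FO-rewritability proof above (Theorem on FO-rewritability in $(\ELI,\text{rCQ})$), only with the tiling gadget replaced by the ATM gadget and the \emph{rooted} query replaced by a \emph{Boolean connected} CQ; the point of the new inclusions is that the unbounded $r$-chain carrying $A$ lets us pump the ABox, and $A \sqcap A^* \sqsubseteq C_{q_w}$ makes the query fire at the individual satisfying $A^*$ once $A$ has propagated to it.

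The claim to prove is: $M$ accepts $w$ iff $q_w$ is not FO-rewritable relative to \Tmc and $\Sigma$. For the ``only if'' direction, assume $M$ accepts $w$. By Point~2 of Lemma~\ref{lem:2expbasic} there is a $\Sigma_w$-ABox \Amc and $a_0 \in \mn{Ind}(\Amc)$ with $\Amc,\Tmc_w \models A^*(a_0)$ and $\Amc,\Tmc_w \not\models q_w$. Since every \ELI TBox is unraveling tolerant and by compactness I may take \Amc tree-shaped with root $a_0$, of some depth~$\ell$. For each $k > \ell$ form $\Amc_k$ by appending a fresh $r$-chain $r(a_0,a_1),\dots,r(a_{k-1},a_k)$ together with $A(a_k)$. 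Then $A$ propagates up the chain by $\exists r.A \sqsubseteq A$, so $\Amc_k,\Tmc \models A(a_0)$, hence $\Amc_k,\Tmc \models C_{q_w}(a_0)$ by $A \sqcap A^* \sqsubseteq C_{q_w}$, hence $\Amc_k,\Tmc \models q_w$ by Point~4 of the lemma. On the other hand $\Amc_k|_{\leq k-1}$ contains no occurrence of $A$, so on it the new inclusions never apply and \Tmc may be replaced by $\Tmc_w$; since $r$ occurs in neither $\Tmc_w$ nor $q_w$ and $\Amc,\Tmc_w \not\models q_w$, we get $\Amc_k|_{\leq k-1},\Tmc \not\models q_w$. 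Since $q_w$ is Boolean and connected, Theorem~\ref{lem:char} (the Boolean case, condition~$2'$) applies: I still need to check $\Amc_k|_{>0},\Tmc \not\models q_w$, which holds because $\Amc_k|_{>0}$ is the disjoint union of (copies of parts of) \Amc-minus-root with a plain $r$-chain, neither of which entails $q_w$ — here the connectedness of $q_w$ is exactly what is used. Thus condition~$2'$ fails for every $k$, so $q_w$ is not FO-rewritable. For the ``if'' direction, assume $M$ does not accept $w$; then by Point~3 of Lemma~\ref{lem:2expbasic} there is an FO-rewriting $\widehat q_w$ of $q_w$ relative to $\Tmc_w$ and $\Sigma_w$, and I argue $\widehat q_w$ (possibly adjusted into a Boolean sentence) is also an FO-rewriting of $q_w$ relative to \Tmc and $\Sigma$: using canonical models and the shape of the new inclusions, $\Amc,\Tmc \models q_w$ for a $\Sigma$-ABox \Amc implies either $\Amc,\Tmc_w \models q_w$ or $\Amc,\Tmc_w \models A^*(a)$ for some $a$; in the latter case Point~1 of Lemma~\ref{lem:2expbasic} (with $M$ not accepting $w$) still yields $\Amc,\Tmc_w \models q_w$, so in both cases the $\Sigma_w$-reduct satisfies $\widehat q_w$. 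Combining the two directions, and invoking Theorem~\ref{thm:app:cont} (or rather its proof technique) for the containment half that was already settled, gives the theorem.

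I expect the main obstacle to be the verification of $\Amc_k|_{>0},\Tmc \not\models q_w$ in the ``only if'' direction and, relatedly, establishing the case analysis ``$\Amc,\Tmc \models q_w$ implies $\Amc,\Tmc_w \models q_w$ or $\Amc,\Tmc_w \models A^*(a)$'' in the ``if'' direction. Both hinge on a careful canonical-model analysis of what the single added inclusion $A \sqcap A^* \sqsubseteq C_{q_w}$ can contribute: one must rule out matches of $q_w$ that partially use the anonymous tree hanging off an individual labelled $C_{q_w}$ while also touching genuine ABox structure. The connectedness and Booleanness of $q_w$ are the levers: a connected query cannot be matched partly in the core of one pseudo-tree component and partly in a disjoint $r$-chain, and $C_{q_w}^\Imc \neq \emptyset \Rightarrow \Imc \models q_w$ means any use of $C_{q_w}$ already yields a full match on its own, so it need not be entangled with the rest. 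Making these two observations precise, via the canonical model $\Imc_{\Amc,\Tmc}$ and Lemma~\ref{lem:canmodelproperties}, is the only genuinely non-routine part; everything else is the same pumping-and-reduct bookkeeping as in the rooted-query case.
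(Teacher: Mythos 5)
Your proposal is correct and follows essentially the same route as the paper: extend $\Tmc_w$ with an $A$-propagating rule along a fresh role, trigger $C_{q_w}$ at the $A^*$-individual, pump via an $r$-chain, and apply the Boolean case (condition~$2'$) of Theorem~\ref{lem:char}, with the converse direction reusing the $\Sigma_w$-rewriting. The one divergence is that the paper's second inclusion is $A \sqcap B \sqcap A^* \sqsubseteq C_{q_w}$ with a fresh concept name $B$ asserted only at the root $a_0$; this makes the check $\Amc_k|_{>0},\Tmc \not\models q_w$ immediate, since removing the core removes the unique $B$-individual and the inclusion can then never fire. In your $B$-free version that check needs the extra (easy, but necessary) observation that after deleting $a_0$ the only individuals satisfying $A$ are the chain individuals $a_1,\dots,a_k$, which carry no $\Sigma_w$-assertions and hence cannot satisfy $A^*$ — note that connectedness of $q_w$, which you invoke there, only rules out matches spanning the disjoint components and does not by itself prevent the new inclusion from firing on the chain.
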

\begin{proof}
  Let $w$ be an input to $M$ and $\Tmc_w$, $q_w$, $\Sigma_w$ as in
  Lemma~\ref{lem:2expbasic}. We obtain a TBox \Tmc by extending
  $\Tmc_w$ with the following:
$$
\begin{array}{rcll}
\exists r . A &\sqsubseteq& A \\[1mm]
A \sqcap B \sqcap A^*& \sqsubseteq& C_{q_w} 
\end{array}
$$
where $A$, $B$, and $r$ do not occur in $\Tmc_w$ and $q_w$, $A^*
\notin \Sigma_w$ is the concept name from Lemma~\ref{lem:2expbasic}
and $C_{q_w}$ the concept from Point~4 of that lemma. Set $\Sigma =
\Sigma_w \cup \{ A,B,r \}$.
It remains to prove the following. 
\\[2mm]
{\bf Claim}. $M$ accepts $w$ iff $q_w$ is not
FO-rewritable relative to \Tmc and $\Sigma$.
\\[2mm]
First assume that $M$ accepts $w$. By Point~2 of
Lemma~\ref{lem:2expbasic}, there is a $\Sigma_w$-ABox \Amc 
and $a_0 \in \mn{Ind}(\Amc)$ such that 
$\Amc,\Tmc_w \models A^*(a_0)$ and 
$\Amc,\Tmc_w \not\models q_w$.  
Since every \ELI TBox is
unraveling tolerant \citeA{lutz-2012} and by compactness, we can assume
w.l.o.g.\ that \Amc is tree-shaped with root $a_0$. 
Let $\ell$ be
the depth of~\Amc. For each $k > \ell$, let $\Amc_k$ be the ABox
obtained by extending \Amc with
$$
   B(a_0), r(a_0,a_{1}) ,\dots,r(a_{k-1},a_{k}), A(a_k)
$$
where $a_k,\dots,a_1$ do not occur in \Amc. Note that $\Amc_k$ is
tree-shaped and of depth at least $k$. Since $\Amc,\Tmc_w \models
A^*(a_0)$, we have $\Amc_k,\Tmc \models C_{q_w}(a_0)$. Applying Point
4 of Lemma~\ref{lem:2expbasic}, we obtain $\Amc_k,\Tmc \models
q_w$. To prove that $q_w$ is not FO-rewritable relative to \Tmc and
$\Sigma$, by Theorem~\ref{lem:char} it suffices to show that
$\Amc|_{>0},\Tmc \not\models q_w$ and $\Amc_k|_{\leq k-1},\Tmc
\not\models q_w$. Note that neither $\Amc_k|_{>0}$ nor $\Amc_k|_{\leq
  k-1}$ contains an $r$-path from an individual satisfying $B$ to an
individual satisfying $A$. On such ABoxes, $\Tmc$ can be replaced with
$\Tmc_w$ since the left-hand sides of the second additional concept
inclusions in \Tmc will never apply, and that concept inclusion is the
only (additional) one whose right-hand side contains symbols from
$\Tmc$ and $q_w$. It thus suffices to show that $\Amc_k|_{>0},\Tmc_w
\not\models q_w$ and $\Amc_k|_{k-1},\Tmc_w \not\models q_w$.  This
follows from $\Amc,\Tmc_w \not\models q_w$ and the fact that $A$, $B$,
and $r$ (the only symbols in assertions from $(\Amc_k|_{>0}) \setminus
\Amc$ and $(\Amc_k|_{k-1}) \setminus \Amc$) occur neither in $\Tmc_w$
nor in~$q_w$.

\smallskip

Now assume that $M$ does not accept $w$. By Point~3 of
Lemma~\ref{lem:2expbasic}, there is an FO-rewriting $\widehat q_w$ of
$q_w$ relative to $\Tmc_w$ and~$\Sigma_w$.
We argue that
%
%
$\widehat q_w$ is also an FO-rewriting of $q_w$ relative to \Tmc and
$\Sigma$.  

First assume $\Amc \models \widehat q_w$ for some
$\Sigma$-ABox~\Amc. Since $\widehat q_w$ uses only symbols from
$\Sigma_w$, this means that $\Amc' \models \widehat q_w$ where $\Amc'$
is the reduct of $\Amc$ to symbols in $\Sigma_w$. Thus
$\Amc',\Tmc_w \models q_w$, implying $\Amc,\Tmc \models q_w$.

Conversely, assume that $\Amc,\Tmc \models q_w$ for some $\Sigma$-ABox
\Amc. Using canonical models and the construction of
  \Tmc, one can show that this implies (i)~$\Amc,\Tmc_w \models q_w$
  or (ii)~$\Amc,\Tmc_w \models \exists x \, A^*(x)$.  In Case~(i), we
get $\Amc',\Tmc_w \models q_w$, where $\Amc'$ is the $\Sigma_w$-reduct
of $\Amc$. 
Thus $\Amc' \models \widehat q_w$, which implies
$\Amc \models \widehat q_w$. In Case~(ii), Point~1 of
Lemma~\ref{lem:2expbasic} yields $\Amc,\Tmc_w \models q_w$ and thus we
can proceed as in Case~(i).
\end{proof}

\subsection{Proof of Lemma~\ref{lem:2expbasic}}

Let $w=\sigma_0\cdots \sigma_{m-1} \in \Sigma^*$ be an input to
$M$.  We show how to construct a TBox $\Tmc_w$, query $q_w$,
and ABox signature $\Sigma_w$ that satisfy Points~1 to~4 of
Lemma~\ref{lem:2expbasic}. 
We first use a UCQ for $q_w$, which results
in a simpler reduction, and in a second step show how to replace the
UCQ with a CQ.

In the reduction, we represent each configuration of a computation of
$M$ by the leaves of a \emph{configuration tree} that has depth $n+2$
and whose edges are represented by the role composition $S=r^-;r$,
similarly to the representation of the $2^{n} \times 2^{n}$-torus in
the previous reduction. The trees representing configurations are then
interconnected to a \emph{computation tree} which represents the
computation of $M$ on $w$. This is illustrated in
Figure~\ref{fig:mod2}, where
\begin{figure}[t]
  \begin{center}
    \framebox[1\columnwidth]{\input{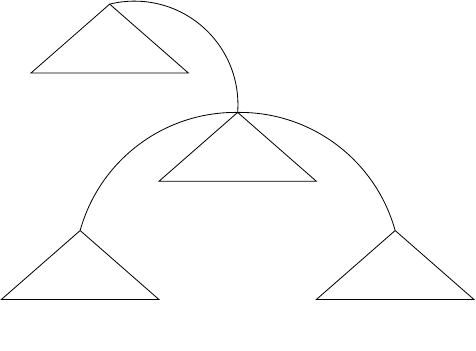_t}}
    \caption{Representing ATM computations.}
    \label{fig:mod2}
  \end{center}
\end{figure}
the tree $T_1$ represents an existential configuration and thus has
only one successor tree $T_2$, connected via the same role
composition $S$ that is also used inside configuration trees. In contrast,
$T_2$ represents a universal configuration with two successor
configurations $T_3$ and~$T_4$. 

The above description is actually an oversimplification. In fact,
every configuration tree stores two configurations instead of only
one: the current configuration and the previous configuration in the
computation. The query $q_w$ to be defined later on makes sure that
the previous configuration stored in a configuration tree is identical
to the current configuration stored in its predecessor configuration
tree. The actual transitions of $M$ are then represented locally
inside configuration trees. This is illustrated by a sequence of
existential configurations in Figure~\ref{fig:tree} where each $C_i$
represents a stored configuration, ``\mn{step}'' denotes a transition of
$M$, and ``$=$'' denotes identity of stored configurations.

Since the role composition $S$ used to connect configuration trees is
symmetric, it is difficult to distinguish predecessor configuration
trees from successor configuration trees. To break this symmetry, we
represent the current and next configuration stored in configuration
trees using six different sets of concept names. This is also
indicated in Figure~\ref{fig:tree} where $C_i$ means that we use
the $i$-th set of concept names for representing the stored 
configuration.
\begin{figure}[t]
  \begin{center}
    \framebox[1\columnwidth]{\input{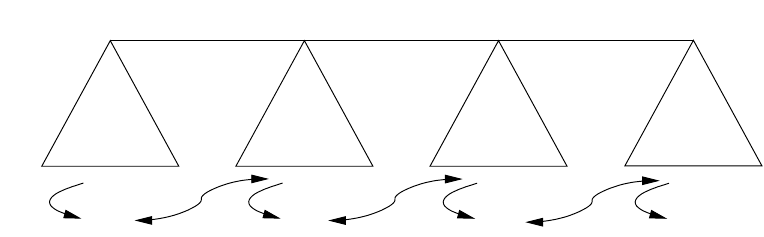_t}}
    \caption{Representing ATM computations.}
    \label{fig:tree}
  \end{center}
\end{figure}

We next construct the TBox $\Tmc_w$, which is used to verify the
existence of an accepting computation tree of $M$ on input $w$ in the
ABox, apart from the described copying of stored configurations which
will be achieved by the query $q_w$ later on.  The ABox signature
$\Sigma_w$ consists of the following symbols:
\begin{enumerate}

\item concept names $A_0,\dots,A_{n-1}$ and
  $\overline{A}_0,\dots,\overline{A}_{n-1}$ that serve as bits in the
  binary representation of a number between 0 and $2^{n}-1$,
  identifying the position of tape cells (that is, leaves in
  configuration trees);

\item for each $\sigma \in \Gamma$, the concept names $A^i_\sigma$, $1 \leq i \leq 6$;

\item for each $\sigma \in \Gamma$ and $q \in Q$, the concept names
  $A^i_{q,\sigma}$, $1 \leq i \leq 6$; 

\item the concept names $\overline{H}$, $W$, and $\overline{W}$
 which stand for ``cell without head'', ``cell being written to reach
current configuration'', and ``cell not being written to reach current
configuration'';


\item a concept name $A_{q,\sigma,M}$ for each $q \in Q$, $\sigma \in
  \Gamma$, and $M \in \{L,R\}$ to describe transitions of $M$;

\item a concept name $I$ that marks the initial configuration;

\item concept names $L_0,\dots,L_{n}$ to identify the levels of
  configuration trees and concept names $F_1,F_2,G_1,G_2$ to identify
  certain other nodes;

\item the role name $r$ used in the composition $S$.

\end{enumerate}
The concept names $A^i_\sigma$ are used to represent the symbols on the tape that
are currently not under the head and $A^i_{q,\sigma}$ to mark tape
cells under the head, indicating the head position, the current state,
and the symbol under the head.

We start with verifying single configuration trees. Such trees come in
three different types, depending on the set of concept names that we
use to represent the current and previous configuration stored. This
is shown in Figure~\ref{fig:tree}. Type~0 means that the previous
configuration is represented by concept names of the form $A^1_\sigma$
and $A^1_{q,\sigma}$ and the current configuration by concept names
$A^2_\sigma$ and $A^2_{q,\sigma}$, type~1 uses $A^3_\sigma$ and
$A^3_{q,\sigma}$ for the previous configuration, and so on. We start
with verifying configuration trees of type~0. Intuitively, nodes on
levels 0 to $n$ form the configuration tree proper while nodes on
levels $n+1$ and $n+2$ form gadgets appended to the tree nodes on
level $n$, similarly to what is shown in Figure~\ref{fig:mod}. We
identify each node on level $n+1$ with one of the concept names $F_1$,
$F_2$ and each node on level $n+2$ with one of the concept names
$G_1,G_2$. In contrast to Figure~\ref{fig:mod}, there are only two
nodes below each level $n$ node, one labeled $F_1$ and one labeled
$F_2$. Moreover, every $F_\ell$ node must have a $G_\ell$-node
successor.  Each $G_\ell$-node represents a tape cell, and the
position of that cell is encoded in binary by the concept names
$A_i,\overline{A}_i$. The $G_1$- and $G_2$-node below the same level
$n$ node must both have the same position and, for similar reasons as
in the previous reduction, the $F_\ell$ nodes in between receive a
complementary labeling regarding these concept names and also
regarding the concept names $A^\ell_\sigma, A^\ell_{q,\sigma}$.  At
$G_1$-nodes, the concept names $A^1_\sigma$ and $A^1_{q,\sigma}$ are
used to store information and at $G_2$-nodes, we use the concept names
$A^2_\sigma$ and $A^2_{q,\sigma}$. Thus, $G_1$-nodes represent the
previous configuration while $G_2$-nodes representing the current
configutation. The concept names $\overline{H}, W, \overline{W}$ are
used for the current configuration, only.

The verification of configuration trees is again bottom-up, starting
at level $n+2$ nodes:
$$
\begin{array}{c}
  A_i  \sqsubseteq  \mn{ok}_i \qquad
  \overline{A}_i  \sqsubseteq  \mn{ok}_i  \qquad
  A^1_\sigma \sqsubseteq  \Gamma\mn{ok}_1 \quad 
  A^1_{q,\sigma} \sqsubseteq \Gamma\mn{ok}_1 
\\[1mm]
  \mn{ok}_0 \sqcap \cdots \sqcap \mn{ok}_{n-1} \sqcap \Gamma\mn{ok}_1 \sqcap G_1
 \sqsubseteq G_1\mn{ok} \\[1mm]
  A^2_\sigma \sqcap \overline{H} \sqcap \overline{W} \sqsubseteq  \Gamma\mn{ok}_2 \quad 
  A^2_\sigma \sqcap \overline{H} \sqcap W \sqsubseteq  \Gamma\mn{ok}_2 \\[1mm]
  A^2_{q,\sigma} \sqcap \overline{W} \sqsubseteq \Gamma\mn{ok}_2
\\[1mm]
  \mn{ok}_0 \sqcap \cdots \sqcap \mn{ok}_{n-1} \sqcap \Gamma\mn{ok}_2 \sqcap G_2
 \sqsubseteq G_2\mn{ok}
\end{array}
$$
where $i$ ranges over $0..n-1$, $q$ over the elements of $Q$ and
$\sigma$ over the elements of $\Gamma$.  Note that every $G_\ell$-node
must be labeled with at least one of $A_i$ and $\overline{A}_i$ for
each $i$ and by at least one concept name of the form $A^\ell_\sigma$
or $A^\ell_{q,\sigma}$. If $\ell=2$, then an $A^\ell_\sigma$-label (as
opposed to an $A^\ell_{q,\sigma}$-label) is acceptable only if there
is also an $\overline{H}$-label. For $\ell=2$, there must also be a
$W$- or $\overline{W}$-label, the former only being acceptable if the
head is not on the current cell.
%
%
We now verify $F_\ell$-nodes:
$$
\begin{array}{r@{\;}c@{\;}l}
  A_i \sqcap \exists S . (G_\ell\mn{ok} \sqcap \overline{A}_i) & \sqsubseteq & \mn{ok}_{\ell,i} \\[1mm]
  \overline{A}_i \sqcap \exists S . (G_\ell\mn{ok} \sqcap A_i) & \sqsubseteq & \mn{ok}_{\ell,i} \\[1mm]
  \midsqcap_{\beta \in (\Gamma \cup (Q \times \Gamma)) \setminus \{ \alpha \} } A^\ell_\beta 
\sqcap \exists S . (G_\ell\mn{ok} \sqcap A^\ell_\alpha) & \sqsubseteq & \Gamma\mn{ok}'_{\ell} \\[1mm]
  \mn{ok}_{\ell,0} \sqcap \cdots \sqcap \mn{ok}_{\ell,n-1} \sqcap
  \Gamma\mn{ok}'_\ell \sqcap F_\ell &\sqsubseteq& F_\ell\mn{ok}
\end{array}
$$
where $\ell$ ranges over 1,2, $i$ over $0..n-1$, and $\alpha$ over
$\Gamma \cup (Q \times \Gamma)$.  We have not yet guaranteed that
$G_\ell$-nodes make true at most one of $A_i$ and $\overline{A}_i$ for
each $i$, at most one concept name of the form $A^\ell_\alpha$, and
not simultaneously $W$ and $\overline{W}$, or $\overline{H}$ and a
concept name of the form $A^2_{q,\sigma}$, or $W$ and a concept name
$A^2_{q,\sigma}$.  Moreover, the first three lines may speak about
different $S$-successors. It is thus not clear that they achieve the
intended complementary labeling.  We fix these problems by adding the
following
inclusions:
%
$$
\begin{array}{r@{\;}c@{\;}l}
\exists S.(G_\ell \sqcap A_i) \sqcap \exists S.(G_\ell \sqcap \overline{A_i}) 
  & \sqsubseteq& C_{q_w} \\[1mm]
\exists S.(G_\ell \sqcap A^\ell_\alpha) \sqcap \exists S.(G_\ell \sqcap A^\ell_{\beta})
  & \sqsubseteq& C_{q_w} \\[1mm]
\exists S.(G_\ell \sqcap W) \sqcap \exists S.(G_\ell \sqcap \overline{W}) 
  & \sqsubseteq& C_{q_w} \\[1mm]
\exists S.(G_\ell \sqcap A^2_{q,\sigma}) \sqcap \exists S.(G_\ell \sqcap \overline{H}) 
  & \sqsubseteq& C_{q_w} \\[1mm]
\exists S.(G_\ell \sqcap A^2_{q,\sigma}) \sqcap \exists S.(G_\ell
\sqcap W) 
  & \sqsubseteq& C_{q_w} 
\end{array}
$$
where $\ell$ ranges over $1,2$, $i$ over $0..n-1$, $\alpha,\beta$
take distinct values from $\Gamma \cup (Q \times \Gamma)$, $q$
ranges over $Q$, and $\sigma$ over $\Gamma$. Moreover,
$C_{q_w}$ is an \ELI-concept to be defined later that will satisfy
Point~4 of Lemma~\ref{lem:2expbasic}, that is, make the query $q_w$
true.

\smallskip



We now verify the existence of level $n$ of the tree, identified by
the concept name $L_n$. Nodes here need to have $S$-successors in
$F_1$ and $F_2$ that are again labeled complementarily regarding the
concept names $A_i,\overline{A}_i$ (in other words, the $L_n$ node
agrees with the labeling of the $G_1$- and $G_2$-node leaves below
it):
%
\begin{align*} 
  A_i \sqcap \exists S . (F_\ell\mn{ok} \sqcap \overline{A}_i) & \sqsubseteq  
  \mn{ok}'_{\ell,i} \\
  \overline{A}_i \sqcap \exists S . (F_\ell\mn{ok} \sqcap A_i) & \sqsubseteq  
  \mn{ok}'_{\ell,i} \\
  \mn{ok}'_{1,0} \sqcap \cdots \sqcap \mn{ok}'_{1,n-1} \sqcap 
  \mn{ok}'_{2,0} \sqcap \cdots \sqcap \mn{ok}'_{2,n-1} \sqcap L_n 
&\sqsubseteq L_n\mn{ok}   \\
\exists S.(F_\ell \sqcap A_i) \sqcap \exists S.(F_\ell \sqcap \overline{A}_i) 
  & \sqsubseteq C_{q_w} \\
\exists S.(F_\ell \sqcap A^\ell_\alpha) \sqcap \exists S.(F_\ell \sqcap 
A^\ell_{\beta})
  & \sqsubseteq C_{q_w} 
\end{align*}
where $\ell$ ranges over $1..2$, $i$ over $0..n-1$, and $\alpha,\beta$
take distinct values from $\Gamma \cup (Q \times \Gamma)$. 

We next verify the existence of levels $n-1$ to 0 of the configuration
tree. We exploit that we have already stored the position of the leaves
in the concept names $A_i$, $\overline{A}_i$ at $L_{n}$-nodes. Each
node on level $i$ branches on the concept names $A_i, \overline{A}_i$
and keeps the choice of $A_j, \overline{A}_j$ for all $j< i$:
%
%
$$
%
\begin{array}{r@{\;}c@{\;}l}
   \exists S . (L_{i+1}\mn{ok} \sqcap A_{i}) \sqcap 
   \exists S . (L_{i+1}\mn{ok} \sqcap \overline{A}_{i}) & \sqsubseteq & \mn{succ}_i \\[1mm]
   A_j \sqcap \exists S . (L_{i+1}\mn{ok} \sqcap A_j) &\sqsubseteq& \mn{ok}''_{i,j} \\[1mm]
   \overline{A}_j \sqcap \exists S . (L_{i+1}\mn{ok} \sqcap \overline{A}_j) &\sqsubseteq&
   \mn{ok}''_{i,j} \\[1mm]
   \mn{succ}_i \sqcap \mn{ok}''_{i,0} \sqcap \cdots \sqcap
   \mn{ok}''_{i,i-1} \sqcap L_i
   &\sqsubseteq& L_{i}\mn{ok} \\[1mm]
  \exists S.(L_{i+1} \sqcap A_j) \sqcap \exists S.(L_{i+1} \sqcap \overline{A_j}) 
  & \sqsubseteq& C_{q_w}
\end{array}
$$
where $i$ ranges over $0..n-1$ and $j$ over $0..i-1$. We also want
that configuration trees have exactly one leaf labeled with a concept
name of the form $A^2_{q,\sigma}$ and exactly one leaf labeled with 
$W$. We start with enforcing the ``at most one'' part of ``exactly one'':
%
\begin{align*} 
  F_2 \sqcap \exists S . (G_2 \sqcap A^2_{q,\sigma})  & \sqsubseteq  H 
  \\
  L_n \sqcap \exists S . (F_2 \sqcap H) & \sqsubseteq  H 
  \\
  L_i \sqcap \exists S . (L_{i+1} \sqcap H) & \sqsubseteq H \\[-0.5mm]
  L_i \sqcap \exists S . (L_{i+1} \sqcap A_{i} \sqcap H) \sqcap
  \exists S . (L_{i+1} \sqcap \overline{A}_{i} \sqcap H) & \sqsubseteq
   C_{q_w} \\
  F_2 \sqcap \exists S . (G_2 \sqcap W)  & \sqsubseteq  W'
  \\
  L_n \sqcap \exists S . (F_2 \sqcap W') & \sqsubseteq  W'
  \\
  L_i \sqcap \exists S . (L_{i+1} \sqcap W') & \sqsubseteq W' \\[-0.5mm]
  L_i \sqcap \exists S . (L_{i+1} \sqcap A_{i} \sqcap W') \sqcap
  \exists S . (L_{i+1} \sqcap \overline{A}_{i} \sqcap W') & \sqsubseteq
   C_{q_w}
\end{align*}
where $i$ ranges over $0..n-1$ and $q,\sigma$ over $Q \times
\Gamma$. Note that we use $A_i$ and $\overline{A}_{i}$ to distinguish
left successors and right successors in the tree: when we see a label
$A^2_{q,\sigma}$ at a $G_2$-leaf, we propagate the marker $H$ up the 
tree and additionally make sure that, at no node of the tree, we have
an $H$-marker coming both from the left successor and from the right
successor. We deal with $W$ in a similar way, propagating the marker
$W'$. 

The ``at least one'' part of ``exactly one'' requires some changes to
the concept inclusions already given, which we only sketch. We have
not included theses changes in the original version of the inclusions
above to avoid cluttering the presentation. Essentially, we have to
keep track of where we have already seen a concept name of the form
$A^2_{q,\sigma}$ in a $G_2$-leaf and where we have already seen a
$G_2$-leaf labeled $W$. For simplicity, let us concentrate on the
latter.  We replace the concept inclusion
$$
  \mn{ok}_{2,0} \sqcap \cdots \sqcap \mn{ok}_{2,n-1} \sqcap
  \Gamma\mn{ok}'_2 \sqcap F_2 \sqsubseteq F_2\mn{ok}
$$ 
above with
\begin{align*}
  \mn{ok}_{2,0} \sqcap \cdots \sqcap \mn{ok}_{2,n-1} \sqcap \Gamma\mn{ok}'_2 
  \sqcap F_2 \sqcap & \\
  \exists S.(G_2 \sqcap W) & \sqsubseteq  F^W_2\mn{ok} \\[1mm]
  \mn{ok}_{2,0} \sqcap \cdots \sqcap \mn{ok}_{2,n-1} \sqcap \Gamma\mn{ok}'_2 
  \sqcap F_2 \sqcap & \\
  \exists S.(G_2 \sqcap \overline{W}) & \sqsubseteq F^{\overline{W}}_2\mn{ok} 
\end{align*}
Note that we have replaced $F_2\mn{ok}$ in the conclusion with $F^W_2\mn{ok}$ and
$F^{\overline{W}}_2\mn{ok}$, recording whether or not there is a $G_2$-node
satisfying $W$ below. The information that we have seen $W$ is then
propagated propagated up the tree, which requires replacing each of
$L_n\mn{ok},\dots,L_1\mn{ok}$ with two versions, $L^W_i\mn{ok}$ and
$L^{\overline{W}}_i\mn{ok}$. On each level, we set $L^{\overline{W}}_{i}\mn{ok}$ if
both successors are labeled with $L^{\overline{W}}_{i+1}\mn{ok}$ and
$L^W_{i}\mn{ok}$ if one successor is labeled with $L^W_{i+1}\mn{ok}$, but not
both. In fact, if both successors are labeled $L^W_{i+1}\mn{ok}$, then
neither $L^W_{i}\mn{ok}$ nor $L^{\overline{W}}_{i}\mn{ok}$ will be set and this is
exactly how we can ensure that there is at most one $G_2$-leaf labeled
$W$. It can be enforced in an analogous way that there is a $G_2$
labeled with a concept name of the form $A^2_{q,\sigma}$. In fact, we
have to deal with both $W$ and these concept names simultaneously,
using concept names such as $L^{W,H}_i\mn{ok}$ indicating that we are at
a tree node on level $i$ below which there is a $G_2$-leaf satisfying
$W$ and a $G_2$-leaf satisfying a concept name  $A^2_{q,\sigma}$.
Details are omitted.

At this point, we have essentially finished the verification of
configuration trees of type 0 (we will comment on the other types
below) and move on to verify computation trees, also in a bottom-up
fashion.  To be a proper part of a computation tree, a configuration
must describe an accepting halting configuration or have
successor configuration trees as required by the transition
relation. For type 0 configuration trees, the former case is covered by
$$ 
\begin{array}{r@{\;}c@{\;}l}
L_0\mn{ok} \sqcap \exists S^{n+2}. (G_2 \sqcap A^2_{q_a,\sigma}) &\sqsubseteq&
\mn{tree}_0 \\[1mm]
L_0 \sqcap \exists S^{n+2} . (G_2 \sqcap A^2_{\alpha}) 
\sqcap
\exists S^{n+2} . (G_2 \sqcap A^2_{\beta}) &\sqsubseteq&
C_{q_w}
\end{array}
$$
where $q_a$ is the accepting state, $\sigma$ ranges over all elements
of $\Gamma$, and $\alpha$ and $\beta$ are distinct elements of $Q
\times \Gamma$. For the latter case and existential states, we add
$$
L_0\mn{ok} \sqcap \exists S^{n+2} . (G_2 \sqcap
  A^2_{q_\exists,\sigma_0} ) \sqcap \exists S . (\mn{tree}_1 \sqcap A_{q_1,\sigma_1,M_1}) \sqsubseteq 
  \mn{tree}_0 
$$
for all $q_\exists \in Q_\exists$, $\sigma_0 \in \Gamma$, and $(q_1,\sigma_1,M_1) \in
\Delta(q_\exists,\sigma_0)$; for universal states, we add
%
\begin{align*}
  L_0\mn{ok} \sqcap\exists S^{n+2} . (G_2 \sqcap A^2_{q_\forall,\sigma_0} ) 
  \, \sqcap \quad& \\[-0.5mm]
  \exists S.(\mn{tree}_1 \sqcap A_{q_1,\sigma_1,M_1}) \, \sqcap \cdots \sqcap 
  \exists S . (\mn{tree}_1 \sqcap A_{q_k,\sigma_k,M_k}) \quad& \\
  \sqsubseteq \;  \mn{tree}_0&
\end{align*}
for all $q_\forall \in Q_\forall$ and $\sigma_0 \in \Gamma$
when $\Delta(q_\forall,\sigma_0) = \{ (q_1,\sigma_1,M_1), \dots,
(q_k,\sigma_k,M_k) \}$. Note that we have used the concept names
$A_{q,\sigma,M}$ as markers here. We still need to enforce that
they really represent the transition in the configuration tree at
whose root they are located. We do this as follows. Each marker
state is the actual state in the current configuration:
$$
  A_{q_1,\sigma_1,M} \sqcap \exists S^{n+2}. (G_2 \sqcap 
  A^2_{q_2,\sigma_2}) \sqsubseteq C_{q_w}
$$
for all distinct $q_1,q_2 \in Q$, all $\sigma_1,\sigma_2 \in \Gamma$,
and all $M \in \{L,R\}$. Each marker symbol is the actual symbol
written in the current configuration:
$$
  A_{q_1,\sigma_1,M} \sqcap \exists S^{n+2}. (G_2 \sqcap W \sqcap
  A^2_{\sigma_2}) \sqsubseteq C_{q_w}
$$
for all distinct $\sigma_1,\sigma_2 \in \Gamma$, all $q_1 \in Q$ and
all $M \in \{L,R\}$. Each marker movement is the actual movement in
the current configuration. To achieve this, we first say that the
$W$-marker is exactly where the head was before:
$$
  L_n \sqcap \exists S^2 . (G_1 \sqcap A^1_{q,\sigma}) 
\sqcap \exists S^2 . (G_2 \sqcap \overline{W}) 
\sqsubseteq C_{q_w}
$$
for all $q \in Q$ and $\sigma \in \Sigma$. Now, right moves
are ensured in the following way:
%
\begin{align*}
   A_{q,\sigma,R} \sqcap \exists S^i . [ L_i \sqcap \exists S . (L_{i+1} 
   \sqcap \overline{A}_i \sqcap \exists S . (L_{i+2} \sqcap A_i \,\sqcap 
   \qquad & \\
   \exists S . \cdots \sqcap \exists S . (L_n \sqcap A_n \sqcap \exists S^2. 
   (G_2 \sqcap \overline{W}) \cdots ) \, \sqcap \qquad & \\
   \exists S . (L_{i+1} \sqcap A_i \sqcap \exists S . (L_{i+2} \sqcap   
   \overline{A}_i \,\sqcap \qquad & \\
   \exists S . \cdots \sqcap \exists S . (L_n \sqcap \overline{A}_n \sqcap 
   \exists S^2. (G_2 \sqcap A^2_{q,\sigma}) \cdots ) ] \sqsubseteq C_{q_w}&
\end{align*}
for all $q \in Q$, $\sigma \in \Gamma$, and $0 \leq i < n$. Note that
this prevents having a leaf labeled with $\overline{W}$ and a leaf to
the immediate right labeled with $A^2_{q,\sigma}$. We ensure that the
leaves are immediate neighbors by going one step to the right and then
only to the left for the first leaf and one step to the left and then
only to the right for the second leaf. We also have to forbid the case
where we want to do a right move, but are already on the right-most
tape cell:
%
\begin{align*}
  L_0 \sqcap \exists S^{n+2} . (G_2 \sqcap A^2_{q_1,\sigma_1} \sqcap A_0 \sqcap
  \cdots \sqcap A_{n-1}) \; \sqcap & \\
  \exists S . (L_0 \sqcap A_{q_2,\sigma_2,R}) & \sqsubseteq C_{q_w}
\end{align*}
for all $q_1,q_2 \in Q$ and $\sigma_1,\sigma_2 \in \Gamma$.  Left
moves can be dealt with in a similar way. To implement the transition
correctly, it remains to state that cells which are not written do not
change their content. This is straightforward:
$$
\begin{array}{r@{\;}c@{\;}l}
  L_n \sqcap \exists S^2 . (G_1 \sqcap A^1_{\sigma_1}) 
\sqcap \exists S^2 . (G_2 \sqcap A^2_{\sigma_2} \sqcap \overline{W}) 
&\sqsubseteq& C_{q_w}  \\[1mm]
  L_n \sqcap \exists S^2 . (G_1 \sqcap A^1_{\sigma_1}) 
\sqcap \exists S^2 . (G_2 \sqcap A^2_{q,\sigma_2} \sqcap \overline{W}) 
&\sqsubseteq& C_{q_w}  \\[1mm]
\end{array}
$$
where $q \in Q$ and distinct $\sigma_1,\sigma_2 \in \Gamma$. 

We need analogous concept inclusions to verify trees of type~1 and~2,
setting concept names $\mn{tree}^1$ and $\mn{tree}^2$ instead of
$\mn{tree}^0$, and to interlink these trees in the computation
tree. The main difference is that we replace the concept names $A^i_a$
and $A^i_{q,a}$ with $i \in \{1,2\}$ with concept names that have
different values for $i$, as described above. Details are omitted.

To complete the verification of (accepting) computation trees, it
remains to set the concept name $A^*$ from Lemma~\ref{lem:2expbasic}
when we reach the initial configuration. We expect that the root of
the initial configuration tree is marked with $I$ and put
$$
  I \sqcap \mn{tree}^j \sqsubseteq A^*
$$
for all $j \in \{0,1,2\}$. Of course, we also need to make sure that
the tree marked by $I$ really represents the initial configuration.
In particular, we expect to see the initial state $q_0$, that the
first $n$ tape cells are filled with the input $w$ and that all other
tape cells are labeled with the blank symbol. All this is easy to
achieve. As an example, assume that the first symbol of $w$ is
$\sigma$. Then put
$$
   I \sqcap \exists S^{n+2} . (G_2 \sqcap \overline{A}_0 \sqcap \cdots
   \sqcap \overline{A}_{n-1} \sqcap 
A^2_{\alpha})
   \sqsubseteq C_{q_w}
$$
for every $\alpha \in \Gamma \cup (Q \times \Gamma)$ that is different
from $(q_0,\sigma)$. To prepare for a simpler formulation of the
query, we add the final inclusions
$$
G_1 \sqsubseteq G \qquad G_2 \sqsubseteq G
$$
which allows us to use $G$ for identifying $G_\ell$-nodes,
independently of the value of $\ell$.

This ends the definition of the TBox $\Tmc_w$. To finish the
reduction, it remains to ensure that configurations are properly
copied between configuration trees, as initially described. The
\emph{$i$-configuration of a configuration tree} is the configuration
represented at the leaves of that tree using the concept names
$A^i_\sigma$ and $A^i_{q,\sigma}$, $i \in \{1,\dots,6\}$. Note that
configuration trees of type 0 have 1- and 2-configurations, trees of
type 1 have 3- and 4-configurations, and trees of type 2 have 5- and
6-configurations. We say that two configuration trees are
\emph{neighboring} if their roots are connected by the role
composition $S$. We have to ensure the following:
\begin{enumerate}

\item[($\dagger$)] if $T$ and $T'$ are neighboring configuration trees,
  then the $i$-configuration of $T$ (if existant) coincides with the
  $j$-configuration of $T'$ (if existant), for all $(i,j) \in \{(2,3),(4,5),(6,1)\}$.




\end{enumerate}
For each of the listed pairs $(i,j)$, condition ($\dagger$) will be
ensured with a UCQ, and the final UCQ $q_w$ is the disjunction of
these. For simplicity, we concentrate on the case $(i,j)=(2,3)$. A bit
more verbosely, Condition~($\dagger$) can then be rephrased as follows:
\begin{itemize}

\item[($\ddagger$)] if $a$ and $b$ are leaves in neighboring configuration
  trees of type~0 and type~1, respectively, and $a$ and $b$ are
  labeled identically regarding the concept names
  $A_i,\overline{A}_i$, then there are no distinct $\alpha,\beta \in
  \Gamma \cup (Q \times \Gamma)$ such that $a$ is labeled with
  $A^2_\alpha$ and $b$ with $A^3_\beta$.

\end{itemize}
We use one CQ $q$ for each choice of $\alpha$ and $\beta$ such that
$q$ has a match precisely if there is the undesired labeling described
in~($\ddagger$).
\begin{figure}[t!]
  \begin{center}
    \framebox[1\columnwidth]{\input{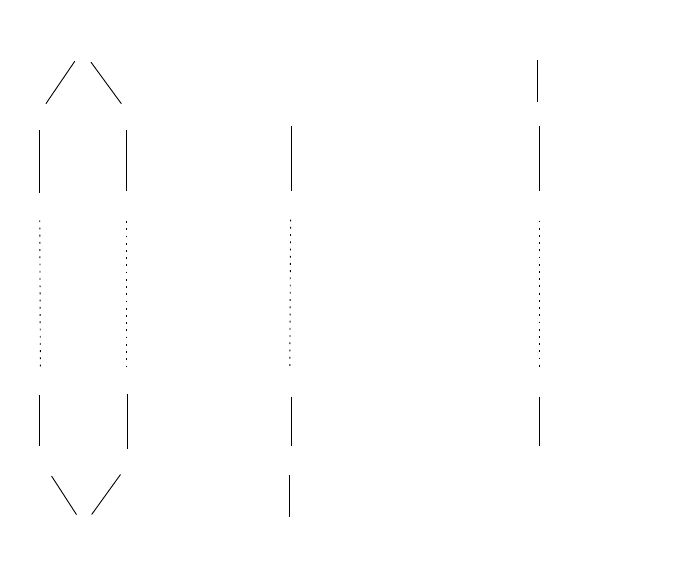_t}}
    \caption{Component query and two identifications.}
    \label{fig:q1b}
  \end{center}
\end{figure}
We construct $q$ from component queries $p_0,\dots,p_{n-1}$, which all
take the form of the query show on the left-hand side of
Figure~\ref{fig:q1b}. Note that all edges are $S$-edges and that the
only difference between the component queries is which concept names
$A_i$ and $\overline{A}_i$ are used. All variables are quantified
variables. We assemble $p_0,\dots,p_{n-1}$ into the desired query $q$
by taking variable disjoint copies of $p_0,\dots,p_{n-1}$ and then
identifying (i)~the $x$-variables of all components and (ii)~the
$x'$-variables of all components.

To see why $q$ achieves~($\ddagger$), first note that the variables $x$ and
$x'$ must be mapped to leaves of configuration trees because of their
$G$-label. Call these leaves $a$ and~$a'$. Since $x$ is labeled with
$A^2_\alpha$ and $x'$ with $A^3_\beta$, $a$ and $a'$ must be in
different trees. Since they are connected to $x$ in the query, both
$x_0$ and $x'_0$ must then be mapped either to $a$ or to its
predecessor; likewise, $x_{2n+4}$ and $x'_{2n+4}$ must be mapped
either to $a'$ or to its predecessor.  Because of the labeling of $a$
and $a'$ and the predecessors in the configuration tree with $A_i$ and
$\overline{A}_i$, we are actually even more constrained: exactly one
of $x_0$ and $x'_0$ must be mapped to $a$, and exactly one of
$x_{2n+4}$ and $x'_{2n+4}$ to~$a'$. Since the paths between leaves in
different configuration trees in the computation tree have length at
least $2n+5$ and $q$ contains paths from $x_0$ to $x_{2n+4}$ and from
$x'_0$ to $x'_{2n+4}$ of length $2n+4$, only the following cases are
possible:
\begin{itemize}

\item $x_0$ is mapped to $a$, $x'_0$ to the predecessor of $a$,
  $x'_{2n+4}$ to $a'$, and $x_{2n+4}$ to the predecessor of $a'$;

\item $x'_0$ is mapped to $a$, $x_0$ to the predecessor of $a$,
  $x_{2n+4}$ to $a'$, and $x'_{2n+4}$ to the predecessor of $a'$.

\end{itemize}
This gives rise to the two variable identifications in each query
$p_i$ shown in Figure~\ref{fig:q1b}.
Note that the first case implies
that $a$ and $a'$ are both labeled with $A_i$ while they are both
labeled with $\overline{A}_i$ in the second case. In summary, $a$ and
$a'$ must thus agree on all concept names $A_i$, $\overline{A}_i$.
Note that with the identification $x_0=x$ (resp.\ $x'_0=x$), there is
a path from $x$ to $x'$ in the query of length $2n+5$. Thus, $a$ and
$a'$ are in neighboring configuration trees. Since $a_1$ must satisfy
$A^2_\sigma$ and $a_2$ must satisfy $A^3_\beta$ due to the labeling of
$x$ and $x'$, we have achieved~($\ddagger$).

We now show how to replace the UCQ used in the reduction with a 
CQ. This requires the following changes:
\begin{enumerate}

\item the $F$-nodes in configuration trees receive additional labels:
  when a $G$-node is labeled with $A^i_\alpha$, then its predecessor
  $F$-node is labeled with $A^i_\beta$ for all $\beta \in
  (\Gamma \cup (Q \times \Gamma)) \setminus \{ \alpha \}$ and
  with $A^j_\beta$ for all $j \in \{ 1,\dots,6\} \setminus \{i \}$ and 
  all $\beta \in  \Gamma \cup (Q \times \Gamma)$;

\item the roots of configuration trees receive an additional label
  $R_0$ or $R_1$, alternating with neighboring trees;

\item the query construction is modified.

\end{enumerate}
Points~1 and~2 are important for the CQ to be constructed to work
correctly and can be achieved in a straightforward way by modifying
$\Tmc_w$, details are omitted. We thus concentrate on Point~3. The
desired CQ $q$ is again constructed from component queries. We use $n$
components as shown in Figure~\ref{fig:q1b}, except that the
$A^2_\alpha$ and $A^3_\beta$-labels are dropped. We further add the
component (partially) shown in Figure~\ref{fig:cq} where again $x$ and $x'$ are
the variables shared with the other components, and where we assume
that $C_0,\dots,C_{m-1}$ are all concept names of the form
$A^i_\alpha$, $i \in \{1,\dots,6\}$ and $\alpha \in \Gamma \cup (Q \times \Gamma)$.
\begin{figure}[t!]
  \begin{center}
    \framebox[1\columnwidth]{\input{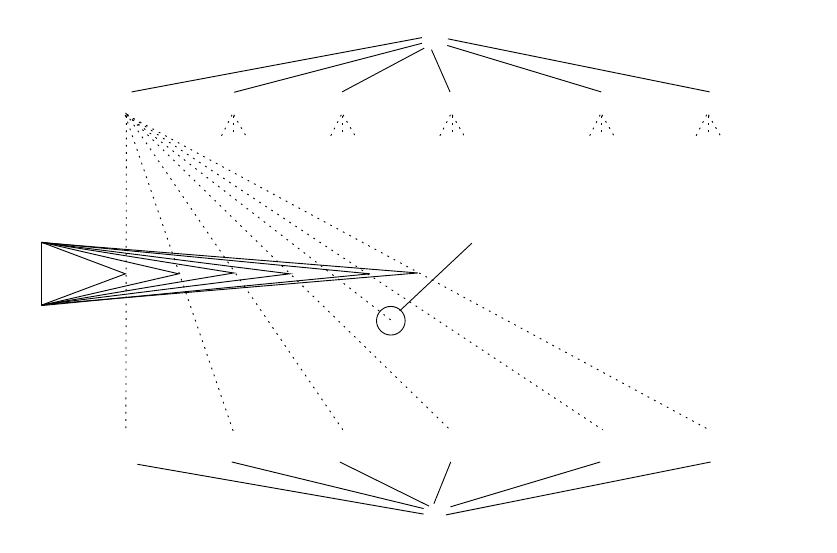_t}}
    \caption{Additional component for CQ.}
    \label{fig:cq}
  \end{center}
\end{figure}
The dotted edges denote $S$-paths of length $2n+4$. There is an
$S$-path from every variable $x_{i,0}$ to every variable $x_{j,2n+4}$
except when $x_{i,0}$ is labeled with a concept name
$C_i=A^\ell_\sigma$ and $x_{j,2n+4}$ with $C_j=A^k_{\sigma'}$ such that
$(\ell,k) \in \{(2,3),(4,5),(6,1)\}$ and $\sigma\neq \sigma'$.  The
variables $u$ and $u'$ are connected with the middle point of each
$S$-path, that is, with the variable on the path which has distance
$n+2$ to the $x_{\ell,0}$ variable where the path starts and also
distance $n+2$ to the $x_{k,2n+4}$ variable where it ends.

We have to argue that the CQ $q$ just constructed
achieves~($\ddagger$).  As before, $x$ and $x'$ must be mapped to
leaves of configuration trees because of their $G$-label. Call these leaves
$a$ and $a'$. All $x_{i,0}$ must then be mapped to $a$ or its
predecessor, and all $x_{i,2n+4}$ must be mapped to $a'$ or its
predecessor. In fact, due to the labeling of $a$ and $a'$ and their
predecessors in the configuration tree (see Point~1 above), exactly one
variable $x_{i,0}$ from $x_{0,0},\dots,x_{m-1,0}$ is mapped to $a$
while all others are mapped to the predecessor of $a$; likewise,
exactly one variable $x_{j,2n+4}$ from $x_{0,2n+4},\dots,x_{m-1,2n+4}$ is
mapped to $a'$ while all others are mapped to the predecessor of
$a'$. To achieve ($\ddagger$), we have to argue that $x_{i,0}$ and
$x_{j,2n+4}$ are labeled with concept names $C_i = A^\ell_\sigma$ and
$C_j = A^k_{\sigma'}$ where $(\ell,k) \in \{(2,3),(4,5),(6,1)\}$ and $\sigma\neq \sigma'$, and
that $a$ and $a'$ are in neighboring computation trees.

We start with the former. Assume to the contrary that $x_{i,0}$ and
$x_{j,2n+4}$ are not labeled with concept names in the described
way. Then they are connected in $q$ by a path of length $2n+4$ whose
middle point $y$ is connected to the variables $u$ and $u'$. In a
match to a computation tree, there are four possible targets for $u$
and $u'$ and for the predecessor $y_{-1}$ of $y$ on the connecting
path and the successor $y_{+1}$ of $y$ on that path:
\begin{enumerate}

\item $u,y_{-1}$ map to the same target, and so do $u'$ and $y$;

\item $u,y$ map to the same target, and so do $u'$ and $y_{+1}$;

\item $u',y_{-1}$ map to the same target, and so do $u$ and $y$;

\item $u',y$ map to the same target, and so do $u$ and $y_{+1}$.

\end{enumerate}
However, options~1 and~3 are impossible because there would have to be
a path of length $n+1$ from a node labeled $R_0$ or $R_1$ to the leaf
$a$. Similarly, options~2 and~4 are impossible because there would
have to be a path of length $n+1$ from a node labeled $R_0$ or $R_1$
to the leaf $a'$. Thus, we have shown that $x_{i,0}$ and $x_{j,2n+4}$
are labeled with concept names as described.

The labeling of $x_{i,0}$ and $x_{j,2n+4}$ with concept names $C_i =
A^\ell_\sigma$ and $C_j = A^k_{\sigma'}$ where $(\ell,k) \in
\{(2,3),(4,5),(6,1)\}$ together with the labeling scheme of
Figure~\ref{fig:tree} also means that $a$ and $a'$ (to which $x_{i,0}$
and $x_{j,2n+4}$ are mapped) are not in the same configuration
tree. Moreover, they cannot be in configurations trees that are
further apart than one step because under the assumption that
$x=x_{i,0}$ and $x'=x_{j,2n+4}$, there is a path of length $2n+5$ in
the query from $x$ to $x'$. Note that we can identify $u$ with the
$2n+2$nd variable on any such path and $u'$ with the $2n+3$rd variable
(or vice versa) to admit a match in neighboring configuration trees.

\begin{lemma}
  $\Tmc_w$, $q_w$, $\Sigma_w$, and $A^*$ satisfy Points~1 to~4 from
  Lemma~\ref{lem:2expbasic}.
\end{lemma}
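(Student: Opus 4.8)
The plan is to verify, one by one, that the construction of $\Tmc_w$, $q_w$, $\Sigma_w$, and $A^* = L_0\mn{ok}$ (or rather, the analogue for ATMs: the concept name set by $I \sqcap \mn{tree}^j \sqsubseteq A^*$) meets all four points of Lemma~\ref{lem:2expbasic}. The proof is a ``sketch'' in the same style as the proof of the corresponding lemma for torus tilings, and I would structure it as four paragraphs mirroring Points~1--4, reusing the soundness/completeness intuitions developed informally in the construction above.

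For Point~1, the direction ``$M$ accepts $w$ $\Rightarrow$ witness exists'' is handled by taking \Amc to be an accepting computation tree of $M$ on $w$ (viewed as an ABox), built out of correctly-stacked configuration trees where the previous configuration stored in each tree really equals the current configuration of its predecessor tree. One checks that the bottom-up verification performed by $\Tmc_w$ then derives all the auxiliary $\mn{ok}$-concepts, the $\mn{tree}^j$ concepts, and finally $A^*$ at the root of the initial configuration tree; and since the tree is genuinely a faithful, non-defective encoding (single head, single written cell, complementary $F$-labels, correct transitions), none of the concept inclusions with right-hand side $C_{q_w}$ fires and no query match into the canonical model exists, giving $\Amc,\Tmc_w \not\models q_w$. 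The converse direction uses soundness of the verification: if $\Amc,\Tmc_w \models \exists x\,A^*(x)$, then (by an argument on canonical models / derivation trees, Lemma~\ref{lem:derivationtrees}) \Amc must contain a homomorphic image of a computation tree; if additionally $\Amc,\Tmc_w \not\models q_w$, the copying of configurations between neighboring trees is enforced (this is exactly what the UCQ/CQ $q_w$ guarantees), so the encoded computation is a genuine accepting computation of $M$ on $w$. Point~2 is the same, one just reads off the concrete individual $a$ at which $A^*$ is derived and observes $A^*$ is a unary concept, not merely an existential statement.

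For Point~4 I would exhibit the \ELI-concept $C_{q_w}$ and argue $C_{q_w}^\Imc \neq \emptyset \Rightarrow \Imc \models q_w$ for all \Imc. The concept is built, exactly as in the tiling case, from ``contradictory'' configuration-tree fragments $F_1, F_2, G_1, G_2$ (now over the six copies of the configuration alphabet, but the structure is identical): $C_{q_w}$ asserts the existence of two $S^{2n+1}$-reachable fragments carrying incompatible labels, and by tracing through the component queries $p_0,\dots,p_{n-1}$ together with the extra tile/transition component of Figure~\ref{fig:cq}, one sees that any element in $C_{q_w}^\Imc$ supplies enough structure to realize all identifications of some disjunct of $q_w$ — the key observation being that the additional $F$-labels (Point~1 of the CQ modification) and the $R_0/R_1$-markers (Point~2) are precisely what force the variables $u, u'$ and the endpoint variables into their intended positions in $\Imc$. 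I expect this to be largely a routine but lengthy case analysis, essentially quoting Lemma~4 of \citeA{Lutz-DL-07} for the combinatorics of the tile component.

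Point~3 — FO-rewritability of $q_w$ relative to $\Tmc_w$ and $\Sigma_w$ — is the main obstacle and deserves the most care, although the structure follows the torus-tiling proof. The crucial structural fact is that every concept inclusion of $\Tmc_w$ whose right-hand side is $C_{q_w}$ has a left-hand side using only $\Sigma_w$-symbols, and the ``rest'' of $\Tmc_w$, call it $\Tmc_w^0$, is non-recursive: its derivation depth is bounded (by roughly $2n+5$ for the configuration-level checks, plus the depth of the computation-tree checks, which is still polynomial in $n$ because each $\mn{tree}^j$-inference consumes a bounded number of $S$-steps). Hence $q_w$ admits the rewriting $q_0(x) \vee \bigvee_{D \sqsubseteq C_{q_w} \in \Tmc_w} q_D$, where $q_D$ is $D$ read as a CQ and $q_0$ enumerates all sufficiently small pseudo tree $\Sigma_w$-ABoxes (bounded depth, width $\le |q_w|$, outdegree $\le |\Tmc_w|$) whose canonical model admits a match of $q_w$; this is justified by Theorem~\ref{lem:char}/Proposition~\ref{forest-witness} exactly as in the tiling lemma. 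The delicate point I would highlight is checking that $\Tmc_w^0$ really is non-recursive despite the many mutually-referential $\mn{ok}$-concepts and the three interlinked configuration-tree types with their $\mn{tree}^0,\mn{tree}^1,\mn{tree}^2$ markers: one must verify that no cycle of concept-name dependencies exists, which relies on the fact that all ``upward'' propagation strictly decreases the level $L_i$ or moves from a computation tree to a configuration tree lower in the (finite) computation and never comes back down. Once this is established, boundedness of $\Tmc_w^0$ and hence FO-rewritability of $q_w$ is immediate.
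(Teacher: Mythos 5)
Your treatment of Points~1, 2 and~4 follows the paper's proof essentially verbatim (accepting computation tree as the positive witness; soundness of the bottom-up verification plus failure of configuration copying for the converse; the explicit concept $C_{q_w}$ built from two incompatible gadget fragments). The problem is Point~3, where your argument rests on a false premise. You claim that $\Tmc_w^0$ (the TBox with the $D \sqsubseteq C_{q_w}$ inclusions removed) is non-recursive with derivation depth polynomial in $n$, "because each $\mn{tree}^j$-inference consumes a bounded number of $S$-steps." But the concept names $\mn{tree}^0,\mn{tree}^1,\mn{tree}^2$ form a genuine dependency cycle: deriving $\mn{tree}^0$ at the root of a configuration tree requires $\mn{tree}^1$ at a neighboring tree, which requires $\mn{tree}^2$, which requires $\mn{tree}^0$ again, and this chain runs down the entire encoded computation, whose length is unbounded over the class of all $\Sigma_w$-ABoxes (up to $2^{2^n}$ for genuine computations, and arbitrarily long for ABoxes that merely look like computation trees). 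Indeed, the non-FO-rewritability of the OMQ $(\Tmc_w,\Sigma_w,A^*)$ when $M$ accepts $w$ is the very engine of the reduction, so no depth bound on derivations of $A^*$ or $\mn{tree}^j$ can exist. Consequently the disjunct $q_0$ you import from the torus-tiling lemma --- an enumeration of all pseudo tree ABoxes up to a fixed depth --- cannot be defined here, and your "delicate point" (verifying acyclicity of the concept-name dependencies) is not delicate but simply fails.

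The paper's actual argument for Point~3 does not need any non-recursiveness of $\Tmc_w$. It observes instead that every concept and role name occurring in $q_w$ appears on the right-hand side of a concept inclusion only in inclusions of the form $D \sqsubseteq C_{q_w}$, and every such $D$ is (essentially) a $\Sigma_w$-concept. Hence a match of $q_w$ in the canonical model of $\Amc$ and $\Tmc_w$ is witnessed either by $\Amc$ itself or by the anonymous part generated by some $D \sqsubseteq C_{q_w}$, and the rewriting is just the UCQ $q_w \vee \bigvee_{D \sqsubseteq C_{q_w}} q_D$ --- the unbounded recursion through $\mn{tree}^j$ and $A^*$ is irrelevant because it never produces symbols of $q_w$. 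The one caveat the paper flags (and that your proposal omits) is that some left-hand sides $D$ mention the derived markers $H$ and $W'$, which are not in $\Sigma_w$; these must be expanded away, which is possible because \emph{their} propagation is confined to the $O(n)$ levels of a single configuration tree and therefore yields only a finite UCQ. You should replace your Point~3 argument with this observation.
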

\begin{proof}(sketch)
  We have to show the following:
  \begin{enumerate}

  \item If $M$ accepts $w$, then there is a $\Sigma_w$-ABox \Amc and
    $a \in \mn{Ind}(\Amc)$ such
    that $\Amc,\Tmc_w \models A^*(a)$ and
    $\Amc,\Tmc_w \not\models q_w$.

  \item If $M$ does not accept $w$, then for any $\Sigma_w$-ABox \Amc,
    $\Amc,\Tmc_w \models \exists x \, A^*(x)$ implies 
    $\Amc,\Tmc_w \models q_w$. 

  \item $q_w$ is FO-rewritable relative to $\Tmc_w$ and $\Sigma_w$. 

\item There is an \ELI-concept $C_{q_w}$ such that $d \in C^\Imc$
  implies $\Imc \models q_w$. 

\end{enumerate}
(1) Take as \Amc the computation tree of $M$ on $w$ viewed as an ABox,
including correct copying of configurations between neighboring
configuration trees. Let $a$ be the root of \Amc, marked with the concept $I$. The verification of
computation trees by $\Tmc_w$ yields $\Amc,\Tmc_w \models 
A^*(a)$. Since the copying of configurations is as intended, we have
$\Amc,\Tmc_w \not\models q_w$.

    \smallskip
    \noindent 
    (2) Since the verification of (homomorphic images of) computation
    trees by $\Tmc_w$ is sound,
    $\Amc,\Tmc_w \models \exists x \, A^*(x)$ implies that \Amc
    contains a homomorphic image of a computation tree. Note that this
    tree has the initial configuration of $M$ on $w$ as the root,
    locally (within configuration trees) respects the transition
    relation of $M$, and has only accepting configurations as leaves.
    Since $M$ does not accept $w$, the tree must fail to correctly
    copy configurations between neighboring configuration trees.
    Consequently, $\Amc,\Tmc_w \models q_w$.

%

    \smallskip
    \noindent 
    (3) 
    The query $q_w$ contains only concept and role names that do not
    occur on the right-hand side of concept inclusions except those of
    the form $D \sqsubseteq C_{q_w}$. In fact, the FO-rewriting of
    $q_w$ relative to $\Tmc_w$ and $\Sigma_w$ is the UCQ $\widehat
    q_w$ that consists of the CQ $q_w$ and (essentially) one CQ $q_D$
    for each inclusion $D \sqsubseteq C_{q_w}$, where $q_D$ is the
    CQ-representation of the formula $\exists x \, D(x)$. This is a
    slight oversimplification, e.g.\ due to our use of the markers $H$
    and $W'$ used for enforcing that each configuration tree has at
    most one leaf labeled with a concept name of the form
    $A^2_{q,\sigma}$. However, it is not hard to see that we can
    ``expand away'' these marker concepts, which results in a UCQ to
    be included in $\widehat q_w$. In particular, the markers are
    propagated only along the boundedly many levels of configuration
    trees, so the resulting UCQ is finite.

\smallskip
    \noindent 
(4)  Select distinct $a,b \in \Gamma$ and set
  $$
  \begin{array}{r@{\,}c@{\,}l}
  G_1 &=& G \sqcap \overline{A}_0 \sqcap \cdots \sqcap \overline{A}_n \sqcap A^2_a \\[1mm]
  G_2 &=& G \sqcap \overline{A}_0 \sqcap \cdots \sqcap \overline{A}_n \sqcap A^3_b \\[1mm]
  F_1 &=& A_0 \sqcap \cdots \sqcap A_n \sqcap \midsqcap_{c \in (\Gamma
          \cup (Q\times \Gamma))
          \setminus \{ a \}} A^2_c \, \sqcap \\[1mm]
          && \midsqcap_{\alpha \in \Gamma
          \cup (Q\times \Gamma), \ j \in \{1,3,4,5,6\}} A^j_\alpha \\[1mm]
  F_2 &=& A_0 \sqcap \cdots \sqcap A_n \sqcap \midsqcap_{c \in (\Gamma
          \cup (Q\times \Gamma))
          \setminus \{ b \}} A^3_c \, \sqcap \\[1mm]
          && \midsqcap_{\alpha \in \Gamma
          \cup (Q\times \Gamma), \ j \in \{1,2,4,5,6\}} A^j_\alpha\\[1mm]
  C_{q_w} &=& R_0 \sqcap \exists S^{2n+1}.(F_1 \sqcap \exists S . G_1)
              \, \sqcap \\[1mm]
              && \exists S . (R_1 \sqcap \exists S^{2n+1}.(F_2 \sqcap \exists S . G_2))
  \end{array}
  $$
  It can be verified that $C_{q_w}$ has the stated property.
\end{proof}

\subsection{Adaptation to Datalog}

Our aim is to prove Theorem~\ref{thm:dlog}. We first introduce the
relevant notions. A \emph{Datalog rule} takes the form
$$
R_1(\xbf_1) \wedge \cdots \wedge R_n(\xbf_n) 
\rightarrow
  R_0(\xbf_0) 
$$
where $R_0,\dots,R_n$ are relation names and $\xbf_0,\dots,\xbf_n$ are
tuples of variables such that the length of each $\xbf_i$ matches the
arity of $R_i$ and $\xbf_0 \subseteq \xbf_1 \cup \cdots \cup \xbf_n$.
For brevity, we shall speak of relations rather than of relation
names.  We call $R_0(\xbf_0)$ the \emph{head} of the rule and
$R_1(\xbf_1) \wedge \cdots \wedge R_n(\xbf_n)$ the \emph{body}.  A
\emph{Datalog program} is a set of Datalog rules with a distinguished
relation \mn{goal} that occurs only in rule heads. A relation is
called \emph{extensional} or \emph{EDB} if it occurs only in rule
bodies; it is called \emph{intensional} or \emph{IDB} if it occurs in
at least one rule head. The \emph{EDB schema} of a program is the set
of all EDB relations in it.  A Datalog program is \emph{monadic} if
all IDB relations with the possivel exception of \mn{goal} are unary;
it is \emph{Boolean} if \mn{goal} has arity zero. We will concentrate
on Boolean monadic Datalog programs. Moreover, we will only use unary
and binary EDB relations which correspond to concept and role names
from the ABox signature, respectively. IDB relations then correspond
to concept names that are not in the ABox signature. For the semantics
of Datalog and the definition of boundedness of a Datalog program, we
refer to~\cite{alice}.  We evaluate Datalog programs over
$\Sigma$-ABoxes where $\Sigma$ is the EDB schema of the program.  Note
that the rule body of a Datalog program is a CQ. Tree-shapedness of a
CQ $q$ is defined in the same way as for an ABox in
Section~\ref{sect:charact}, that is, $q$ viewed as an undirected graph
must be a tree without multi-edges.

\smallskip

For convenience, we repeat the theorem to be proved.

\THMdatalog*

We start with Point~1, first establishing it for rooted UCQs (a
disjunction of rooted CQs) and then strengthening to CQs. Recall the
reduction of the exponential torus tiling problem presented in
Section~\ref{app:conexplower}. Let $P$ be the tiling problem that is
\NExpTime-complete and $c$ an input for $P$. We have shown how to
construct in polynomial time an $\ELI$ TBox~$\Tmc_c$, a rooted CQ
$q_c(x)$, and an ABox signature $\Sigma_c$ such that, for a selected
concept name $A^* \notin \Sigma_c$, $P$ has a solution given $c$ iff
$(\Tmc_c,\Sigma_c,A^*) \not\subseteq (\Tmc_c,\Sigma_c,q_c)$ over
$\Sigma_c$-ABoxes.
We
show how to convert $\Tmc_c$ and $q_c$ into a Boolean monadic Datalog
program $\Pi_c$ and a rooted UCQ $p_c$, both over EDB schema
$\Sigma_c$, such that $P$ has a solution given $c$ iff $\Pi_c
\not\subseteq p_c$.

It is standard to convert an \ELI-concept $C$ into a CQ $q_C(x)$ that
is equivalent in the sense that for all interpretations \Imc and $d
\in \Delta^\Imc$, we have $d \in C^\Imc$ iff $\Imc \models q_C[d]$.
We omit the details and only mention as an example that
$$
C=\exists r .  \exists s . A \sqcap \exists s . B 
$$
is converted into 
$$
r(x,y) \wedge s(y,z) \wedge A(z) \wedge s(x,u) \wedge B(u). 
$$
Thus, a CI of the form $C \sqsubseteq A$ can be viewed as the
monadic Datalog rule $C_q(x) \rightarrow A(x)$. 

The monadic Datalog program $\Pi_c$ contains the following rules:
\begin{enumerate}

\item $A(x) \rightarrow \widehat A(x)$ for each $A \in \Sigma_c$;

\item for each CI $D \sqsubseteq A$ in $\Tmc_c$ with $A$ a concept
  name different from $A^*$: $q_{D'}(x) \rightarrow A(x)$;

\item for each CI $D \sqsubseteq A^*$: $q_{D'}(x) \rightarrow
  \mn{goal}(x)$.

\end{enumerate}
where $D'$ is obtained from $D$ by replacing each concept name $A \in
\Sigma_c$ with $\widehat A$. This renaming, as well as the rules in
Point~1 above, achieve the separation between EDB and IDB relations
required in Datalog. The rooted UCQ $p_c$ is the disjunction of
\begin{enumerate}

\item the CQ $q_c$;

\item the CQ $q_D$ for each $D \sqsubseteq C_{q_c}$ in $\Tmc_c$.

\end{enumerate}
It can be verified that $p_c$ is indeed formulated over EDB schema
$\Sigma_c$. To show that $\Pi_c$ and $p_c$ are as desired, it remains
to establish the following. 
\begin{lemma}
  A $\Sigma_c$-ABox \Amc and individual name $a$ witness $(\Tmc_c,\Sigma_c,A^*)
  \not\subseteq (\Tmc_c,\Sigma_c,q_c)$ iff they witness $\Pi_c
  \not\subseteq p_c$.
\end{lemma}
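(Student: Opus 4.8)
The statement asserts the equivalence of two witness conditions for the same pair $(\Amc,a)$, where $\Amc$ is a $\Sigma_c$-ABox. The plan is to unwind what each side means and show the two sides coincide point by point, relying on the construction of $\Pi_c$ and $p_c$ together with the equivalence between \ELI-concepts and their CQ translations. First I would record the following basic facts: (i) for a CI $D\sqsubseteq A$ in $\Tmc_c$, an interpretation satisfies $D^\Imc\subseteq A^\Imc$ iff it satisfies the corresponding Datalog-style implication $q_{D'}(x)\to A(x)$ after the EDB/IDB renaming; (ii) over a $\Sigma_c$-ABox \Amc, the minimal model of $\Pi_c$ (restricted to $\widehat\Sigma_c$ and the IDB predicates) makes $A(a)$ true (for $A$ a concept name other than $A^*$) exactly when $\Amc,\Tmc_c\models A(a)$, and makes $\mn{goal}(a)$ true exactly when $\Amc,\Tmc_c\models A^*(a)$. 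Fact (ii) is proved by a straightforward induction matching Datalog derivation steps with the derivation-tree / canonical-model reasoning used for $\ELIHFbot$ (here just \ELI without the $C_{q_c}$ rules, which are the ones turned into $\mn{goal}$).

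The one subtlety to handle carefully is the rules of the form $q_{D'}(x)\to\mn{goal}(x)$ coming from CIs $D\sqsubseteq C_{q_c}$: in the DL world these make $C_{q_c}$ true at some element, which by Point~2 of Lemma~\ref{lem:nexpbasic} forces a match of $q_c$; in the Datalog world they directly fire \mn{goal}. So the correct reading is that $\Pi_c$ on \Amc derives \mn{goal} at $a$ iff either \Amc together with $\Tmc_c$ entails $A^*(a)$ via a ``genuine'' derivation, or some subconcept $D$ with $D\sqsubseteq C_{q_c}\in\Tmc_c$ is satisfied in \Amc itself. Correspondingly, $p_c$ is exactly the union of $q_c$ with all $q_D$ for $D\sqsubseteq C_{q_c}$, so $\Amc\models p_c$ iff $\Amc\models q_c$ or $\Amc\models q_D$ for some such $D$, i.e.\ iff $\Amc,\Tmc_c\models q_c$ (using Point~2 of Lemma~\ref{lem:nexpbasic}: satisfying $C_{q_c}$ in \Amc propagates a $q_c$-match). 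Thus $\Pi_c\not\subseteq p_c$ witnessed by $(\Amc,a)$ unfolds to: $\mn{goal}$ derivable at $a$ and $\Amc\not\models p_c$; while $(\Tmc_c,\Sigma_c,A^*)\not\subseteq(\Tmc_c,\Sigma_c,q_c)$ witnessed by $(\Amc,a)$ unfolds to: $\Amc,\Tmc_c\models A^*(a)$ and $\Amc,\Tmc_c\not\models q_c(a)$. Using facts (i), (ii) and the translation of the $C_{q_c}$-rules, I would check that the first condition of each pair is equivalent to the first of the other, and likewise for the second conditions. The nonobvious implication is that $\mn{goal}$ derivable at $a$ but $\Amc\not\models p_c$ implies $\Amc,\Tmc_c\models A^*(a)$: if \mn{goal} were derived via a $D\sqsubseteq C_{q_c}$ rule we would have $\Amc\models q_D$, hence $\Amc\models p_c$, contradiction; so \mn{goal} came from a $D\sqsubseteq A^*$ rule, giving $\Amc,\Tmc_c\models A^*(a)$ by fact (ii).

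The main obstacle I anticipate is fact (ii), i.e.\ the precise correspondence between least-fixpoint Datalog derivations and entailment over the \ELI canonical model, in the presence of the CQ-shaped rule bodies $q_{D'}$ whose images can ``shortcut'' anonymous parts of the canonical model (exactly the phenomenon flagged in the derivation-tree definition). This is handled by the derivation-tree machinery from Lemma~\ref{lem:derivationtrees}: each Datalog rule application corresponds to a node in a derivation tree, the $q_{D'}$ body being matched in \Amc corresponds to a subtree whose leaves lie in \Amc, and conversely. Since $\Tmc_c$ is a pure \ELI TBox (no functionality, no inverse-role interaction beyond what \ELI already allows, and the CQ translations only appear on left-hand sides), this correspondence is exactly the standard one, and the proof is ``a straightforward extension'' in the same spirit as the omitted proof of Lemma~\ref{lem:derivationtrees}. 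I would therefore spend most of the write-up on stating fact (ii) cleanly and deriving the lemma from it by the case analysis above, leaving the derivation-tree induction at the level of a pointer to Lemma~\ref{lem:derivationtrees}.
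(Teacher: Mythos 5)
Your overall strategy---unwind both witness conditions and match Datalog derivations against $\Tmc_c$-entailment---is the same as the paper's, but you have misread the construction of $\Pi_c$, and the misreading obscures the one step that actually carries the proof. In the paper, the CIs $D \sqsubseteq C_{q_c}$ are \emph{not} translated into rules $q_{D'}(x)\to\mn{goal}(x)$; only the CIs $D\sqsubseteq A^*$ are. The $C_{q_c}$-CIs are simply absent from $\Pi_c$ (rule type~2 applies only when the right-hand side is a concept name) and resurface solely as the disjuncts $q_D$ of the UCQ $p_c$. So $\Pi_c$ computes entailment with respect to $\Tmc_c$ \emph{minus} the $C_{q_c}$-CIs, and your ``correct reading'' of when $\mn{goal}$ fires (a genuine $A^*$-derivation \emph{or} some $D$ with $D\sqsubseteq C_{q_c}$ satisfied) describes a different program. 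Consequently, the implication you single out as nonobvious is, for the actual construction, immediate: every rule of $\Pi_c$ is sound for $\Tmc_c$, so $\mn{goal}$ derivable at $a$ already gives $\Amc,\Tmc_c\models A^*(a)$ by monotonicity.

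The genuinely nonobvious implications are the two that your fact~(ii) and your biconditional ``$\Amc\models p_c$ iff $\Amc,\Tmc_c\models q_c$'' assert without argument: (a) that $\Amc,\Tmc_c\models A^*(a)$ together with $\Amc,\Tmc_c\not\models q_c(a)$ forces $\mn{goal}$ to be derivable by the \emph{smaller} program $\Pi_c$, and (b) that $\Amc\not\models p_c[a]$ forces $\Amc,\Tmc_c\not\models q_c(a)$. Fact~(ii) as you state it (``$\mn{goal}(a)$ holds in the least model exactly when $\Amc,\Tmc_c\models A^*(a)$'', with the full $\Tmc_c$) is false without a side condition, precisely because a $C_{q_c}$-CI that fires creates anonymous structure over $\Sigma_c$-symbols that can feed back into further derivations not available to $\Pi_c$. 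What closes both (a) and (b) is the paper's observation~$(*)$ together with the signature separation: $q_c$ and every premise $D$ of a $C_{q_c}$-CI use only symbols that never occur on the right-hand side of any CI of $\Tmc_c$ other than the $C_{q_c}$-CIs themselves, so those CIs can only fire if some $q_D$ already matches in the raw ABox; under either side condition they therefore never fire, and $\Tmc_c$ and $\Pi_c$ agree. Your citation of Point~2 of Lemma~\ref{lem:nexpbasic} only supplies the converse direction (an instance of $C_{q_c}$ forces a $q_c$-match), so this signature-separation argument is missing from your write-up.
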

\begin{proof}
  First, let $\Amc$ and $a$ be a witness of $(\Tmc_c,\Sigma_c,A^*)
  \not\subseteq (\Tmc_c,\Sigma_c,q_c)$. Then $\Amc,\Tmc_c \models
  A^*[a]$ and $\Amc,\Tmc_c \not\models q_c[a]$. By the latter, 
  \begin{itemize}

  \item[$(*)$] CIs from $\Tmc_c$ that are of the form $D \sqsubseteq
    C_{q_c}$ never apply.

  \end{itemize}
  Consequently and by definition of $\Pi_c$, from $\Amc,\Tmc_c \models
  A^*[a]$ we obtain $\Amc \models \Pi_c[a]$. By ($*$), the only CQ $q$
  from $p_c$ that could satisfy $\Amc \models q[a]$ is $q_c$. However,
  this is not the case since $\Amc,\Tmc_c \not\models q_c[a]$.

  \smallskip

  Now let $\Amc$ and $a$ witness $\Pi_c \not\subseteq p_c$. Then $\Amc
  \models \Pi_c[a]$ and $\Amc \not\models p_c[a]$. From the latter, we
  get $\Amc \not\models q_c[a]$ and $\Amc \not\models D[a]$ whenever
  $D \sqsubseteq C_{q_c}$ is in $\Tmc_c$. Consequently and since both
  $q_c$ and all such concepts $D$ contain only symbols that never
  occur on the right-hand side of a CI in $\Tmc_c$ (except when they
  are of the form $D \sqsubseteq C_{q_c}$), we must have $\Amc,\Tmc_c
  \not\models q_c[a]$. It thus remains to show $\Amc,\Tmc_c \models
  A^*[a]$. However, this is immediate from $\Amc 
  \models \Pi_c[a]$ and the construction of $\Pi_c$. 
\end{proof}
As the next step, we show how to replace the rooted UCQ $p_c$ with a
rooted CQ $p'_c$. The general idea is to replace disjunction with
conjunction.  Let the CQs in $p_c$ be $q_1(x),\dots,q_k(x)$ and let
$q_i(x_i)$ be $q_i(x)$ with the answer variable $x$ renamed to
$x_i$. Introduce additional role names $g_0,\dots,g_k$ that are
included in $\Sigma_c$. Then set
$$
\begin{array}{rcl}
  p'_c(x) &=& g_0(x,x_0) \wedge g_1(x_0,x_1) \wedge \cdots \wedge g_k(x_0,x_k) \, \wedge \\[1mm]
  && q_1(x_1) \wedge \cdots \wedge q_k(x_k). 
\end{array}
$$
%
To make the new query work, we need to install additional gadgets in
the toris tree. Recall that every element of the 

In particular, we want that for each $i \in
\{1,\dots,k\}$, the root of the torus tree has a $g_i$-predecessor
$a_i$ which in turn has, for each $j \in \{1,\dots,i-1,i+1,\dots,k\}$,
a $g_j$-successor that is the root of an ABox which has exactly the
shape of $q_j$. Further, the torus tree gets a new root $a_0$ that has
a $g_0$-edge to each of the individuals $a_1,\dots,a_k$; note that the
torus ``tree'' is actually no longer a tree.  Then a query $q_i$
matches at the root of the original torus tree iff $p'_c$ matches at
the new root $a_0$. The additional parts of the torus ``tree'' need to
be verified in the derivation of $\mn{goal}$ in $\Pi_c$ (which is
essentially identical to the derivation of $L_0\mn{ok}$ in
$\Tmc_c$). Given that $\Pi_c$ is a Datalog program and that the rule
bodies need not be tree-shaped, it is straightforward to modify
$\Pi_c$ to achieve this.

\medskip

For Point~2 of Theorem~\ref{thm:dlog}, we again start with a UCQ in
the first step and improve to a CQ in a second step. The first step is
exactly analogous to the construction of $\Pi_c$ and $p_c$ above.
Recall the reduction of the word problem of exponentially
space-bounded ATMs in Section~\ref{app:2explower}. Let $M$ be the ATM
whose word problem is 2\ExpTime-hard and let $w$ be an input to
$M$. We have shown how to construct in polynomial time an $\ELI$
TBox~$\Tmc_w$, a Boolean CQ $q_w$, and an ABox signature $\Sigma_w$
such that, for a selected concept name $A^* \notin \Sigma_w$, $M$
accepts $w$ iff $(\Tmc_w,\Sigma_w,\exists x \, A^*(x)) \not\subseteq
(\Tmc_w,\Sigma_w,q_w)$ over $\Sigma_w$-ABoxes. We can convert $\Tmc_w$ and $q_w$ into a
monadic Datalog program $\Pi_w$ and a UCQ $p_w$ in exactly the same
way in which we had constructed $\Pi_c$ and $p_c$ above. Note in
particular that all CIs in $\Tmc_w$ of the form $D \sqsubseteq
C_{q_w}$ are such that $D$ contains only symbols from $\Sigma_w$, and
that also $q_w$ contains only symbols from $\Sigma_w$. Thus, $\Pi_w$
and $p_w$ are both over EDB schema $\Sigma_w$, as required. It is
straightforward to establish the following lemma.
\begin{lemma}
  A $\Sigma_w$-ABox \Amc witnesses $(\Tmc_w,\Sigma_w,A^*)
  \not\subseteq (\Tmc_w,\Sigma_w,q_w)$ iff it witnesses $\Pi_w
  \not\subseteq p_w$.
\end{lemma}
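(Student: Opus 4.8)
The plan is to mirror, almost verbatim, the proof of the analogous lemma for the torus tiling reduction, adapted to the Boolean ATM setting and the pair $\Tmc_w, q_w$ produced by Lemma~\ref{lem:2expbasic}. Write $\Tmc_w^{0}$ for the TBox obtained from $\Tmc_w$ by deleting all concept inclusions of the form $D \sqsubseteq C_{q_w}$. Then $\Tmc_w^{0}$ contains only inclusions $D \sqsubseteq A$ with $D$ an $\ELI$-concept and $A$ a concept name, and by construction $\Pi_w$ is exactly the monadic Datalog program obtained from $\Tmc_w^{0}$ via the standard translation of an $\ELI$-concept $D$ into a tree-shaped CQ $q_D$, together with the EDB/IDB renaming rules $A(x) \to \widehat A(x)$, and with the inclusions $D \sqsubseteq A^*$ rerouted to the head $\mn{goal}(x)$. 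Since no inclusion of $\Tmc_w^{0}$ has an existential on its right-hand side, for every $\Sigma_w$-ABox $\Amc$ the canonical model $\Imc_{\Amc,\Tmc_w^{0}}$ is just $\Imc_\Amc$ enriched with entailed concept assertions on ABox individuals — it has no anonymous part, and (using that the symbols of $q_w$, and the symbols of the left-hand sides $D$ of the inclusions $D \sqsubseteq C_{q_w}$, never occur on the right-hand side of any inclusion of $\Tmc_w^{0}$) its $\Sigma_w$-reduct coincides with $\Imc_\Amc$. I also record two facts that the construction hands us for free: $\Tmc_w$ uses no $\bot$, so every $\Sigma_w$-ABox is consistent with $\Tmc_w$; and $\mn{goal}$ is derivable from $\Amc$ by $\Pi_w$ iff $\Amc,\Tmc_w^{0} \models \exists x\,A^*(x)$, which is the defining correctness property of the $\ELI$-to-Datalog translation.

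For the direction from OMQ non-containment to Datalog non-containment, suppose $\Amc,\Tmc_w \models \exists x\,A^*(x)$ and $\Amc,\Tmc_w \not\models q_w$. By Point~4 of Lemma~\ref{lem:2expbasic}, if any inclusion $D \sqsubseteq C_{q_w}$ of $\Tmc_w$ applied in $\Imc_{\Amc,\Tmc_w}$ then $\Imc_{\Amc,\Tmc_w} \models q_w$, contradicting $\Amc,\Tmc_w \not\models q_w$ (recall that, by Lemma~\ref{lem:canmodelproperties}, $\Amc,\Tmc_w \not\models q_w$ is equivalent to $\Imc_{\Amc,\Tmc_w}\not\models q_w$). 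Hence no such inclusion applies, so $\Imc_{\Amc,\Tmc_w} = \Imc_{\Amc,\Tmc_w^{0}}$, which gives $\Amc,\Tmc_w^{0}\models \exists x\,A^*(x)$ and therefore $\Amc \models \Pi_w$. Moreover $\Amc \not\models p_w$: the disjunct $q_w$ is not matched because the identity is a homomorphism $\Imc_\Amc \to \Imc_{\Amc,\Tmc_w}$ and $\Imc_{\Amc,\Tmc_w}\not\models q_w$; and no disjunct $q_D$ with $D \sqsubseteq C_{q_w}$ in $\Tmc_w$ is matched, since a match of $q_D$ in $\Imc_\Amc$ would exhibit an ABox individual in $D^{\Imc_{\Amc,\Tmc_w}}$, contradicting that the inclusion never applies.

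For the converse, suppose $\Amc \models \Pi_w$ and $\Amc \not\models p_w$. Then $\Amc$ is consistent with $\Tmc_w$ (no $\bot$), and $\Amc \models \Pi_w$ gives $\Amc,\Tmc_w^{0}\models \exists x\,A^*(x)$, hence $\Amc,\Tmc_w \models \exists x\,A^*(x)$ as $\Tmc_w^{0}\subseteq \Tmc_w$. It remains to show $\Amc,\Tmc_w \not\models q_w$. Assume otherwise, so $\Imc_{\Amc,\Tmc_w}\models q_w$. Either no inclusion $D \sqsubseteq C_{q_w}$ fires in $\Imc_{\Amc,\Tmc_w}$, in which case $\Imc_{\Amc,\Tmc_w}=\Imc_{\Amc,\Tmc_w^{0}}$ has empty anonymous part and $\Sigma_w$-reduct $\Imc_\Amc$, whence $\Imc_\Amc \models q_w$, i.e.\ $\Amc \models q_w$, contradicting $\Amc \not\models p_w$; or some inclusion $D \sqsubseteq C_{q_w}$ fires, and since $D$ uses only $\Sigma_w$-symbols an ABox individual witnesses $\Imc_\Amc \models \exists x\,D(x)$, i.e.\ $\Amc \models q_D$, again contradicting $\Amc \not\models p_w$. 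Hence $\Amc,\Tmc_w \not\models q_w$, so $\Amc$ witnesses $(\Tmc_w,\Sigma_w,\exists x\,A^*(x)) \not\subseteq (\Tmc_w,\Sigma_w,q_w)$.

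The step I expect to be the main obstacle is the structural bookkeeping behind ``no inclusion $D \sqsubseteq C_{q_w}$ fires $\Rightarrow$ the $\Sigma_w$-reduct of $\Imc_{\Amc,\Tmc_w}$ equals $\Imc_\Amc$'', i.e.\ that neither the entailed concept closure on ABox individuals nor any anonymous element can contribute a $\Sigma_w$-atom usable by $q_w$ or by any $q_D$. This relies on the property of the reduction established in Point~3 of Lemma~\ref{lem:2expbasic} and its proof — that the symbols appearing in $q_w$ and in the left-hand sides of the inclusions $D \sqsubseteq C_{q_w}$ are ABox symbols that never occur on the right-hand side of any inclusion of $\Tmc_w$ other than those of the form $D \sqsubseteq C_{q_w}$, and that all existential inclusions of $\Tmc_w$ arise from normalising exactly those inclusions. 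Once this is spelled out, the remainder is a direct unfolding of the definitions of $\Pi_w$ and $p_w$ together with the canonical-model machinery of Lemma~\ref{lem:canmodelproperties}; combined with Point~1 of Lemma~\ref{lem:2expbasic} it yields the $2\ExpTime$ lower bound for monadic Datalog containment in a CQ.
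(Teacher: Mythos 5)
Your proof is correct and follows essentially the same route the paper intends: the paper leaves this lemma unproved (``straightforward to establish''), deferring implicitly to the proof of the analogous lemma for $\Pi_c$ and $p_c$, and your argument is precisely that proof transferred to the Boolean ATM setting, the shared key step being that $\Amc,\Tmc_w \not\models q_w$ (resp.\ $\Amc \not\models p_w$) forces every inclusion $D \sqsubseteq C_{q_w}$ to be inert, so that $\Tmc_w$ collapses to its Datalog fragment $\Tmc_w^0 \simeq \Pi_w$. Your explicit handling of the first firing of a $D \sqsubseteq C_{q_w}$ inclusion and of the empty anonymous part of $\Imc_{\Amc,\Tmc_w^0}$ is, if anything, more careful than the paper's own brief justification of the corresponding step for $\Pi_c$.
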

It remains to replace the UCQ $p_w$ with a CQ $p'_w$. The idea is
again similar to the proof of Point~1. However, we now want to avoid
introducing rules into $\Pi_w$ whose bodies are not tree-shaped.  This
is possible since we work with Boolean queries here. 

Apart from the original Boolean CQ $q_w$, let the CQs in $p_w$ be
$q_1(x),\dots,q_k(x)$ and let $q_i(x_i)$ be $q_i(x)$ with the answer
variable $x$ renamed to $x_i$. Moreover, let $q_{k+1}(u)$ be $q_w()$
with $u$ made an answer variable and let $q_{k+2}(u')$ be $q_w()$ with
$u$ made an answer variable, see Figure~\ref{fig:cq} for details.
Introduce additional role names $g_1,\dots,g_{k+2}$ that are included in
$\Sigma_w$. Then set
$$
\begin{array}{rcl}
  p'_w() &=& g_1(x_0,x_1) \wedge \cdots \wedge g_k(x_0,x_{k+2}) \, \wedge \\[1mm]
  && q_1(x_1) \wedge \cdots \wedge q_{k+2}(x_{k+2}). 
\end{array}
$$
%
To make the new query work, we need to install additional gadgets in
the computation tree. 
%
In particular, we want that for each $i \in \{1,\dots,k+1\}$, each
node of the computation tree has a $g^-_i$-successor which in turn
has, for each $j \in \{1,\dots,i-1,i+1,\dots,k+1\}$, a $g_j$-successor
that is the root of a tree-shaped ABox in which $q_j$ has a match.
Then a query $q_i$ matches in the computation tree iff $p'_w$ matches
in it. The additional parts of the computation tree need to be
verified in the derivation of $\mn{goal}$ in $\Pi_w$ (which is
essentially identical to the derivation of $A^*$ in $\Tmc_w$). This is
easy to achieve, but we still have to say what exactly the tree shape
ABoxes look like in which $q_1,\dots,q_{k+2}$ have a match. The
queries $q_1,\dots,q_k$ are tree-shaped by definition (and use only
symbols from $\Sigma_c$) and thus we can simply use these queries used
as an ABox. For $q_{k+1}=q_w(u)$, we use the concept $C_{q_w}$ viewed
as an ABox. And finally, for $q_{k+2}=q_w(u')$, we use the ABox
obtained from $C_{q_w}$ by swapping the concept names $R_0$ and $R_1$.

\newpage

\bibliographystyleA{named}

\renewcommand\refname{References of Appendix}


\begin{thebibliography}{}

\bibitem[\protect\citeauthoryear{Ajtai and Gurevich}{1994}]{AG94}
Mikl{\'{o}}s Ajtai and Yuri Gurevich.
\newblock Datalog vs First-Order Logic.
\newblock J. Comput. Syst. Sci.,49(3): 562--588, 1994.

\bibitem[\protect\citeauthoryear{Baader \bgroup \em et al.\egroup
  }{2003}]{baader-2003-dl-handbook}
Franz Baader, Diego Calvanese, Deborah~L. McGuinness, Daniele Nardi, and
  Peter~F. Patel-Schneider, editors.
\newblock {\em The Description Logic Handbook: Theory, Implementation, and
  Applications}. Cambridge University Press, 2003.

\bibitem[\protect\citeauthoryear{Baader \bgroup \em et al.\egroup
  }{2005}]{baader-2005}
Franz Baader, Sebastian Brandt, and Carsten Lutz.
\newblock Pushing the $\mathcal{E\kern-0.1emL}$ envelope.
\newblock In {\em Proc.\ of IJCAI}, pages 364--369, 2005.

\bibitem[\protect\citeauthoryear{Baget \bgroup \em et al.\egroup
  }{2011}]{montpellier-2011}
Jean-Fran\c{c}ois Baget, Michel Lecl{\`e}re, Marie-Laure Mugnier, and Eric
  Salvat.
\newblock On rules with existential variables: Walking the decidability line.
\newblock {\em Artif.\ Intell.}, 175(9-10):1620--1654, 2011.

\bibitem[\protect\citeauthoryear{Benedikt \bgroup \em et al.\egroup
  }{2012}]{DBLP:conf/icalp/BenediktBS12}
Michael Benedikt, Pierre Bourhis, and Pierre Senellart.
\newblock Monadic Datalog Containment.
\newblock In {\em Proc.\ of ICALP}, pages 79--91, 2012.

\bibitem[\protect\citeauthoryear{Benedikt \bgroup \em et al.\egroup
  }{2015}]{DBLP:conf/lics/BenediktCCB15}
Michael Benedikt, Balder ten Cate, Thomas Colcombet, and Michael Vanden Boom.
\newblock The Complexity of Boundedness for Guarded Logics.
\newblock In {\em Proc.\ of LICS}, pages 293--304, 2015.

\bibitem[\protect\citeauthoryear{Bienvenu and
  Ortiz}{2015}]{DBLP:conf/rweb/BienvenuO15}
Meghyn Bienvenu and Mag\-da\-le\-na Ortiz.
\newblock Ontology-mediated query answering with data-tractable description
  logics.
\newblock In {\em Proc.\ of Reasoning Web}, volume 9203 of {\em LNCS}, pages
  218--307, 2015.

\bibitem[\protect\citeauthoryear{Bienvenu \bgroup \em et al.\egroup
  }{2010}]{DBLP:conf/dlog/BienvenuELOS10}
Meghyn Bienvenu, Thomas Eiter, Carsten Lutz, Magdalena Ortiz, and Mantas
  Simkus.
\newblock Query answering in the description logic {S}.
\newblock In {\em Proc.\ of DL}, volume 573 of {\em CEUR-WS,} 2010.

\bibitem[\protect\citeauthoryear{Bienvenu \bgroup \em et al.\egroup
  }{2012}]{LuWo-KR12}
Meghyn Bienvenu, Carsten Lutz, and Frank Wolter.
\newblock Query containment in description logics reconsidered.
\newblock In {\em Proc\ of KR}, pages 221--231, 2012.

\bibitem[\protect\citeauthoryear{Bienvenu \bgroup \em et al.\egroup
  }{2013}]{ijcai-2013}
Meghyn Bienvenu, Carsten Lutz, and Frank Wolter.
\newblock First order-rewritability of atomic queries in {Horn} description
  logics.
\newblock In {\em Proc.\ of IJCAI}, pages 754--760, 2013.

\bibitem[\protect\citeauthoryear{Bienvenu \bgroup \em et al.\egroup
  }{2014}]{todswe}
Meghyn Bienvenu, Balder ten Cate, Carsten Lutz, and Frank Wolter.
\newblock Ontology-based data access: a study through disjunctive datalog,
  {CSP}, and {MMSNP}.
\newblock {\em Proc.\ of TODS}, 39, 2014.

\bibitem[\protect\citeauthoryear{Bourhis and Lutz}{2016}]{BouLu-KR16}
Pierre Bourhis and Carsten Lutz.
\newblock Containment in monadic disjunctive datalog, MMSNP, and expressive
  description logics.
\newblock In {\em Proc.\ of KR}, 2016.

\bibitem[\protect\citeauthoryear{Calvanese \bgroup \em et al.\egroup
  }{2007}]{calvanese-2007}
Diego Calvanese, Giuseppe~De Gia\-co\-mo, Domenico Lembo, Maurizio Lenzerini, 
and
  Riccardo Rosati.
\newblock Tractable reasoning and efficient query answering in description
  logics: The {DL-Lite} family.
\newblock {\em J.\ Autom. Reasoning}, 39(3):385--429, 2007.

\bibitem[\protect\citeauthoryear{Calvanese \bgroup \em et al.\egroup
  }{2009}]{DBLP:conf/rweb/CalvaneseGLLPRR09}
Diego Calvanese, Giuseppe {De Giacomo}, Domenico Lembo, Maurizio Lenzerini,
  Antonella Poggi, Mariano Rodriguez{-}Muro, and Riccardo Rosati.
\newblock Ontologies and databases: The DL-Lite approach.
\newblock In {\em Proc.\ of Reasoning Web}, volume 5689 of {\em LNCS}, pages
  255--356, 2009.

\bibitem[\protect\citeauthoryear{Chaudhuri and Vardi}{1994}]{DBLP:conf/pods/ChaudhuriV94}
Surajit Chaudhuri and Moshe Y. Vardi.
\newblock On the complexity of equivalence between recursive and nonrecursive 
datalog programs
\newblock In {\em Proc.\ of PODS}, pages 107--116, 1994.

\bibitem[\protect\citeauthoryear{Civili and
  Rosati}{2015}]{DBLP:conf/cilc/CiviliR15}
Cristina Civili and Riccardo Rosati.
\newblock On the first-order rewritability of conjunctive queries over binary
  guarded existential rules.
\newblock In {\em Proc.\ of CILC}, volume 1459 of {\em CEUR-WS}, pages 25--30, 
2015.

\bibitem[\protect\citeauthoryear{Cosmadakis \bgroup \em et al.\egroup
  }{1988}]{DBLP:conf/stoc/CosmadakisGKV88}
Stavros~S. Cosmadakis, Haim Gaifman, Paris~C. Kanellakis, and Moshe~Y. Vardi.
\newblock Decidable optimization problems for database logic programs
  (preliminary report).
\newblock In {\em Proc.\ of STOC}, pages 477--490, 1988.

\bibitem[\protect\citeauthoryear{Eiter \bgroup \em et al.\egroup
  }{2012}]{DBLP:conf/aaai/EiterOSTX12}
Thomas Eiter, Magdalena Ortiz, Mantas Simkus, Trung{-}Kien Tran, and Guohui
  Xiao.
\newblock Query rewriting for {Horn-SHIQ} plus rules.
\newblock In {\em Proc.\ of AAAI}, 2012.

\bibitem[\protect\citeauthoryear{Gottlob \bgroup \em et al.\egroup
  }{2013}]{DBLP:conf/icalp/GottlobPT13}
Georg Gottlob, Andreas Pieris, and Lidia Tendera.
\newblock Querying the guarded fragment with transitivity.
\newblock In {\em Proc.\ of ICALP}, volume 7966 of {\em LNCS}, pages 287--298, 
2013.

\bibitem[\protect\citeauthoryear{Hansen \bgroup \em et al.\egroup
  }{2015}]{IJCAI15}
Peter Hansen, Carsten Lutz, Inan{\c{c}} Seylan, and Frank Wolter.
\newblock Efficient query rewriting in the description logic {EL} and beyond.
\newblock In {\em Proc.\ of IJCAI}, pages 3034--3040, 2015.


\bibitem[\protect\citeauthoryear{Kaminski \bgroup \em et al.\egroup
  }{2014}]{kaminski14}
Mark Kaminski, Yavor Nenov, and Bernardo Cuenca Grau.
\newblock Computing datalog rewritings for disjunctive datalog programs and
  description logic ontologies.
\newblock In {\em Proc.\ of RR}, pages 76--91, 2014.

\bibitem[\protect\citeauthoryear{Kontchakov \bgroup \em et al.\egroup
  }{2013}]{DBLP:conf/rweb/KontchakovRZ13}
Roman Kontchakov, Mariano Rodriguez-Muro, and Michael Zakharyaschev.
\newblock Ontology-based data access with databases: A short course.
\newblock In {\em Proc.\ of Reasoning Web}, pages 194--229, 2013.

\bibitem[\protect\citeauthoryear{Lutz and Wolter}{2012}]{lutz-2012}
Carsten Lutz and Frank Wolter.
\newblock Non-uniform data complexity of query answering in description logics.
\newblock In {\em Proc.\ of KR}, 2012.

\bibitem[\protect\citeauthoryear{Lutz}{2008}]{DBLP:conf/cade/Lutz08}
Carsten Lutz.
\newblock The complexity of conjunctive query answering in expressive
  description logics.
\newblock In {\em Proc.\ of IJCAR}, volume 5195 of {\em LNCS}, pages 179--193, 
2008.

\bibitem[\protect\citeauthoryear{P{\'e}rez-Urbina \bgroup \em et al.\egroup
  }{2010}]{perezurbina10tractable}
H{\'e}ctor P{\'e}rez-Urbina, Boris Motik, and Ian Horrocks.
\newblock Tractable query answering and rewriting under description logic
  constraints.
\newblock {\em J.\ Applied Logic}, 8(2):186--209, 2010.

\bibitem[\protect\citeauthoryear{Rosati}{2007}]{rosati07on}
Riccardo Rosati.
\newblock On conjunctive query answering in EL.
\newblock In {\em Proc.\ of DL}, pages 451--458, 2007.

\bibitem[\protect\citeauthoryear{Trivela \bgroup \em et al.\egroup
  }{2015}]{DBLP:journals/ws/TrivelaSCS15}
Despoina Trivela, Giorgos Stoilos, Alexandros Chortaras, and Giorgos~B. Stamou.
\newblock Optimising resolution-based rewriting algorithms for {OWL}
  ontologies.
\newblock {\em J.\ Web Sem.}, 33:30--49, 2015.

\end{thebibliography}

\begin{thebibliography}{}

\bibitem[\protect\citeauthoryear{Abiteboul \bgroup \em et al.\egroup
  }{1995}]{alice}
Serge Abiteboul, Richard Hull, and Victor Vianu
\newblock Foundations of Databases: The Logical Level
\newblock Addison-Wesley, 1995.


\bibitem[\protect\citeauthoryear{Baader \bgroup \em et al.\egroup
  }{2010}]{DBLP:conf/kr/BaaderBLW10}
Franz Baader, Meghyn Bienvenu, Carsten Lutz, and Frank Wolter.
\newblock Query and predicate emptiness in description logics.
\newblock In {\em Proc.\ of KR}, pages 192--202, 2010.

\bibitem[\protect\citeauthoryear{Chandra \bgroup \em et al.\egroup
  }{1981}]{DBLP:journals/jacm/ChandraKS81}
Ashok~K. Chandra, Dexter Kozen, and Larry~J. Stockmeyer.
\newblock Alternation.
\newblock {\em J. {ACM}}, 28(1):114--133, 1981.

\bibitem[\protect\citeauthoryear{Hustadt \bgroup \em et al.\egroup
  }{2007}]{journals/jar/HustadtMS07}
Ullrich Hustadt, Boris Motik, and Ulrike Sattler.
\newblock Reasoning in description logics by a reduction to disjunctive
  datalog.
\newblock {\em J. Autom. Reasoning}, 39(3):351--384, 2007.

\bibitem[\protect\citeauthoryear{Kazakov}{2009}]{conf/ijcai/Kazakov09}
Yevgeny Kazakov.
\newblock Consequence-driven reasoning for Horn {SHIQ} ontologies.
\newblock In {\em Proc.\ of IJCAI}, pages 2040--2045, 2009.

\bibitem[\protect\citeauthoryear{Lutz}{2007}]{Lutz-DL-07}
Carsten Lutz.
\newblock Inverse roles make conjunctive queries hard.
\newblock In {\em Proc.\ of DL}, volume 250 of {\em CEUR-WS}, 2007.

\end{thebibliography}
\end{document}